\documentclass{article}





\usepackage[preprint,nonatbib]{neurips_2020}

\usepackage[utf8]{inputenc} 
\usepackage[T1]{fontenc}    
\usepackage{url}            
\usepackage{booktabs}       
\usepackage{amsfonts}       
\usepackage{nicefrac}      
\usepackage{microtype}      

\usepackage{relsize}
\usepackage{amsmath,amscd}
\usepackage{mathtools}
\usepackage{amsthm}
\usepackage{amssymb}
\usepackage{tikz-cd}
\usepackage{color}
\usepackage{bm}
\usepackage{subfig}
\usepackage{caption}
\usepackage{algorithm}
\usepackage{algorithmic}
\usepackage[normalem]{ulem}
\usepackage{graphicx}

\usepackage{xr}
\makeatletter
\newcommand*{\addFileDependency}[1]{
  \typeout{(#1)}
  \@addtofilelist{#1}
  \IfFileExists{#1}{}{\typeout{No file #1.}}
}
\makeatother

\newif\ifdebugdoc\debugdoctrue

\ifdebugdoc

\newcommand{\outline}[1]{\colorbox{yellow}{Outline:}\textcolor{red}{#1}}
\newcommand{\todo}[1]{\large{\colorbox{red}{TODO:} #1. }}

\newcommand{\del}[1]{\textcolor{blue}{\sout{#1}}}
\newcommand{\save}[1]{}
\else
\newcommand{\outline}[1]{}
\newcommand{\todo}[1]{}
\newcommand{\chunyi}[1]{}

\newcommand{\del}[1]{}
\newcommand{\save}[1]{}

\fi

\newcommand{\diam}{\mathrm{diam}}

\newcommand{\mean}{\mathrm{mean}}
\newcommand{\tr}{\mathrm{tr}}

\newcommand{\bxe}{B_\eps^{d_X}(x)}

\newcommand{\R}{\mathbb{R}}
\newcommand{\eps}{\varepsilon}

\newcommand{\norm}[1]{\left\lVert#1\right\rVert}
\newcommand{\pf}{\mathcal{P}_f}

\newcommand{\p}{^\prime}

\newcommand{\me}{m^{\mathsmaller{(\eps)}}}

\newcommand{\lc}{\left(}
\newcommand{\rc}{\right)}

\newcommand{\dcov}{d_{\mathrm{cov}}}
\newcommand{\dgt}{d^{\mathsmaller{(\eps,\lambda)}}_{{\alpha,d_X}}}

\newcommand{\dw}{d_{\mathrm{W},2}}

\newtheorem{theorem}{Theorem}[section]
\newtheorem{proposition}[theorem]{Proposition}

\newtheorem{lemma}[theorem]{Lemma}
\newtheorem{remark}[theorem]{Remark}

\newtheorem{definition}{Definition}


\listfiles
\title{The Gaussian Transform}

%

\author{%
  Kun Jin \\
  Department of Computer Science and Engineering\\
  The Ohio State University\\
  \texttt{jin.810@osu.edu} \\
   \And
   Facundo M\'emoli \\
   Department of Mathematics \\
   Department of Computer Science and Engineering \\
   The Ohio State University\\
   \texttt{memoli@math.osu.edu} \\ 
   \AND
   Zhengchao Wan \\
   Department of Mathematics \\
   The Ohio State University\\
   \texttt{wan.252@osu.edu} \\
}

\begin{document}

\maketitle

\begin{abstract}
  We introduce the Gaussian transform (GT), an optimal transport inspired iterative method for denoising and enhancing latent structures in datasets. Under the hood, GT generates a new distance function (GT distance) on a given dataset  by computing the $\ell^2$-Wasserstein distance between certain Gaussian density estimates obtained by localizing the dataset to individual points. Our contribution is twofold: (1) theoretically, we establish firstly that GT is stable under perturbations and secondly that in the continuous case, each point possesses an asymptotically ellipsoidal neighborhood with respect to the GT distance; (2) computationally, we accelerate GT both by identifying a strategy for reducing the number of matrix square root computations inherent to the $\ell^2$-Wasserstein distance between Gaussian measures, and by avoiding redundant computations of GT distances between points via enhanced neighborhood mechanisms. We also observe that GT is both a generalization and a strengthening of the mean shift (MS) method, and it is also a computationally efficient specialization of the recently proposed Wasserstein Transform (WT) method. We perform extensive experimentation comparing their performance  in different scenarios. 
\end{abstract}

\section{Introduction}
\label{sec:intro}

Optimal Transport (OT) studies how to find an optimal strategy for transporting a source probability measure to a target probability measure \cite{villani2008optimal}. Recently, OT has been widely applied in Machine Learning~\cite{courty2016optimal,peyre2019computational}, Deep Neural Network~\cite{arjovsky2017wasserstein,gulrajani2017improved} and Natural Language Processing 
(NLP)~\cite{alvarez-melis-jaakkola-2018-gromov}, etc.

In \cite{pmlr-v97-memoli19a}, the authors introduced the \emph{Wasserstein transform} (WT) as a method for enhancing and denoising datasets. The WT alters the distance function on a dataset by computing the dissimilarity between neighborhoods of data points via methods from OT. WT can be regarded as a generalization and strengthening of mean shift (MS) \cite{fukunaga1975estimation,cheng1995mean}. Inspired by the construction of WT, in this paper we propose the \emph{Gaussian transform} (GT), a computationally efficient specialization of WT.

GT takes as input a point cloud and iteratively alters its metric based on how different are the local covariance matrices around points. {This is done via the computation of the $\ell^2$-Wasserstein distance between certain Gaussian distributions associated to these neighborhoods.} See Figure~(\ref{fig:ill-cov}) for an illustration. Due to the fact that there exists a closed form solution for the $\ell^2$-Wasserstein distance between Gaussian distributions
, the computation of GT is substantially more efficient than that of WT (which requires solving an OT problem for \emph{each pair of points} thus having complexity which scales cubically with the size of neighborhoods). 
We show ways of accelerating the computation of GT which include a {novel observation} stemming from computational linear algebra techniques (see Theorem \ref{thm:redux}) and some refinements about neighborhood mechanisms (see Section \ref{sec:nbh-mech}). We also prove that GT is stable with respect to perturbations (cf. Theorem \ref{thm:GT-stable}). 

One important feature of GT is its \emph{sensitivity to anisotropy}, an often desired feature in methods for image denoising and image segmentation 
\cite{perona1990scale,wang2004image}; see Figure \ref{fig:GT}. GT contains an intrinsic parameter $\lambda$ providing flexibility in tuning the degree of sensitivity to anisotropy in data features.
We apply GT to clustering and denoising tasks, and verify that in these, by tuning the parameter $\lambda$, GT has either comparable or superior performance over WT. We further apply GT to image segmentation, a task in which GT outperforms MS \cite{demirovic2019implementation}, and also in NLP tasks to boost word embeddings performance. Our experiments indicate that GT is effective in enhancing and denoising datasets. 
\begin{figure}[htb]
    \centering
    \includegraphics[width=0.9\linewidth]{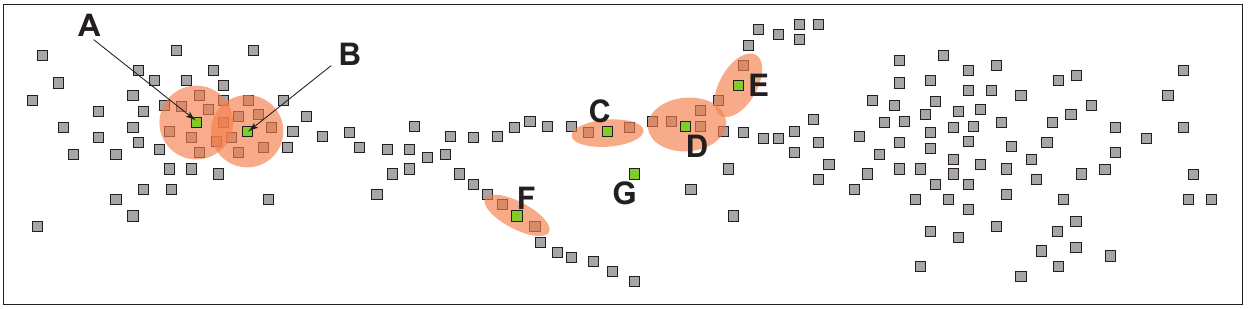}
    \caption{Illustration of GT. 
    We illustrate the idea of GT based on a point cloud in $\R^2$. Different points in the point cloud have different neighborhood structures, which are captured by their local covariance matrices. We  generate a Gaussian distribution \emph{on each data point} based on local covariance matrices. In the figure, we represent these Gaussians by ellipses (iso-contours of Gaussian distributions at a given height). Note that these localized Gaussian distributions reflect the neighborhood structures: the Gaussian is round when the neighborhood is relatively isotropic (\textbf{A} and \textbf{B}); the Gaussian is {flat} when the neighborhood is oblong (\textbf{C, D, E} and \textbf{F}); and the Gaussian is degenerate for an isolated point (cf. \textbf{G}). In a nutshell, the Gaussian Transform produces a new distance function on the dataset by computing the $\ell^2$-Wasserstein distance between these localized Gaussian distributions.
    }
    \label{fig:ill-cov}
\end{figure}
\section{Background}\label{sec:bg}
\textbf{Optimal transport concepts.}
For $p\in[1,\infty]$, the $\ell^p$-\emph{Wassertein distance} $d_{\mathrm{W},p}$ \cite{villani2008optimal} measures the dissimilarity between two probability measures $\alpha,\beta$ on a compact metric space $X$. For $p<\infty$, it is defined as : 
$d_{\mathrm{W},p}(\alpha,\beta) \coloneqq  \left(\inf_{\pi\in \Pi(\alpha,\beta)} \iint_{X\times X} d_X^p(x,x')\,d\pi(x\times x')\right)^\frac{1}{p}$,
where $\Pi(\alpha,\beta)$ is the set of all \emph{couplings} $\pi$ (also named \emph{transport plans}) between $\alpha$ and $\beta$, i.e., $\pi$ is a probability measure on $X\times X$ with marginals $\alpha$ and $\beta$, respectively. See \cite{villani2008optimal} for a formula of $d_{\mathrm{W},\infty}$.
Solving the optimization problem for computing the Wasserstein distance is usually time consuming \cite{cuturi2013sinkhorn}. However, in the case of Gaussian measures, the distance enjoys a \emph{closed form formula} which allows for efficient computation. Given two Gaussian distributions $\gamma_1=\mathcal{N}(x_1,\Sigma_1)$ and $\gamma_2=\mathcal{N}(x_2,\Sigma_2)$ on $\R^m$, we have for $p=2$ that 
$
    d_{\mathrm{W},2}^2(\gamma_1,\gamma_2)=\norm{x_1-x_2}^2+ \dcov^2(\Sigma_1,\Sigma_2), 
$
where 
$
    \dcov(\Sigma_1,\Sigma_2)\coloneqq\sqrt{\tr\left(\Sigma_1+\Sigma_2-2\left(\Sigma_1^\frac{1}{2}\Sigma_2\Sigma_1^\frac{1}{2}\right)^\frac{1}{2}\right)}
$ \cite{givens1984class}.
Note that ${\dcov}$ is also known as the Bures distance \cite{bures1969extension} between positive semi-definite matrices. See the supplementary material for further remarks on the Bures distance. 

\textbf{Mean shift.}
Mean shift \cite{fukunaga1975estimation,cheng1995mean} is a mode seeking algorithm used in data analysis which operates by iteratively shifting each data point towards the mean of its neighborhood. To capture  neighborhood information, the MS algorithm  requires a kernel function $K:\R_+\rightarrow \R_+$, such as the Gaussian kernel $e^{-t^2/2}$, or the truncation kernel (which equals $1$ if $t\in[0,1]$ and is zero otherwise). Given a point cloud $X = \{x_i\}_{i=1}^n\subset\R^m$ and a scale parameter $\eps>0$, one shifts each point $x_i$ towards the weighted mean of its neighborhood (as defined by the choice of $K$ and $\eps)$.

\textbf{Wasserstein transform.}
We recall the definition of the Wasserstein transform from \cite{pmlr-v97-memoli19a}. Similarly to the case of MS a kernel function is used to capture neighborhood information, WT uses the \emph{localization operator} to reflect neighborhood information. A localization operator $L$ is a map which for any metric space $(X,d_X)$ and a probability measure $\alpha$ on $X$, assigns to every point $x\in X$ a probability measure $m_\alpha^L(x)$, which is referred to as the \emph{localized measure} at $x$ (based on $\alpha$).

\begin{definition}[The Wasserstein transform]\label{def:wt}
Let $(X,d_X)$ be a metric space together with a probability measure $\alpha$. Given a localization operator $L$ and $p\geq 1$, the \emph{Wasserstein transform} generates the distance function $d_\alpha^{L}$ on $X$ defined by
$d_\alpha^{L}(x,x')\coloneqq d_{\mathrm{W},p}\lc m_\alpha^{L}(x),m_\alpha^{L}(x')\rc ,\forall x,x\p\in X.$ 
\end{definition}
Definition \ref{def:wt} is slightly different from the one in \cite{pmlr-v97-memoli19a} which only consider the case when $p=1$. 
In this paper, in order to compare WT  with GT, we mainly focus on the case when $p=2$. 
\section{The Gaussian transform}
\label{sec:gt}
From now on, unless otherwise specified, \emph{we always assume $X$ to be a compact subset of $\R^m$ together with a metric $d_X$}. In practice, $d_X$ usually coincides with the underlying Euclidean distance between points. We allow general (non-Euclidean) $d_X$ for the convenience of later introducing an iterative algorithm for GT. 

\textbf{Theoretical background.}
Denote by $\mathcal{P}_f(X)$ the set of all probability measures on $X$ with full support and let $\alpha\in\mathcal{P}_f(X)$. Given a parameter $\eps>0$, we denote by $B_\eps^{d_X}(x)\coloneqq\{x'\in X:\, d_X(x,x')\leq\eps\}$ the closed ball with respect to $d_X$ centered at $x\in X$ with radius $\eps$. We assign to $x$ a probability measure $\me_{{\alpha,d_X}}(x)\coloneqq\frac{\alpha|_{B_{\eps}^{d_X}(x)}}{\alpha\left(B_{\eps}^{d_X}(x)\right)}$, which is the renormalized restriction of $\alpha$ to $B_\eps^{d_X}(x)$. Denote by $\mu^{\mathsmaller{(\eps)}}_{{\alpha,d_X}}(x)$ the mean of $\me_{{\alpha,d_X}}(x)$, i.e., 
$\mu^{\mathsmaller{(\eps)}}_{{\alpha,d_X}}(x)\coloneqq \int_{\R^m}y\,\me_{{\alpha,d_X}}(x)(dy).$
Then, we denote by $\Sigma_{{\alpha,d_X}}^{\mathsmaller{(\eps)}}(x)$ the covariance matrix of $\me_{{\alpha,d_X}}(x)$, i.e., the matrix defined below:
$
    \Sigma_{{\alpha,d_X}}^{\mathsmaller{(\eps)}}(x) \coloneqq\int_{\mathbb{R}^m}\lc{y}-{\mu^{\mathsmaller{(\eps)}}_{{\alpha,d_X}}(x)}\rc\otimes \lc {y}- {\mu^{\mathsmaller{(\eps)}}_{{\alpha,d_X}}(x)}\rc\,\me_{{\alpha,d_X}}(x)(d {y}),
$
where $x\otimes y$ is the bilinear form on $\R^m$ such that $x\otimes y(u,v)=\langle x,u\rangle\cdot\langle y,v\rangle$ for any $u,v\in\R^m$. See also the supplementary material for formulas corresponding to the discrete case.

\begin{definition}[GT distance]\label{def:gtd}
Given parameters $\lambda\geq 0$ and $\varepsilon>0$, we define the GT distance $d^{\mathsmaller{(\eps,\lambda)}}_{{\alpha,d_X}}(x,x')$ between $x,x'\in X$ by the following quantity
\begin{equation}
\label{eq:gtd}
    d^{\mathsmaller{(\eps,\lambda)}}_{{\alpha,d_X}}(x,x'):=\left(\norm{x-x'}^2+\lambda\cdot \dcov^2(\Sigma_{{\alpha,d_X}}^{\mathsmaller{(\eps)}}(x),\Sigma_{{\alpha,d_X}}^{\mathsmaller{(\eps)}}(x'))\right)^\frac{1}{2}.
\end{equation}
\end{definition}

\begin{definition}[The Gaussian transform]\label{def:gt}
The \emph{Gaussian transform} (GT) is the distance altering process which takes $(X,d_X,\alpha)$ into $(X,d^{\mathsmaller{(\eps,\lambda)}}_{{\alpha,d_X}})$. 
\end{definition}

In the case when $d_X$ {agrees with the underlying Euclidean distance}, GT is stable with respect to perturbations on the probability measure $\alpha$ under certain conditions. Let $c,\Lambda>0$ be constants. Denote by $\mathcal{P}_f^{c,\Lambda}(X)$ the set of all $\alpha\in\mathcal{P}_f(X)$ such that $\frac{\alpha\left(B^{d_X}_{r_1}(x)\right)}{\alpha\left(B^{d_X}_{r_2}(x)\right)}\leq\lc\frac{r_1}{r_2}\rc^\Lambda$ for any $x\in X$ and $r_1\geq r_2>0$, and $\alpha(S)\leq c\cdot\mathcal{L}_m(S)$ for all measurable set $S\subset\R^m$, where $\mathcal{L}_m$ stands for the $m$ dimensional Lebesgue measure. Let $D\geq 0$ be given such that $\diam(X)\leq D$. We have the stability theorem below whose proof   (and a remark on $\mathcal{P}_f^{c,\Lambda}(X)$) we relegate  to the supplementary material. 

\begin{theorem}[Stability of GT]\label{thm:GT-stable}
Assume that $d_X$ agrees with the underlying Euclidean distance and $\alpha,\beta\in\mathcal{P}_f^{c,\Lambda}(X)$. Then, there exists a positive constant $A=A(\eps,m,D)$ such that 
$\norm{d^{\mathsmaller{(\eps,\lambda)}}_{{\alpha,d_X}}-d^{\mathsmaller{(\eps,\lambda)}}_{{\beta,d_X}}}_\infty\leq 2\sqrt{m\,\lambda\,\Psi_{\Lambda,D,\eps}^{c,A}\lc d_{\mathrm{W},\infty}(\alpha,\beta)\rc}$,
where $\Psi_{\Lambda,D,\eps}^{c,A}:[0,\infty)\rightarrow[0,\infty)$ is an increasing function such that $\Psi_{\Lambda,D,\eps}^{c,A}(0)=0$. 
See the supplementary material for an explicit formula.
\end{theorem}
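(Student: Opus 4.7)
The plan is to reduce the stability of the GT distance to a stability estimate for the local covariance as a function of the base measure, and then use an optimal-transport argument (the Gelbrich inequality plus a direct coupling construction) to pass from a $d_{\mathrm{W},\infty}$ bound on $(\alpha,\beta)$ to a $d_{\mathrm{W},2}$ bound on their localized renormalizations. As a first step, since $d_X$ is Euclidean the summand $\|x-x'\|^2$ is common to both $\bigl(d^{\mathsmaller{(\eps,\lambda)}}_{\alpha,d_X}\bigr)^2$ and $\bigl(d^{\mathsmaller{(\eps,\lambda)}}_{\beta,d_X}\bigr)^2$, so applying $|\sqrt{a+c}-\sqrt{b+c}|\leq\sqrt{|a-b|}$, the identity $|u^2-v^2|=|u-v|(u+v)$, and the triangle inequality for the Bures metric $\dcov$ reduces the task to a uniform bound on $\dcov\bigl(\Sigma^{\mathsmaller{(\eps)}}_{\alpha,d_X}(x),\,\Sigma^{\mathsmaller{(\eps)}}_{\beta,d_X}(x)\bigr)$ in $x$. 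The multiplicative factor $u+v$ is controlled by $\dcov^2\bigl(\Sigma^{\mathsmaller{(\eps)}}_{\mu}(z),\Sigma^{\mathsmaller{(\eps)}}_{\nu}(z')\bigr)\leq\tr\bigl(\Sigma^{\mathsmaller{(\eps)}}_\mu(z)\bigr)+\tr\bigl(\Sigma^{\mathsmaller{(\eps)}}_\nu(z')\bigr)$, and since each localized covariance comes from a measure supported in a ball of radius $\eps$, $\tr(\Sigma^{\mathsmaller{(\eps)}})\leq m\,\|\Sigma^{\mathsmaller{(\eps)}}\|_{\mathrm{op}}=O(m\,\eps^2)$; this is the origin of the $\sqrt{m}$ factor appearing in the stated bound.

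By the Gelbrich inequality, $\dcov(\Sigma_\mu,\Sigma_\nu)\leq d_{\mathrm{W},2}(\mu,\nu)$ for any pair of probability measures on $\R^m$, so the task reduces further to bounding $d_{\mathrm{W},2}\bigl(\me_{\alpha,d_X}(x),\me_{\beta,d_X}(x)\bigr)$ in terms of $\delta:=d_{\mathrm{W},\infty}(\alpha,\beta)$. Let $\pi$ be an $\infty$-optimal coupling of $(\alpha,\beta)$, supported on $\{(y,y'):\|y-y'\|\leq\delta\}$, and set $B:=\bxe$, $A_\alpha:=\alpha(B)$, $A_\beta:=\beta(B)$. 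By the geometry of $\supp\pi$, the exchange masses $q:=\pi(B\times B^c)$ and $r:=\pi(B^c\times B)$ satisfy $q\leq\alpha\bigl(B\setminus B_{\eps-\delta}^{d_X}(x)\bigr)$ and $r\leq\alpha\bigl(B_{\eps+\delta}^{d_X}(x)\setminus B\bigr)$. Build a coupling $\tilde\pi$ of $\me_{\alpha,d_X}(x)$ and $\me_{\beta,d_X}(x)$ as a convex combination of the renormalization of $\pi|_{B\times B}$ (transport cost $\leq\delta^2$ per unit mass) and a product-of-residuals coupling on $B\times B$ (cost $\leq 4\eps^2$), where the weights are matched so that the marginals coincide with $\me_{\alpha,d_X}(x)$ and $\me_{\beta,d_X}(x)$; a short calculation shows the residual weight is $q/A_\alpha$ on the $\alpha$-side and an analogous $r$-involving expression on the $\beta$-side. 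The doubling condition in $\pf^{c,\Lambda}(X)$ furnishes both the upper estimate $q/A_\alpha\leq 1-\bigl((\eps-\delta)/\eps\bigr)^\Lambda$ and the lower bound $A_\alpha\geq(\eps/D)^\Lambda$ (obtained by applying doubling to $X\subset B_D^{d_X}(x)$) needed for the renormalization; the density bound $\alpha\leq c\,\mathcal{L}_m$ provides the complementary linear-in-$\delta$ estimate $q\leq c\,\mathcal{L}_m\bigl(B\setminus B_{\eps-\delta}^{d_X}(x)\bigr)=O(c\,m\,\eps^{m-1}\,\delta)$, sharp for small $\delta$. Assembling these yields the explicit increasing function $\Psi_{\Lambda,D,\eps}^{c,A}$ with $\Psi_{\Lambda,D,\eps}^{c,A}(0)=0$, the constant $A=A(\eps,m,D)$ absorbing the dimensional annulus-volume factor.

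The main obstacle is this coupling construction: one must simultaneously control the exchange of mass across $\partial B$ and the renormalizing denominators $A_\alpha,A_\beta$, verifying that the convex combination of bulk and residual sub-couplings actually has the prescribed marginals $\me_{\alpha,d_X}(x)$ and $\me_{\beta,d_X}(x)$. The doubling condition serves both purposes (bounding the residual weight and yielding a lower bound on the denominators), while the density bound provides a complementary small-$\delta$ linear rate via Lebesgue volumes of thin annuli. Splicing these two estimates together, passing the resulting bound on $d_{\mathrm{W},2}^2\bigl(\me_{\alpha,d_X}(x),\me_{\beta,d_X}(x)\bigr)$ back through Gelbrich and the Step~1 reduction, and taking a supremum over $x,x'\in X$ produces the claimed inequality.
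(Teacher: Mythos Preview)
Your approach is valid and takes a genuinely different route from the paper's. The paper first reduces, via the triangle inequality for $\dw$ applied to the localized Gaussians, to bounding $2\sqrt{\lambda}\,\dcov\big(\Sigma_{\alpha,d_X}^{\mathsmaller{(\eps)}}(x),\Sigma_{\beta,d_X}^{\mathsmaller{(\eps)}}(x)\big)$ uniformly in $x$; it then invokes the matrix inequalities $\dcov^2(A,B)\leq\big\|A^{1/2}-B^{1/2}\big\|_F^2\leq m\,\|A-B\|_F$ (the second of these, a square-root Lipschitz bound, is the actual source of the factor $\sqrt m$) and finally controls $\big\|\Sigma_{\alpha,d_X}^{\mathsmaller{(\eps)}}(x)-\Sigma_{\beta,d_X}^{\mathsmaller{(\eps)}}(x)\big\|_F$ by combining a Frobenius-norm stability result for non-normalized covariance integrals from \cite{martinez2020shape} (this is where the density bound $\alpha\leq c\,\mathcal{L}_m$ and the constant $A$ enter), a Prokhorov-distance estimate on $|\alpha(\bxe)-\beta(\bxe)|$, and a mean-stability lemma from \cite{pmlr-v97-memoli19a}. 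Your Gelbrich-plus-coupling route is more elementary and self-contained: if you take the bulk sub-coupling to be $\pi|_{B\times B}/\max(A_\alpha,A_\beta)$ and product-couple the (then automatically nonnegative) residuals, the doubling hypothesis alone yields $q/A_\alpha\leq 1-\big(1-\delta/\eps\big)^\Lambda$ and hence a vanishing modulus---so neither the density bound nor the constant $A$ is actually needed in your argument. Two small corrections are worth noting. First, your Step~1 detour through $|u^2-v^2|=|u-v|(u+v)$ is avoidable: the sharper elementary bound $|\sqrt{a+c}-\sqrt{b+c}|\leq|\sqrt a-\sqrt b|$ gives $|d_\alpha-d_\beta|\leq\sqrt\lambda\,|u-v|$ directly, which is exactly the paper's $\dw$-triangle reduction. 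Second, your attribution of the $\sqrt m$ to $\tr\Sigma\leq m\|\Sigma\|_{\mathrm{op}}$ is mistaken, since $\tr\Sigma^{\mathsmaller{(\eps)}}_{\alpha,d_X}(x)\leq\eps^2$ is dimension-free; in fact your argument produces no $m$ at all, which is strictly stronger than the stated bound.
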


\textbf{Theoretical comparison with WT and MS.}
The idea of GT originates from the Wasserstein transform. We formulate GT as an instance of WT (cf. Definition \ref{def:wt}) as follows. For given $\lambda,\eps\geq 0$, define a localization operator $L_{\mathrm{GT}}^{\mathsmaller{(\eps,\lambda)}}$ that assigns to each point $x\in X$ a Gaussian distribution $\gamma_{\alpha,d_X}^{\mathsmaller{(\eps,\lambda)}}( {x})\coloneqq\mathcal{N}\left(x,\lambda\cdot\Sigma_{\alpha,d_X}^{\mathsmaller{(\eps)}}(x)\right)$. \emph{Then, GT is a version of WT arising from applying $\dw$ to the localization operator $L_{\mathrm{GT}}^{\mathsmaller{(\eps,\lambda)}}$.} 
In \cite{pmlr-v97-memoli19a}, the authors focused on a particular type of localization operator, \emph{local truncation}, which assigns to each point the localized probability $\me_{\alpha,d_X}(x)$. Though in Definition \ref{def:wt} WT is a general scheme, from now on, WT only refers to the \emph{local truncation based WT}. Whenever necessary we use WT1 and WT2 to denote WT with respect to $d_{\mathrm{W},1}$ and $d_{\mathrm{W},2}$, respectively. We view $\gamma_{\alpha,d_X}^{\mathsmaller{(\eps,1)}}( \cdot)$ as a (Gaussian) approximation of $\me_{\alpha,d_X}(\cdot)$ and thus GT as an approximation of WT2 when $\lambda=1$. Next, we compare GT with both WT2 and MS in a special case. 

\textit{Two lines.} In applications such as
crack detection in Civil Engineering~\cite{yamaguchi2010fast,koch2015review}, one may often encounter data points concentrated on line segments or curves. For such a data set $X$ as shown in Figure~(\ref{fig:non-intersect}), suppose the closed balls $B_\eps^{d_X}(x)$ and $B_\eps^{d_X}(x')$ are of the form of two non-intersecting (approximate) line segments for $x,x'\in X$, then it turns out that distances generated by GT and WT2 between $x$ and $x'$ are very similar. In fact, in the following idealized case of two perfect line segments, we show that the distances are the same. Moreover, we show that both distances are larger than the distance generated by MS. 
\begin{figure}[htb]
    \centering
    \subfloat[Non-intersecting line segments]{
        \includegraphics[width=0.3\linewidth]{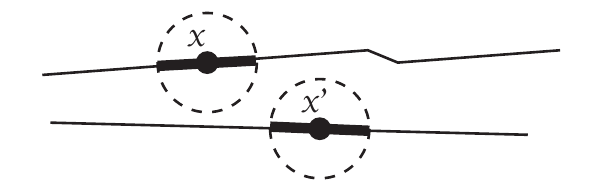}
        \label{fig:non-intersect}
    }
    \subfloat[Original Data]{
		\includegraphics[width=0.25\linewidth]{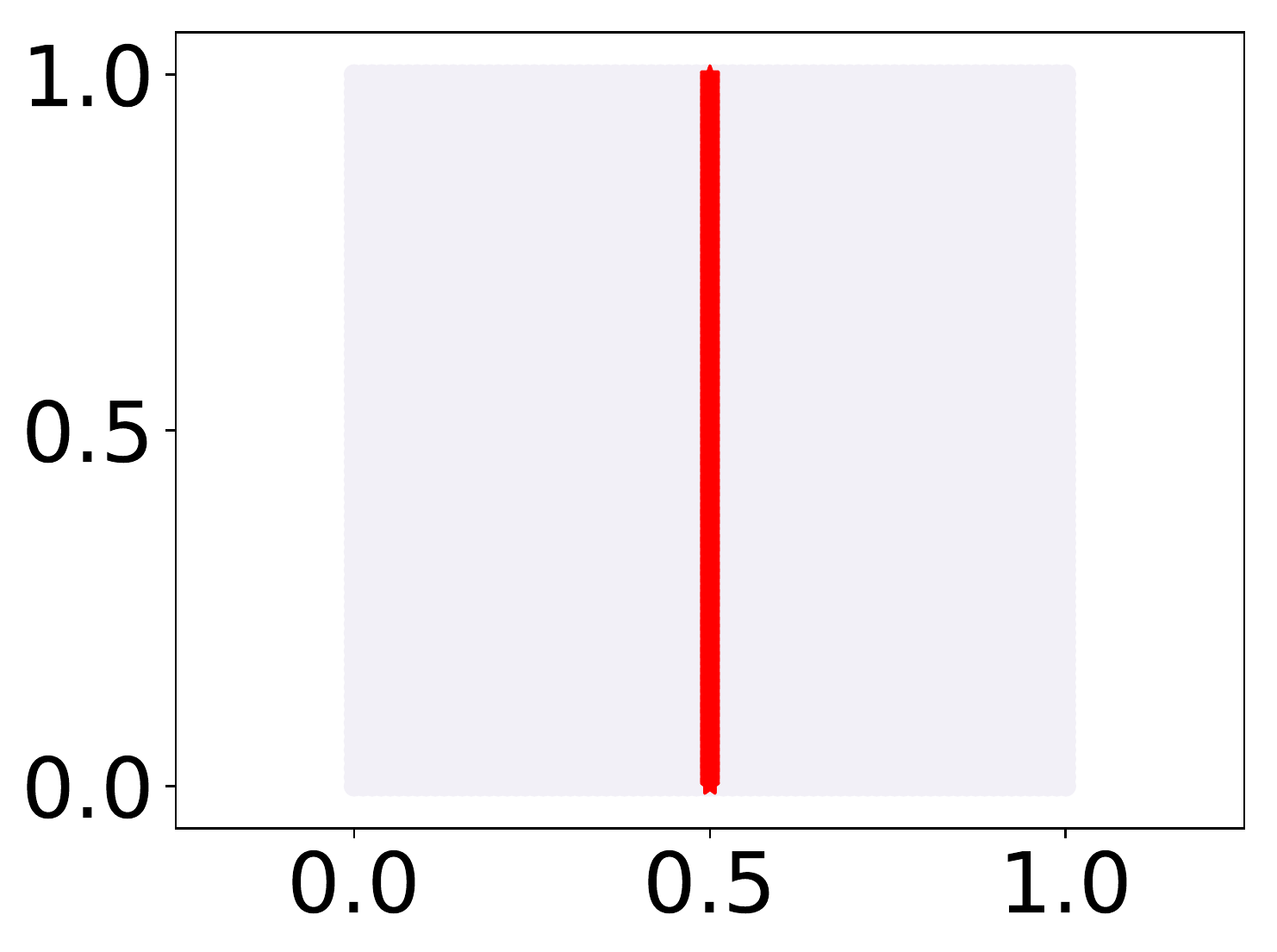}
		\label{fig:aniso-1}
	}
	\subfloat[Neighborhood $B_\eps^{\lambda}(x_0)$]{
		\includegraphics[width=0.25\linewidth]{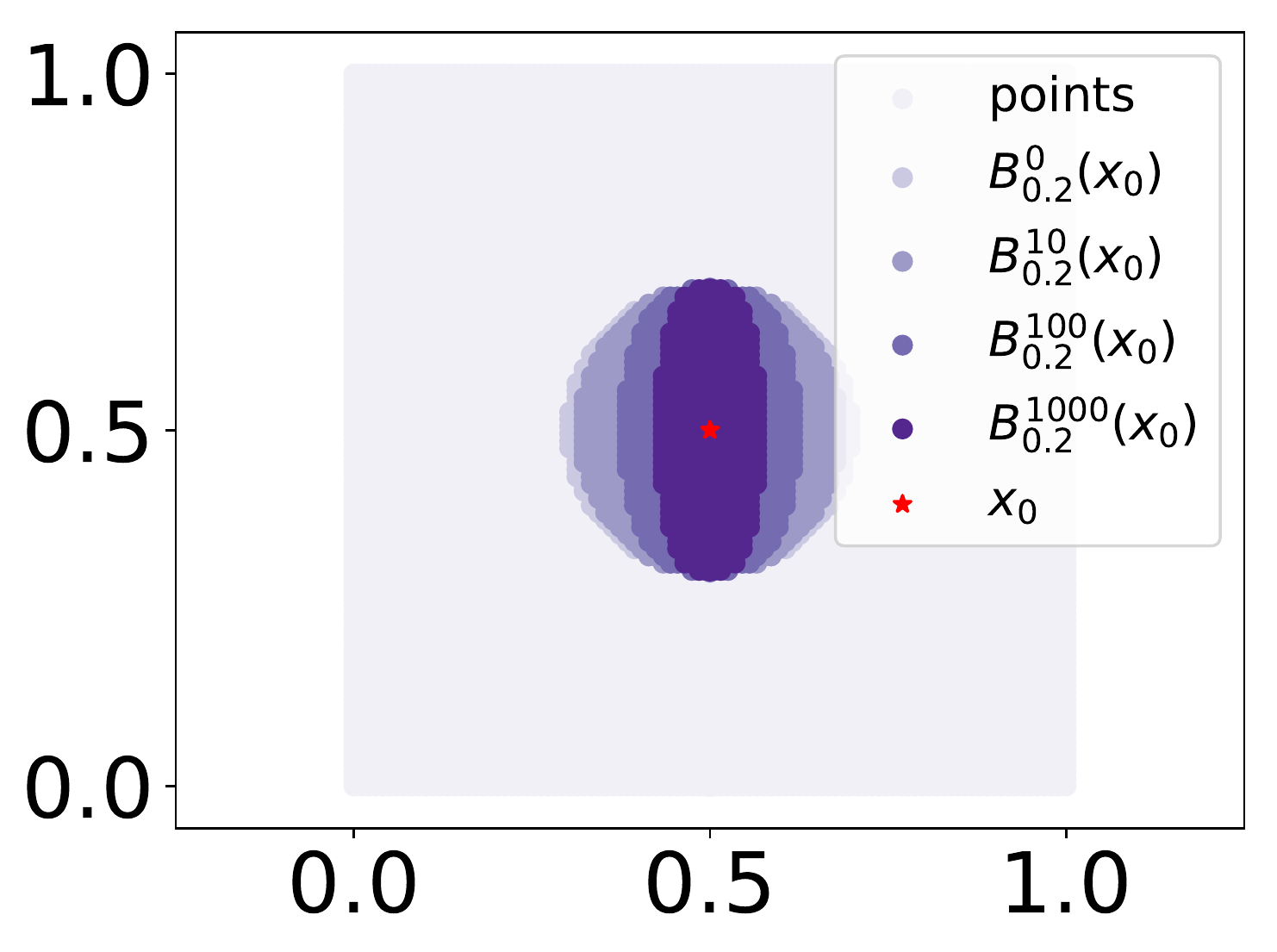}
		\label{fig:aniso-2}
	}
    \caption{(a) Illustration of line segments. (b) The original data with $100 \times 100$ grid points inside $[0,1]\times[0,1]$ and 1001 uniformly distributed points on the line segment from $(0.5,0)$ to $(0.5,1)$. (c) The $\eps$-neighborhood of the point $x_0$=(0.5, 0.5) with respect to $\dgt$ under various choice of $\lambda$, where $\eps=0.2$, $\alpha$ is the normalized empirical measure, and we abbreviate $B_\eps^{\lambda,\alpha}(x_0)$ to $B_\eps^{\lambda}(x_0)$. }
\end{figure}
Consider two non-intersecting line segments $l_1$ and $l_2$ in $\R^2$ with length $s_1$ and $s_2$, respectively. Assume the angle between them is $\theta\in[0,\frac{\pi}{2}]$. Let $\alpha_i$ be the {normalized length measure} on $l_i$ for $i=1,2$. Suppose the mean of $\alpha_i$ is $x_i$ (which happens to be the center of $l_i$) and the covariance of $\alpha_i$ is $\Sigma_i$ for $i=1,2$. Then we show the following result; see the supplementary material for the proof.
\begin{proposition}\label{prop:line}
For $i=1,2$ let $\gamma_i$ be the Gaussian distribution $\mathcal{N}(x_i,\Sigma_i)$. Then we have that
$\norm{x_1-x_2}\leq\dw(\gamma_1,\gamma_2)=\dw(\alpha_1,\alpha_2).$
\end{proposition}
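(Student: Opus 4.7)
The first inequality $\norm{x_1-x_2}\le \dw(\gamma_1,\gamma_2)$ is immediate from the closed-form expression $\dw^2(\gamma_1,\gamma_2)=\norm{x_1-x_2}^2+\dcov^2(\Sigma_1,\Sigma_2)$ recalled in Section~\ref{sec:bg}, together with $\dcov\ge 0$. For the equality $\dw(\gamma_1,\gamma_2)=\dw(\alpha_1,\alpha_2)$, the plan is a sandwich: we will exhibit an explicit coupling between $\alpha_1$ and $\alpha_2$ whose transport cost equals $\dw^2(\gamma_1,\gamma_2)$, giving $\dw^2(\alpha_1,\alpha_2)\le \dw^2(\gamma_1,\gamma_2)$, and then invoke a general Gaussian lower bound for the reverse direction.

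We first parameterize each $l_i$ by arclength as $\{x_i+t\,u_i:\,t\in[-s_i/2,s_i/2]\}$, where $u_i\in\R^2$ is the unit direction vector of $l_i$. A direct second-moment calculation against the uniform density $1/s_i$ gives the rank-one covariance $\Sigma_i=\tfrac{s_i^2}{12}\,u_i u_i^\top$. Plugging into the Bures formula and using the elementary identity $(a\,uu^\top)^{1/2}=\sqrt{a}\,u u^\top$ for a unit vector $u$ and $a\ge 0$, we compute $\Sigma_1^{1/2}\Sigma_2\Sigma_1^{1/2}=\tfrac{s_1^2 s_2^2}{144}\cos^2\theta\,u_1u_1^\top$, and thereby $\dw^2(\gamma_1,\gamma_2)=\norm{x_1-x_2}^2+\tfrac{1}{12}\bigl(s_1^2+s_2^2-2s_1s_2\cos\theta\bigr)$ (valid because $\cos\theta\ge 0$).

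For the upper bound on $\dw^2(\alpha_1,\alpha_2)$, we take the affine transport map $T(x_1+t u_1):=x_2+\tfrac{s_2}{s_1}t\,u_2$, which pushes $\alpha_1$ forward to $\alpha_2$ since it is an affine bijection between the two segments carrying the correct length ratio. Expanding $\norm{x_1-x_2+t\bigl(u_1-\tfrac{s_2}{s_1}u_2\bigr)}^2$ and integrating over $t\in[-s_1/2,s_1/2]$ against $dt/s_1$, the linear-in-$t$ cross term drops out by symmetry while the quadratic term contributes exactly $\tfrac{s_1^2+s_2^2-2s_1s_2\cos\theta}{12}$, matching the Gaussian computation above. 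For the reverse inequality we invoke the Gelbrich--Givens--Shortt bound, which states that $\dw^2(\mu,\nu)\ge \norm{\mean(\mu)-\mean(\nu)}^2+\dcov^2(\cov(\mu),\cov(\nu))$ for any Borel probability measures $\mu,\nu$ on $\R^m$ with finite second moments. Applied to $\alpha_1,\alpha_2$, whose means and covariances coincide with those of $\gamma_1,\gamma_2$, the right-hand side equals $\dw^2(\gamma_1,\gamma_2)$, completing the sandwich.

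The delicate point we anticipate is the careful treatment of the rank-one (hence degenerate) covariances in the Bures formula: the matrix square-root identity must be handled directly rather than via the full-rank formula involving $\Sigma_i^{-1}$. Once this algebraic reduction is executed, the remainder is routine integration plus a single appeal to the Gelbrich lower bound, which elegantly sidesteps any need to solve the OT problem between the uniform measures on $l_1$ and $l_2$ from scratch.
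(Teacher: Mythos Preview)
Your proof is correct and follows essentially the same strategy as the paper's: compute $\dw^2(\gamma_1,\gamma_2)$ explicitly via the Bures formula, exhibit the affine transport map between the segments to obtain $\dw^2(\alpha_1,\alpha_2)\le\dw^2(\gamma_1,\gamma_2)$, and close the sandwich with Gelbrich's lower bound (the paper's Remark~\ref{rmk:bound}). The only cosmetic differences are that the paper uses explicit $2\times 2$ coordinate matrices and a $[0,1]$-parametrization, whereas you work coordinate-free with rank-one covariances and a centered arclength parametrization; just make explicit that you choose the unit vectors $u_i$ so that $u_1^\top u_2=\cos\theta\ge 0$, since otherwise your transport map would give the larger cost $\tfrac{1}{12}(s_1^2+s_2^2+2s_1s_2\cos\theta)$.
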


\begin{remark}
Based on the proposition above, we know that although covariance constructions might lose some information of the local neighborhoods, GT has the \emph{same performance} as WT2 in the case of non-intersecting line segments (see also Figure~(\ref{fig:tlines}) for experimental validation). This, together with the fact that the computation of GT is much more efficient than that of WT2, suggests that GT is a sound alternative to WT2. 
\end{remark}

\textbf{The anisotropy of GT-distance neighborhoods.}
Now we assume that $X=\R^m$ with $d_X$ being the underlying Euclidean distance and that $\alpha\in\mathcal{P}_f(X)$ has a smooth non-vanishing density function $f$ with respect to the Lebesgue measure $\mathcal{L}_m$.  We denote by $B_\eps^{\lambda,\alpha}(x_0)\coloneqq B_\eps^{d^{\mathsmaller{(\eps,\lambda)}}_{{\alpha,d_X}}}(x_0)$ the ball with respect to $d^{\mathsmaller{(\eps,\lambda)}}_{{\alpha,d_X}}$ centered at $x_0\in X$ with radius $\eps$, which we will refer to as the \textbf{GT-distance neighborhood} of $x_0$. We study the asymptotic shape of $B^{\lambda,\alpha}_\eps(x_0)$ when $\lambda$ tends to 0 with $\eps$ at a precise rate:

\begin{theorem}\label{thm:ellipsoid}
Let $\lambda=\eps^{-6}$, then $B_\eps^{\lambda,\alpha}(x_0)$ becomes an ellipsoid when $\eps$ approaches $0$. More precisely, the closure of $\limsup_{\eps\rightarrow 0}\frac{1}{\eps}B_\eps^{\lambda,\alpha}(x_0)$ is an ellipsoid in $\R^m$. 
\end{theorem}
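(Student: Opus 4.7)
The plan is to reduce the problem to a careful asymptotic expansion of both the localized covariance matrix $\Sigma^{(\eps)}_{\alpha,d_X}(x)$ and of the Bures distance $\dcov$ between two such near-isotropic matrices. The precise scaling $\lambda=\eps^{-6}$ is picked exactly so that the covariance-based term contributes an ellipsoidal quadratic form at the same order as $\|x-x_0\|^2$ after rescaling by $1/\eps$.

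\textbf{Step 1: Expansion of $\Sigma^{(\eps)}(x)$.} Writing $u=y-x$, I Taylor-expand $f(y)=f(x)+\langle \nabla f(x),u\rangle+\tfrac12 u^\top H_f(x)u+O(\|u\|^3)$ and use the elementary moments $\int_{B_\eps(0)} u_iu_j\,du=\frac{\eps^2\vol(B_\eps)}{m+2}\delta_{ij}$ and the analogous 4-th moment. Odd moments vanish by symmetry, yielding
\[
  \mu^{(\eps)}_{\alpha,d_X}(x)-x=\tfrac{\eps^2}{(m+2)f(x)}\nabla f(x)+O(\eps^4),
\]
\[
  \Sigma^{(\eps)}_{\alpha,d_X}(x)=\sigma^2 I+\eps^4 M(x)+O(\eps^6),\qquad \sigma^2:=\tfrac{\eps^2}{m+2},
\]
where $M(x)$ is a symmetric matrix depending smoothly on $\nabla f(x)$, $H_f(x)$, and $f(x)$ (its explicit form is a routine if tedious bookkeeping of the contributions from $H_f$ in the numerator, denominator, and the $-\delta\delta^\top$ correction).

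\textbf{Step 2: Expansion of $\dcov^2$ for near-isotropic matrices.} Set $A_i:=\Sigma^{(\eps)}(x_i)-\sigma^2 I$ for $i=0,1$. Expanding $\Sigma_0^{1/2}=\sigma I+\tfrac{A_0}{2\sigma}+O(A_0^2/\sigma^3)$ and multiplying out, one gets
\[
  \Sigma_0^{1/2}\Sigma_1\Sigma_0^{1/2}=\sigma^4 I+\sigma^2(A_0+A_1)+\tfrac12(A_0A_1+A_1A_0)+O(\|A\|^3),
\]
and taking a further square root produces
\[
  (\Sigma_0^{1/2}\Sigma_1\Sigma_0^{1/2})^{1/2}=\sigma^2 I+\tfrac{A_0+A_1}{2}-\tfrac{(A_0-A_1)^2}{8\sigma^2}+O(\|A\|^3/\sigma^2).
\]
Plugging into the Bures formula, the linear terms cancel, leaving
\[
  \dcov^2\bigl(\Sigma^{(\eps)}(x_0),\Sigma^{(\eps)}(x)\bigr)=\tfrac{1}{4\sigma^2}\|A_0-A_1\|_F^2+O(\|A\|^3/\sigma^2).
\]
Since $A_1-A_0=\eps^4(M(x)-M(x_0))+O(\eps^6)=\eps^4\,DM(x_0)[x-x_0]+O(\eps^4\|x-x_0\|^2+\eps^6)$, and we may restrict to $\|x-x_0\|\le\eps$ (which follows from the trivial bound $\dgt(x_0,x)\ge\|x_0-x\|$), we conclude
\[
  \lambda\,\dcov^2\bigl(\Sigma^{(\eps)}(x_0),\Sigma^{(\eps)}(x)\bigr)=\tfrac{m+2}{4}\bigl\|DM(x_0)[x-x_0]\bigr\|_F^2+o(\eps^2),
\]
using $\lambda\sigma^{-2}=\tfrac{m+2}{\eps^8}$ and tracking that all neglected terms, once multiplied by $\lambda=\eps^{-6}$, are $o(\eps^2)$ uniformly for $\|x-x_0\|\le\eps$.

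\textbf{Step 3: Rescaling and passage to the ellipsoid.} Setting $v:=(x-x_0)/\eps$, the defining inequality $\dgt(x_0,x)^2\le\eps^2$ becomes, after dividing by $\eps^2$,
\[
  \|v\|^2+\tfrac{m+2}{4}\bigl\|DM(x_0)[v]\bigr\|_F^2\le 1+o(1),
\]
because $DM(x_0)$ is linear so $\|DM(x_0)[\eps v]\|_F^2=\eps^2\|DM(x_0)[v]\|_F^2$. The quadratic form $Q(v):=\|v\|^2+\tfrac{m+2}{4}\|DM(x_0)[v]\|_F^2$ is positive definite on $\R^m$, so $\{Q\le 1\}$ is a compact ellipsoid $E$. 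Uniform closeness of the defining inequality to $Q(v)\le 1$ on the unit ball then forces $\frac{1}{\eps}\bigl(B^{\lambda,\alpha}_\eps(x_0)-x_0\bigr)$ to Hausdorff-converge to $E$, in particular its $\limsup$ in the sense of sets has closure equal to $E$.

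\textbf{Main obstacle.} The delicate part is Step 2: the naive Bures expansion around a fixed $\Sigma$ gives a Riemannian metric tensor, but here both matrices drift with $\eps$, and the base matrix $\sigma^2 I$ itself degenerates as $\eps\to 0$. I therefore need to expand in $A_0,A_1$ simultaneously and verify that the cubic-and-higher remainders, divided by $\sigma^2=\Theta(\eps^2)$ and multiplied by $\lambda=\eps^{-6}$, still vanish; this is what forces the specific exponent $-6$ in $\lambda=\eps^{-6}$. Once this expansion is in hand, the remainder of the argument is a direct rescaling.
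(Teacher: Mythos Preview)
Your strategy coincides with the paper's: Taylor–expand the local covariance to isolate an isotropic leading part $\sigma^2 I$ with $\sigma^2=\eps^2/(m+2)$, expand the Bures distance between two such nearly isotropic matrices, and rescale. The paper carries this out explicitly only for $m=1$ (where $\dcov^2=(\sqrt{\Sigma_0}-\sqrt{\Sigma_1})^2$ makes everything commutative) and for $m>1$ simply asserts that ``a direct and tedious calculation'' yields a PSD matrix $H(x_0)$ with $(x-x_0)^\top(I_m+H(x_0))(x-x_0)+O(\eps^3)\le\eps^2$; your perturbative expansion around $\sigma^2 I$ is exactly the kind of uniform argument the paper leaves implicit.

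There is, however, a genuine gap in your remainder bookkeeping in Step~2. Writing $B_i:=A_i/\sigma^2$ (so $\|B_i\|=O(\eps^2)$), one has
\[
(\Sigma_0^{1/2}\Sigma_1\Sigma_0^{1/2})^{1/2}
=\sigma^2\Bigl[(I+B_0)^{1/2}(I+B_1)(I+B_0)^{1/2}\Bigr]^{1/2},
\]
and expanding the bracket gives $I+\tfrac{B_0+B_1}{2}-\tfrac{(B_0-B_1)^2}{8}+O(\|B\|^3)$. Multiplying back by $\sigma^2$, the remainder is $\sigma^2\cdot O(\|B\|^3)=O(\|A\|^3/\sigma^{4})$, \emph{not} $O(\|A\|^3/\sigma^{2})$ as you wrote. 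With the correct exponent, $\|A\|^3/\sigma^4=O(\eps^{12}/\eps^4)=O(\eps^8)$, which is the \emph{same} order as your main term $\tfrac{1}{4\sigma^2}\|A_0-A_1\|_F^2=O(\eps^8)$, so the expansion as stated does not separate signal from error.

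The fix is short but essential. Since $\dcov^2(\Sigma_0,\Sigma_1)=\dcov^2(\Sigma_1,\Sigma_0)$ and $\dcov^2$ vanishes on the diagonal, every term in its expansion is even in $D:=A_0-A_1$ and has no $D$–free part. Hence the cubic (in $\|A\|$) remainder is not a generic $O(\|A\|^3/\sigma^4)$ but rather $O\bigl(\|D\|_F^2\,\|A\|/\sigma^4\bigr)$. With $\|D\|_F=O(\eps^5)$ and $\|A\|=O(\eps^4)$ this is $O(\eps^{10})$, and after multiplying by $\lambda=\eps^{-6}$ one gets $O(\eps^4)=o(\eps^2)$, exactly what Step~3 needs. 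Once you insert this structural observation (or, equivalently, expand the Bures distance as a Riemannian distance squared at the moving base point $\Sigma_0$ rather than at $\sigma^2 I$), your argument goes through and matches the paper's conclusion with $I_m+H(x_0)$ corresponding to your quadratic form $Q(v)=\|v\|^2+\tfrac{m+2}{4}\|DM(x_0)[v]\|_F^2$.
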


In the theorem above, $\frac{1}{\eps}B_\eps^{\lambda,\alpha}(x_0)\coloneqq\big\{x_0+\frac{1}{\eps}(x-x_0):\,x\in B_\eps^{\lambda,\alpha}(x_0)\big\}$. Though the norms of the covariance matrices are of order $O(\eps^2)$, $\dcov^2$ between covariance matrices of points $\eps$-close to each other is of order $O(\eps^8)$. This explains the choice of $\lambda=\eps^{-6}$ in Theorem \ref{thm:ellipsoid}: with this choice the Euclidean term and the covariance term in Equation (\ref{eq:gtd}) are of the same order $O(\eps^2)$. See the supplementary material for a proof and Figure~(\ref{fig:aniso-1}) and (\ref{fig:aniso-2}) for an illustration of the theorem. 
Theorem {\ref{thm:ellipsoid}} demonstrates that neighborhoods with respect to the GT distance (Equation (\ref{eq:gtd})) are anisotropic. This indicates that GT is sensitive to boundaries/edges in datasets and thus suggests potential applications to edge detection and preservation tasks in image processing. See our image segmentation experiment in Section \ref{sec:img-seg}. Anisotropy sensitive ideas, such as anisotropic diffusion \cite{perona1990scale} or anisotropic mean shift \cite{wang2004image}, are prevalent in the literature and have been applied to image denoising and image segmentation. See also \cite{martinez2013multiscale,martinez2020shape}  for applications to shape analysis.

\textbf{Algorithm for iterative GT.} Note that, after applying GT to $(X,d_X)$ once, we obtain a new metric $d^{\mathsmaller{(\eps,\lambda)}}_{{\alpha,d_X}}$ that is sensitive to directions/edges in $X$ and generates an anisotropic neighborhood for each point $x\in X$ as discussed above. We view this step as an initialization and then incorporate a \textit{point updating process} to iterate GT. In words, after obtaining $d\coloneqq \dgt$ on $X$, we generate a probability measure $m_{\alpha,d}^{\mathsmaller{(\eps)}}(x)$ for $x\in X$ by restricting $\alpha$ to the GT ball $B_\eps^{d}(x)$ (the ball with respect to $d$ centered at $x$ with radius $\eps$) and shift each data point $x$ towards the mean of $m_{\alpha,d}^{\mathsmaller{(\eps)}}(x)$. Denote by $X'$ the set of points after shifting. Now, we have obtained a new point cloud $X'$. Still denote by $\alpha$ the pushforward of $\alpha$ itself under the shifting map. Then, we apply GT to $(X',d,\alpha)$ to obtain a new metric $d'\coloneqq d^{\mathsmaller{(\eps,\lambda)}}_{{\alpha,d}}$ and iterate the process for $(X',d',\alpha)$ as described above. See the algorithm structure of GT in Figure~(\ref{fig:alggt}) and the supplementary material for a pseudocode of the iterative GT algorithm. 

\begin{figure}[htb]
    \centering
    \subfloat[MS]{
    \label{fig:algms}
		\includegraphics[width=0.32\linewidth]{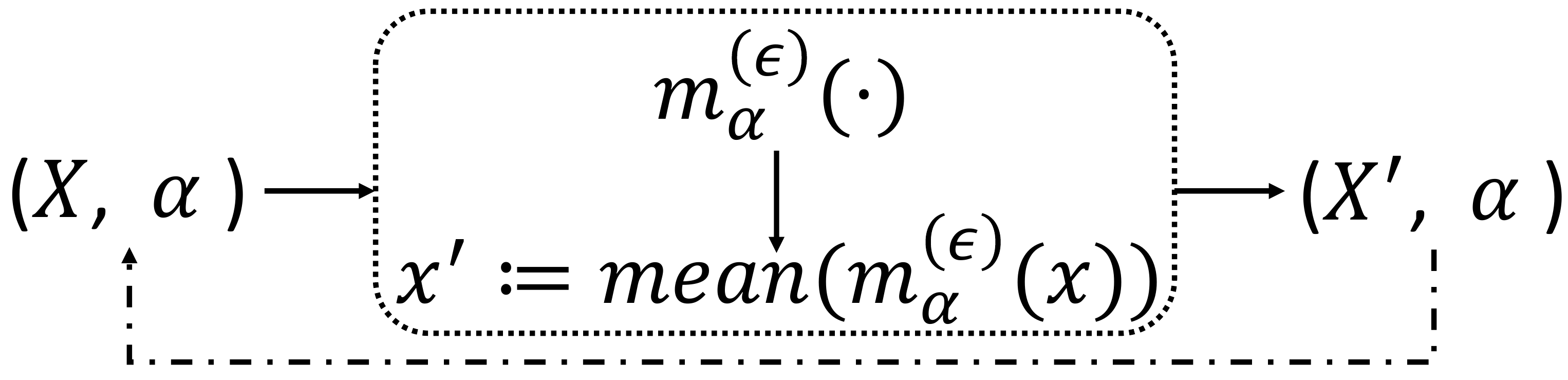}
	}
	\subfloat[WT]{
	\label{fig:algwt}
		\includegraphics[width=0.3\linewidth]{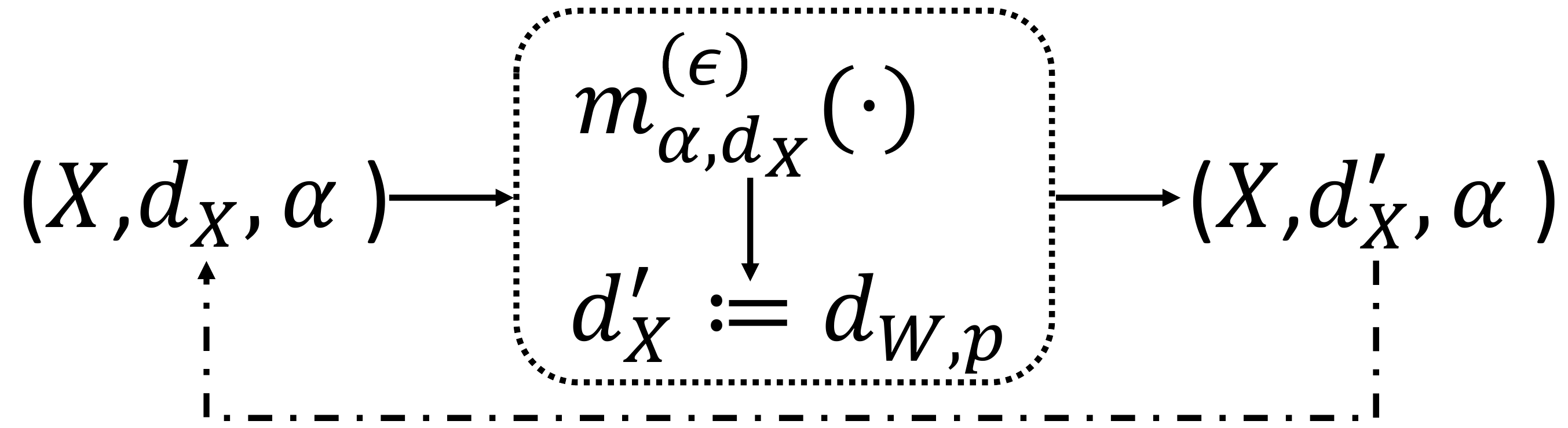}
	}
	\subfloat[GT]{
	\label{fig:alggt}
		\includegraphics[width=0.37\linewidth]{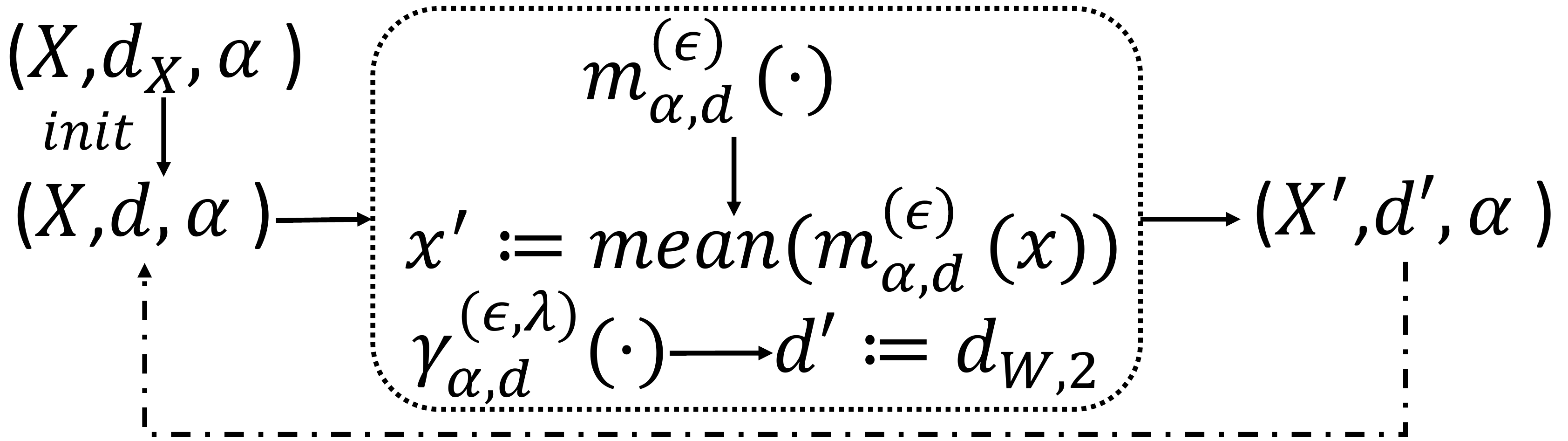}
	}
    \caption{Algorithmic structures.
    (a) MS algorithm structure; (b) WT algorithm structure; (c) GT algorithm structure. $X'$ is the set of updated points $x'$. $d'$ is the updated distance. $\me_\alpha(x)$ is the abbreviation of $\me_{\alpha,\norm{\cdot}}$, where $\norm{\cdot}$ denotes the Euclidean distance. }
    \label{fig:algcom}
\end{figure}

\textbf{Algorithmic similarities with MS and WT.} The iterative GT algorithm draws inspiration from the iterative MS and WT algorithms. Please see Figure \ref{fig:algcom} for an illustration. Note that the MS algorithm is a \emph{point updating} process whereas the WT algorithm is a \emph{distance updating process}. The GT algorithm is a hybrid between the MS and the WT algorithms in that \emph{it is composed of both a point updating and a distance updating process}. Thus, the GT algorithm inherently provides us with two features, a point cloud and a distance matrix, which can be leveraged in different applications and thus provides an advantage over WT. For example, the point updating process allows GT to adapt from MS~\cite{demirovic2019implementation} to the task of image segmentation in Section \ref{sec:exp} whereas WT is not applicable.

\begin{remark}
Note that when $\lambda=0$, the GT algorithm boils down to the MS algorithm.
\end{remark}
\section{Computational optimizations \& complexity}
\textbf{Computation of $\dcov$: a new formula.}
The main challenge in implementing GT is the computational cost associated with $\dcov$: that is, the computation of $\tr\lc\Sigma_1+\Sigma_2-2\lc\Sigma_1^\frac{1}{2}\Sigma_2\Sigma_1^\frac{1}{2}\rc^\frac{1}{2}\rc$. The most time consuming part is taking the square root of a matrix. We identified the following methods to accelerate this computation. In the given formula for $\dcov$, one has to carry out square root computations twice: once for $\Sigma_1^\frac{1}{2}$ and another one for $\left(\Sigma_1^\frac{1}{2}\Sigma_2\Sigma_1^\frac{1}{2}\right)^\frac{1}{2}$. Since all we care about is the trace, it turns out that, for each pair of $\Sigma_1$ and $\Sigma_2$ we only need to compute the eigenvalues of the matrix $\Sigma_1\Sigma_2$ by the theorem below (whose proof is given in the supplementary material). For a square matrix $A$, we denote by $\mathrm{spec}(A)$ the multiset of eigenvalues of $A$ counted with multiplicities. 

\begin{theorem} \label{thm:redux}
Given two square positive semi-definite matrices $A$ and $B$, we have that
$\mathrm{tr}\left(\left(A^\frac{1}{2}BA^\frac{1}{2}\right)^\frac{1}{2}\right)=\sum_{\lambda\in \mathrm{spec}(AB)}\lambda^\frac{1}{2}.$
\end{theorem}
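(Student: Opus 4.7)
The identity reduces to the classical linear algebra fact that for any two $n \times n$ matrices $X$ and $Y$, the products $XY$ and $YX$ have identical characteristic polynomials, and therefore share the same spectrum as a multiset (counting multiplicities). I would first recall a standard proof of this fact, e.g.\ via the block-matrix / Schur complement identity $\det(I_n - XY) = \det(I_n - YX)$, first for the dense case when $X$ is invertible (where $YX = X^{-1}(XY)X$ is similar to $XY$) and then extending by continuity to arbitrary $X$.

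Next I would apply this fact twice. First, with $X = A^{1/2}$ and $Y = BA^{1/2}$, I get that $A^{1/2}BA^{1/2} = XY$ and $BA = YX$ have the same spectrum with multiplicities. Second, with $X = A$ and $Y = B$, I get $\mathrm{spec}(AB) = \mathrm{spec}(BA)$. Chaining these two equalities yields
\[
\mathrm{spec}\bigl(A^{1/2}BA^{1/2}\bigr) = \mathrm{spec}(AB)
\]
as multisets. As a byproduct, since $A^{1/2}BA^{1/2}$ is positive semi-definite (being a symmetric sandwich of the PSD matrix $B$), the eigenvalues of $AB$ are automatically non-negative real numbers, even though $AB$ itself need not be symmetric; this legitimizes writing $\lambda^{1/2}$ for each $\lambda \in \mathrm{spec}(AB)$.

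Finally, let $M \coloneqq (A^{1/2}BA^{1/2})^{1/2}$ be the unique PSD square root. Then $M$ is diagonalizable with eigenvalues $\sqrt{\lambda_i}$, where $\{\lambda_i\}$ is the spectrum of $A^{1/2}BA^{1/2}$, so
\[
\tr(M) \;=\; \sum_{\lambda \in \mathrm{spec}(A^{1/2}BA^{1/2})} \sqrt{\lambda} \;=\; \sum_{\lambda \in \mathrm{spec}(AB)} \sqrt{\lambda},
\]
which is the claimed formula. There is no real obstacle in this argument; the only mild point requiring care is to make sure multiplicities match under the $XY \leftrightarrow YX$ swap, which is precisely why I would invoke the characteristic polynomial (not just the set of eigenvalues) and, if $A$ is only PSD rather than positive definite, pass to the limit $A + \delta I \to A$ as $\delta \downarrow 0$ via continuity of the trace and of $\dcov$.
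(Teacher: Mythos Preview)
Your proof is correct and follows essentially the same approach as the paper: both establish that $A^{1/2}BA^{1/2}$ and $AB$ share the same spectrum via similarity in the invertible case and a continuity/perturbation argument otherwise. Your packaging through the general lemma $\mathrm{spec}(XY)=\mathrm{spec}(YX)$ is slightly more streamlined than the paper's three-case analysis ($A$ invertible; $B$ invertible; both singular with $B_t=tI+B$), but the underlying ideas are identical.
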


\textit{Computation of eigenvalues}. 
Computing the eigenvalues of a (square) matrix becomes expensive when the size of the matrix is large. However, it is relatively cheap to just compute the first few largest eigenvalues (for example, via the so called  ``power method" ~\cite{quarteroni2010numerical}). In our experiments, $\tr\lc\left(\Sigma_1^\frac{1}{2}\Sigma_2\Sigma_1^\frac{1}{2}\right)^\frac{1}{2}\rc$ is always approximated as follows: fix an integer $i_0\leq \dim(\Sigma_1)$, compute the first $i_0$ largest eigenvalues of $\Sigma_1\Sigma_2$, take the square root of these eigenvalues and compute their sum.

\textbf{Neighborhood mechanism: acceleration of point updating process}. 
\label{sec:nbh-mech}
If we only require the GT point updating process in some tasks, such as image segmentation and classification, the following proposition allows us to restrict computations of GT distance to only pairs of points within small Euclidean distance instead of computing all GT pairwise distances. 
See the supplementary material for a detailed description and experimental verification. 
\begin{proposition}\label{pro:nbh-mech}
    In each iteration, given a data point $x$, its GT-distance neighborhood is contained in the corresponding Euclidean neighborhood. 
\end{proposition}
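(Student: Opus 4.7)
The plan is to observe that the GT distance formula in Equation (\ref{eq:gtd}) is constructed as a Pythagorean-like combination of the Euclidean distance and the Bures distance between local covariance matrices, so the GT distance always \emph{dominates} the Euclidean distance pointwise. The containment of the GT-distance neighborhood inside the Euclidean neighborhood is then immediate from this dominance.

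More concretely, first I would unpack the definition: for any $x,x' \in X$,
\[
d^{\mathsmaller{(\eps,\lambda)}}_{\alpha,d_X}(x,x')
= \left(\|x-x'\|^2 + \lambda\cdot \dcov^2\!\left(\Sigma^{\mathsmaller{(\eps)}}_{\alpha,d_X}(x),\Sigma^{\mathsmaller{(\eps)}}_{\alpha,d_X}(x')\right)\right)^{1/2}.
\]
Since $\lambda\geq 0$ and $\dcov$ is a (pseudo-)distance on positive semi-definite matrices, the second summand is non-negative, and hence
$d^{\mathsmaller{(\eps,\lambda)}}_{\alpha,d_X}(x,x') \geq \|x-x'\|$. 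Taking $x'$ in the GT-distance neighborhood $B_\eps^{\lambda,\alpha}(x)$ means $d^{\mathsmaller{(\eps,\lambda)}}_{\alpha,d_X}(x,x') \leq \eps$, which in turn forces $\|x-x'\| \leq \eps$, i.e.\ $x'$ lies in the Euclidean $\eps$-neighborhood of $x$.

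For the iterative version of the algorithm the same argument goes through verbatim: at each iteration the updated metric $d'$ is defined by the same formula with $d_X$ replaced by the current metric $d$, but the Euclidean term $\|x-x'\|^2$ (taken between the current coordinates of the shifted points in $\R^m$) appears unchanged, so $d'(x,x') \geq \|x-x'\|$ still holds. Hence the inclusion is preserved through all iterations.

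There is essentially no obstacle here: the result is a direct consequence of the additive form of Equation (\ref{eq:gtd}) together with the non-negativity of $\dcov^2$. The only delicate point worth noting in the write-up is that ``Euclidean neighborhood'' refers to the ambient $\R^m$ distance (not to the current $d_X$), which is precisely the term that appears as the first summand in the GT distance.
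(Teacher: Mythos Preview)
Your proposal is correct and matches the paper's own proof essentially line for line: the paper simply notes that $(D^{k+1})^2 = \|\cdot\|^2 + \dcov^2(\cdot,\cdot) \geq \|\cdot\|^2$, which is exactly your dominance argument. Your additional remark clarifying that the Euclidean term refers to the ambient $\R^m$ distance on the current (shifted) coordinates is accurate and, if anything, makes the argument slightly more explicit than the paper's version.
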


\textbf{Worst-case complexity comparison.} We now carry out a comparison of the worst case computational complexity associated to GT and WT. Denote the point cloud size by $n$, its dimension by $m$ and the maximum neighborhood size of each point by $N$. The complexity of GT (distance updating process) for one iteration is $O(n^2(Nm^2 + m^3))$ and that of WT is $O(n^2(N^3\log N))$. Note that GT is significantly faster than WT in the regime when $m \ll N$. We also emphasize that as we described earlier in this section, in practice one is able to use spectral methods for approximating the square root matrix calculations intrinsic to GT. See the supplementary materials for the complexity of GT using neighborhood mechanism and derivations of all reported time complexities. 
\section{Implementation, examples and applications}\label{sec:exp}
We now apply GT to various datasets.  In all of our experiments, $\alpha$ is the normalized empirical measure, and the radius adopted in each dataset, $\eps$, varies across different experiment but, in a given experiment, it remains fixed (by default) throughout iterations. In figures, we use $\tau$ to represent the number of iterations. 
We compare the performance of GT with MS and WT if applicable. We only present results of WT2 in the paper and see the supplementary material for results of WT1.

\textbf{Clustering of a T-junction dataset.} 
We compare the clustering results based on GT with those of MS and WT2 on the T-junction dataset {shown in Figure~(\ref{fig:tline-original})}, which is composed of a vertical line with 200 uniformly arranged points spanning from $(0,1)$ to $(0,200)$ and a horizontal line with 201 uniformly arranged points spanning from $(-100,0)$ to $(100,0)$. 
We set $\eps$=10. 
Figure~(\ref{fig:tline-mds}) illustrates the updated point cloud using MS and 2D/3D MDS plots using GT with $\lambda$=1, GT with $\lambda$=5 and WT2 after 2 iterations. Figure~(\ref{fig:tline-dend}) shows the corresponding dendrograms, based on which we split the data into 4 clusters. 
Note that 3D MDS plots of GT and WT2 have comparable structures, which indicates similarity between their distance matrices. This agrees with our analysis of the two line dataset in Section \ref{sec:gt} and validates our claim that GT is an approximation of WT2. Although it is clear that both the dendrogram and the MDS plot of GT-$\lambda$-1 are degraded compared with those of WT2, GT allows us to fine-tune $\lambda$: when $\lambda$=5, the performance of GT is visually comparable to that of WT2. 

\begin{figure}[htb!] 
    \centering
    \subfloat[Original Data]{
    \label{fig:tline-original}
        \fbox{
        \begin{minipage}[t][2.35cm][t]{0.11\textwidth}
		    \centering
		    $\tau=0$ \\ 
            \includegraphics[width=1\textwidth]{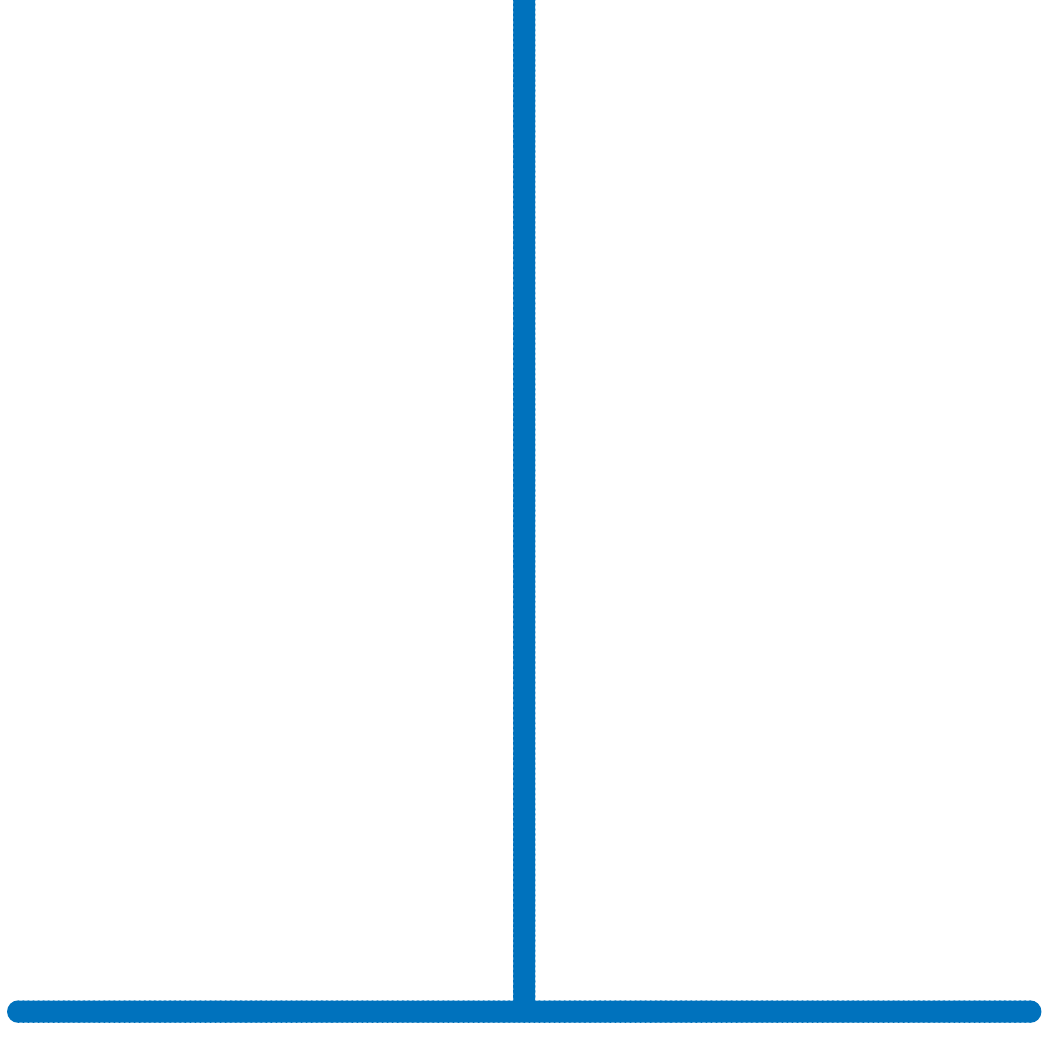}
        \end{minipage}
        }
    }
    \subfloat[2D and 3D MDS at $\tau=2$]{
    \label{fig:tline-mds}
        \fbox{
    	    \begin{minipage}[t][2.35cm][t]{0.08\textwidth}
    		    \centering
    		    MS \\ 
                \includegraphics[width=0.85\textwidth]{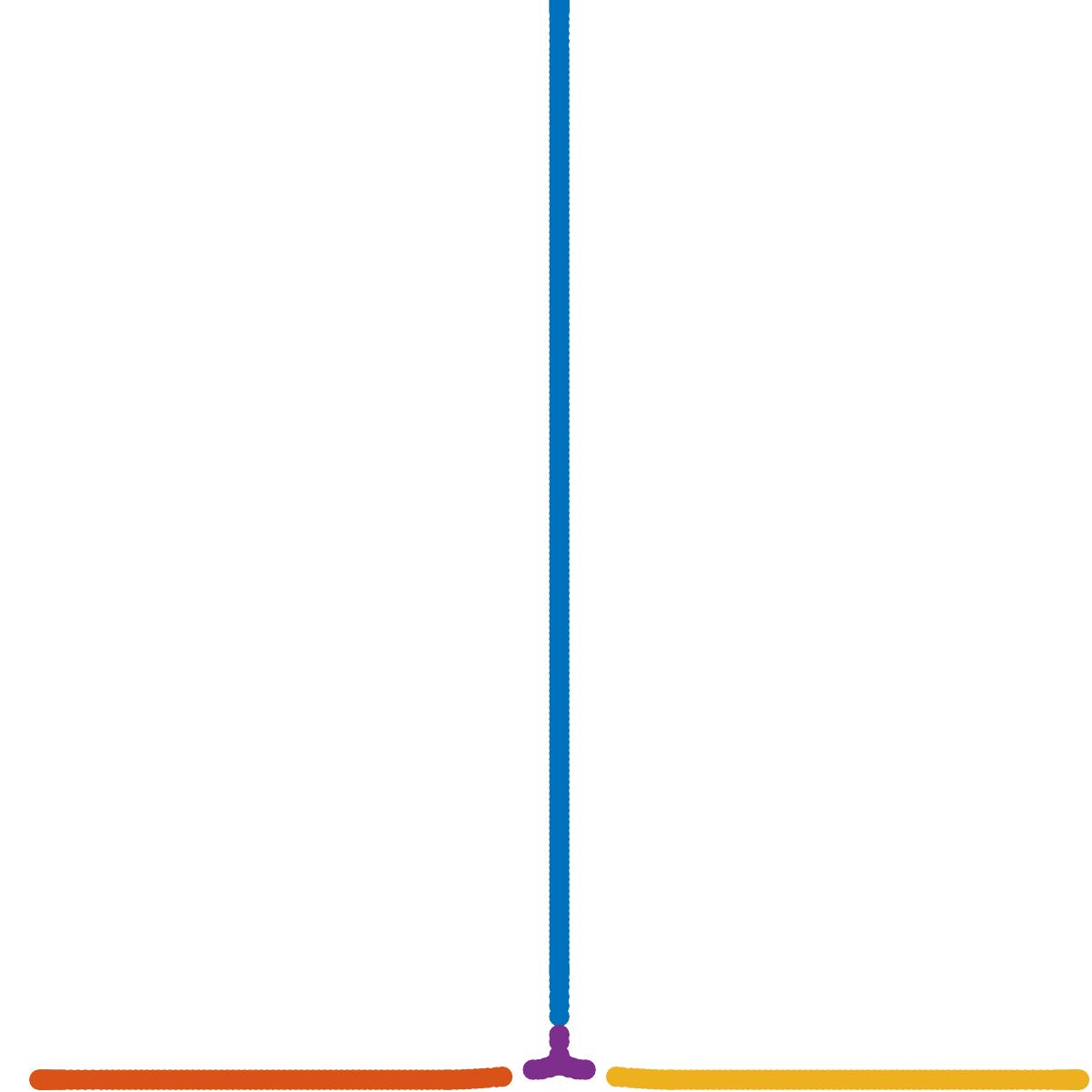} 
                \\ 
            \end{minipage}
            \begin{minipage}[t][2.35cm][t]{0.08\textwidth}
    		    \centering
    		    GT-$\lambda$-1 
    		    \\  \includegraphics[width=1\textwidth]{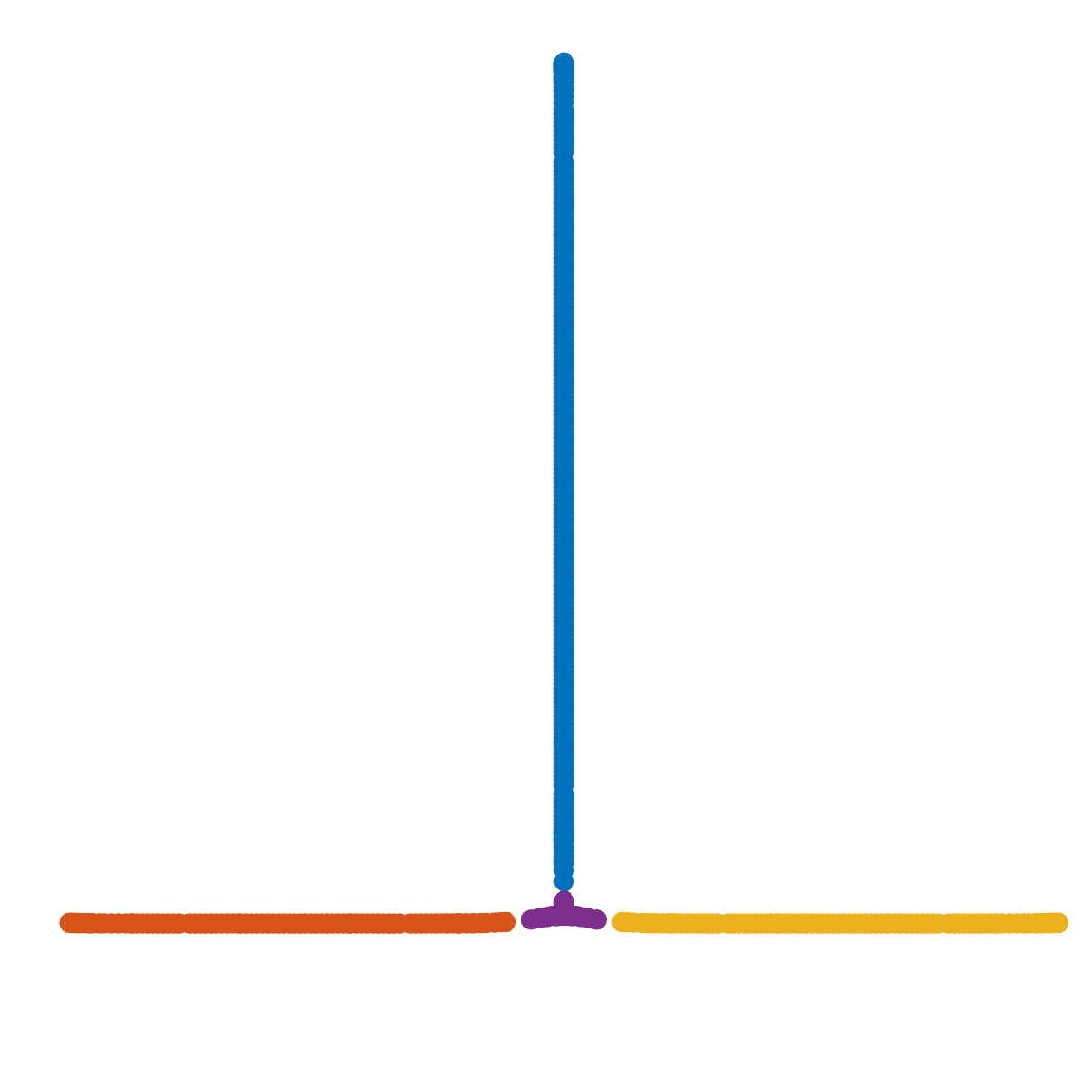}
    		    \\
    		   \includegraphics[width=1\textwidth]{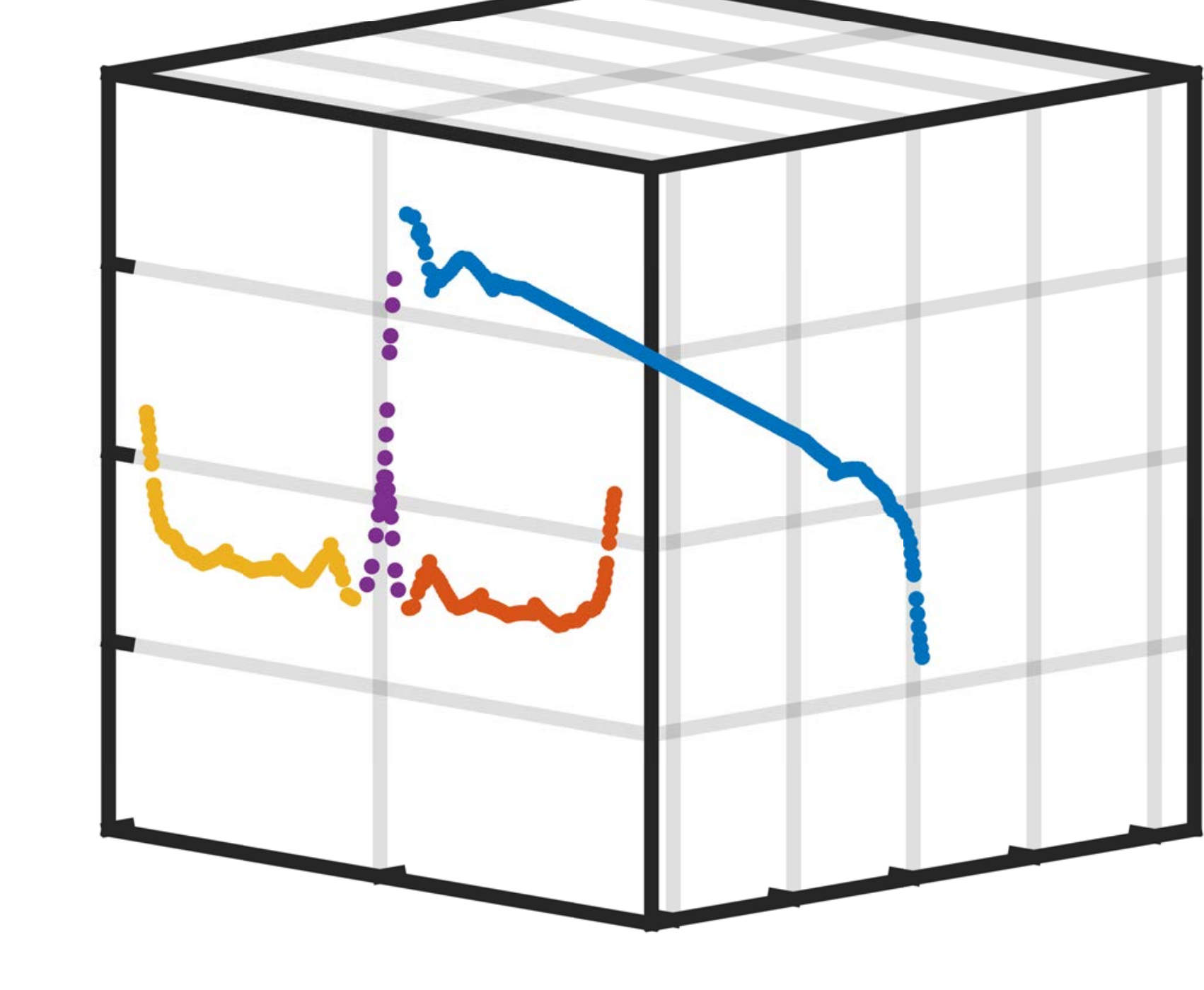} 
            \end{minipage}
            \begin{minipage}[t][2.35cm][t]{0.08\textwidth}
    		    \centering
    		    GT-$\lambda$-5 \\ 
                \includegraphics[width=1\textwidth]{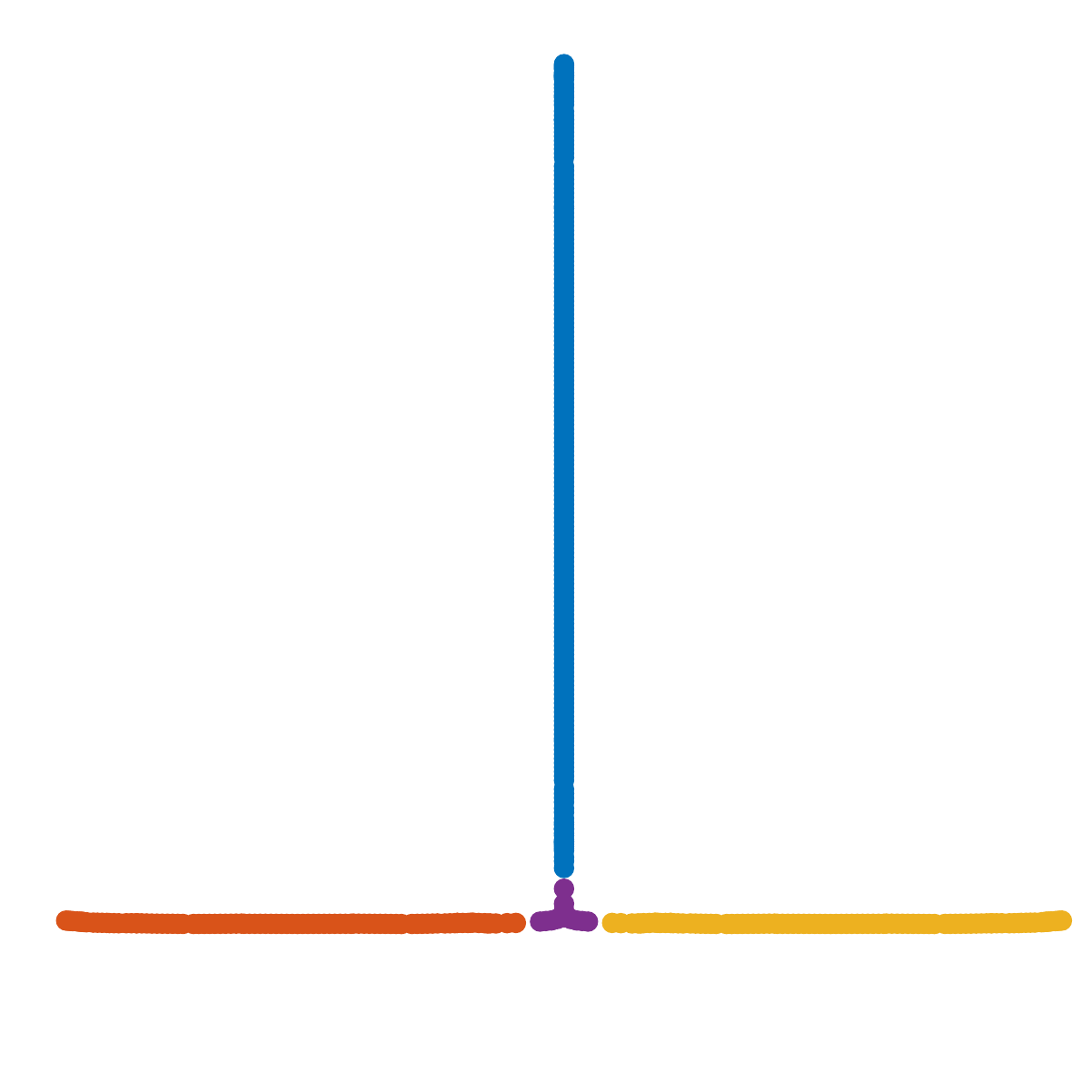}
    		    \\
    		   \includegraphics[width=1\textwidth]{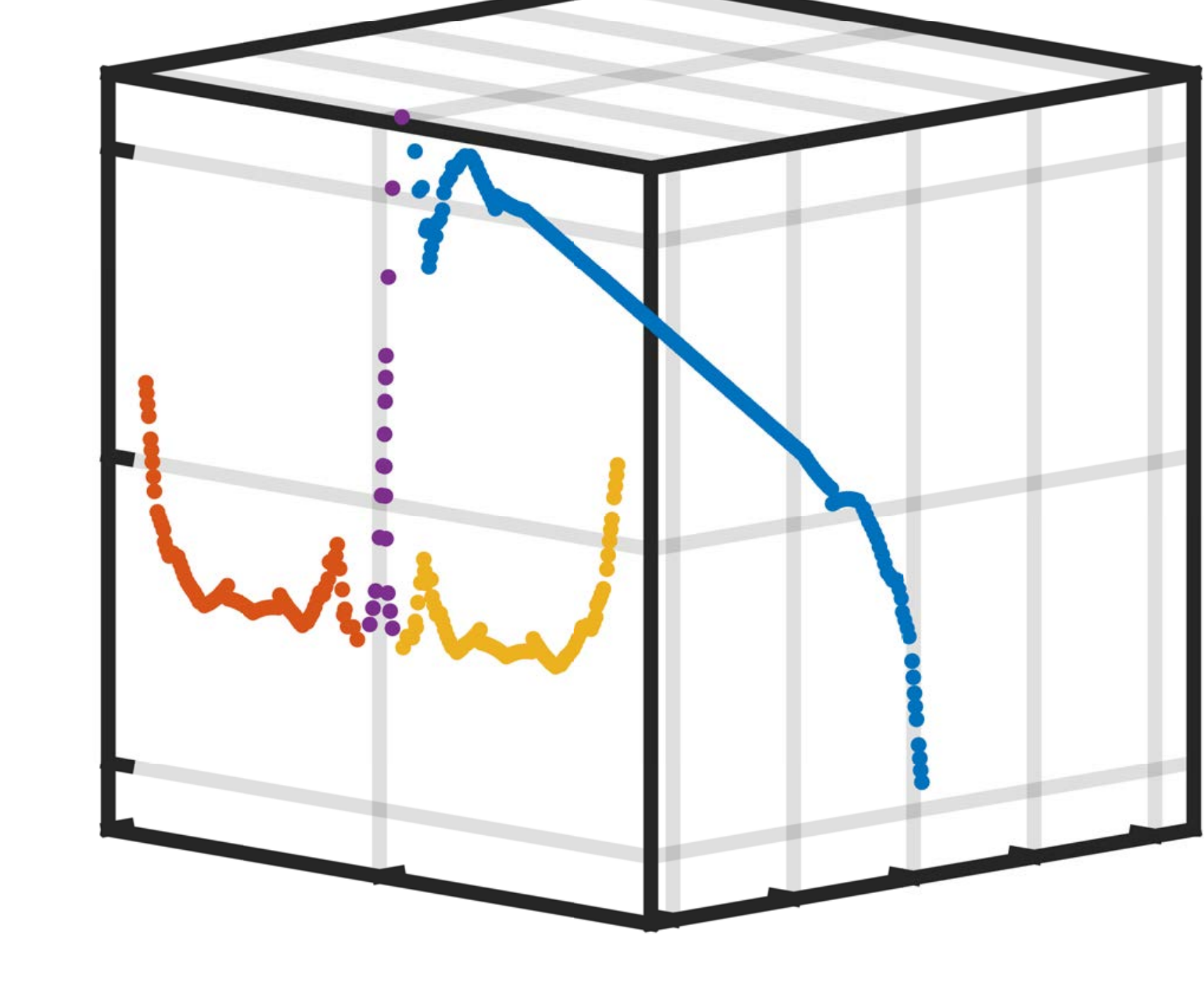} 
            \end{minipage}
            \begin{minipage}[t][2.35cm][t]{0.08\textwidth}
    		    \centering
    		    WT2 \\ 
    		    \includegraphics[width=1\textwidth]{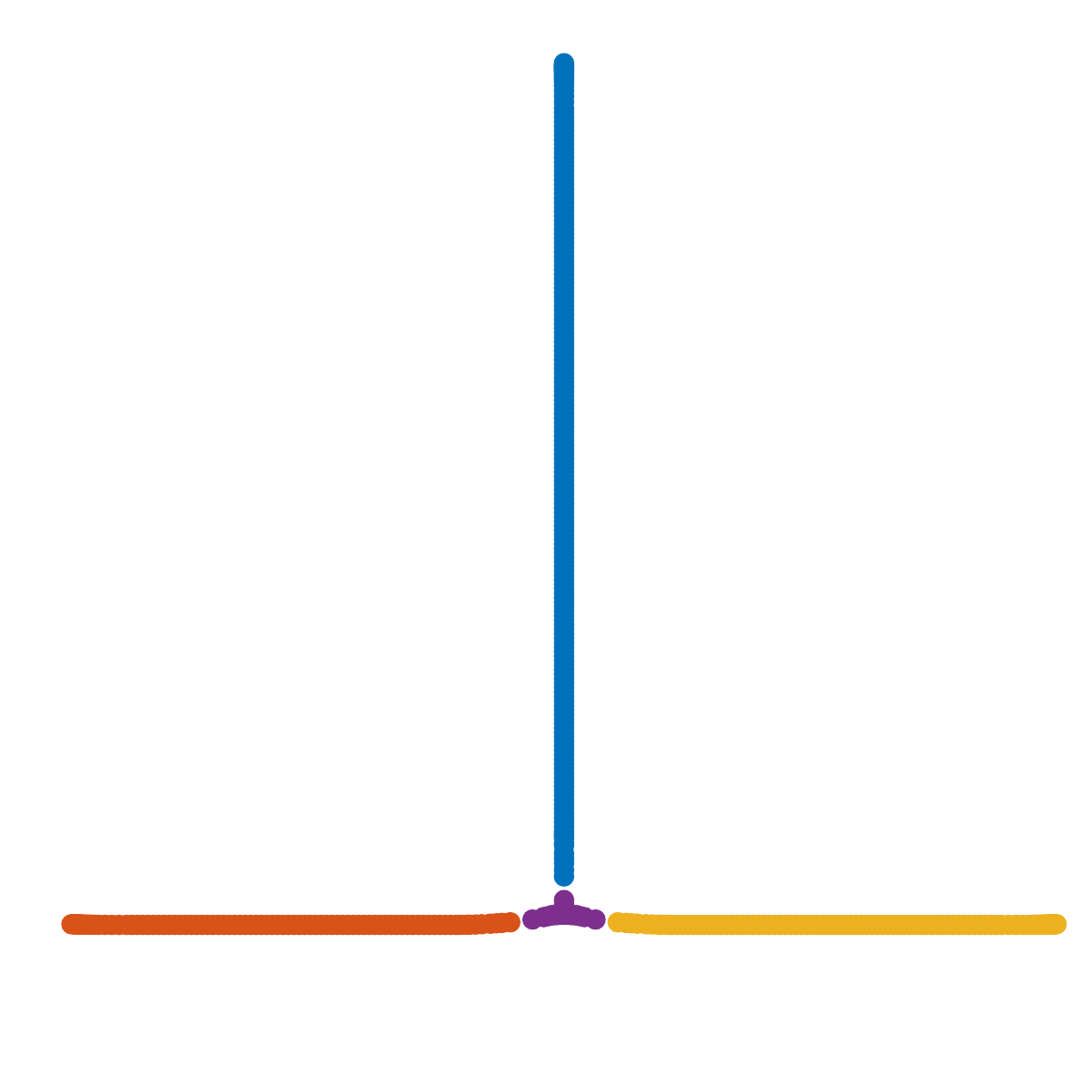}
                \\
               \includegraphics[width=1\textwidth]{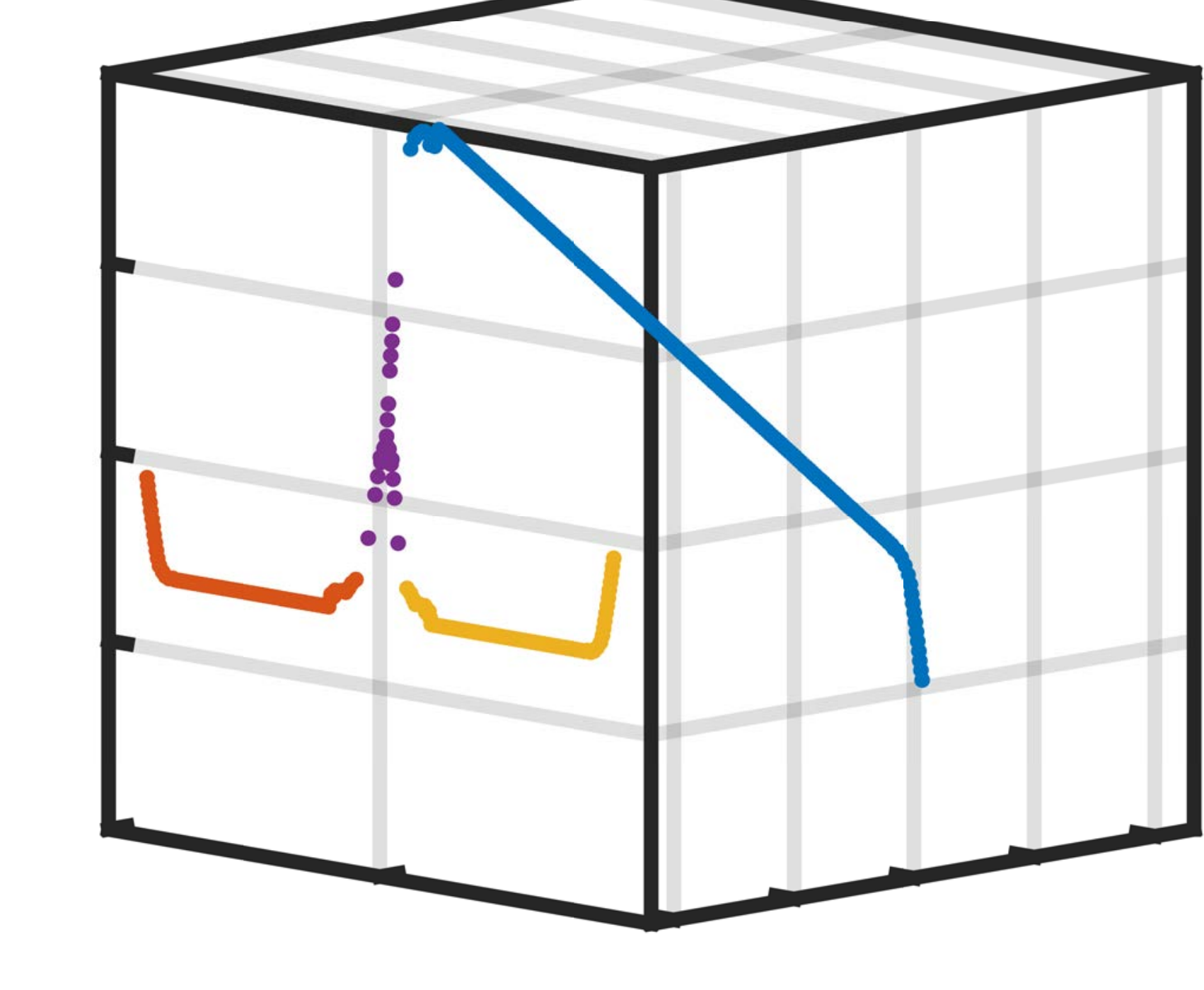} 
            \end{minipage}
        }
	}
	\subfloat[Dendrograms at $\tau=2$]{
	\label{fig:tline-dend}
        \fbox{
            \begin{minipage}[t][2.35cm][t]{0.08\textwidth}
    		    \centering
    		    MS \\ 
                \includegraphics[width=1\textwidth]{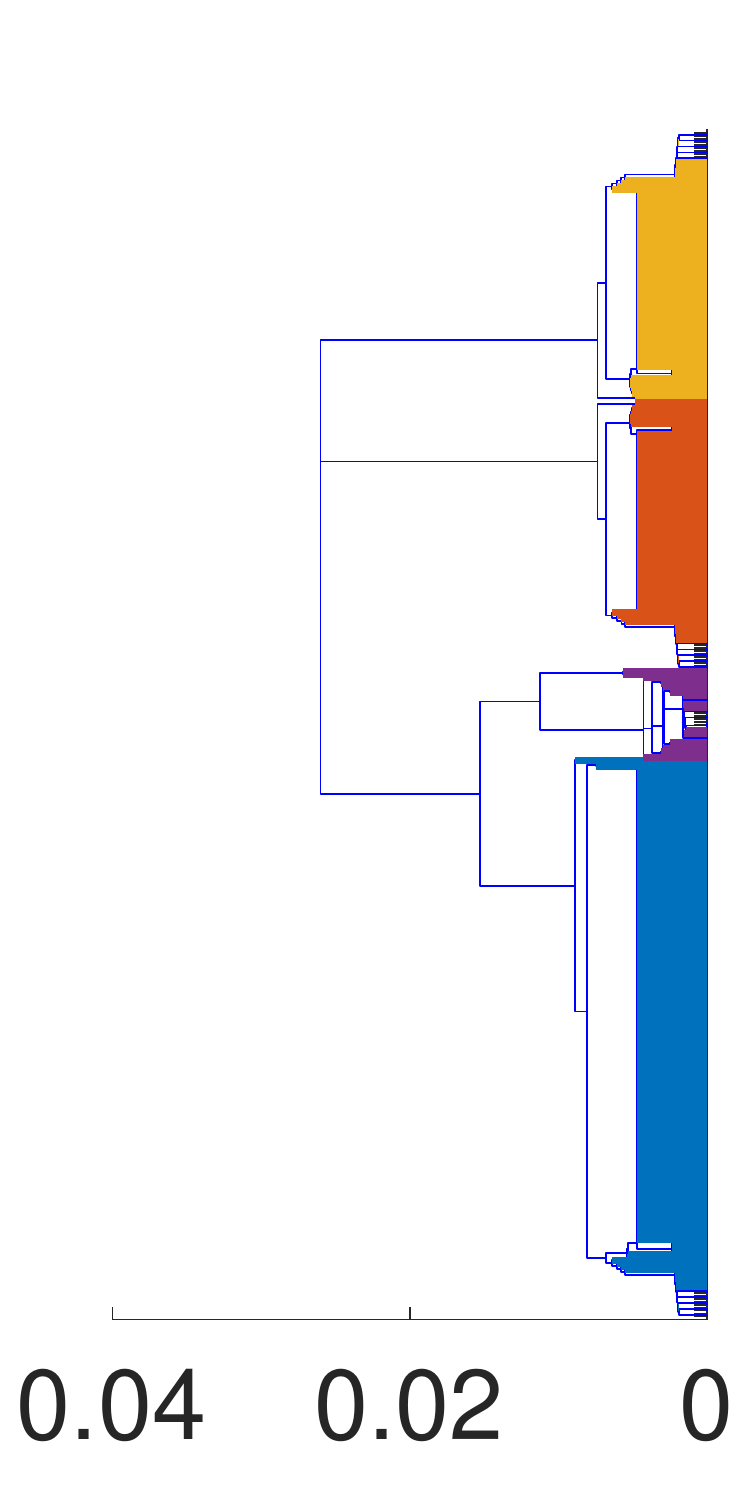}
            \end{minipage}
		    \begin{minipage}[t][2.35cm][t]{0.08\textwidth}
    		    \centering
    		    GT-$\lambda$-1 \\ 
                \includegraphics[width=1\textwidth]{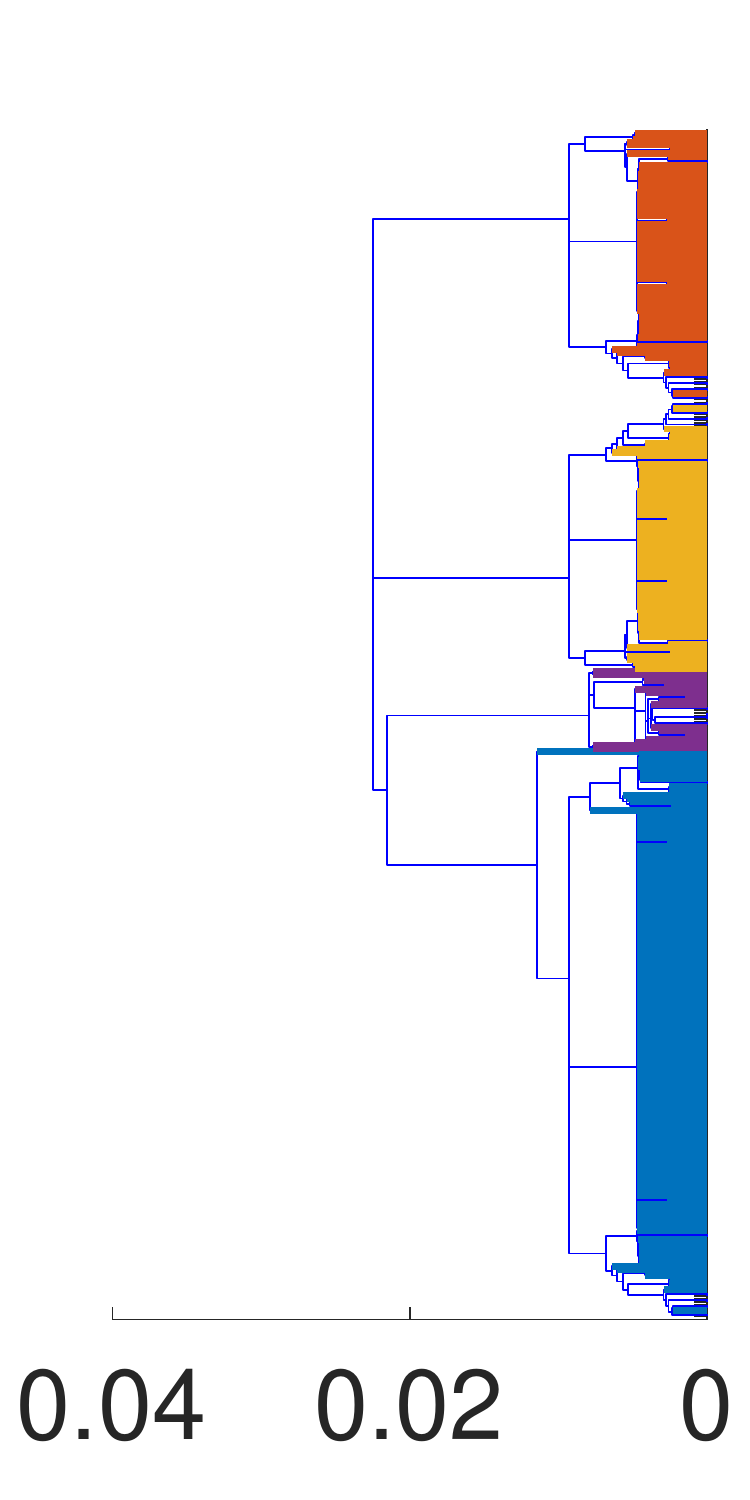}
            \end{minipage}
            \begin{minipage}[t][2.35cm][t]{0.08\textwidth}
    		    \centering
    		    GT-$\lambda$-5 \\ 
                \includegraphics[width=1\textwidth]{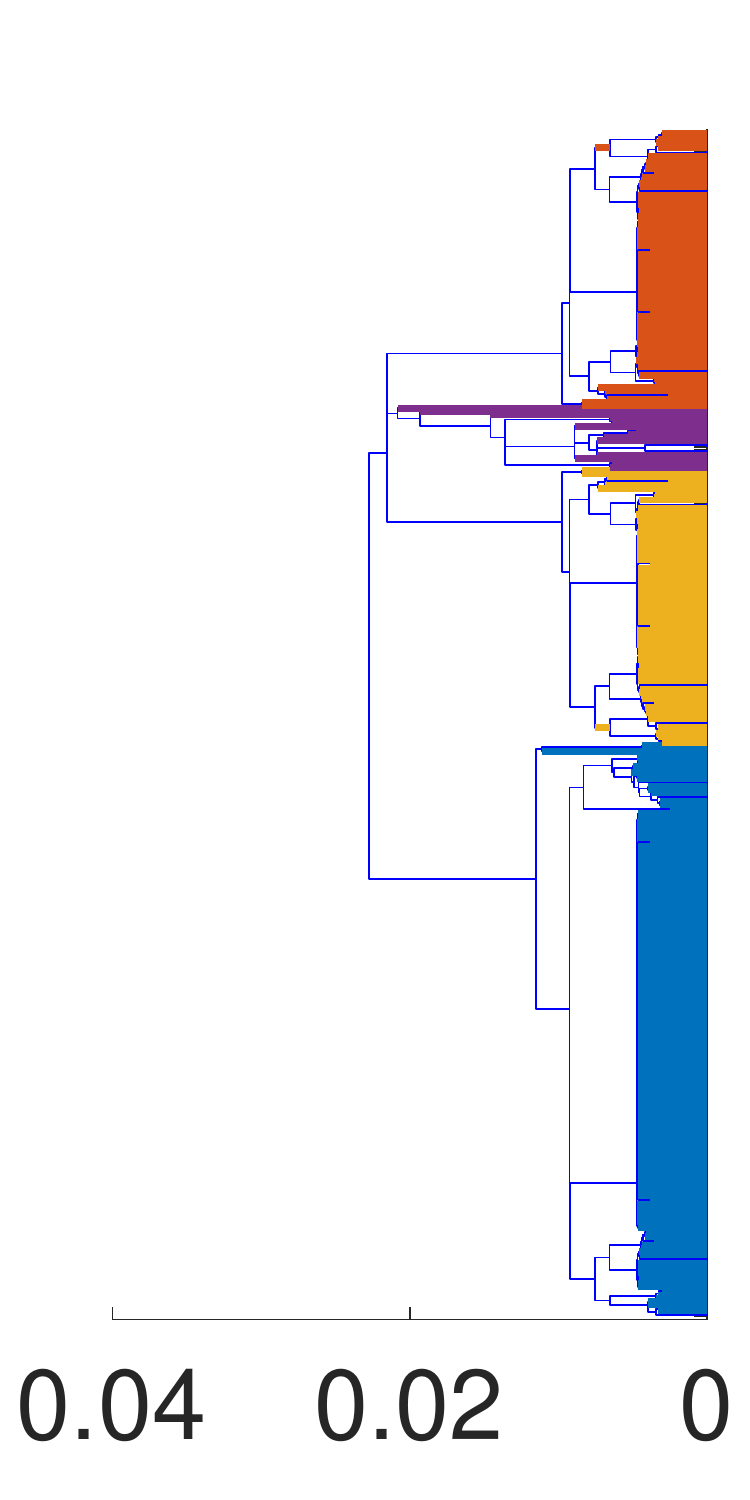}
            \end{minipage}  
            \begin{minipage}[t][2.35cm][t]{0.08\textwidth}
    		    \centering
    		    WT2 \\ 
                \includegraphics[width=1\textwidth]{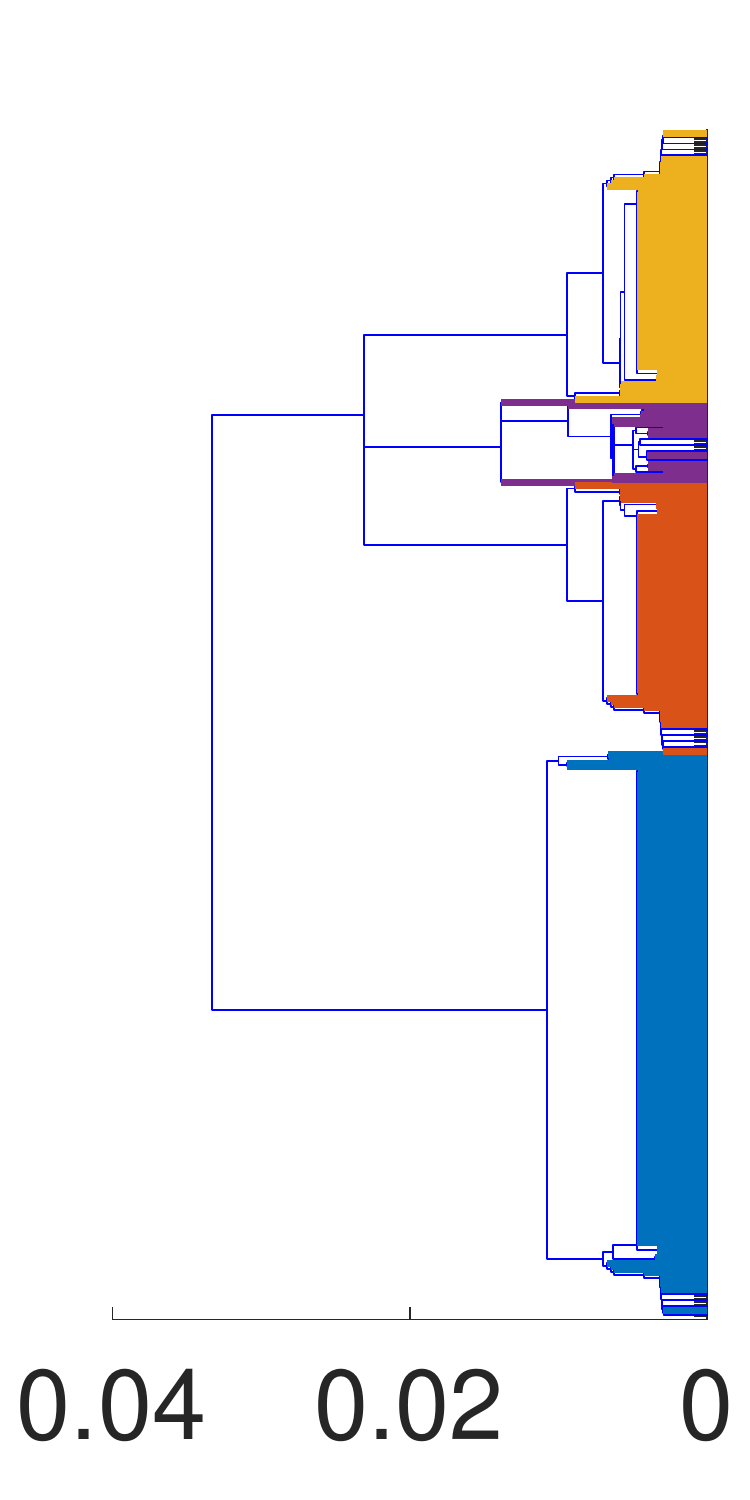}
            \end{minipage}
                
        } 
	}
    \caption{T-junction clustering. 
    (a): Original dataset. 
    (b): The first column shows the updated point cloud based on MS after 2 iterations; the next three columns shows the 2D and 3D MDS of distance matrix based on GT with $\lambda$=1, GT with $\lambda$=5 and WT2 after 2 iterations, respectively (results after 1 iteration is given in the supplementary material). Different colors in (b) represent different clusters of dataset, which are obtained by slicing the dendrograms illustrated in (c). 
    (c): The four columns demonstrate the clustering dendrograms using methods in (b).
    } 
    \label{fig:tlines}
\end{figure}

\textbf{Ameliorating the chaining effect.} 
In this application, we consider a clustering task on a dataset with two blobs connected by a chain. Each blob is composed of 300 uniform grid points in a circle of radius 1 and the chain is composed 200 uniform grid points with length 2. We set $\eps=0.2$ in this experiment. Standard single linkage hierarchical clustering fails to detect two such blobs -- a phenomenon known as the \emph{chaining effect}. However, GT helps separate the two blobs and improve the quality of dendrograms throughout iterations. 
See Figure~(\ref{fig:ellip-iters}) for the results. 
We further apply linear transformations  $T=\begin{psmallmatrix}
a_1 & 0\\0 & a_2
\end{psmallmatrix}$  to the dataset to examine how the geometry of datasets influences the clustering performance of MS, WT2 and GT. Define by $e\coloneqq a_2/a_1$ the \emph{eccentricity} of $T$. We apply the methods on transformed datasets with differen eccentricities for 1 iteration. Partial results are shown in Figure~(\ref{fig:ballchains-mswt}) and (\ref{fig:ballchains-gt}).We see from the dendrograms that {MS, WT2 and GT-$\lambda$-1 have similar performance in clustering. One noticeable observation is that under extreme condition of $e:=1/0.2$, GT-$\lambda$-5 outperforms MS, WT2 and GT-$\lambda$-1 in separating two blobs from the chain, which implies that $\lambda$ plays an important role for GT in enhancing geometric structure. Please refer to the supplementary material for more details and results. } 
 
\begin{figure}[htb]
    \centering
    \subfloat[GT and chaining effect]{
    \label{fig:ellip-iters}
    \fbox{
      \begin{minipage}[t][2.8cm][t]{0.07\textwidth}
          \centering
    		    $\tau=0$    
    		    \includegraphics[width=1\textwidth]{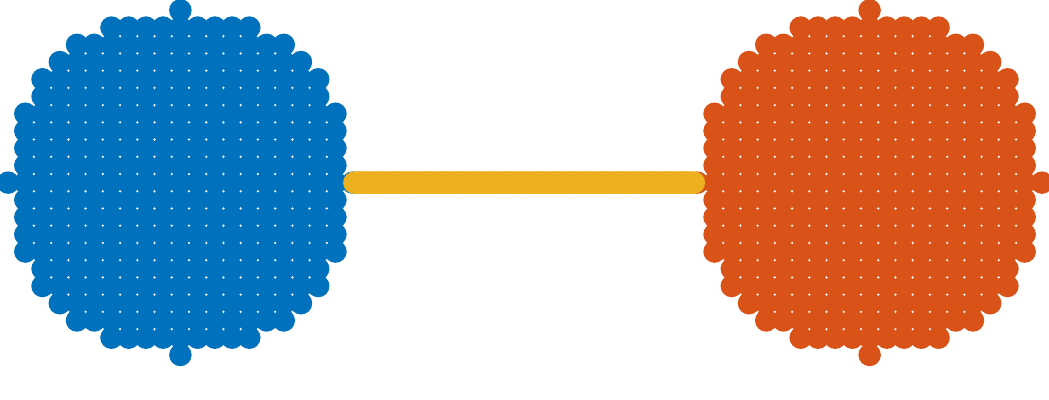} 
              \begin{minipage}[t][0.45cm][t]{0.07\textwidth}
              \centering
                 
              \end{minipage}
              
             \includegraphics[width=1\textwidth]{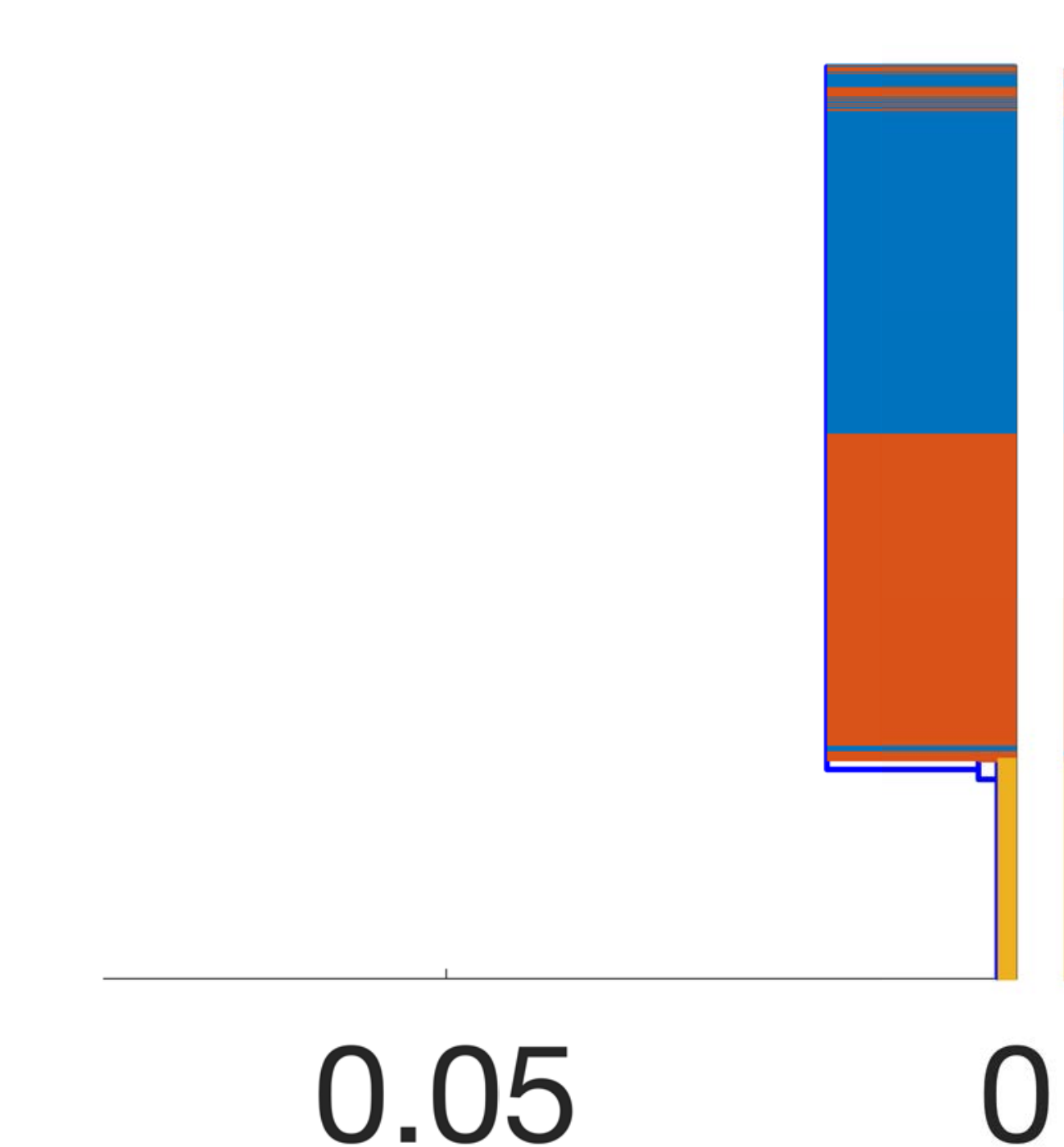}
          \end{minipage}
          \begin{minipage}[t]{0.07\textwidth}
          \centering
             $\tau=1$ 
            \includegraphics[width=1\textwidth]{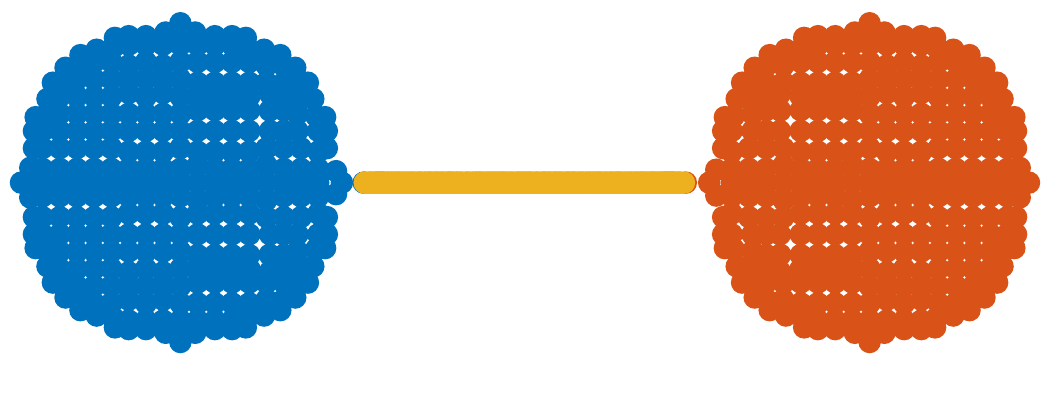}
               
             \includegraphics[width=1\textwidth]{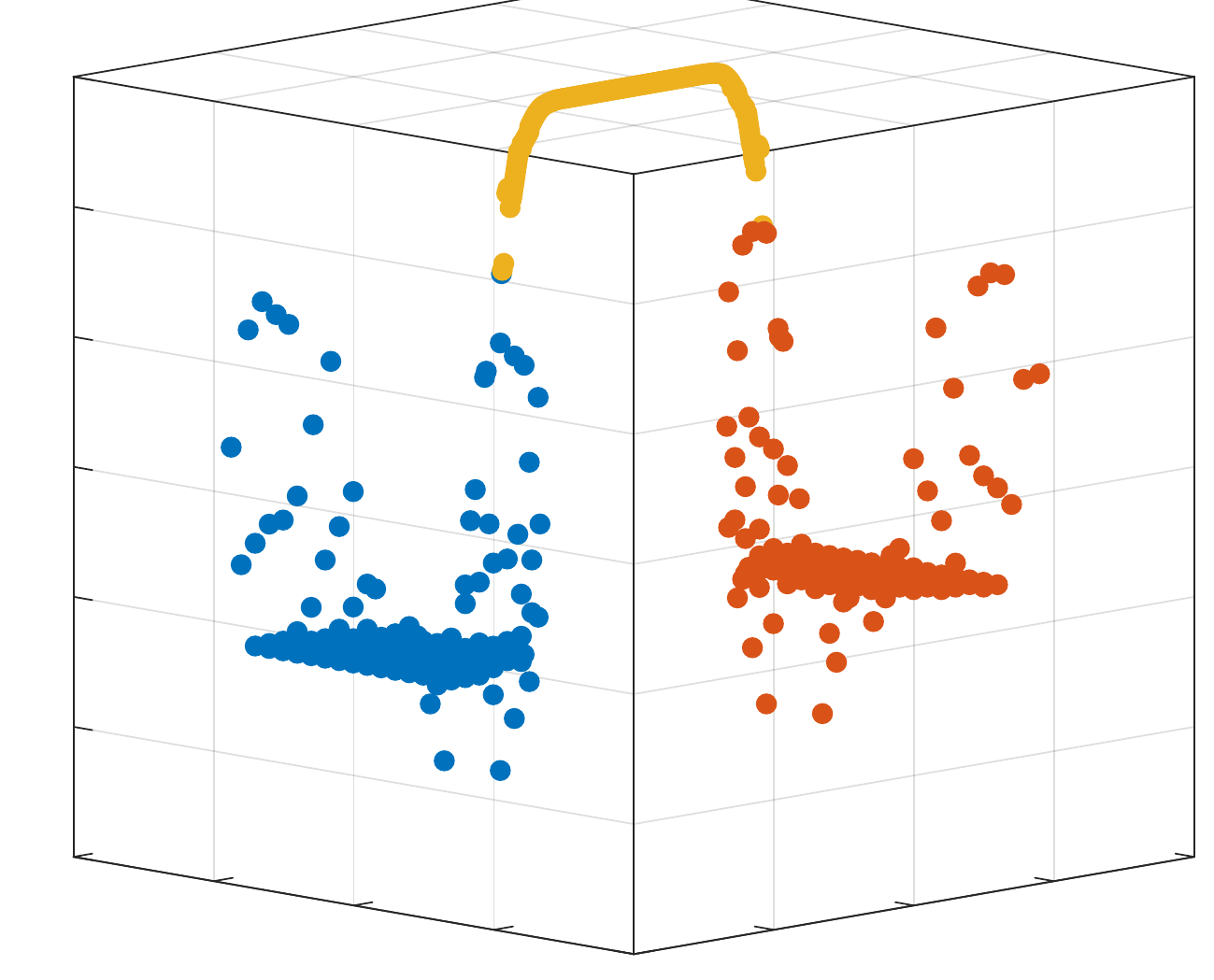}
              
             \includegraphics[width=1\textwidth]{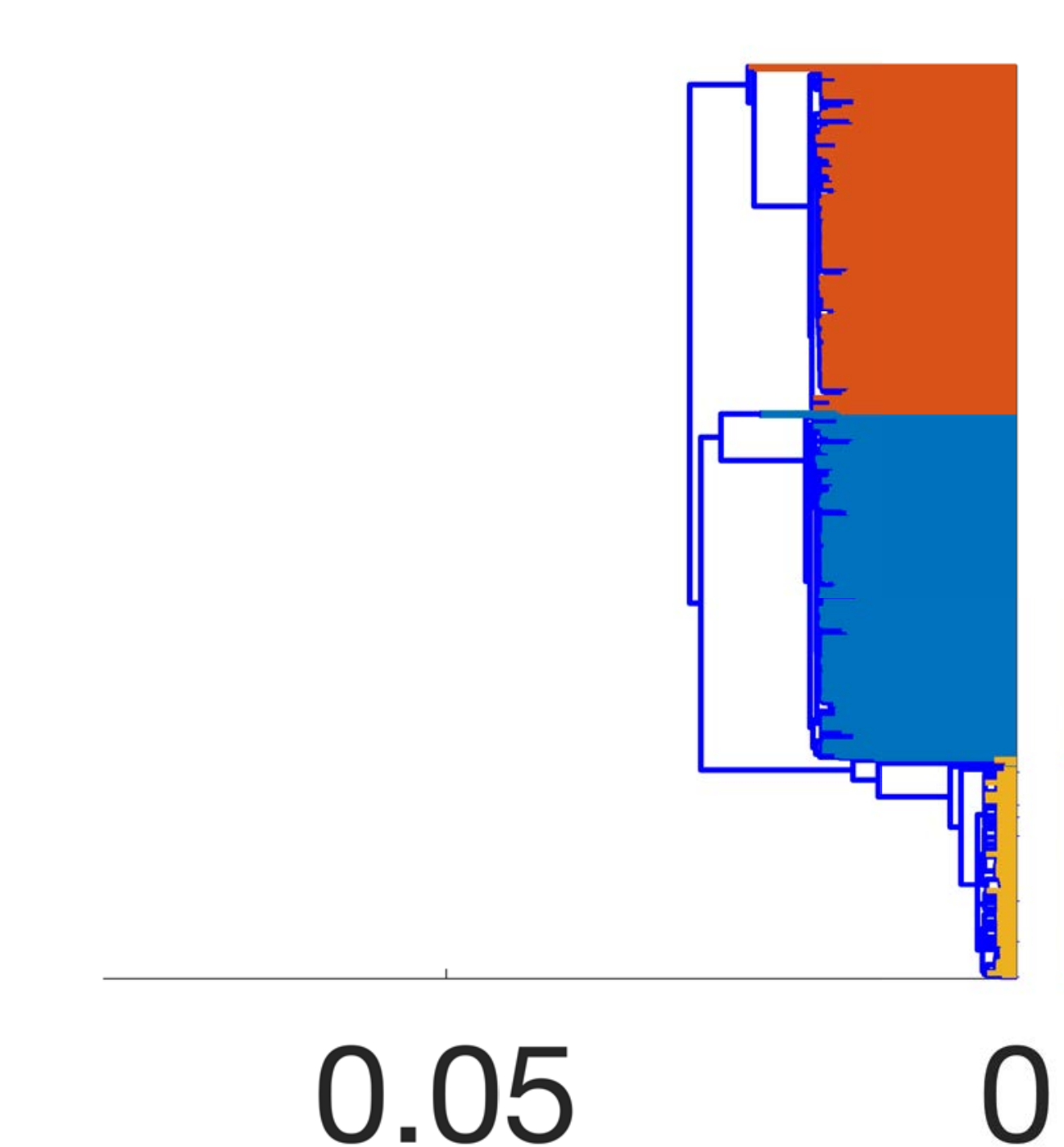}
          \end{minipage}
          \begin{minipage}[t]{0.07\textwidth}
          \centering
             $\tau=2$   
             \includegraphics[width=1\textwidth]{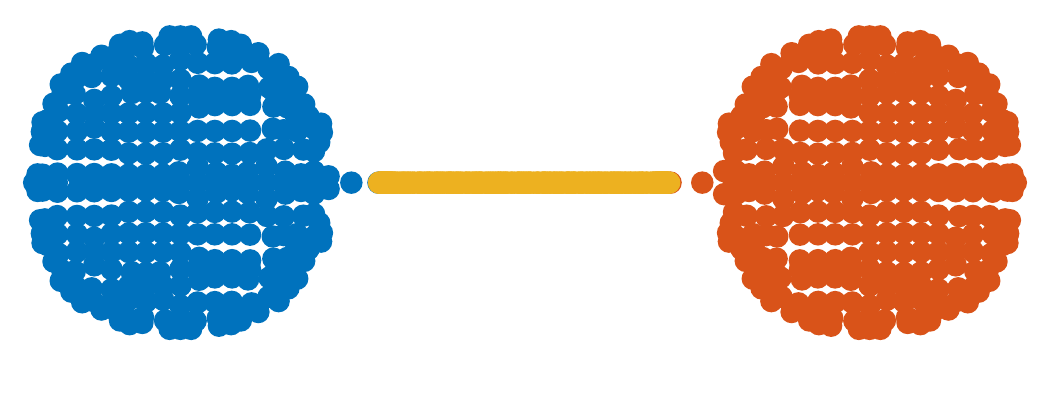}  
             \includegraphics[width=1\textwidth]{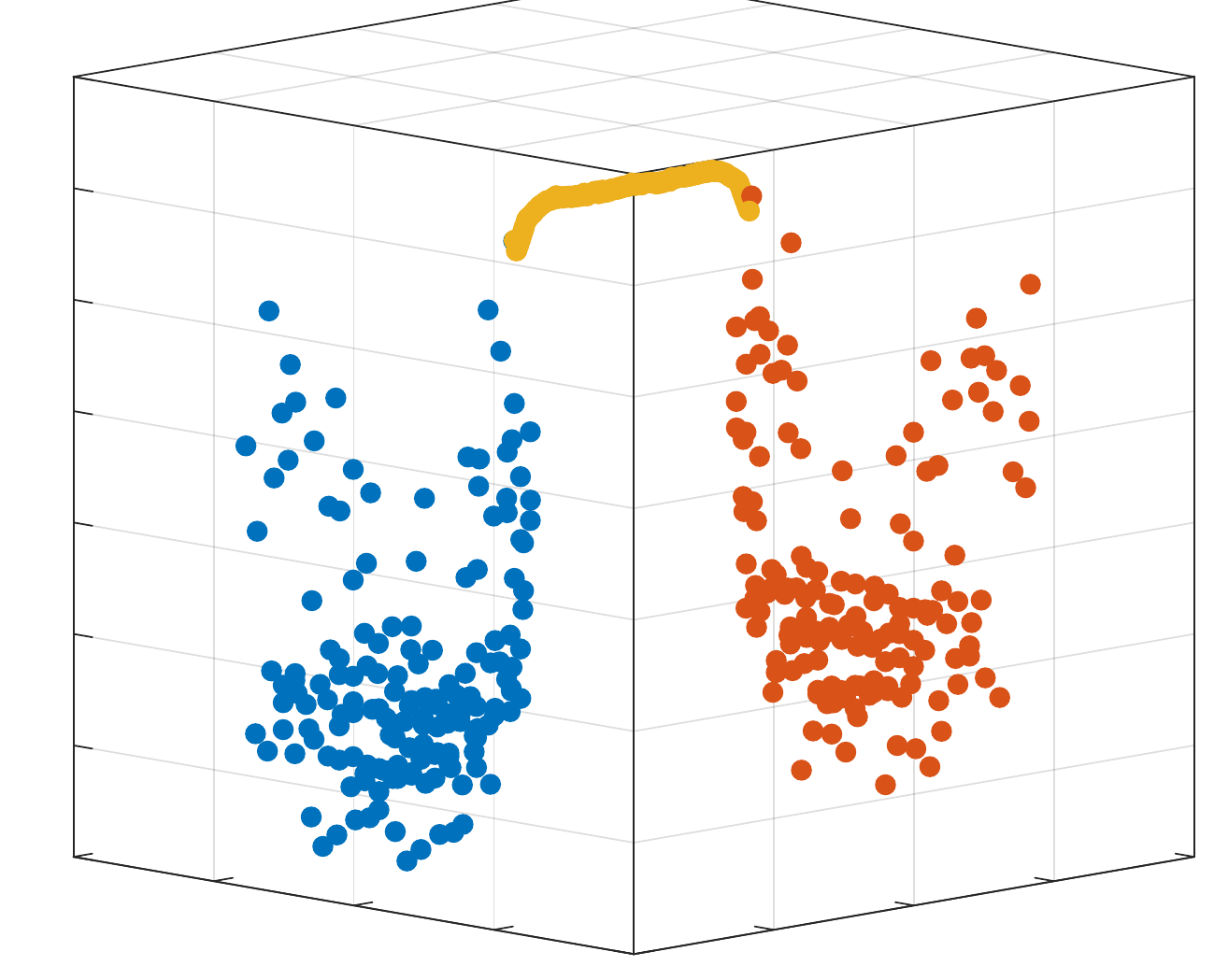} 
              
             \includegraphics[width=1\textwidth]{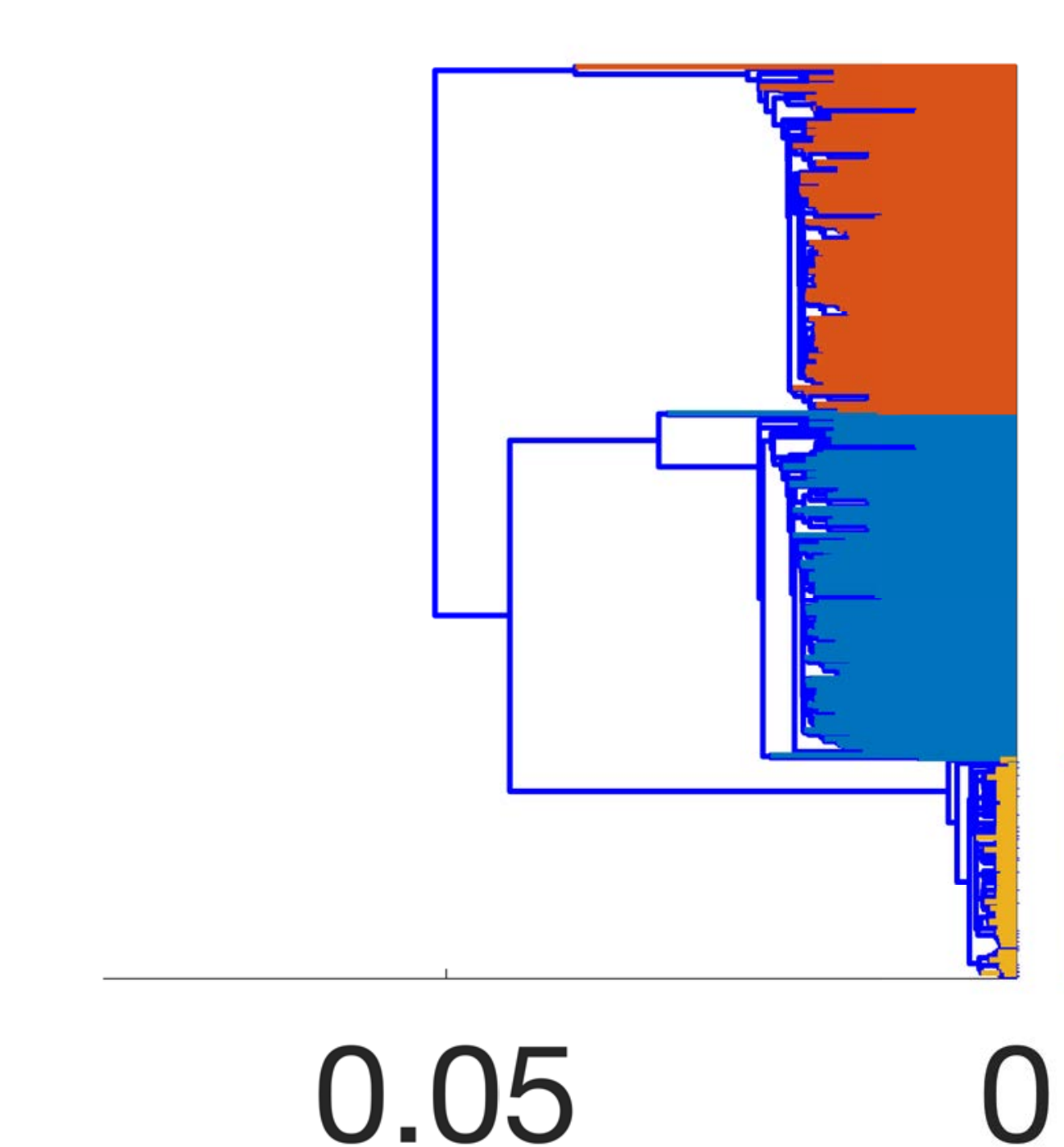}
          \end{minipage}
        \begin{minipage}[t]{0.07\textwidth}
          \centering
             $\tau=3$   
             \includegraphics[width=1\textwidth]{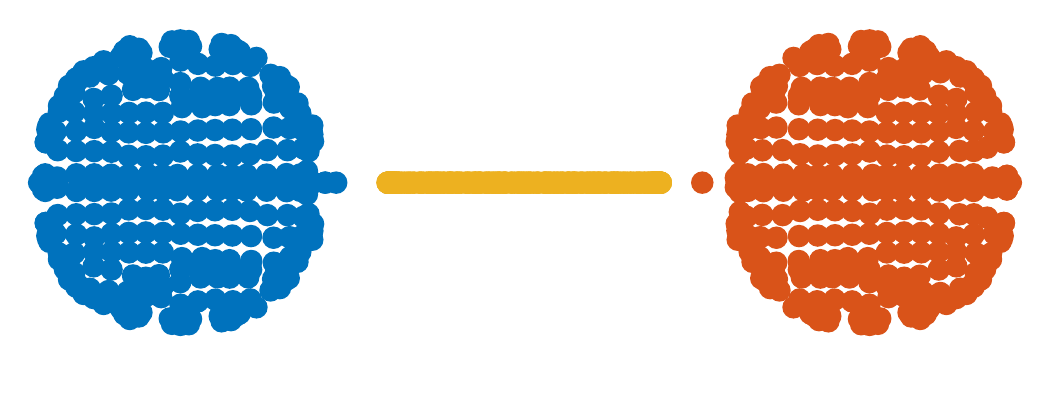}  
             
             \includegraphics[width=1\textwidth]{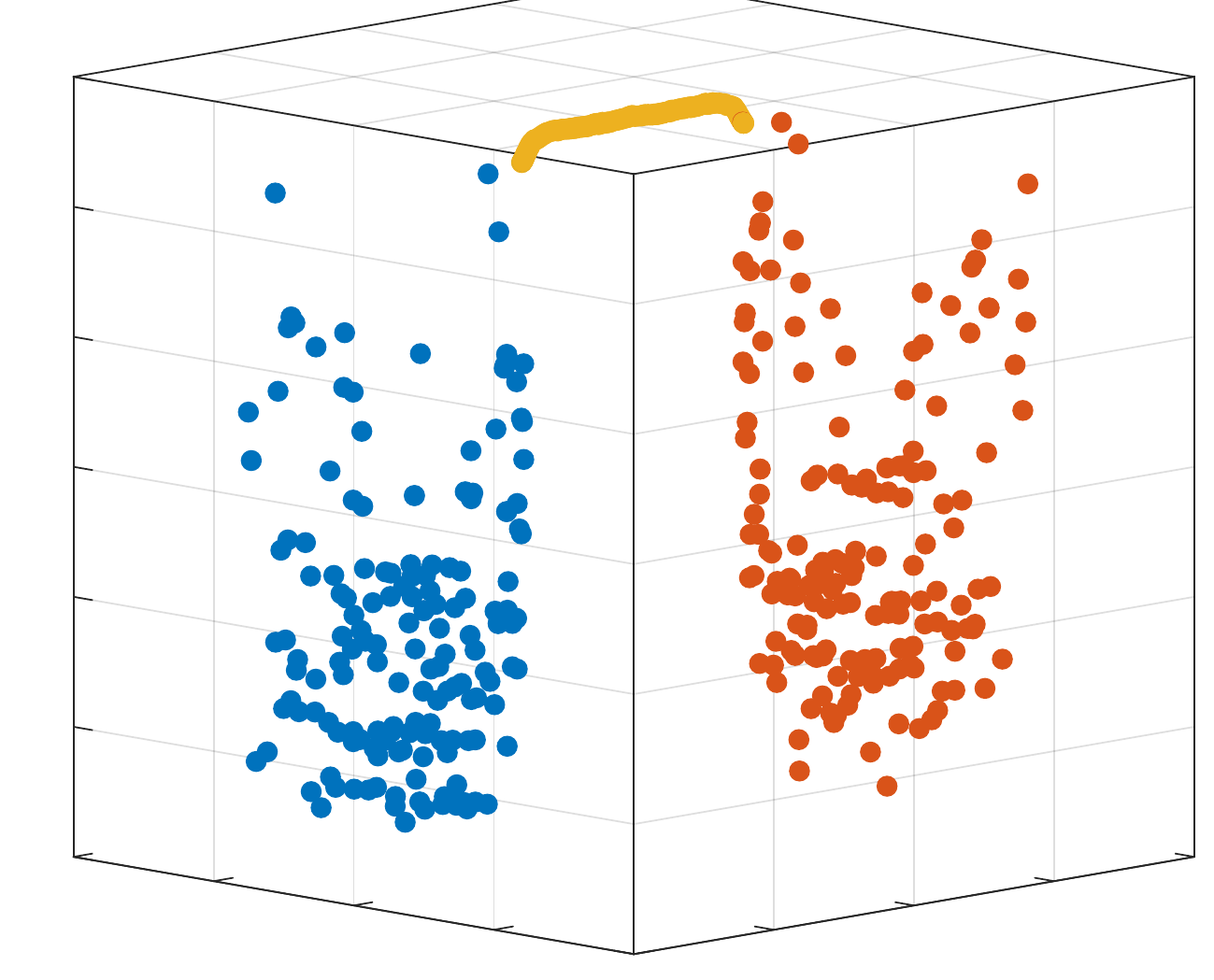}
             
             \includegraphics[width=1\textwidth]{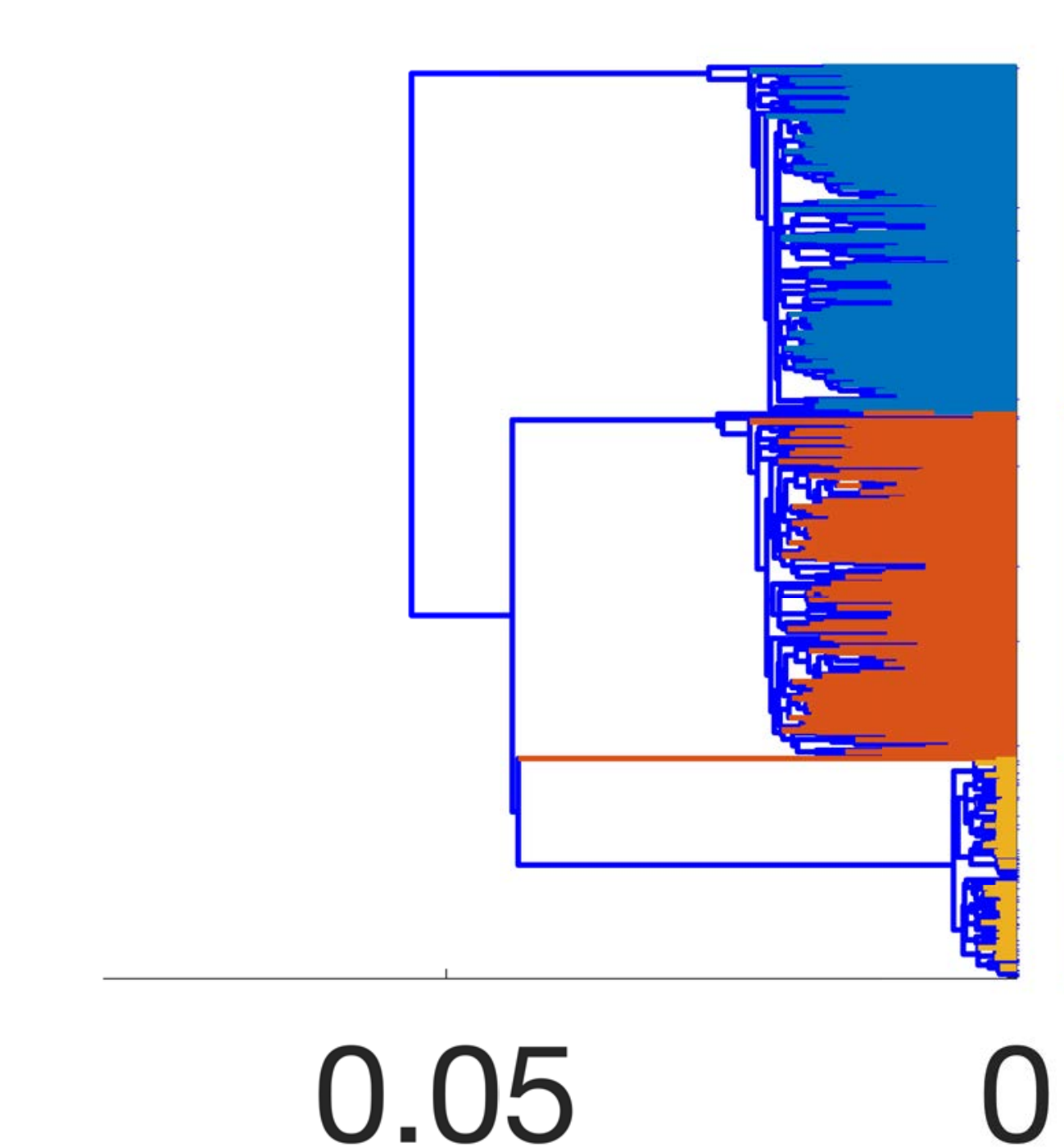}
          \end{minipage}
          \begin{minipage}[t]{0.07\textwidth}
          \centering
             $\tau=4$ 
             \includegraphics[width=1\textwidth]{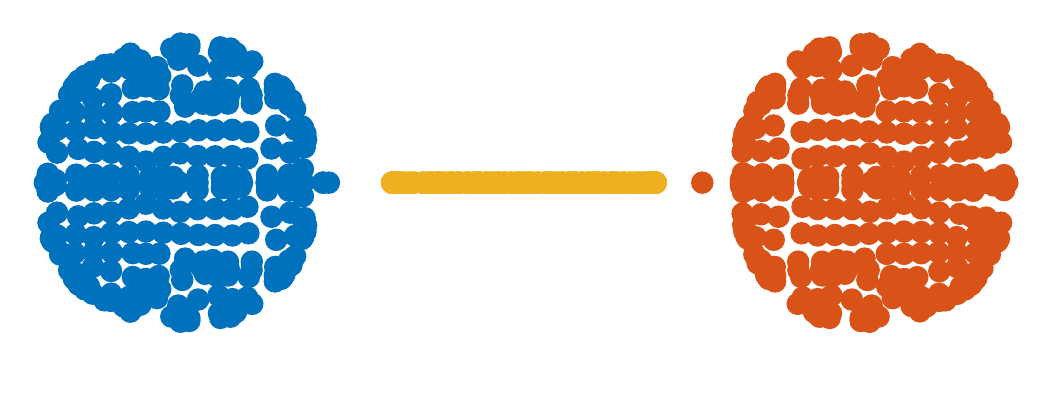}
             
             \includegraphics[width=1\textwidth]{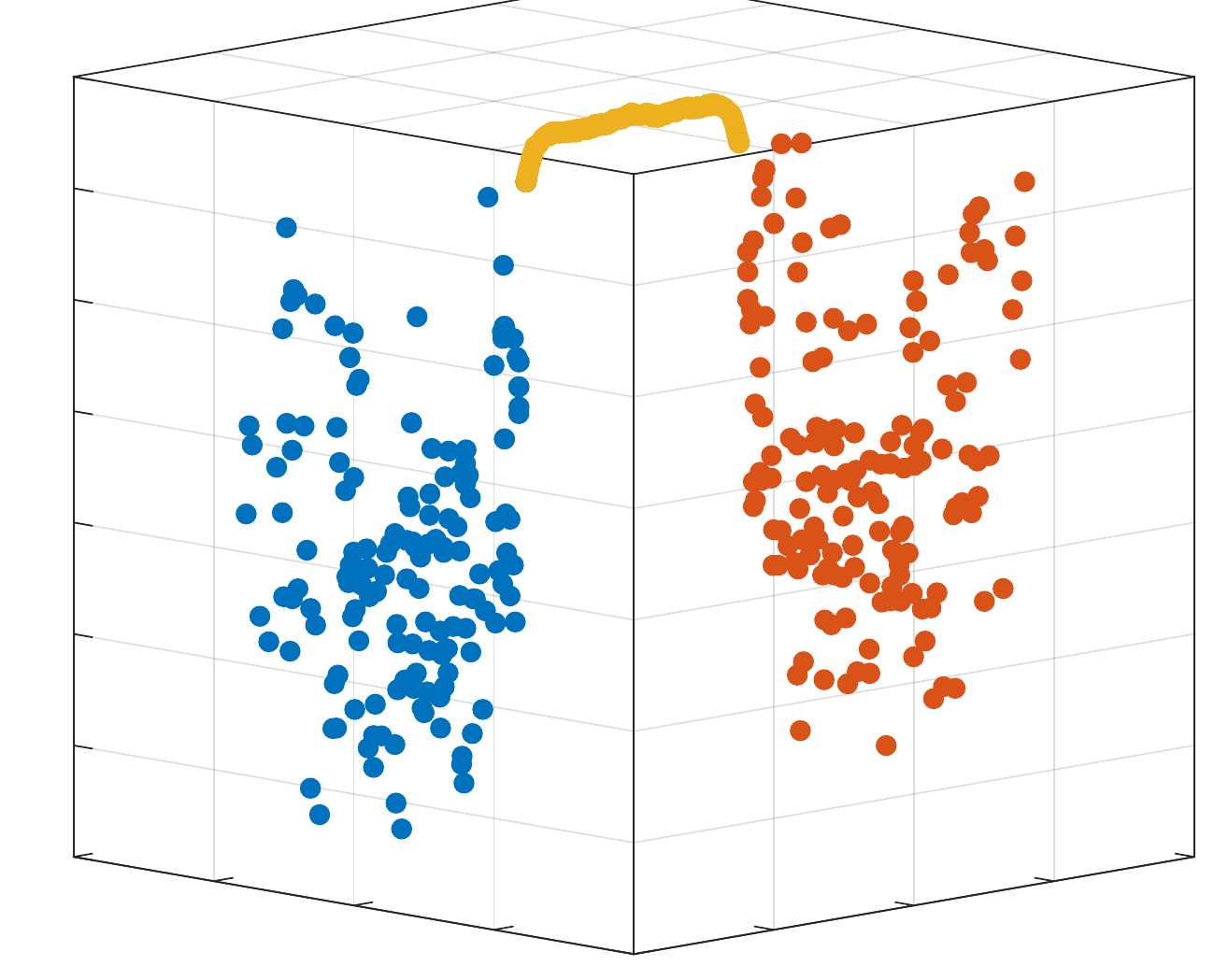}
             
             \includegraphics[width=1\textwidth]{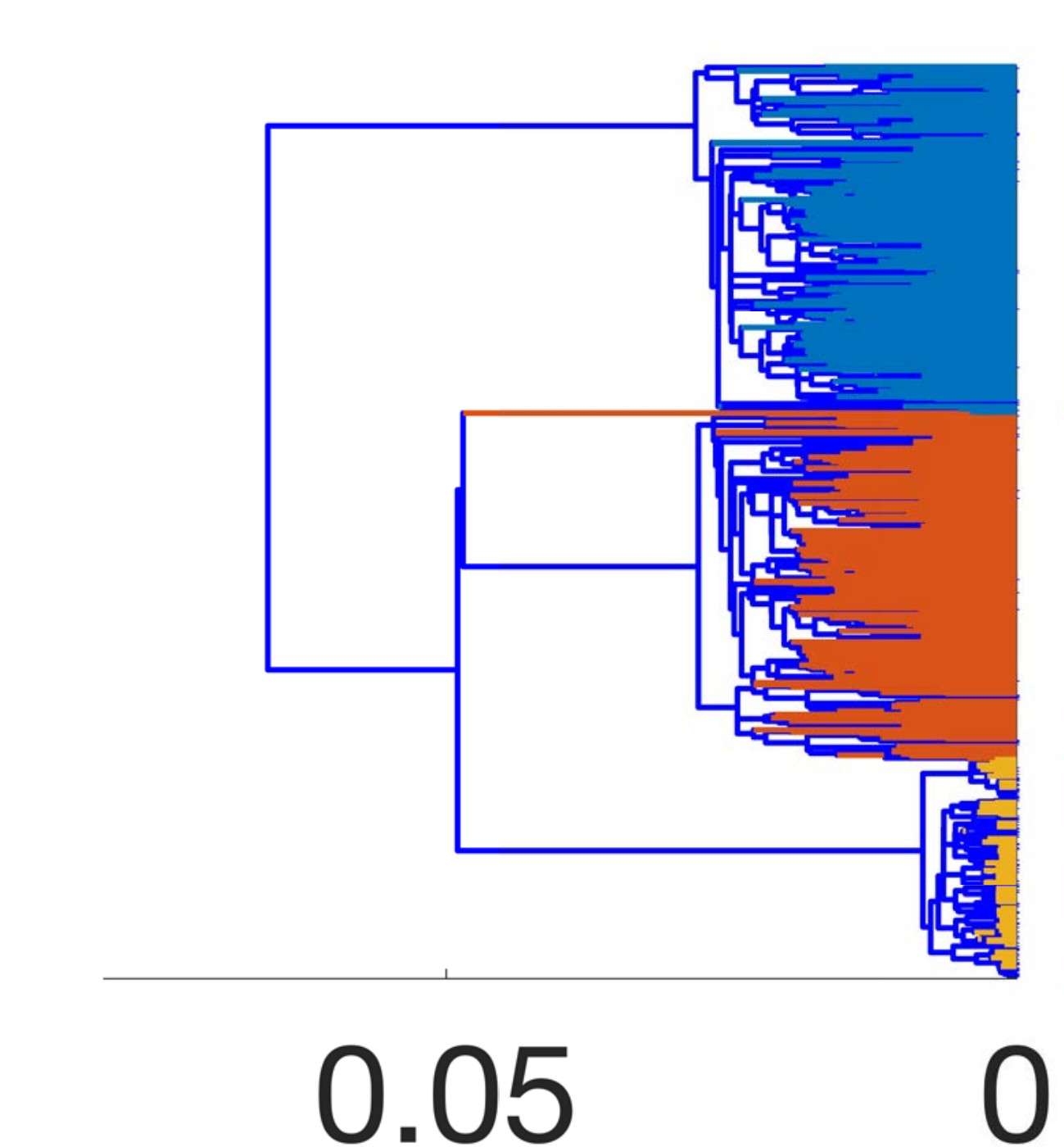}
          \end{minipage}
          }
    }
    \subfloat[Dataset, MS, WT2]{
    \label{fig:ballchains-mswt}
    \fbox{
		\begin{minipage}[t]{0.07\textwidth}
		    \centering
            \begin{minipage}[t]{1\textwidth}
    		    \centering
    		    $e$:1/1 \\ 
                \includegraphics[width=1\textwidth]{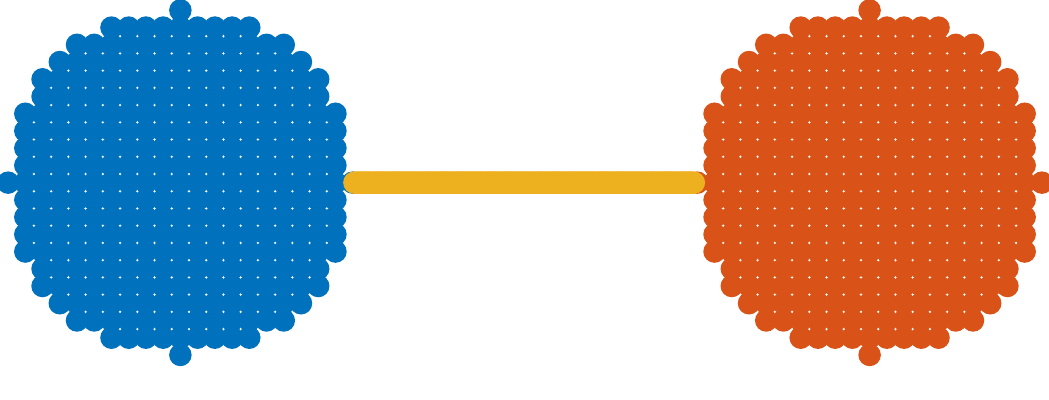} 
            \end{minipage}
            \begin{minipage}[t][0.55cm][t]{1\textwidth}
    		    \centering
                \includegraphics[width=1\textwidth]{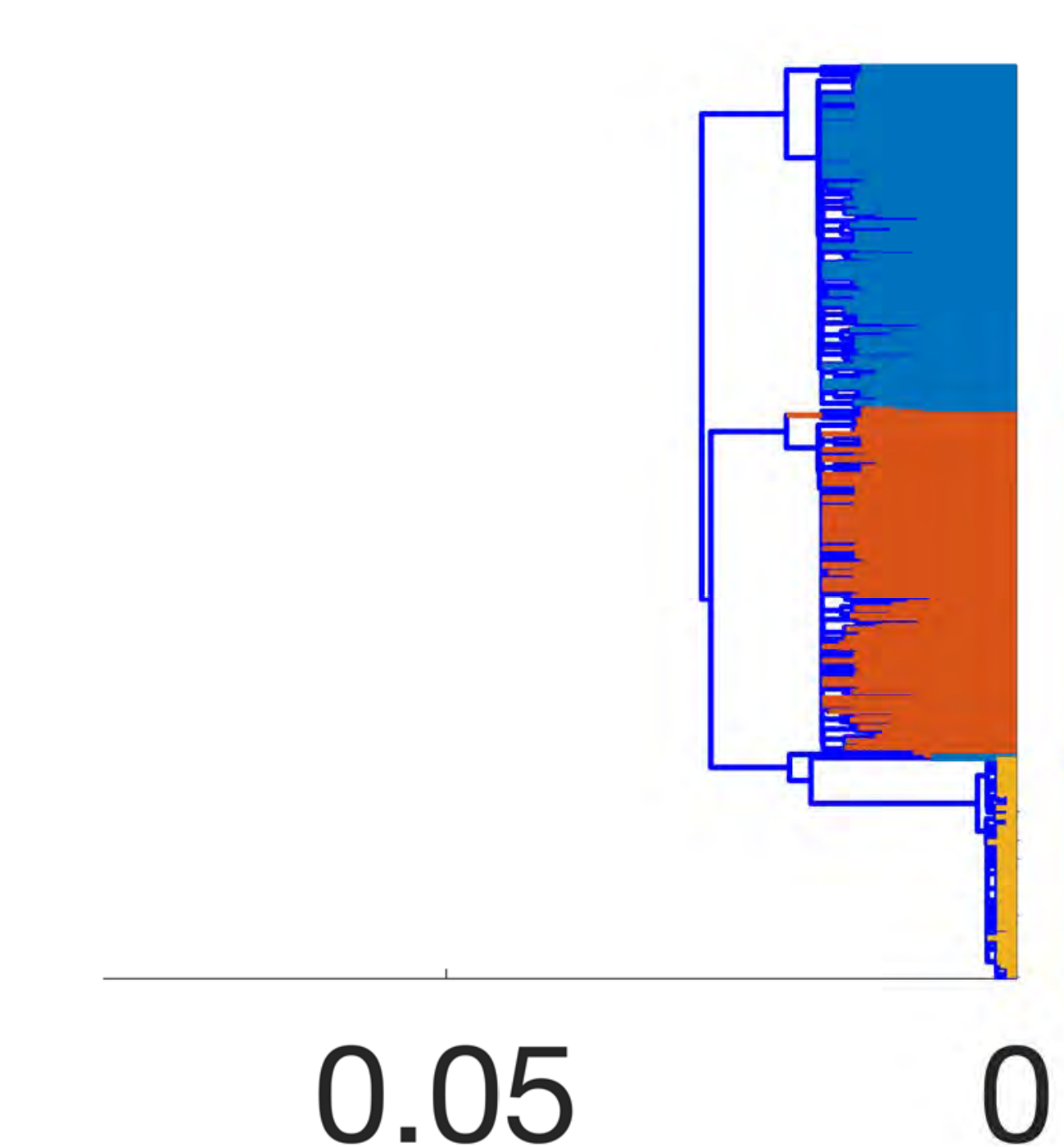}
            \end{minipage}
            \begin{minipage}[t][0.55cm][t]{1\textwidth}
    		    \centering
                \includegraphics[width=1\textwidth]{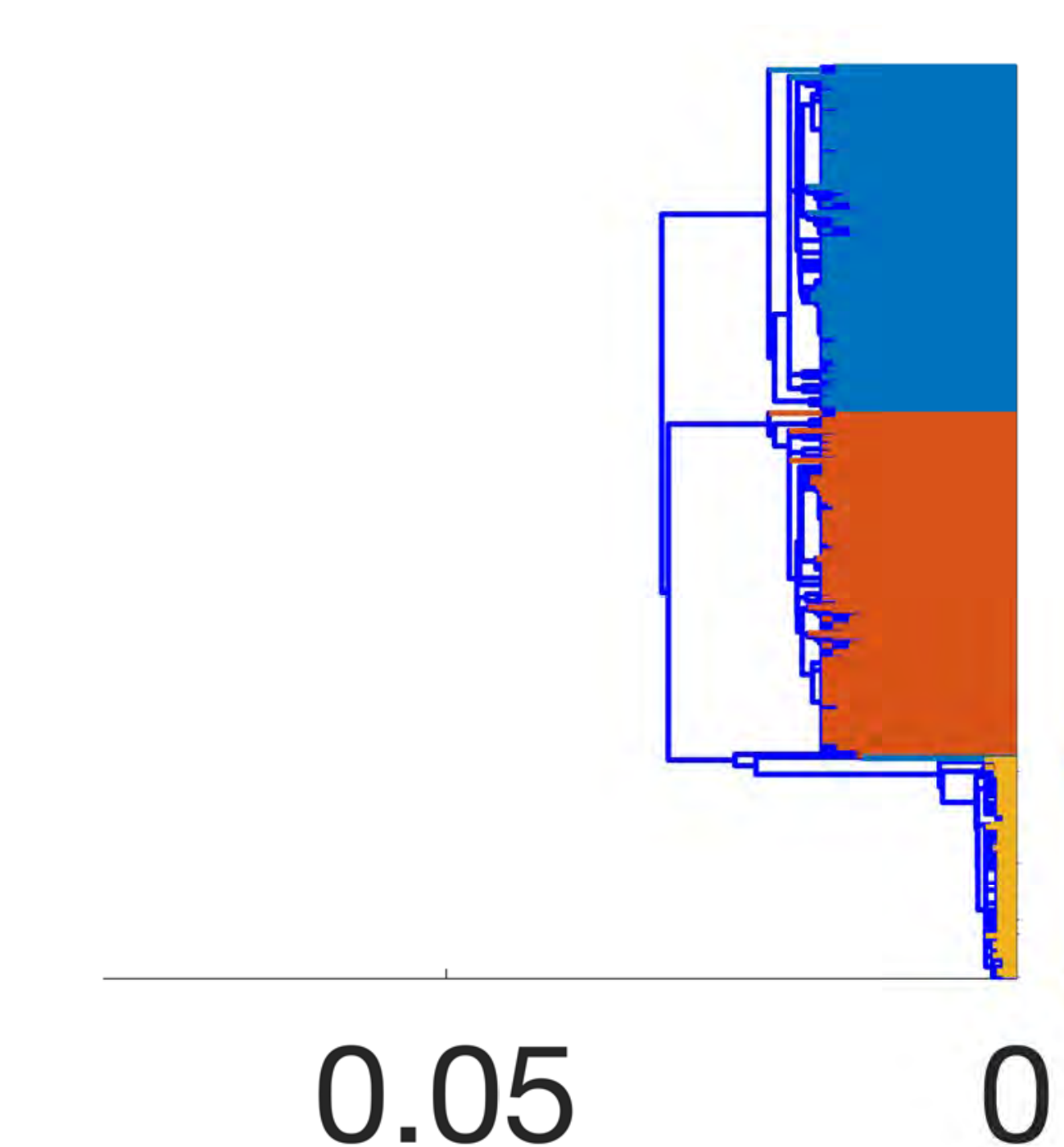}
            \end{minipage}
        \end{minipage}
        \begin{minipage}[t]{0.07\textwidth}
		    \centering
		    $e$:1/0.6 \\ 
            \includegraphics[width=1\textwidth]{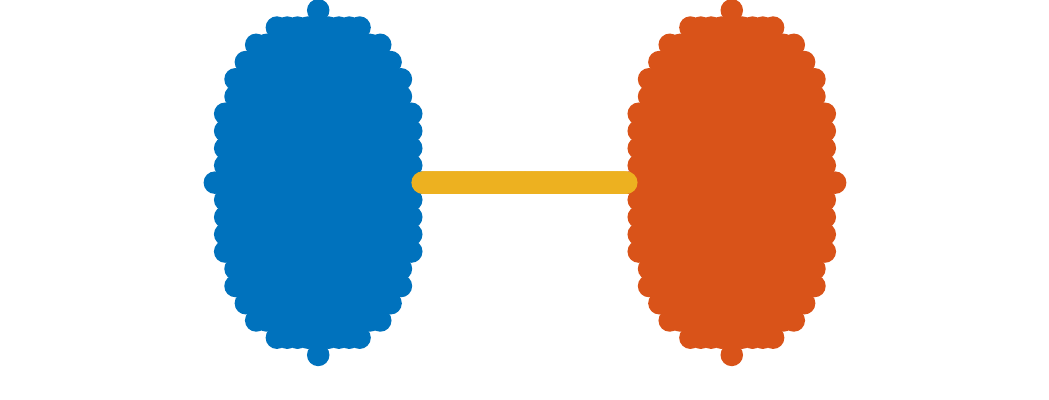} 

            \includegraphics[width=1\textwidth]{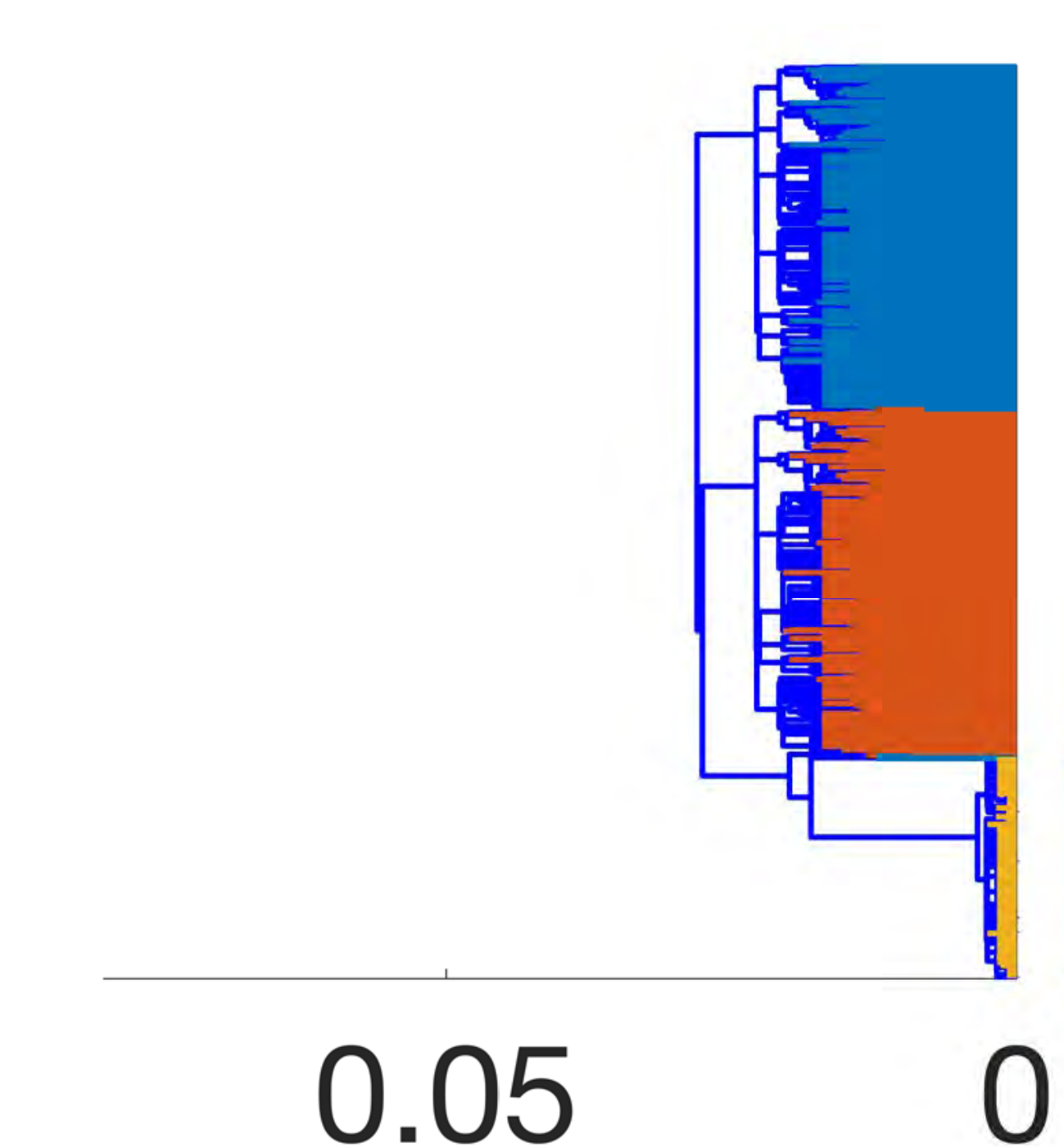}

            \includegraphics[width=1\textwidth]{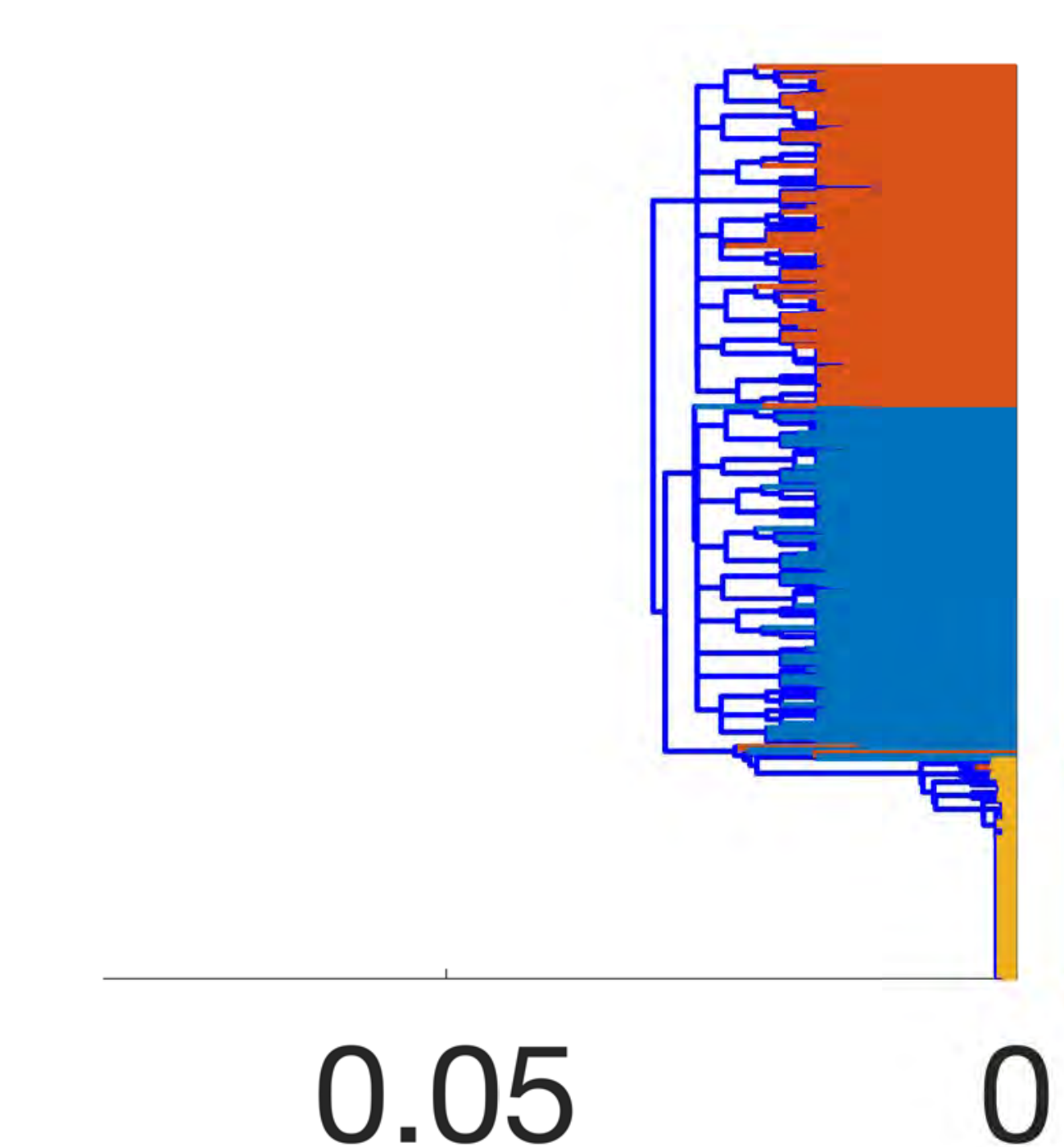}
        \end{minipage}
        \begin{minipage}[t]{0.07\textwidth}
		    \centering
		    $e$:1/0.2 \\ 
            \includegraphics[width=1\textwidth]{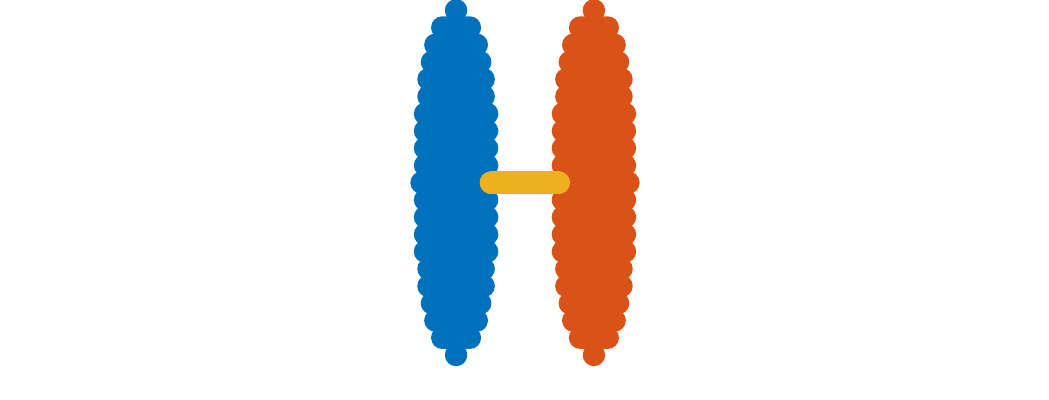} 

            \includegraphics[width=1\textwidth]{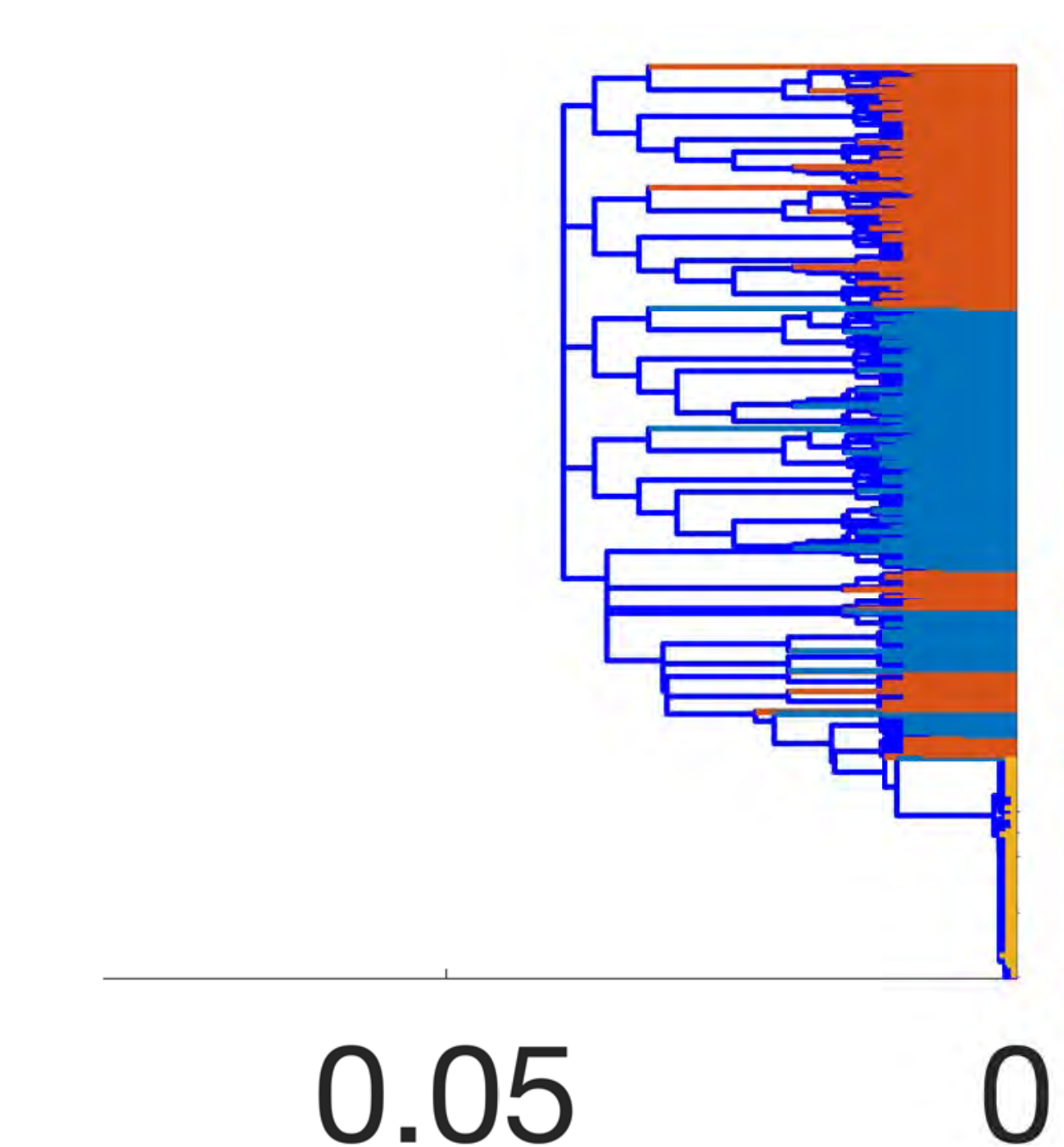}

            \includegraphics[width=1\textwidth]{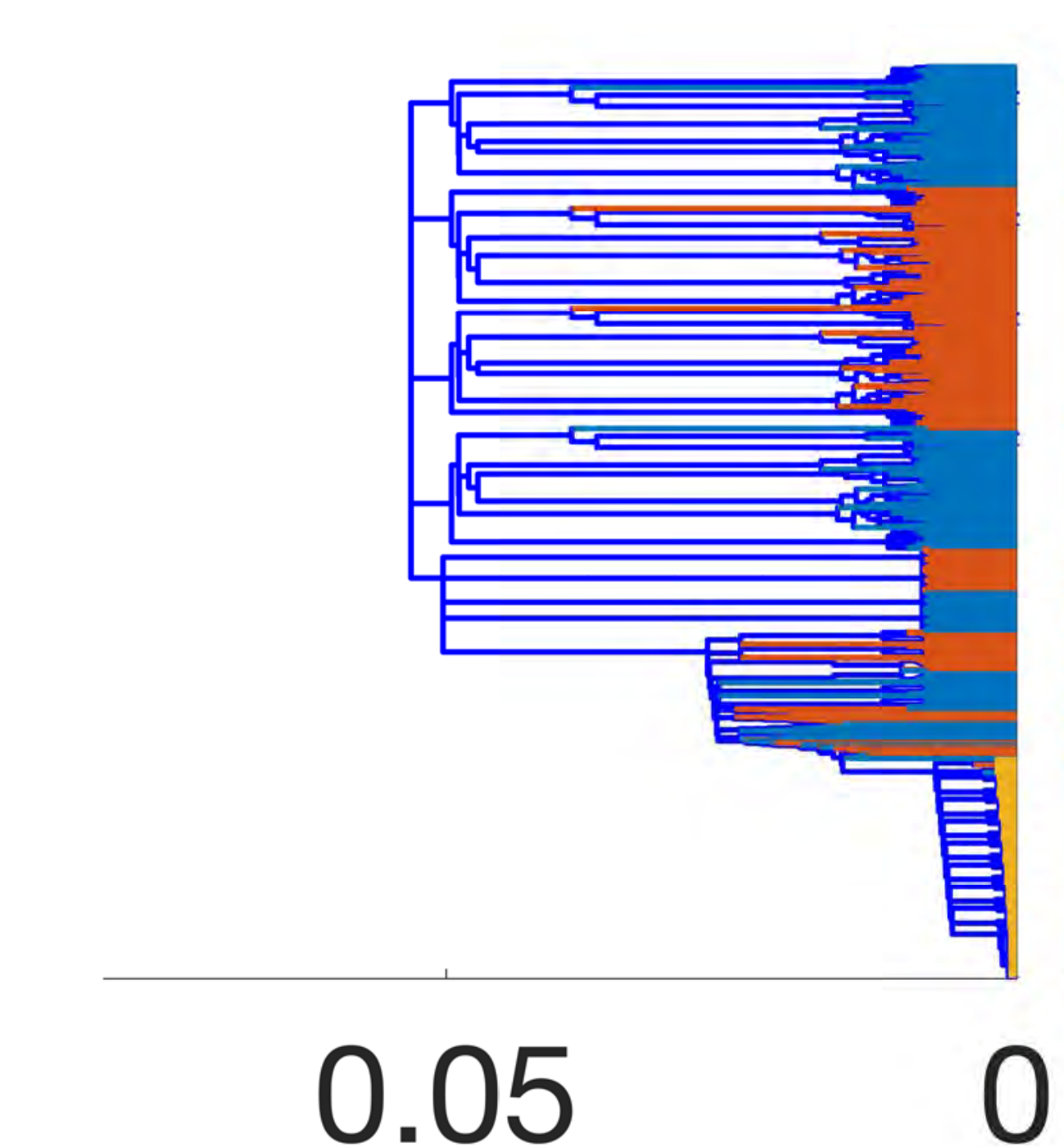}
        \end{minipage}
        }
	}
	\subfloat[Dataset, GT-$\lambda$-1, GT-$\lambda$-5]{
	\label{fig:ballchains-gt}
    \fbox{
        
		\begin{minipage}[t]{0.07\textwidth}
		    \centering
            $e$:1/1 \\  
            \includegraphics[width=1\textwidth]{figures/ellip/ellip-ori-seq-1-10.pdf} 

            \includegraphics[width=1\textwidth]{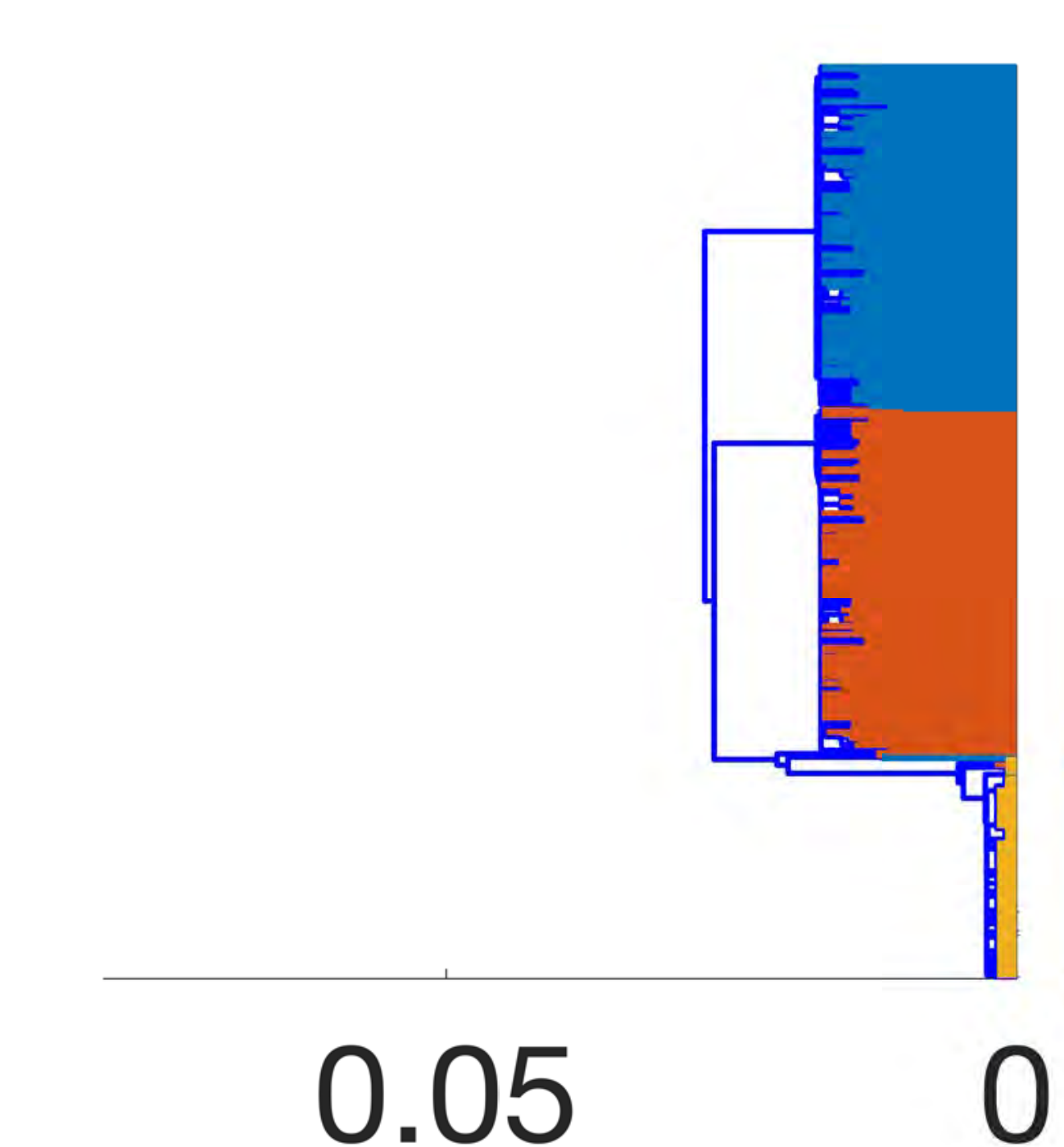}
            
            \includegraphics[width=1\textwidth]{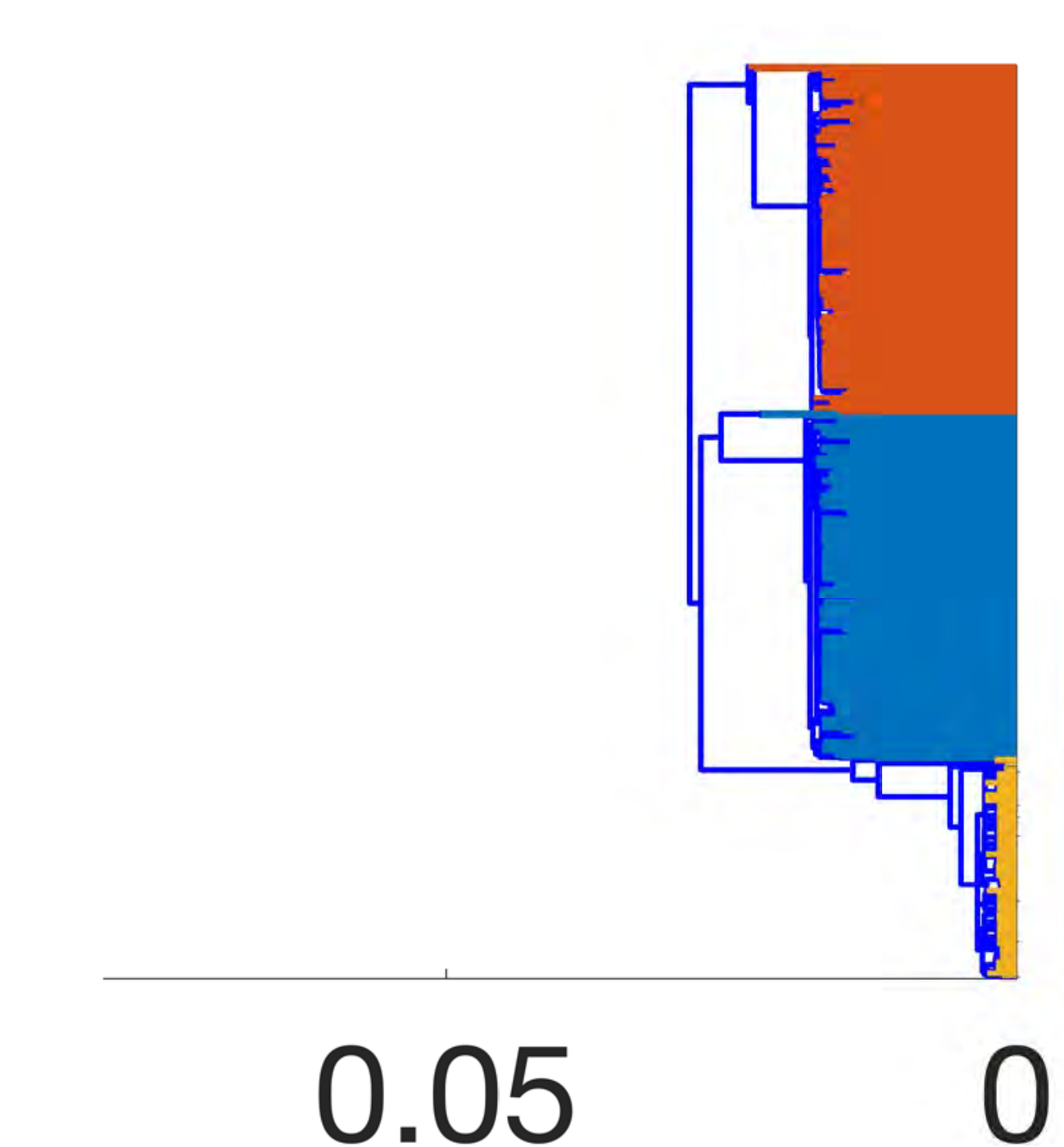}
        \end{minipage}
        \begin{minipage}[t]{0.07\textwidth}
		    \centering
            $e$:1/0.6 \\  
            \includegraphics[width=1\textwidth]{figures/ellip/ellip-ori-seq-1-6.pdf}

            \includegraphics[width=1\textwidth]{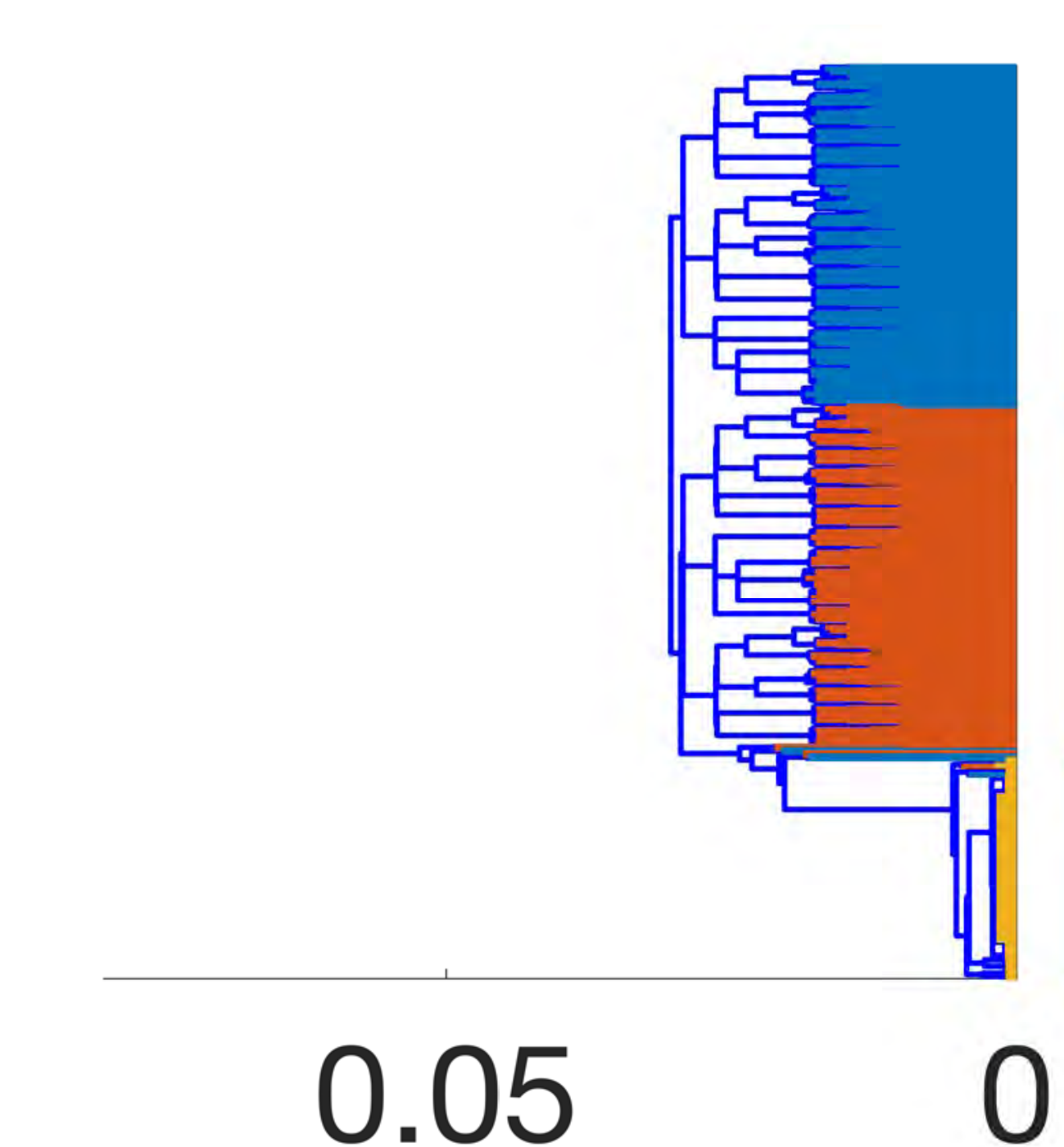}
            
            \includegraphics[width=1\textwidth]{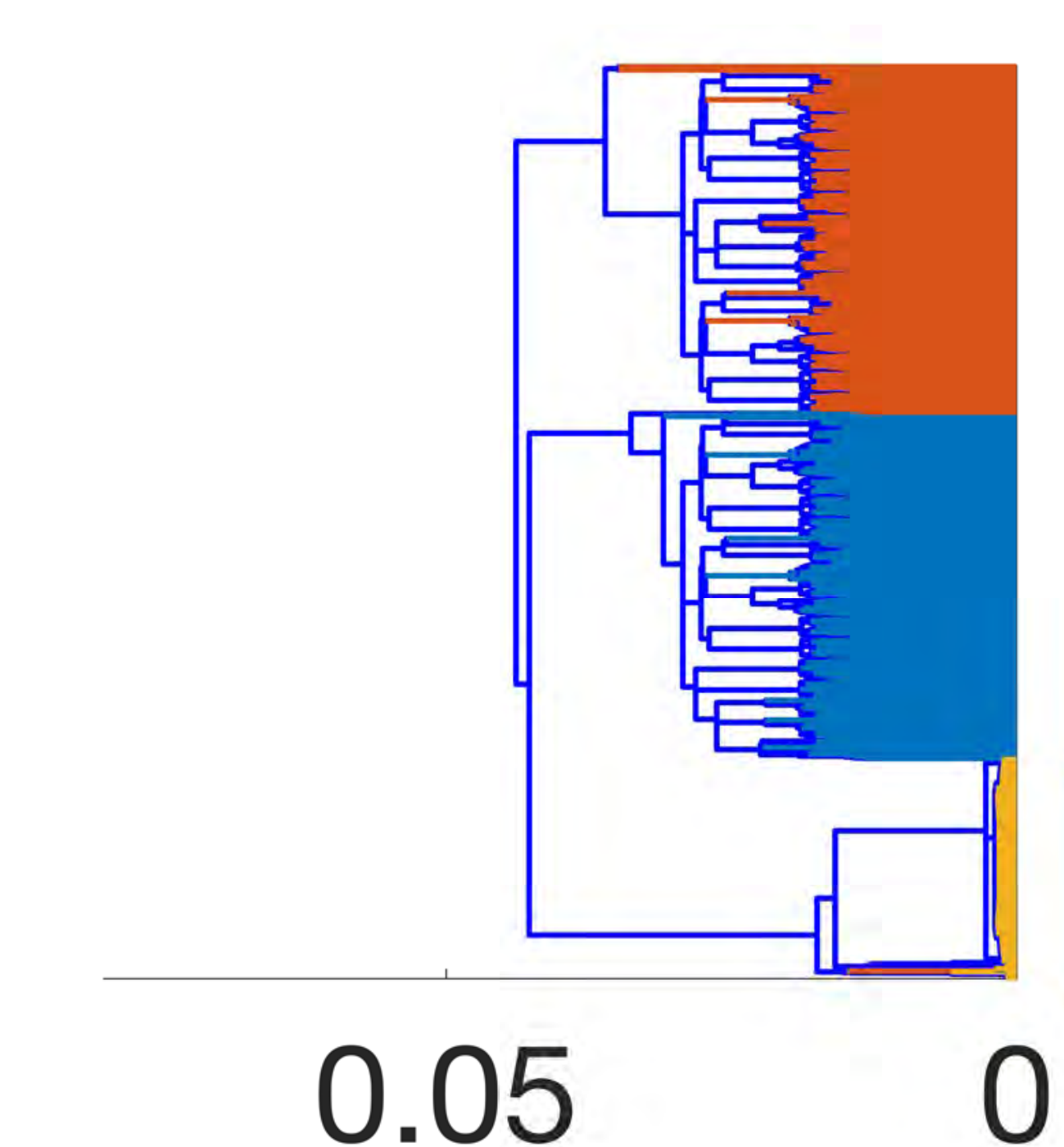}
        \end{minipage}
        \begin{minipage}[t]{0.07\textwidth}
		    \centering
            $e$:1/0.2 \\  
            \includegraphics[width=1\textwidth]{figures/ellip/ellip-ori-seq-1-2.pdf}

            \includegraphics[width=1\textwidth]{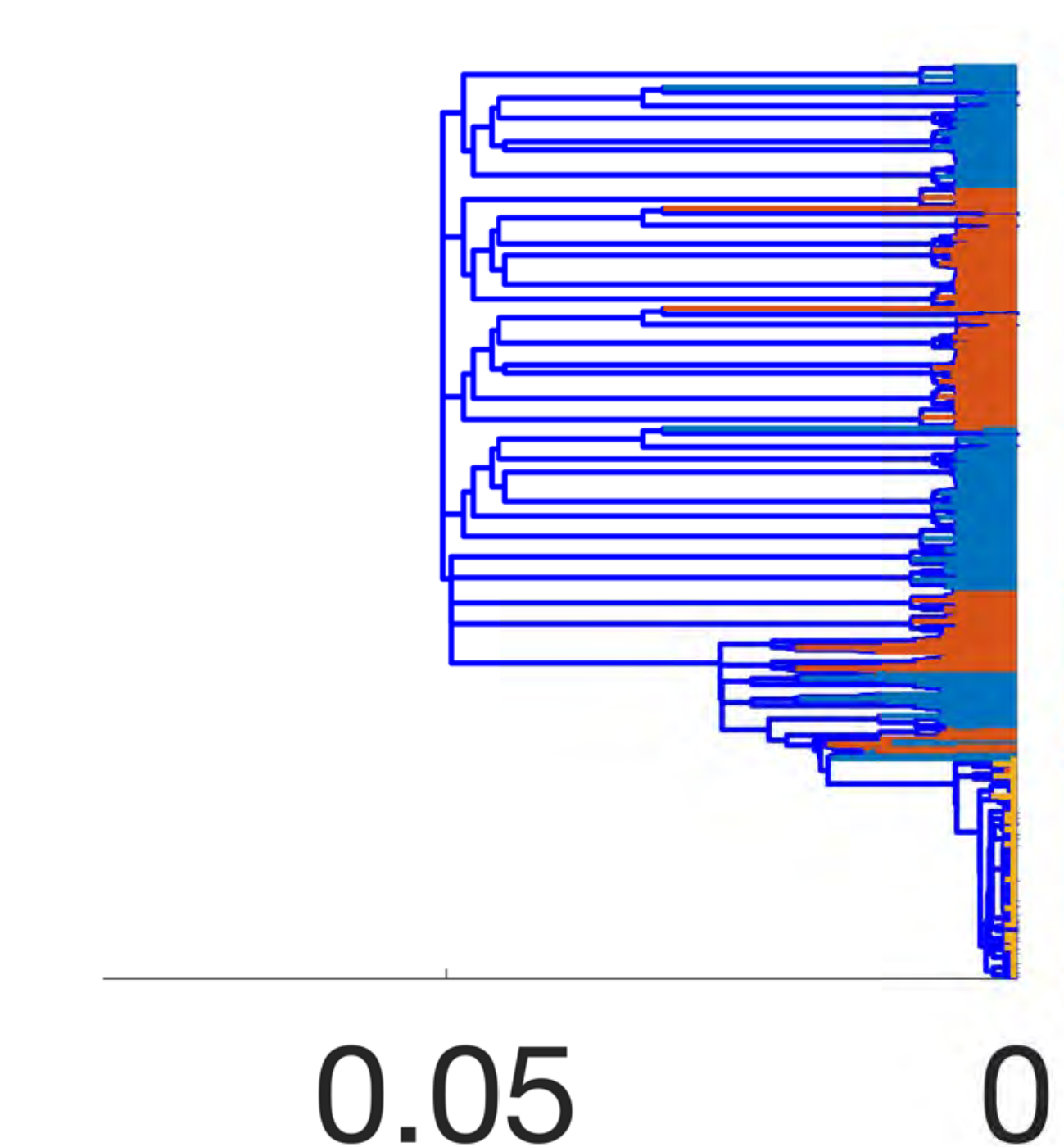}
            
            \includegraphics[width=1\textwidth]{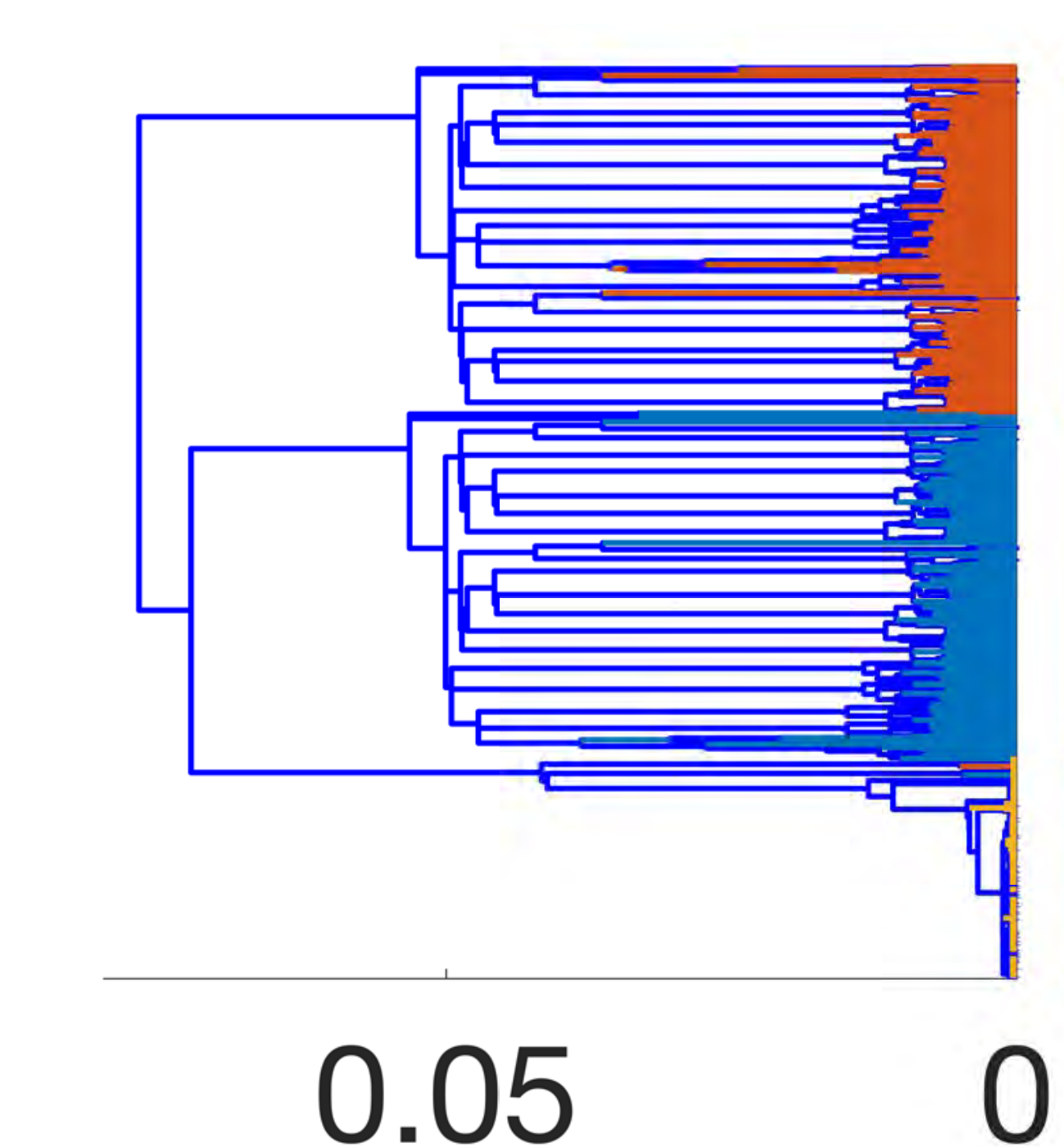}
        \end{minipage}
        }
	}
    \caption{Chaining effect. (a): Top and middle rows show 2D and 3D MDS plots after each iteration of applying GT, respectively. Bottom row present the corresponding dendrograms.
    (b): Top row represents datasets after applying the linear transformation $T$ with eccentricity $e$ to the original dataset ($e=1$). Middle and bottom row show the dendrograms of MS and WT2, respectively. 
    (c): Top row is the same as (b). Middle and bottom row show the respective dendrograms of GT-$\lambda$-1 and GT-$\lambda$-5. }
    \label{fig:ballchains} 
\end{figure}

\textbf{Noise removal.} 
We now analyze two datasets: the first (Figure~(\ref{fig:spiral})) is a spiral composed of 600 points lying in the square $[-30, 30]^2$ together with 150 outliers (uniformly generated); the other (Figure~(\ref{fig:concen-circ})) is composed of two concentric circles with random perturbations on points by small values. Each circle has 250 points lying in the square $[-2, 2]^2$. 
We compare the performance of MS, WT2 and GT after 2 iterations. 
Results are shown in Figure~(\ref{fig:denoise-spiral}) and Figure~(\ref{fig:denoise-concen-circ}). 
We see from Figure~(\ref{fig:denoise-spiral}) that GT generates cleaner spiral than those of
MS and WT2, and from Figure~(\ref{fig:denoise-concen-circ}) that GT and WT2 {both better absorb noisy points than MS.} We again emphasize the superior performance of GT compared with WT2 while being more computationally efficient. See the supplementary material for more results and a more demanding denoising experiment on a noisy circle.

\begin{figure}[htb]
    \centering
    \subfloat[Spiral]{
    \label{fig:spiral}
    \fbox{
        \begin{minipage}[t]{0.08\textwidth}
        \centering
        $\tau=0$ \\ 
            \includegraphics[width=1\textwidth]{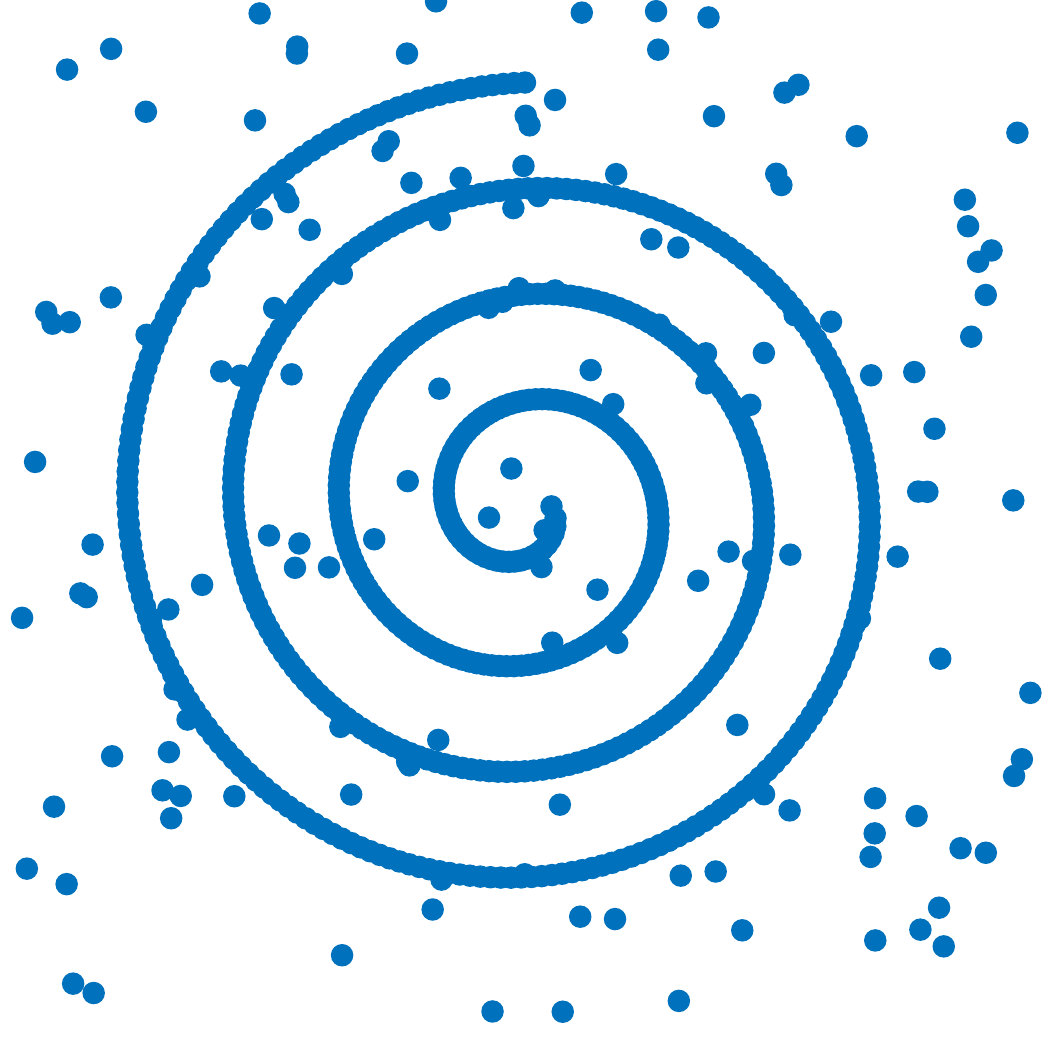}
        \end{minipage}
        }
    }
    \subfloat[Denoising of a spiral at $
    \tau=2$]{
    \label{fig:denoise-spiral}
    \fbox{
        \begin{minipage}[t]{0.08\textwidth}
        \centering
        MS \\ 
            \includegraphics[width=1\textwidth]{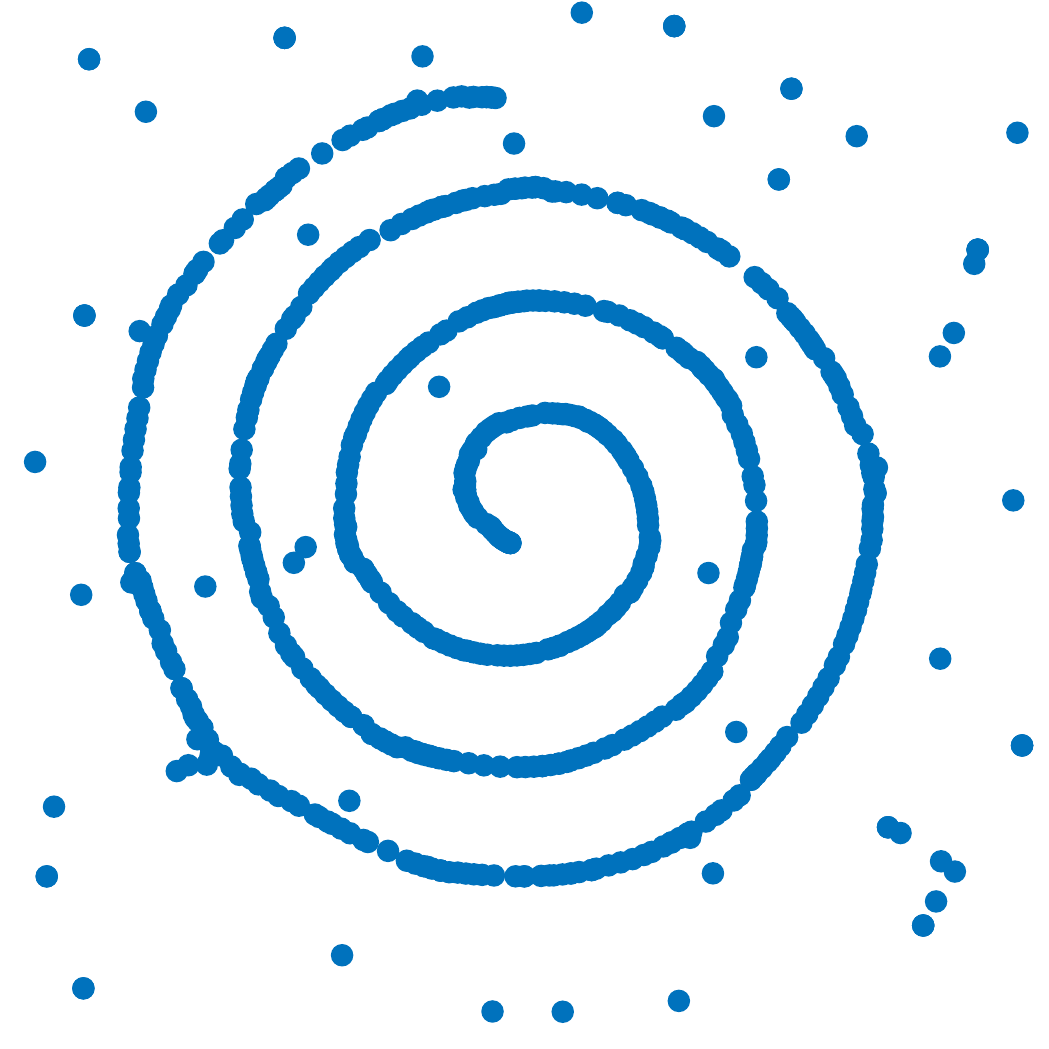}
        \end{minipage}
        
        \begin{minipage}[t]{0.08\textwidth}
        \centering
        GT \\
            \includegraphics[width=1\textwidth]{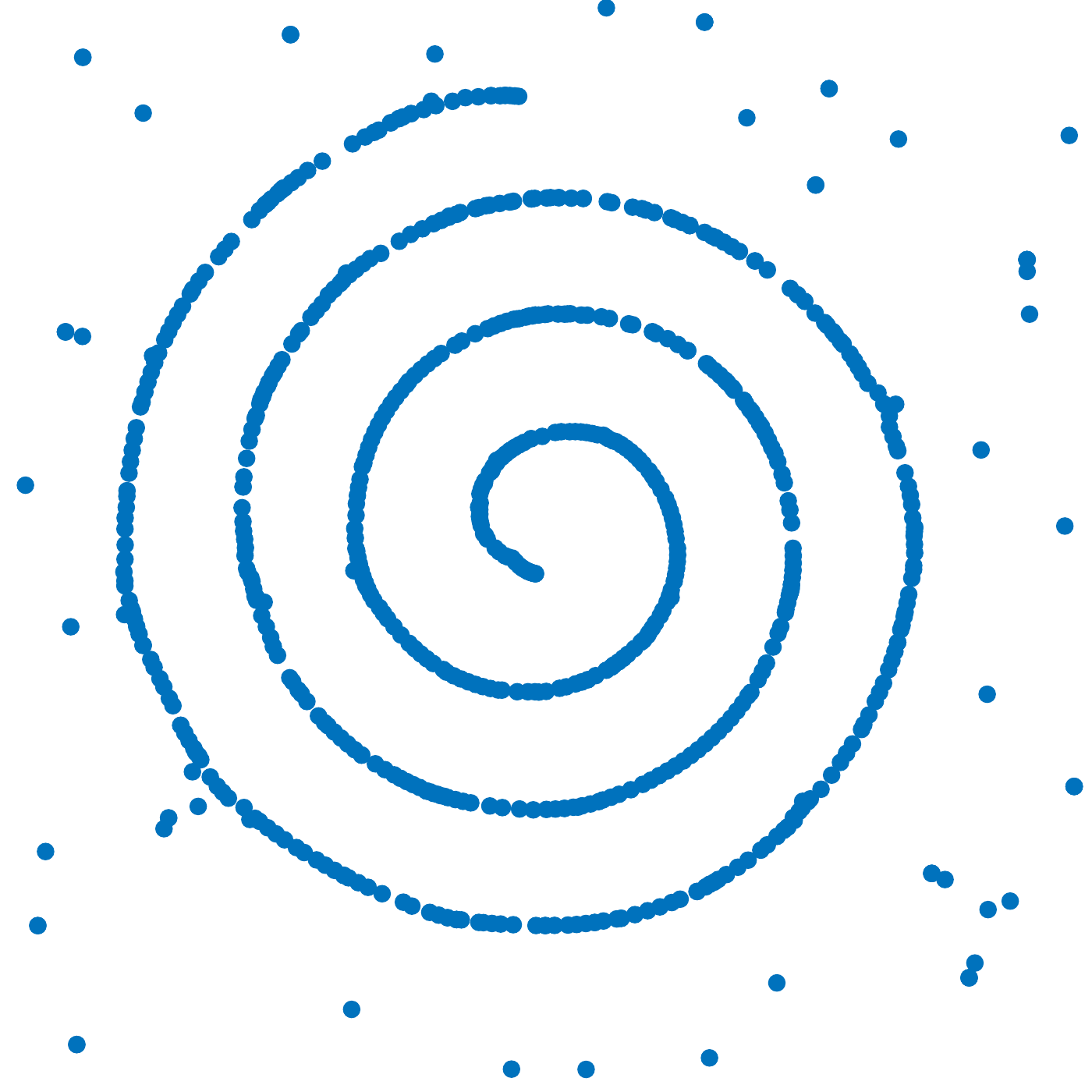}
        \end{minipage}
        \begin{minipage}[t]{0.08\textwidth}
        \centering
        WT2 \\ 
            \includegraphics[width=1\textwidth]{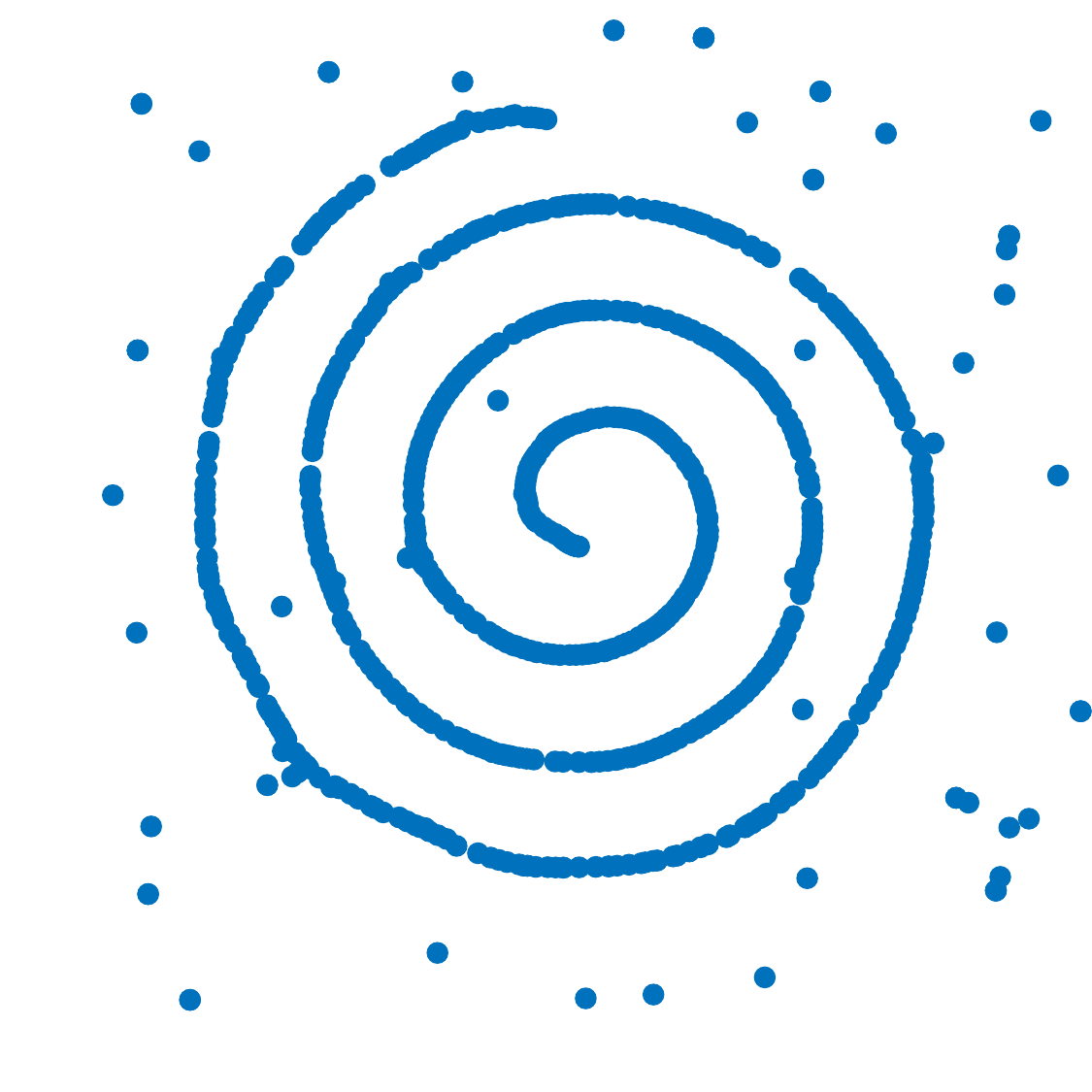}
        \end{minipage}
        }
        }
    \subfloat[Circles]{
    \label{fig:concen-circ}
    \fbox{
        \begin{minipage}[t]{0.08\textwidth}
        \centering
        $\tau=0$ \\ 
            \includegraphics[width=1\textwidth]{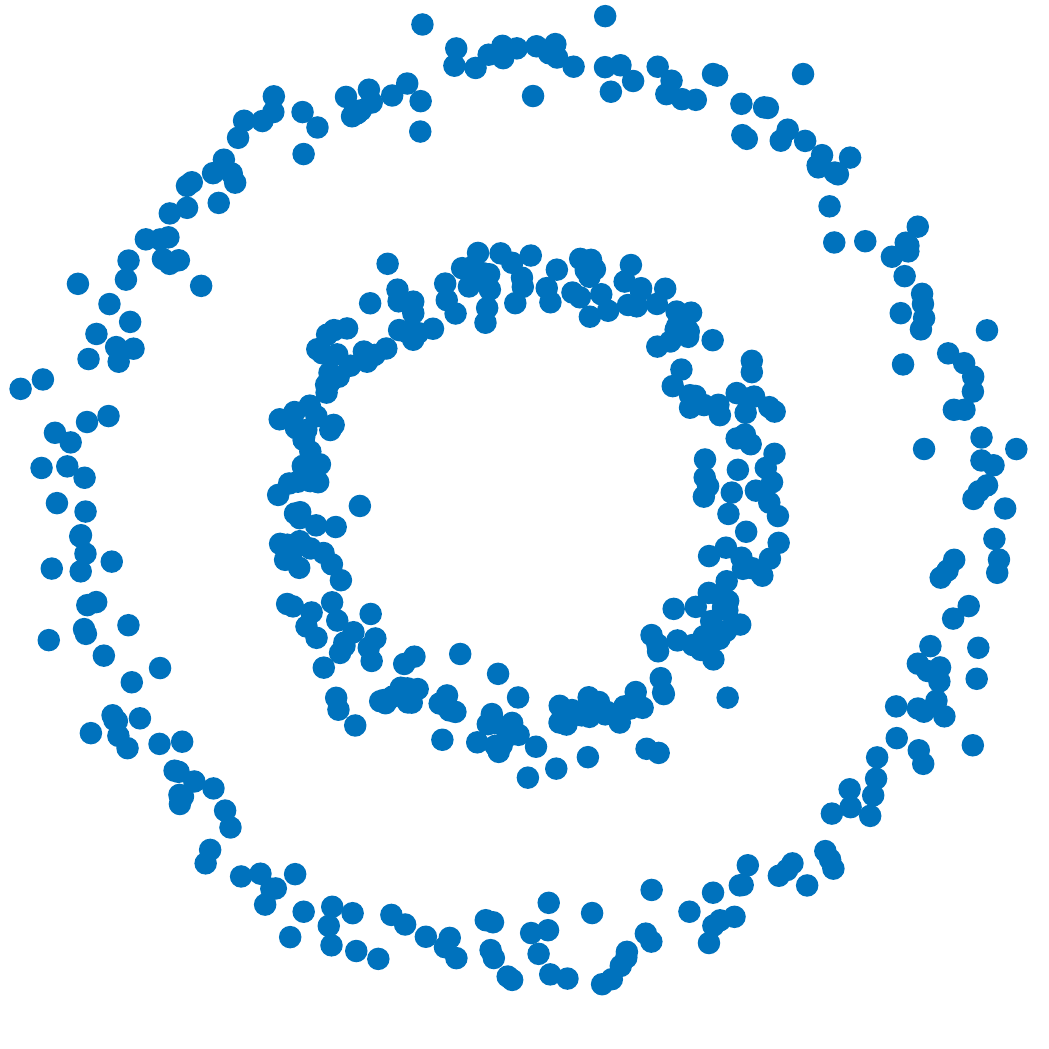}
        \end{minipage}
        }
    }
    \subfloat[Denoising of circles at $
    \tau=2$]{
    \label{fig:denoise-concen-circ}
    \fbox{
        \begin{minipage}[t]{0.08\textwidth}
        \centering
        MS \\ 
            \includegraphics[width=1\textwidth]{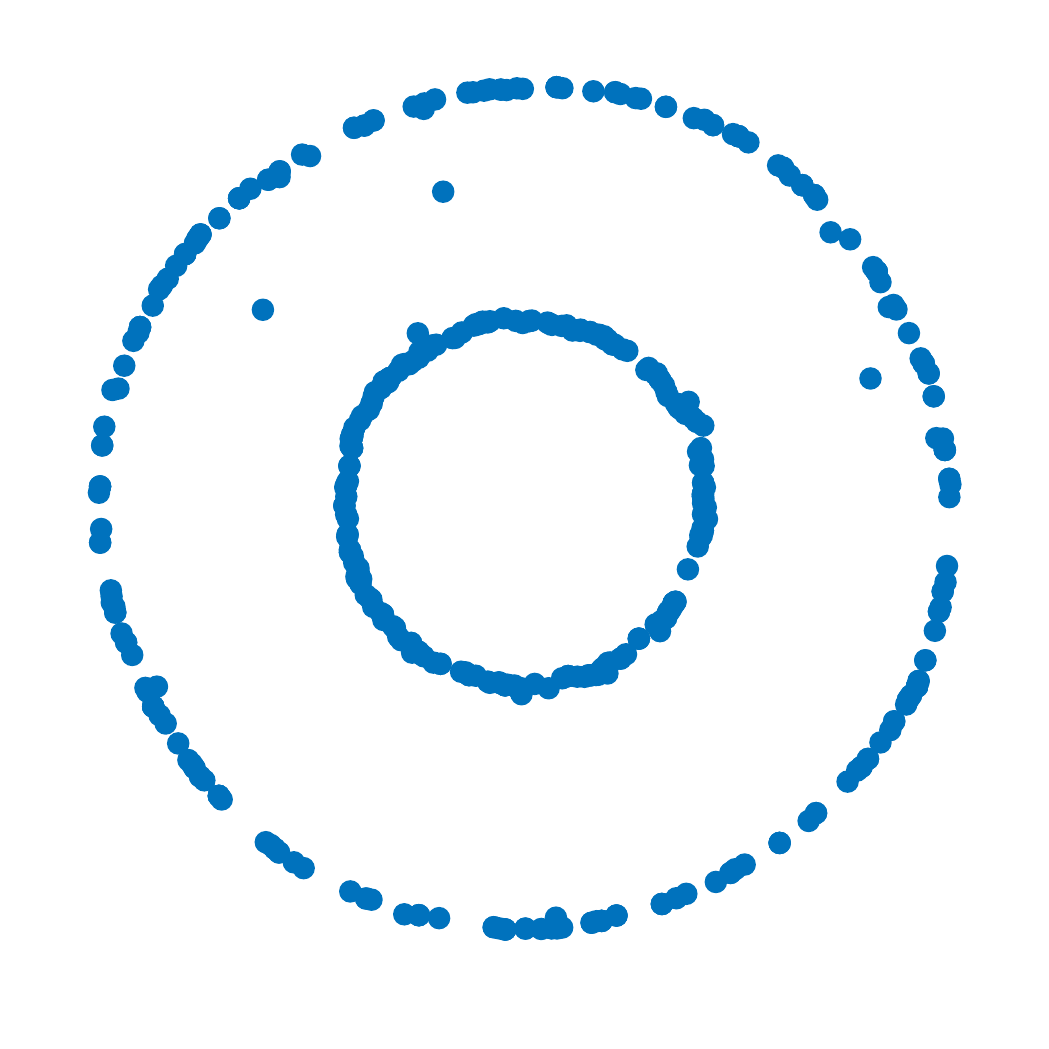}
        \end{minipage}
        \begin{minipage}[t]{0.08\textwidth}
        \centering
        GT \\
            \includegraphics[width=1\textwidth]{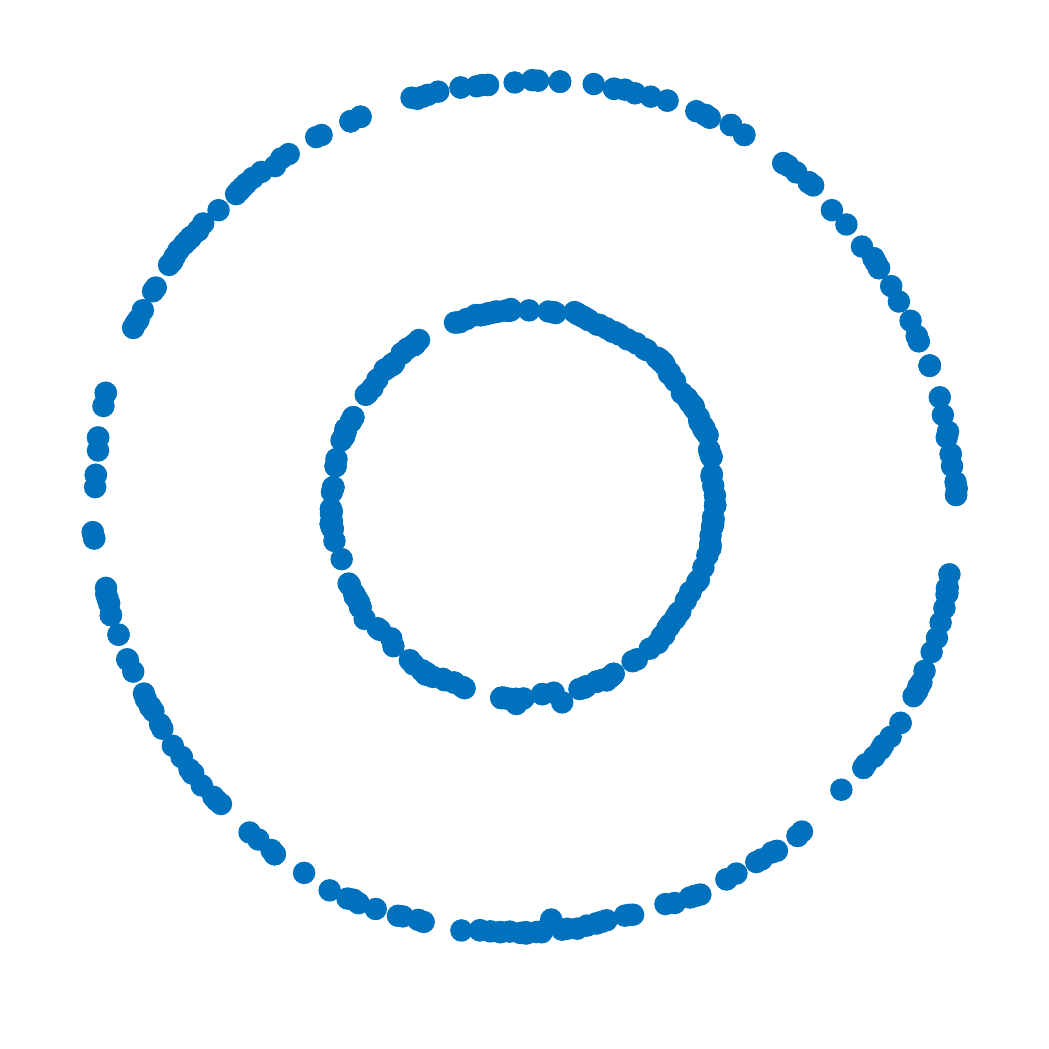}
        \end{minipage}
        \begin{minipage}[t]{0.08\textwidth}
        \centering
        WT2 \\ 
            \includegraphics[width=1\textwidth]{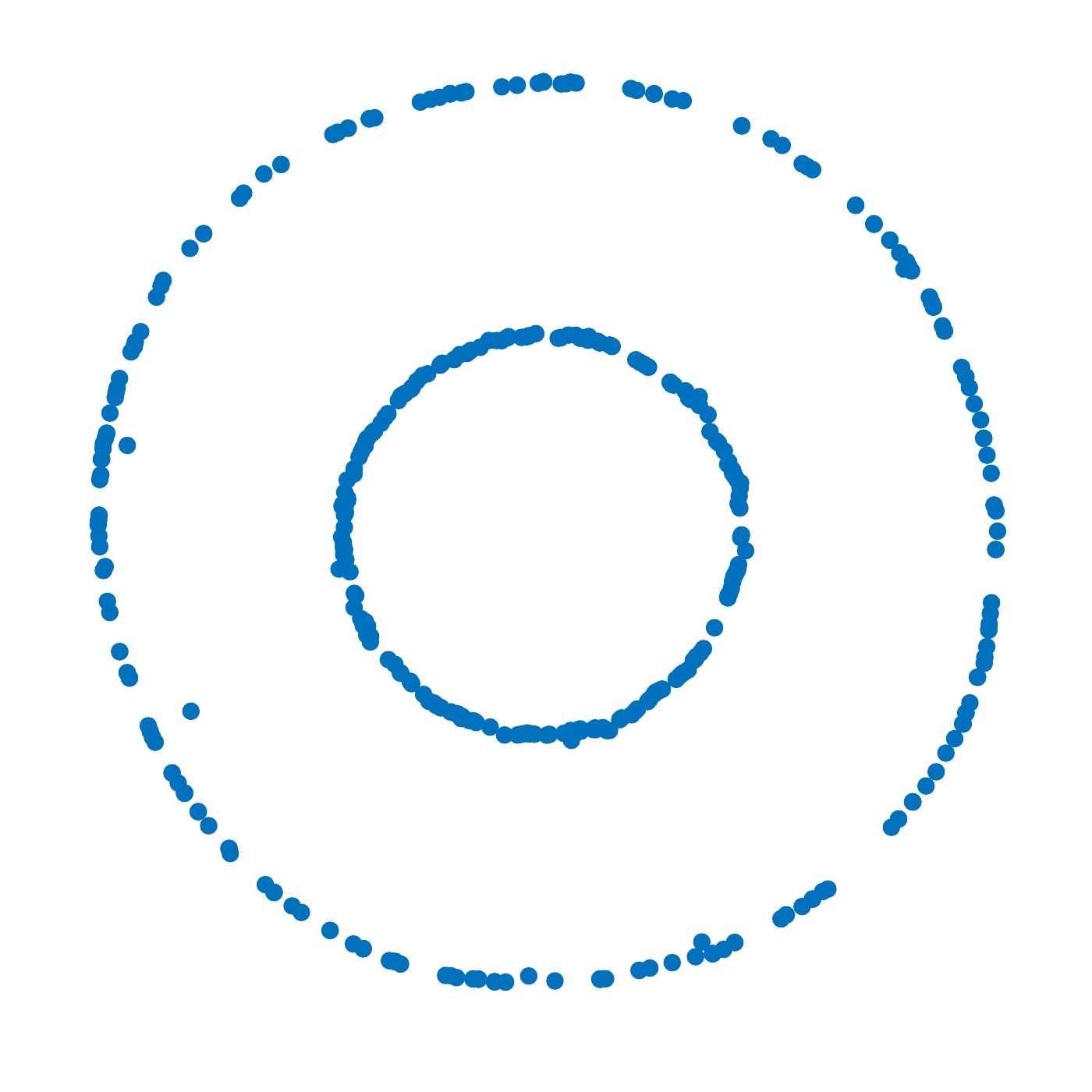}
        \end{minipage}
        }
        }
    \caption{Denoising of a spiral and concentric circles. 
    (a): Original spiral with outliers. 
    (b): Spiral denoising results after applying MS, GT-$\lambda$-1 and WT2 with $\eps=4$. 
    (c): Original perturbed concentric circles. 
    (d): Circles denoising results after applying MS, GT-$\lambda$-2.5 and WT2 with $\eps=0.7$.}
    \label{fig:spiral-concen} 
\end{figure}

\textbf{Image segmentation.}\label{sec:img-seg}
Image Segmentation is an important application domain in Computer 
Vision~\cite{szeliski2010computer} and 
MS~\cite{comaniciu2002mean,demirovic2019implementation} is an effective method in this area. We first review the MS as applied in Image Segmentation and then show how to comparatively apply GT. Note that WT2 is not applicable in this experiment since the process of Image Segmentation involves updating features whereas  WT2 only updates/retains distance matrices. Given an image, each pixel $x$ consists of two types of features: \textit{spatial features} and \textit{range features}, denoted by $x^s$ and $x^r$, respectively. $x^s$ is represented by a point in $\R^2$, whereas $x^r$ is represented by a point in $\R^3$ using L*u*v* values \cite{comaniciu2002mean}, in which the Euclidean distance approximates perceptual differences in color better than the RGB space. 
To apply MS with given bandwidth parameters $\eps_s$ and $\eps_r$, we define the $(\eps_s,\eps_r)$-neighborhood of a pixel $x$ to be the set of pixels $y=(y^s,y^r)$ such that $\| y^s - x^s \| \leq \eps_s$ and $\| y^r - x^r \| \leq \eps_r$. Then,  associate to each pixel $x$ one cluster point $T(x)\in\R^5$ which is initialized to coincide with $x$. MS will iteratively update $T(x)$ to the mean of its $(\eps_s,\eps_r)$-neighborhood until convergence. To apply GT in similar scenarios, we use spatial features to define the covariance because we only want to stretch the spatial distance instead of the range distance. We compute the GT distance between spatial features according to a variant of Equation (\ref{eq:gtd}) (see the supplementary material for a precise formula and an explanation). Then, we update the associated cluster point $T(x)$ similarly as in the case of MS.
We compare the performance of GT with MS on the grayscale cameraman image (resolution is 128$\times$128) in Figure~(\ref{fig:cameraman}a). We set $\eps_s=6$, $\eps_r=6$ and for GT, $\lambda=5$. The labels marked on the test image 
correspond to the major different segments that MS and GT recognize. We can see from Figure~(\ref{fig:cameraman}b) and (\ref{fig:cameraman}c) that GT generates a reasonably better segmentation than MS does. See the supplementary material for more results.

\begin{figure}
\centering
\begin{minipage}{.4\textwidth}
  \centering
  \begin{minipage}[t]{0.31\textwidth}
        \centering
        (a) Test \\ 
            \includegraphics[width=1\textwidth]{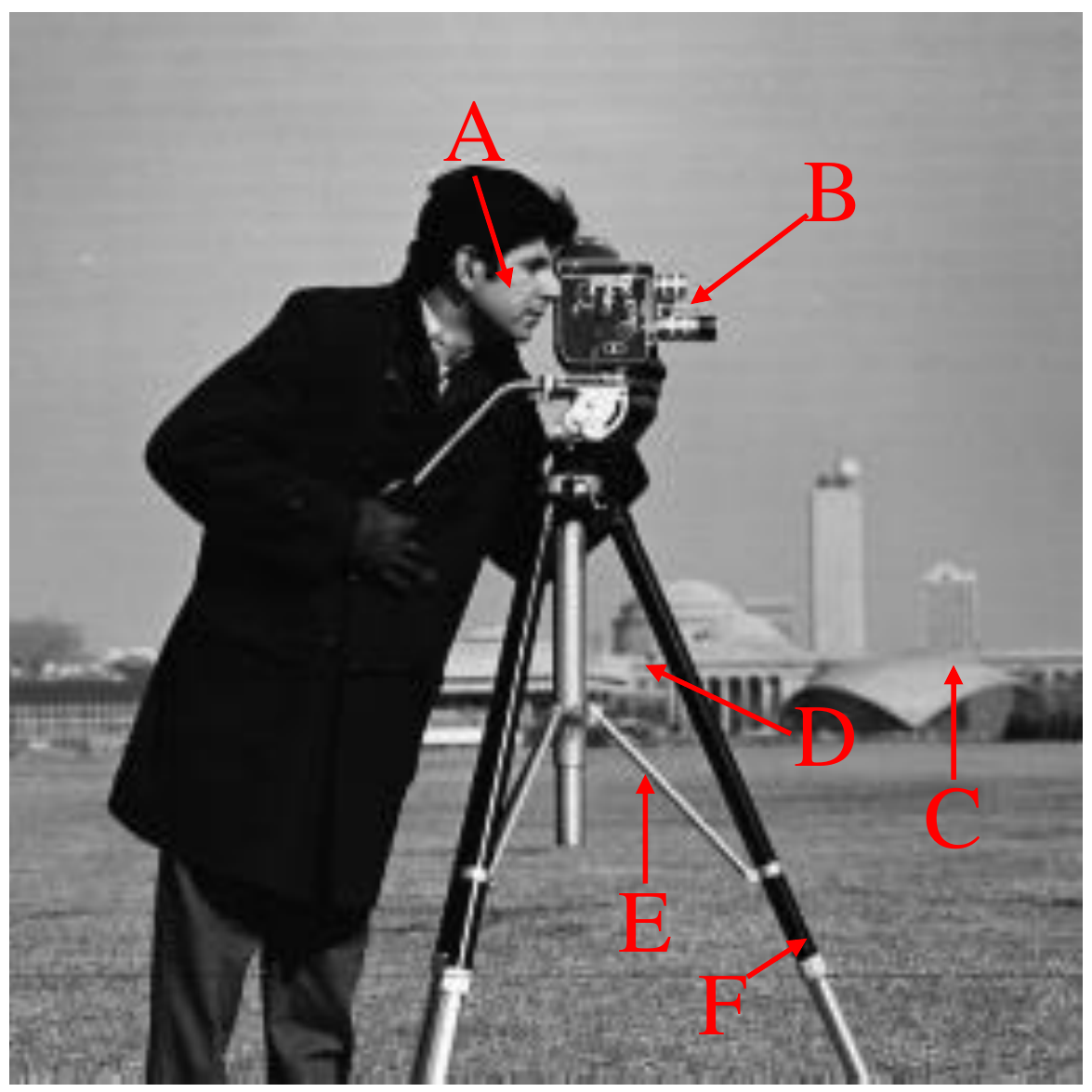}
        \end{minipage}
        \begin{minipage}[t]{0.31\textwidth}
        \centering
        (b) MS \\
            \includegraphics[width=1\textwidth]{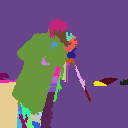}
        \end{minipage}
        \begin{minipage}[t]{0.31\textwidth}
        \centering
        (c) GT \\ 
            \includegraphics[width=1\textwidth]{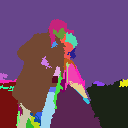}
        \end{minipage}
  \captionof{figure}{Image segmentation.} 
  \label{fig:cameraman}
\end{minipage}%
\begin{minipage}{.4\textwidth}
  \centering
    \begin{tabular}{|c|c|c|c|c|}
        \hline
        KNN & 1NN & 3NN & 5NN & 7NN  \\ \hline
        $ \mathrm{dT}$ & 3.29\% & 3.20\% & 3.24\% & 3.44\%  \\ \hline
        MS & 3.54\% & 3.54\% & 3.72\% & 3.84\% \\ \hline
        GT & 3.20\% & 3.12\% & 3.22\% & 3.37\% \\ \hline
        WT2 & 3.18\% & 3.14\% & 3.21\% & 3.39\% \\ \hline
    \end{tabular}
  \captionof{table}{Image classification.}
  \label{tab:imclassify}
\end{minipage}
\end{figure}

\textbf{Image classification.}
We perform KNN classification on MNIST images with MS, WT2 and GT as preprocessing methods. We choose 10k images from the dataset given in~\cite{lecun1998gradient}. 
We shuffle the whole 10k dataset 5 times and each time we choose the first 5k images as the training data and the last 5k as the test data.
We deskew the images and use the tangent distance $ \mathrm{dT}$ described in \cite{lecun1998gradient} to measure the dissimilarity between images. 
We compare the performance of MS, WT2, GT and the baseline $\mathrm{dT}$. Here, we run MS, WT2 and GT for 1 iteration based on $\mathrm{dT}$. We compute the mean classification error rate of the 5-time experiments. The results in Table \ref{tab:imclassify} show that 
GT and WT2 have similar performance and both exhibit lower classification error rates than both MS and $\mathrm{dT}$.

\textbf{Boosting  word embeddings in NLP.}
\label{sec:word-emb}
Word embedding methods are an important family of  techniques in Natural Language Processing~\cite{mikolov2013distributed,Vilnis2014WordRV,muzellec2018generalizing,pennington2014glove,Devlin2019BERTPO}. A basic instantiation of this idea is that one vectorizes each word in a given corpus by mapping it to a feature vector in a context sensitive way. 
Such ideas are applied widely in many NLP tasks, such as Machine Translation~\cite{zou2013bilingual}, Word Analogy~\cite{pmlr-v97-allen19a}, and Name Entity Recognition~\cite{das2017named}. 
However, training a word embedding layer for a specific large corpus $\mathcal{C}$ can be computationally intensive~\cite{anand2019asynchronous}. Instead of training such a layer from scratch, there are many freely available embeddings which have been pre-computed on extensive and rich corpora such as wikipedia. These embeddings could potentially be directly applied to a task on the corpus $\mathcal{C}$. 
However, the pre-trained embedding may not perform as well as an embedding layer specifically trained for $\mathcal{C}$. We study the possibility of applying GT to  pre-trained embeddings in order to improve their performance.  We consider a given pre-trained embedding as a map $\Omega:\mathrm{Dict}\rightarrow \R^m$ where $\mathrm{Dict}$ is the universe of all words under consideration. Given a certain corpus $\mathcal{C}$, and a word $w$ in $\mathcal{C}$, we regard the set of words in a suitably defined context $c_\mathcal{C}(w)$ of $w$ in $\mathcal{C}$ as the \emph{neighborhood} of $w$ (this is done by introducing a window size parameter $W$).  Then, we compute the $m\times m$ covariance matrix $\Sigma_w$  associated to  the vectors $\{\Omega(w');\,w'\in c_\mathcal{C}(w)\}$ corresponding to context words in $c_\mathcal{C}(w)$. This mechanism then permits augmenting the information provided by $\Omega$ by incorporating ideas related to the GT distance (Definition \ref{def:gtd}): instead of measuring dissimilarity between two words $w_1,w_2\in \mathcal{C}$  via the Euclidean distance $\|\Omega(w_1)-\Omega(w_2)\|$, we implement (a suitable version of) equation (\ref{eq:gtd}). See Table \ref{tab:words}  for our experimental results which show that this way of ``boosting" the embedding $\Omega$ via GT improves the performance of the pre-trained GloVe embedding from \cite{pennington2014glove} on a specified corpus (text8). 
Other procedures representing each word on a given corpus $\mathcal{C}$ by both a vector and a covariance matrix can be found in the literature~\cite{Vilnis2014WordRV,muzellec2018generalizing}.  However, these methods  perform training on the corpus $\mathcal{C}$ from scratch whereas our method is computationally much less demanding since it relies on the pre-trained embedding $\Omega$ and does not require any additional training. See the supplementary material for details and {more} comparison results. 

\begin{table}[htb]
\caption{Spearman rank correlation for word similarity datasets.  }
    \centering
    \begin{tabular}{| c| c| c|| c | c | c|}
    \hline
    \textbf{Dataset} & \textbf{GloVe} & \textbf{GloVe+GT} & \textbf{Dataset} & \textbf{GloVe} & \textbf{GloVe+GT} \\
    \hline
        MC-30 & 0.56  & \textbf{0.67} & SIMLEX-999 & 0.26 &  \textbf{0.27}\\ \hline 
        MEN-TR-3k & \textbf{0.65}  & \textbf{0.65} & SimVerb-3500 & \textbf{0.15} &  0.14 \\ \hline 
        MTurk-287 & 0.61 &  \textbf{0.62} & VERB-143 & \textbf{0.25} & 0.24 \\ \hline 
        MTurk-771 & 0.55 &  \textbf{0.56} & WS-353-ALL & 0.49 & \textbf{0.51} \\ \hline 
         RG-65 & 0.60 &  \textbf{0.62} & WS-353-REL & 0.46 & \textbf{0.47} \\ \hline 
         RW-STANFORD & 0.34 & \textbf{0.38} &  WS-353-SIM & 0.57  & \textbf{0.60} \\ \hline 
    \end{tabular}
        \label{tab:words}
\end{table}
\section{Discussion}
The Gaussian transform is a method which takes as input a point cloud $X$ with a probability measure, and alters both the metric structure and point positions iteratively with the purpose of enhancing latent features and/or denoising $X$.  GT is in the same family of methods as WT and MS. GT is stable with respect to perturbations on the probability measure (under certain conditions) and it is amenable to many optimization strategies for accelerating its implementation. The intrinsic parameter $\lambda$ of GT provides flexibility in tuning the degree of magnification of the sensitivity of GT to anisotropic data features which makes GT comparable/superior to MS and WT in several experiments related to clustering, denoising, and classification. 
It seems interesting to generalize GT to non-Euclidean datasets such as manifolds. In our formulation of GT, $\lambda$ is a parameter which needs to be tuned for each different dataset. Thus, it would be useful to identify  adaptive ways to tune $\lambda$ automatically. Metric training ideas \cite{xing2003distance} are also eminently applicable to our setting. 

\subsubsection*{Acknowledgements} We acknowledge the support of NSF through grants  DMS-1723003 and CCF-1740761.

\bibliography{GT}
\bibliographystyle{alpha}

\appendix

\newpage

\begin{center}
    \Large{\textbf{Supplementary Material}}
\end{center}

\section{GT details and algorithm}
\label{app:alg}

\subsection{Remark on Bures metric}
\begin{remark}[Lower and upper bounds on Euclidean spaces]\label{rmk:bound}
Suppose $\alpha$ and $\beta$ are probability measures on $\R^m$. Denote by $\mu_\alpha,\mu_\beta$ the means of $\alpha,\beta$, respectively, and by $\Sigma_\alpha,\Sigma_\beta$ the covariance matrices of $\alpha,\beta$, respectively. Define two Gaussian distributions $\gamma_\alpha=\mathcal{N}(\mu_\alpha,\Sigma_\alpha)$ and $\gamma_\beta=\mathcal{N}(\mu_\beta,\Sigma_\beta)$. Then, we have the following relation:
$\norm{\mu_\alpha-\mu_\beta}\leq d_{\mathrm{W},2}(\gamma_\alpha,\gamma_\beta)\leq d_{\mathrm{W},2}(\alpha,\beta).$
The leftmost inequality follows directly from the formula of $d_{\mathrm{W},2}$ between two Gaussians mentioned in Section \ref{sec:bg}. The rightmost one was proved in \cite{gelbrich1990formula}.
In words, in Euclidean spaces, the $\ell^2$-Wasserstein distance between probability measures is bounded below by the $\ell^2$-Wasserstein distance between Gaussian distributions generated by the means and covariance matrices of the original probability measures.  
\end{remark}

\subsection{Discrete formulation of mean and covariance}
In the case when $X=\{x_1,\cdots,x_n\}$ is a finite space and let $\alpha_i\coloneqq \alpha(x_i)$, explicitly for $i=1,\cdots,n$, we have:
\begin{equation}
\me_{{\alpha,d_X}}(x_i) = \frac{1}{A_i}\sum_{j\in I_i^{(\eps)}}\alpha_j\,\delta_{x_j}, \,\, \mu^{\mathsmaller{(\eps)}}_{{\alpha,d_X}}(x_i) = \frac{1}{A_i} \sum_{j\in I_i^{(\eps)}}\alpha_j \, x_j,    
\end{equation}

\begin{equation}
\label{eq:supp-cov}
\Sigma_{{\alpha,d_X}}^{\mathsmaller{(\eps)}}(x_i) =
    \frac{1}{A_i}\sum_{j\in I_i^{(\eps)}}\alpha_j (x_j - \mu^{\mathsmaller{(\eps)}}_{{\alpha,d_X}}(x_i))(x_j - \mu^{\mathsmaller{(\eps)}}_{{\alpha,d_X}}(x_i))^\mathrm{T},
\end{equation}
where for each $i=1,2,\ldots,n$, the index set $I_i^{(\eps)}\coloneqq \{j:\,d_X(x_j,x_i)\leq\eps\}$ and $A_i\coloneqq \sum_{j\in I_i^{(\eps)}}\alpha_j$. Above $\delta_x$ denotes the Dirac delta at $x$.

\subsection{Iterative GT algorithm}
The iterative algorithm for GT is given in Algorithm \ref{alg:gt}. In line 4 of the algorithm, the measure $m_{\alpha,D^{k}}^{\mathsmaller{(\eps)}}(x_i^k)=\frac{\sum_{\left\{j:\,D^k\left(x_j^k,x_i^k\right)\leq\eps\right\}}\alpha_j\delta_{x_j^k}}{\sum_{\left\{j:\,D^k\left(x_j^k,x_i^k\right)\leq\eps\right\}}\alpha_j}$; in line 5, $\Sigma^{\mathsmaller{(\eps)}}_{\alpha,D^k}(x_i^{k+1})$ is the covariance matrix of $\frac{\sum_{\left\{j:\,D^k\left(x_j^k,x_i^k\right)\leq\eps\right\}}\alpha_j\delta_{x_j^{k+1}}}{\sum_{\left\{j:\,D^k\left(x_j^k,x_i^k\right)\leq\eps\right\}}\alpha_j}$; in line 6, $d^{\mathsmaller{(\eps,\lambda)}}_{{\alpha,D^k}}(x_i^{k+1},x_j^{k+1})$ is computed via
$\lc\norm{x_i^{k+1}-x_j^{k+1}}^2+\lambda\cdot \dcov^2\left(\Sigma_{\alpha,D^k}^{\mathsmaller{(\eps)}}(x_i^{k+1}),\Sigma_{\alpha,D^k}^{\mathsmaller{(\eps)}}(x_j^{k+1})\right)\rc^\frac{1}{2}. $ 

\begin{algorithm}[htb]
\caption{Iterative Gaussian transform}
\begin{algorithmic}[1]
\STATE \textbf{Input:} Points $X = \left\{ x_1, x_2, ..., x_n \right\} \in \mathbb{R}^{n\times d}$, probability measure $\alpha = \{ \alpha_1, \alpha_2, ..., \alpha_n\}$, distance matrix $D$
\STATE \textbf{Initialize:} $k=0$;
 $x^k_i=x_i$; $D^k(x_i^k,x_j^k)=d^{\mathsmaller{(\eps,\lambda)}}_{{\alpha,D}}(x_i^k,x_j^k)$; 
 \WHILE{ $k< \; \mathrm{max\_iter}$}

\STATE $x_i^{k+1}=\mean\left(m_{\alpha,D^{k}}^{\mathsmaller{(\eps)}}(x_i^k)\right)$, for $i \in [n]$

\STATE Compute covariance matrices $\Sigma^{\mathsmaller{(\eps)}}_{\alpha,D^k}(x_i^{k+1})$

\STATE Let $D^{k+1}(x_i^{k+1},x_j^{k+1})=d^{\mathsmaller{(\eps,\lambda)}}_{{\alpha,D^k}}(x_i^{k+1},x_j^{k+1})$.

\STATE $k = k + 1$
\ENDWHILE
\STATE \textbf{Output:} $X^k = \left\{ x_1^k, x_2^k, ..., x_n^k \right\} \in \mathbb{R}^{n\times d}$, $\alpha = \{ \alpha_1, \alpha_2, ..., \alpha_n\}$, $D^k$
\end{algorithmic}
\label{alg:gt}
\end{algorithm}

\subsection{Neighborhood mechanism and other acceleration methods}

In this section, we provide details about the neighborhood mechanism and introduce two more related acceleration methods.

\paragraph{Neighborhood mechanism.} The following proposition is a detailed restatement of Proposition. \ref{pro:nbh-mech}

\begin{proposition}\label{pro:nbh-trick}
In the $k$th iteration of the iterative GT algorithm (cf. {Algorithm \ref{alg:gt}}), we have for any point $x^{k+1}_i\in X^{k+1}$:
$B_\eps^{D^{k+1}}(x^{k+1}_i)\subset B_\eps(x^{k+1}_i),$
where $B_\eps^{D^{k+1}}(x^{k+1}_i)$ is the ball with respect to the distance matrix $D^{k+1}$ centered at $x^{k+1}_i$ with radius $\eps$ whereas $B_\eps(x^{k+1}_i)$ is the usual Euclidean ball.
\end{proposition}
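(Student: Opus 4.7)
The plan is essentially a one-line observation that follows directly from the structure of the GT distance formula in Definition \ref{def:gtd}. Recall that in line 6 of Algorithm \ref{alg:gt}, the updated distance is defined by $D^{k+1}(x_i^{k+1},x_j^{k+1}) = d^{\mathsmaller{(\eps,\lambda)}}_{{\alpha,D^k}}(x_i^{k+1},x_j^{k+1})$, which by Equation (\ref{eq:gtd}) equals
\[
\left(\norm{x_i^{k+1}-x_j^{k+1}}^2 + \lambda \cdot \dcov^2\!\left(\Sigma_{\alpha,D^k}^{\mathsmaller{(\eps)}}(x_i^{k+1}),\,\Sigma_{\alpha,D^k}^{\mathsmaller{(\eps)}}(x_j^{k+1})\right)\right)^{\frac{1}{2}}.
\]

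The key observation is that both summands under the square root are nonnegative: the Euclidean term is trivially nonnegative, and the Bures-term is a squared distance between positive semi-definite matrices multiplied by $\lambda \geq 0$. Therefore $D^{k+1}(x_i^{k+1},x_j^{k+1}) \geq \norm{x_i^{k+1}-x_j^{k+1}}$ pointwise. In other words, the GT distance dominates the underlying Euclidean distance on the updated point cloud $X^{k+1}$.

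The inclusion of balls then follows immediately: if $y \in B_\eps^{D^{k+1}}(x_i^{k+1})$, meaning $D^{k+1}(x_i^{k+1}, y) \leq \eps$, then by the inequality above $\norm{x_i^{k+1}-y} \leq D^{k+1}(x_i^{k+1},y) \leq \eps$, so $y \in B_\eps(x_i^{k+1})$. There is no real obstacle here; the statement is a structural consequence of how the Euclidean term sits inside the GT distance. The only thing worth flagging in the write-up is that the inclusion is sharp only when $\lambda$ and/or the local covariance discrepancies are large enough to genuinely shrink the GT ball, but for the containment itself no quantitative control is required.
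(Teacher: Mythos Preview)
Your argument is correct and is essentially identical to the paper's own proof: both simply observe that $\left(D^{k+1}(x_i^{k+1},x_j^{k+1})\right)^2 = \norm{x_i^{k+1}-x_j^{k+1}}^2 + \lambda\,\dcov^2(\cdot,\cdot) \geq \norm{x_i^{k+1}-x_j^{k+1}}^2$, from which the ball inclusion is immediate.
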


\begin{proof}
This follows directly from $\lc D^{k+1}(x_i^{k+1},x_j^{k+1})\rc^2 = \norm{x_i^{k+1}-x_j^{k+1}}^2 + \dcov^2\lc\Sigma_{\alpha,D^{k+1}}^{\mathsmaller{(\eps)}}(x_i^{k+1}), \Sigma_{\alpha,D^{k+1}}^{\mathsmaller{(\eps)}}(x_j^{k+1})\rc\geq \norm{x_i^{k+1}-x_j^{k+1}}^2$.
\end{proof}

Hence, in order to determine $B_\eps^{D^{k+1}}(x)$ for updating points or computing the covariance matrices in the next iteration, we only need to compute $D^{k+1}(x,x')$ for pairs $(x,x')$ such that $\|x-x'\|\leq \eps$ by Proposition \ref{pro:nbh-trick}.

\paragraph{Neighborhood propagation.} Once we determine that $x'\in B_\eps^{D^k}(x)$, by symmetry of the GT distance, $x\in B_\eps^{D^k}(x')$. Hence, to determine $B_\eps^{D^k}(x_i)$ for $i=1,\cdots,n$, we only need to compute the GT distance between pairs $(x_i,x_j)$ with $j>i$, where pairs with $j<i$ are already computed for determining $B_\eps^{D^k}(x_j)$. This reduces the computation times of GT distance for determining neighborhood and  makes the GT algorithm more efficient for each iteration.

\paragraph{Merging collocated points.}
Empirically speaking, data points will usually converge to some modes of the dataset after several successive applications of GT, i.e., the GT distances between some pairs of points become 0. Equivalently, such pairs of points satisfy the following two conditions: 
\begin{enumerate}
    \item their coordinates are the same; 
    \item the neighborhood points w.r.t. GT distance coincide. 
\end{enumerate}
Then, we merge the collocated points into one new point. And the weight of the new point is the sum of weights of all these collocated points. 
Then, the point set is updated by substituting the collocated points with the new points. This process reduces the total number of data points through the iterations and thus accelerates the GT algorithm.

We verify in Table \ref{tab:compt} that the neighborhood mechanism and the other two methods indeed accelerate our implementation of GT algorithm. 

\begin{table}[hbt]
\caption{{Validation of acceleration methods.} Let $X=\left\{\lc\frac{i}{199},\frac{j}{199}\rc:\,i,j=0,\cdots,199\right\}\subset\mathbb{R}^2$ and $\alpha$ be the normalized empirical measure. Set $\lambda=1$ and $\eps=0.1$. 
$\tau$ denotes the current iteration number. 
Entries below show the running time of the GT algorithm with different combinations of neighborhood mechanisms in different iterations. 
The experiments are performed on a Unix Server which has 48 cores of CPU. We use C++ with the openMP (Open Multi-Processing) to implement GT with parallel computing. 
GT: full matrix computation of the GT distance; GT-v1: GT with the neighborhood mechanism mentioned in Section \ref{sec:nbh-mech}; GT-v2: GT-v1 with neighborhood propagation; GT-v3: GT-v1 with collocated points merged; GT-v4: GT-v2  with collocated points merged. }
    \centering
    \begin{tabular}{|c|c|c|c|c|c|}
        \hline
         & $\tau=1$ & $\tau=2$ & $\tau=3$ & 
        $\tau=4$ & $\tau=5$\\ \hline
        GT & 48.7s & - & - & - & - \\ \hline
        GT-v1 & 20.2s & 11.3s & 9s & 7.1s & 6.7s \\ \hline
        GT-v2 & 14.8s & 10s & 7.4s & 6.2s & 6.3s \\ \hline
        GT-v3 & 20.9s & 9.4s & 5.3s & 2.8s & 1.5s \\ \hline
        GT-v4 & 14.7s & 8.2s & 4.1s & 2.6s & 1.4s \\ \hline
    \end{tabular}
    \label{tab:compt}
\end{table}

\subsection{Worst complexity analysis}

We compare the computational complexity of MS, WT and GT for one iteration of each method. We denote the point cloud size by $n$, dimension by $m$ and assume the maximum $\eps$-neighborhood size of each point by $N$ for some $\eps>0$. For all methods, they need $O(n^2)$ operations to select $\eps$-neighborhoods of points. This cost is not dominating (as we will see in the sequel) and we ignore it in the following analysis. 

For MS, we need to first compute the Euclidean distance between each pair of points which costs $O(n^2m)$. The updating process for each point needs $O(Nm)$ operations and thus the point updating process for all points costs $O(nNm)$ in total. So the complexity of MS for one iteration is $O(n^2m+nNm)$.

As for WT, it computes the Wasserstein distance between all pairs of points' neighborhoods. For once distance computation, the complexity is $O(N^3\log N)$~\cite[p. 472, Th. 12.2]{ahujia1993network}, 
and there are $C_n^2 = \frac{n(n-1)}{2}$ pairs of points, leading to $O(n^2)$ times of such distance computation. Then in total, the complexity of WT for one iteration is $O(n^2(N^3\log N))$. 

Now, we derive the complexity of GT for one iteration. As in the case of WT, there are also $O(n^2)$ times of GT distance computation. For once distance computation, the determined cost lies in two parts: one is the computation of covariance matrices $\Sigma_1$ and $\Sigma_2$, whose complexity is $O(Nm^2)$; 
the other is the computation of 
$\tr\left(\left(\Sigma_1^\frac{1}{2}\Sigma_2\Sigma_1^\frac{1}{2}\right)^\frac{1}{2}\right)$. For matrix multiplication, the complexity is at most $O(m^{3})$. 
The computation of square root of covariance matrix is solved by eigen-decomposition, whose complexity is $O(m^3)$~\cite{pan1998complexity,demmel2007fast}. Then, the computational cost of $\tr\left(\left(\Sigma_1^\frac{1}{2}\Sigma_2\Sigma_1^\frac{1}{2}\right)^\frac{1}{2}\right)$ is $O(m^3)$. In total, the complexity of GT-Metric for one iteration is $O(n^2(Nm^2 + m^3))$.

In the end, we analyze the time complexity of GT with neighborhood mechanism, abbreviated by GT-Neighborhood. We need to first compute the Euclidean distance between each pair of points whose complexity is $O(n^2m)$. Since each point has at most $N$ Euclidean neighborhood points, there will be $N$ times of GT distance computation. From above we know once GT distance computation complexity is $O(Nm^2 + m^3)$. Then for all points, the total cost will be $O(n^2m + nN(Nm^2+m^3))$. 

The summary of complexity comparison is listed in Table~\ref{tab:compformula}. 
\begin{table}[htb]
\caption{Complexity comparison.}
    \centering
    \begin{tabular}{|c|c|c|c|c|}
        \hline
        & MS & WT & GT & GT-Neighborhood \\ \hline
        Cost &  $O(n^2(Nm^2 + m^3))$ & $O(n^2(N^3logN))$ & $O(n^2m + nNm)$ & $O(n^2m + nN(Nm^2+m^3))$  \\ \hline
    \end{tabular}
    \label{tab:compformula}
\end{table}

Note that from Table~\ref{tab:compformula}, when $N > m$, the complexity order of four methods is 
MS $<$ GT-Neighborhood $<$ GT $<$ WT. 

\section{Additional theorems and their proofs}
\label{app:proof}
\subsection{Stability theorems}\label{app:proof-stable}
In this section, we always assume that {$(X,d_X)$ is a compact subspace of $\R^m$, i.e., $d_X$ is the underlying Euclidean distance between points.} We also assume that $\diam(X)\leq D$.
\paragraph{Explanation of the set $\mathcal{P}_f^{c,\Lambda}(X)$.} The set $\mathcal{P}_f^{c,\Lambda}(X)$ is actually the intersection of $\mathcal{P}_f^{c}(X)$ and $\mathcal{P}_f^{\Lambda}(X)$, where the former is the set of all $\alpha\in\pf(X)$ such that $\alpha(S)\leq c\cdot \mathcal{L}_m(S)$ for any measurable $S$ and the latter is the set of all $\alpha\in\pf(X)$ such that $\frac{\alpha\left(B^{d_X}_{r_1}(x)\right)}{\alpha\left(B^{d_X}_{r_2}(x)\right)}\leq\lc\frac{r_1}{r_2}\rc^\Lambda$ for any $x\in X$ and $r_1\geq r_2>0$. $\mathcal{P}_f^{c}(X)$ was used for proving a stability theorem for one type of local covariance matrices in \cite{martinez2020shape} (cf. Lemma \ref{lm:cmbound}) whereas $\mathcal{P}_f^{\Lambda}(X)$ was used in \cite{pmlr-v97-memoli19a} for establishing stability theorems for WT and MS (cf. Lemma \ref{lm:ms-stb}).

\begin{remark}\label{rmk:stb}
One drawback of the stability theorem (Theorem \ref{thm:GT-stable}) is that it does not apply to empirical measures, i.e., $\mathcal{P}_f^{c,\Lambda}(X)$ does not contain empirical measures. This fact is due to the discontinuity of the truncation kernel inherent in the definition of $\me_{\alpha,d_X}(\cdot)$. However, if we utilize a smooth kernel for computing local covariance matrices, we obtain a more general theorem (cf. Theorem \ref{thm:gt-stable-smooth}) which applies to empirical measures. 
\end{remark}

\paragraph{Proof of Theorem \ref{thm:GT-stable}.} The proof is based on the following series of lemmas. 
\begin{lemma}\label{lm:tr}
For symmetric positive semi-definite matrices $A,B$, we have
\[\mathrm{tr}\lc A+B-2\lc A^{ \frac{1}{2} }BA^{ \frac{1}{2} }\rc^{ \frac{1}{2} }\rc\leq\norm{A^{ \frac{1}{2} }-B^{ \frac{1}{2} }}_F^2,\]
where $\norm{\cdot}_F$ is the Frobenius norm of matrices.
\end{lemma}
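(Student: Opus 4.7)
The plan is to reduce the claim to a classical trace inequality between $\operatorname{tr}(A^{1/2}B^{1/2})$ and $\operatorname{tr}((A^{1/2}BA^{1/2})^{1/2})$, and then close the loop via the spectrum identity already recorded in Theorem~\ref{thm:redux}.

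First I would expand the right-hand side. Since $A^{1/2}$ and $B^{1/2}$ are self-adjoint,
\[
\norm{A^{1/2}-B^{1/2}}_F^{2}=\operatorname{tr}\bigl((A^{1/2}-B^{1/2})^{2}\bigr)=\operatorname{tr}(A)+\operatorname{tr}(B)-2\operatorname{tr}\bigl(A^{1/2}B^{1/2}\bigr).
\]
The left-hand side equals $\operatorname{tr}(A)+\operatorname{tr}(B)-2\operatorname{tr}\bigl((A^{1/2}BA^{1/2})^{1/2}\bigr)$, so after cancellation the lemma becomes equivalent to
\[
\operatorname{tr}\bigl(A^{1/2}B^{1/2}\bigr)\;\le\;\operatorname{tr}\bigl((A^{1/2}BA^{1/2})^{1/2}\bigr).
\]

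Next I would establish this inequality using the nuclear-norm bound on the trace: for any square matrix $M$, $\operatorname{tr}(M)\le \sum_i\sigma_i(M)=\operatorname{tr}(|M|)$, where $|M|\coloneqq (M^{*}M)^{1/2}$. Applying this to $M=A^{1/2}B^{1/2}$ and using that $A^{1/2},B^{1/2}$ are self-adjoint gives $M^{*}M=B^{1/2}AB^{1/2}$, hence
\[
\operatorname{tr}\bigl(A^{1/2}B^{1/2}\bigr)\;\le\;\operatorname{tr}\bigl((B^{1/2}AB^{1/2})^{1/2}\bigr).
\]
(One should also note that $\operatorname{tr}(A^{1/2}B^{1/2})$ is real and in fact non-negative, since the eigenvalues of $A^{1/2}B^{1/2}$ coincide with those of the similarity $B^{1/4}A^{1/2}B^{1/4}\succeq 0$ when $B$ is invertible, and the general case follows by continuity.)

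Finally, I would close the argument by invoking Theorem~\ref{thm:redux}: the spectrum of $B^{1/2}AB^{1/2}$ equals that of $BA$, which in turn equals that of $AB$ and therefore of $A^{1/2}BA^{1/2}$. Taking square roots entry-wise in this multiset and summing yields
\[
\operatorname{tr}\bigl((B^{1/2}AB^{1/2})^{1/2}\bigr)=\operatorname{tr}\bigl((A^{1/2}BA^{1/2})^{1/2}\bigr),
\]
which combined with the previous inequality proves the reduced claim and hence the lemma. The only subtle step is the nuclear-norm bound on $\operatorname{tr}(M)$; this is the standard consequence of Von Neumann's trace inequality (or of the polar decomposition together with the Cauchy--Schwarz inequality for the Hilbert--Schmidt inner product), so it is not a real obstacle, merely the place where some care with singular values versus eigenvalues is required.
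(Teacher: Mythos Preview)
Your argument is correct and follows essentially the same route as the paper: expand the Frobenius norm, reduce to $\operatorname{tr}(A^{1/2}B^{1/2})\le\operatorname{tr}\bigl((A^{1/2}BA^{1/2})^{1/2}\bigr)$, and prove this via the inequality $\operatorname{tr}(M)\le\sum_i\sigma_i(M)$ (the paper cites Bhatia, Theorem~2.3.6, for $\sum_i|\lambda_i|\le\sum_i\sigma_i$). The only difference is that the paper sets $X=B^{1/2}A^{1/2}$ so that $X^{\mathrm T}X=A^{1/2}BA^{1/2}$ directly, avoiding your detour through $B^{1/2}AB^{1/2}$ and Theorem~\ref{thm:redux}; had you taken $M=B^{1/2}A^{1/2}$ instead of $M=A^{1/2}B^{1/2}$, that extra step would disappear.
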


\begin{proof}[Proof of Lemma \ref{lm:tr}]
Expand the right hand side of the inequality we obtain
\begin{align*}
\norm{A^{ \frac{1}{2} }-B^{ \frac{1}{2} }}_F^2 = & \mathrm{tr}\lc A+B-A^{ \frac{1}{2} }B^{ \frac{1}{2} }-B^{ \frac{1}{2} }A^{ \frac{1}{2} }\rc\\
=&\mathrm{tr}\lc A+B-2B^{ \frac{1}{2} }A^{ \frac{1}{2} }\rc
\end{align*}
Hence it suffices to prove 
\[\mathrm{tr}\lc \lc A^{ \frac{1}{2} }BA^{ \frac{1}{2} }\rc^{ \frac{1}{2} }\rc\geq\tr\lc B^{ \frac{1}{2} }A^{ \frac{1}{2} }\rc.\]
Let $X=B^{ \frac{1}{2} }A^{ \frac{1}{2} }$, then $\lc A^{ \frac{1}{2} }BA^{ \frac{1}{2} }\rc^{ \frac{1}{2} }=\lc X^\mathrm{T}X\rc^{ \frac{1}{2} }$. If we denote the singular values of $X$ as $\{\sigma_i\}_{i=1,\cdots,m}$ and the eigenvalues of $X$ as $\{\lambda_i\}_{i=1,\cdots,m}$, then 
\begin{align*}
\mathrm{tr}\lc \lc A^{ \frac{1}{2} }BA^{ \frac{1}{2} }\rc^{ \frac{1}{2} }\rc & =\tr\lc\lc X^\mathrm{T}X\rc^{ \frac{1}{2} }\rc\\
& =\sum_{i=1}^m\sigma_i\geq\sum_{i=1}^m|\lambda_i|\\
& \geq\tr\lc X\rc=\tr\lc B^{ \frac{1}{2} }A^{ \frac{1}{2} }\rc.
\end{align*}
The first inequality follows directly from Theorem 2.3.6 in \cite{bhatia2013matrix}.
\end{proof}

\begin{lemma}\label{lm:sqrn}
For symmetric positive semi-definite matrices $A,B$ with dimension $m$, we have
\[\norm{A^{ \frac{1}{2} }-B^{ \frac{1}{2} }}_F^2\leq m\norm{A-B}_F.\]
\end{lemma}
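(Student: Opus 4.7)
The plan is to pass through the operator norm, which naturally yields the factor $m$ appearing in the statement. The argument combines three standard ingredients: a Frobenius-to-operator-norm conversion in dimension $m$, an operator-norm H\"older-type inequality for matrix square roots, and the trivial reverse conversion from operator norm back to Frobenius norm.

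First, for any $m\times m$ matrix $M$, the bound $\norm{M}_F^2 \leq m\,\norm{M}_{op}^2$ follows because $\norm{M}_F^2$ equals the sum of the $m$ squared singular values of $M$, each of which is bounded above by $\norm{M}_{op}^2$. Applied to $M = A^{\frac{1}{2}} - B^{\frac{1}{2}}$, this yields $\norm{A^{\frac{1}{2}} - B^{\frac{1}{2}}}_F^2 \leq m\,\norm{A^{\frac{1}{2}} - B^{\frac{1}{2}}}_{op}^2$. Next, I would invoke the classical operator inequality
\[
\norm{A^{\frac{1}{2}} - B^{\frac{1}{2}}}_{op}^2 \leq \norm{A - B}_{op},
\]
which holds for any two positive semi-definite matrices $A,B$. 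This is a standard consequence of the operator monotonicity and concavity of $t\mapsto t^{\frac{1}{2}}$ (the L\"owner-Heinz inequality); it is documented in matrix-analysis references such as \cite{bhatia2013matrix}. Finally, $\norm{A - B}_{op} \leq \norm{A - B}_F$, since the largest singular value is bounded by the square root of the sum of squared singular values. Chaining the three bounds produces
\[
\norm{A^{\frac{1}{2}} - B^{\frac{1}{2}}}_F^2 \leq m\,\norm{A^{\frac{1}{2}} - B^{\frac{1}{2}}}_{op}^2 \leq m\,\norm{A - B}_{op} \leq m\,\norm{A - B}_F,
\]
which is the desired inequality.

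The only non-routine ingredient is the square-root operator inequality in the second step, which I would cite rather than re-derive. An alternative route would use the Powers-St\o rmer inequality $\norm{A^{\frac{1}{2}} - B^{\frac{1}{2}}}_F^2 \leq \tr(|A - B|)$ followed by the Cauchy-Schwarz bound $\tr(|A - B|) = \sum_{i=1}^m \sigma_i(A-B) \leq \sqrt{m}\,\norm{A-B}_F$; this actually produces the sharper constant $\sqrt{m}$ in place of $m$, but since the statement only requests the weaker bound, either approach suffices. No substantive obstacle is anticipated beyond properly invoking the standard operator-square-root inequality.
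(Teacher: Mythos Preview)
Your proof is correct and follows essentially the same route as the paper: cite the operator-norm square-root inequality $\norm{A^{1/2}-B^{1/2}}^2 \leq \norm{A-B}$ from \cite{bhatia2013matrix}, then sandwich it between the two Frobenius--operator norm comparisons $\norm{M}_F \leq \sqrt{m}\,\norm{M}$ and $\norm{M} \leq \norm{M}_F$. Your additional remark about the Powers--St\o rmer route yielding the sharper constant $\sqrt{m}$ is a nice observation that the paper does not mention.
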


\begin{proof}[Proof of Lemma \ref{lm:sqrn}]
It's shown in page 290 of \cite{bhatia2013matrix} that by using the operator norm $\norm{\cdot}$ of matrices one has
\[\norm{A^{ \frac{1}{2} }-B^{ \frac{1}{2} }}^2\leq \norm{A-B}.\]
By using the following relation for any $m$-dimensional matrix $M$ (see page 7 of \cite{bhatia2013matrix})
\[\norm{M}\leq \norm{M}_F\leq\sqrt{m}\norm{M},\]
we obtain
\[\norm{A^{ \frac{1}{2} }-B^{ \frac{1}{2} }}_F^2\leq m\norm{A-B}_F.\]
\end{proof}
Denote by $\Tilde{\Sigma}_{\alpha,d_X}^{\mathsmaller{(\eps)}}(x)$ the matrix defined as follows:
\begin{equation}\label{eq:til-cov}
    \frac{1}{\eps^m\,\nu_m}\int_{B_\eps^{d_X}(x)}\lc{y}-x\rc\otimes \lc {y}- x\rc\,\alpha(d {y}), 
\end{equation}
where {$\nu_m$ is the volume of the unit ball in $\R^m$.}

Then, we have the following result:

\begin{lemma}[Theorem 3 in \cite{martinez2020shape}]\label{lm:cmbound}
Assume $\alpha,\beta\in\pf^c(X)$. Then, there is a constant $A=A(\eps,m,D)$ such that 
\[\sup_{x\in\R^m}\norm{\tilde{\Sigma}_{\alpha,d_X}^{\mathsmaller{(\eps)}}(x)-\tilde{\Sigma}_{\beta,d_X}^{\mathsmaller{(\eps)}}(x)}_F\leq c\, A\cdot d_{\mathrm{W},\infty}(\alpha,\beta).\]
\end{lemma}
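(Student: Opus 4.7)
The plan is to convert the pointwise difference $\tilde\Sigma^{(\eps)}_{\alpha,d_X}(x)-\tilde\Sigma^{(\eps)}_{\beta,d_X}(x)$ into an integral against a near-diagonal $d_{\mathrm{W},\infty}$-coupling, and to control it by splitting the integration domain according to whether the transport crosses the truncation boundary $\partial B_\eps^{d_X}(x)$.

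First, fix $x\in\R^m$ and set $\delta:=d_{\mathrm{W},\infty}(\alpha,\beta)$. Choose (up to arbitrarily small slack) a coupling $\pi\in\Pi(\alpha,\beta)$ supported in $\{(y,z):\|y-z\|\leq\delta\}$, and introduce the matrix-valued integrand
\[
f_x(y):=\frac{1}{\eps^m\nu_m}\,\mathbf{1}_{B_\eps^{d_X}(x)}(y)\,(y-x)\otimes(y-x).
\]
By the definition of $\tilde\Sigma^{(\eps)}_{\alpha,d_X}$, one has $\tilde\Sigma^{(\eps)}_{\alpha,d_X}(x)-\tilde\Sigma^{(\eps)}_{\beta,d_X}(x)=\int(f_x(y)-f_x(z))\,d\pi(y,z)$, so $\|\tilde\Sigma^{(\eps)}_{\alpha,d_X}(x)-\tilde\Sigma^{(\eps)}_{\beta,d_X}(x)\|_F\leq\int\|f_x(y)-f_x(z)\|_F\,d\pi(y,z)$. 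I would then split the domain into the interior region $\mathcal{I}:=\{(y,z):y,z\in B_\eps^{d_X}(x)\}$, the mixed region $\mathcal{M}:=\{(y,z):\text{exactly one of }y,z\text{ lies in }B_\eps^{d_X}(x)\}$, and the exterior (which contributes zero).

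On $\mathcal{I}$, the algebraic identity $(y-x)\otimes(y-x)-(z-x)\otimes(z-x)=(y-z)\otimes(y-x)+(z-x)\otimes(y-z)$ combined with $\|y-x\|,\|z-x\|\leq\eps$ gives the Lipschitz-type estimate $\|f_x(y)-f_x(z)\|_F\leq 2\eps\|y-z\|/(\eps^m\nu_m)\leq 2\delta/(\eps^{m-1}\nu_m)$. Integrating this against $\pi(\mathcal{I})\leq\alpha(B_\eps^{d_X}(x))\leq c\,\nu_m\eps^m$ (via the density bound) yields an interior contribution of order $c\,\eps\,\delta$. On $\mathcal{M}$, the constraint $\|y-z\|\leq\delta$ forces the point lying inside $B_\eps^{d_X}(x)$ to sit in the annular shell $S_\delta:=\{w:\eps-\delta<\|w-x\|\leq\eps\}$, so $\pi(\mathcal{M})\leq\alpha(S_\delta)+\beta(S_\delta)$; combining the crude pointwise bound $\|f_x\|_F\leq\eps^{2-m}/\nu_m$ on $\mathcal{M}$ with $\alpha(S_\delta),\beta(S_\delta)\leq c\,\nu_m\,(\eps^m-(\eps-\delta)^m)\leq c\,m\,\nu_m\,\eps^{m-1}\delta$ (for $\delta\leq\eps$) produces a boundary contribution also of order $c\,\eps\,\delta$.

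The main obstacle is precisely this mixed-region estimate: the truncation kernel is discontinuous across $\partial B_\eps^{d_X}(x)$, so without regularity on $\alpha,\beta$ a single unit of mass could in principle be transported across the boundary and produce a jump of size $\eps^{2-m}/\nu_m$ in $f_x$; this is exactly the blow-up that the density hypothesis $\alpha,\beta\in\pf^c(X)$ eliminates by forcing the shell $S_\delta$ to have volume at most $O(\eps^{m-1}\delta)$ and hence $\alpha$- and $\beta$-mass at most $O(c\,\eps^{m-1}\delta)$. Summing the two regional contributions, and absorbing the asymmetric regime $\delta\geq\eps$ into a constant depending on $\diam(X)\leq D$, yields a bound of the form $c\,A(\eps,m,D)\,d_{\mathrm{W},\infty}(\alpha,\beta)$; since the estimate is uniform in $x$, taking the supremum over $x\in\R^m$ completes the proof.
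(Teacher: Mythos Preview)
The paper does not prove this lemma: it is simply quoted as Theorem~3 of \cite{martinez2020shape} and used as a black box in the proof of Theorem~\ref{thm:GT-stable}. So there is no in-paper proof to compare against.

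That said, your argument is correct and is precisely the natural way to establish such a bound. The key points are all present: writing the difference of the two local second-moment tensors as an integral of $f_x(y)-f_x(z)$ against an (approximately) optimal $d_{\mathrm{W},\infty}$-coupling; handling the region where both $y,z$ lie in $B_\eps^{d_X}(x)$ via the Lipschitz identity $(y-x)^{\otimes 2}-(z-x)^{\otimes 2}=(y-z)\otimes(y-x)+(z-x)\otimes(y-z)$; and, crucially, recognising that the only obstruction is the discontinuity of the truncation kernel across $\partial B_\eps^{d_X}(x)$, which forces any mass transported across the boundary to originate in the thin annulus $S_\delta$, whose $\alpha$- and $\beta$-mass is $O(c\,\eps^{m-1}\delta)$ by the absolute-continuity hypothesis defining $\pf^c(X)$. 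Your bookkeeping is clean, and in fact your constant $A$ ends up depending only on $(\eps,m)$, which is slightly sharper than the stated $A(\eps,m,D)$; the trivial bound $\|\tilde\Sigma_{\alpha}^{(\eps)}(x)\|_F\leq c\,\eps^2$ you invoke for $\delta\geq\eps$ makes the diameter parameter $D$ unnecessary here.
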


Note that $\Tilde{\Sigma}_{\alpha,d_X}^{\mathsmaller{(\eps)}}(x)$ is different from the local covariance matrix ${\Sigma}_{\alpha,d_X}^{\mathsmaller{(\eps)}}(x)$ defined in Section \ref{sec:gt} of the paper. To make use of Lemma \ref{lm:cmbound}, we define another matrix as follows to mediate between the two different matrices:
\[\hat{\Sigma}_{\alpha,d_X}^{\mathsmaller{(\eps)}}(x)\coloneqq\int_{\mathbb{R}^d}\lc{y}-x\rc\otimes \lc {y}- x\rc\,\me_{{\alpha,d_X}}(x)(d {y}). \]
Note that $\hat{\Sigma}_{\alpha,d_X}^{\mathsmaller{(\eps)}}(x)=\frac{\eps^m\,\nu_m}{\alpha(B_\eps^{d_X}(x))}\tilde{\Sigma}_{\alpha,d_X}^{\mathsmaller{(\eps)}}(x)$ and $\hat{\Sigma}_{\alpha,d_X}^{\mathsmaller{(\eps)}}(x)={\Sigma}_{\alpha,d_X}^{\mathsmaller{(\eps)}}(x)+(\mu_{\alpha,d_X}^{\mathsmaller{(\eps)}}(x)-x)\otimes(\mu_{\alpha,d_X}^{\mathsmaller{(\eps)}}(x)-x)$.

Denote by $\psi_{\Lambda,D}(\eps) \coloneqq\min\lc1,\lc\frac{\eps}{D}\rc^\Lambda\rc$, $\Phi_{\Lambda,\eps}(t)\coloneqq\lc\lc1+\frac{t}{\eps}\rc^\Lambda-1\rc+t$ and $\Phi_{\Lambda,D,\eps}^{c,A}(t)\coloneqq \frac{\eps^m\,\nu_m\,c\,A}{\psi_{\Lambda,D}(\eps)}\,t+\frac{\eps^2}{\psi_{\Lambda,D}^2(\eps)}\Phi_{\Lambda,\eps}\left(\sqrt{t}\right)$ for $t\geq 0.$ Note that both $\Phi_{\Lambda,\eps}$ and $\Phi_{\Lambda,D,\eps}^{c,A}$ are increasing functions with value 0 when the argument is 0.

\begin{lemma}\label{lm:stb-hat-sigma}
Under the same assumptions as in Lemma \ref{lm:cmbound}, we have that
\[\sup_{x\in\R^m}\norm{\hat{\Sigma}_{\alpha,d_X}^{\mathsmaller{(\eps)}}(x)-\hat{\Sigma}_{\beta,d_X}^{\mathsmaller{(\eps)}}(x)}_F\leq \Phi_{\Lambda,D,\eps}^{c,A}(d_{\mathrm{W},\infty}(\alpha,\beta)).\]
\end{lemma}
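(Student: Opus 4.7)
The plan is to reduce the claim for the normalised covariance matrices $\hat{\Sigma}$ to the corresponding bound of Lemma \ref{lm:cmbound} for $\tilde{\Sigma}$, together with a separate estimate controlling how the normalising factors $\alpha(B_\eps^{d_X}(x))$ and $\beta(B_\eps^{d_X}(x))$ differ. The identity $\hat{\Sigma}_{\alpha,d_X}^{\mathsmaller{(\eps)}}(x)=\frac{\eps^m\nu_m}{\alpha(B_\eps^{d_X}(x))}\,\tilde{\Sigma}_{\alpha,d_X}^{\mathsmaller{(\eps)}}(x)$, immediate from comparing the two definitions, invites the decomposition
$$
\hat{\Sigma}_\alpha-\hat{\Sigma}_\beta=\frac{\eps^m\nu_m}{\alpha(B_\eps^{d_X}(x))}\bigl(\tilde{\Sigma}_\alpha-\tilde{\Sigma}_\beta\bigr)+\eps^m\nu_m\,\tilde{\Sigma}_\beta\!\left(\frac{1}{\alpha(B_\eps^{d_X}(x))}-\frac{1}{\beta(B_\eps^{d_X}(x))}\right),
$$
which I would bound summand by summand in Frobenius norm and then combine via the triangle inequality.

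Before estimating either summand, I would first record the uniform lower bound $\alpha(B_\eps^{d_X}(x))\geq\psi_{\Lambda,D}(\eps)$, obtained by applying the $\pf^\Lambda$ doubling-type inequality with $r_1=D\geq\diam(X)$ (so that $B_D^{d_X}(x)\supset X$ and $\alpha(B_D^{d_X}(x))=1$) and $r_2=\eps$. The first summand is then controlled immediately by combining Lemma \ref{lm:cmbound} with this lower bound, producing the linear contribution $\frac{\eps^m\nu_m\,c\,A}{\psi_{\Lambda,D}(\eps)}\,d_{\mathrm{W},\infty}(\alpha,\beta)$. For the scalar factor in the second summand I would use $\eps^m\nu_m\norm{\tilde{\Sigma}_\beta}_F=\beta(B_\eps^{d_X}(x))\norm{\hat{\Sigma}_\beta}_F\leq\beta(B_\eps^{d_X}(x))\,\eps^2$, relying on the PSD inequality $\norm{A}_F\leq\tr(A)$ and the fact that $\supp\me_{\beta,d_X}(x)\subset B_\eps^{d_X}(x)$ forces $\tr(\hat{\Sigma}_\beta)\leq\eps^2$. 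Writing $\bigl|\alpha(B_\eps)^{-1}-\beta(B_\eps)^{-1}\bigr|=|\alpha(B_\eps)-\beta(B_\eps)|/(\alpha(B_\eps)\beta(B_\eps))$ and lower-bounding both denominators by $\psi_{\Lambda,D}(\eps)$, the second summand is thus controlled by $\frac{\eps^2}{\psi_{\Lambda,D}^2(\eps)}\,|\alpha(B_\eps^{d_X}(x))-\beta(B_\eps^{d_X}(x))|$, reducing the whole argument to an estimate on the difference of $\eps$-ball masses under an $\infty$-Wasserstein perturbation.

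This last step is the only genuinely technical one and I expect it to be the main obstacle. Given any coupling $\pi$ realising $d_{\mathrm{W},\infty}(\alpha,\beta)=s$, so that $d_X(y,y')\leq s$ on $\supp\pi$, the triangle inequality yields both $\beta(B_\eps^{d_X}(x))\leq\alpha(B_{\eps+s}^{d_X}(x))$ and, by swapping the roles of $\alpha$ and $\beta$, $\alpha(B_\eps^{d_X}(x))\leq\beta(B_{\eps+s}^{d_X}(x))$. The doubling inequality in $\pf^\Lambda$ then bounds both $\alpha(B_{\eps+s})-\alpha(B_\eps)$ and $\beta(B_{\eps+s})-\beta(B_\eps)$ by $\bigl((1+s/\eps)^\Lambda-1\bigr)\cdot 1$, hence $|\alpha(B_\eps^{d_X}(x))-\beta(B_\eps^{d_X}(x))|\leq(1+s/\eps)^\Lambda-1$. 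For $s\leq 1$ this is dominated by $(1+\sqrt{s}/\eps)^\Lambda-1$, while for $s>1$ the trivial estimate $|\alpha(B_\eps)-\beta(B_\eps)|\leq 1\leq\sqrt{s}$ takes over; combining the two cases yields uniformly $|\alpha(B_\eps)-\beta(B_\eps)|\leq\Phi_{\Lambda,\eps}(\sqrt{s})$, whose assembly with the earlier bounds produces exactly $\Phi_{\Lambda,D,\eps}^{c,A}(d_{\mathrm{W},\infty}(\alpha,\beta))$ as desired. The delicacy here lies entirely in the bookkeeping needed to transform an $\infty$-Wasserstein perturbation into a sharp, uniform annular mass estimate; once this is in hand, the rest of the proof is routine.
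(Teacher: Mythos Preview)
Your proof is correct and follows the same overall structure as the paper's: the same decomposition of $\hat{\Sigma}_\alpha-\hat{\Sigma}_\beta$ into a $\tilde{\Sigma}$-difference term handled by Lemma~\ref{lm:cmbound} and a normalisation-difference term, the same lower bound $\alpha(B_\eps^{d_X}(x))\geq\psi_{\Lambda,D}(\eps)$ from the $\pf^\Lambda$ hypothesis, and the same crude estimate $\eps^m\nu_m\,\|\tilde{\Sigma}_\beta\|_F\leq\eps^2$.

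The one genuine point of departure is how you control $|\alpha(B_\eps)-\beta(B_\eps)|$. The paper routes this through the Prokhorov metric: with $\xi=d_\mathrm{P}(\alpha,\beta)$ one has $\beta(B_\eps)\leq\alpha((B_\eps)^\xi)+\xi$, which combined with the doubling hypothesis yields $|\alpha(B_\eps)-\beta(B_\eps)|\leq\Phi_{\Lambda,\eps}(\xi)$; the square root then enters naturally via the classical inequality $d_\mathrm{P}^2\leq d_{\mathrm{W},1}\leq d_{\mathrm{W},\infty}$. You instead work directly with an optimal $\infty$-Wasserstein coupling and the triangle inequality to obtain the (tighter) estimate $|\alpha(B_\eps)-\beta(B_\eps)|\leq(1+s/\eps)^\Lambda-1$ with $s=d_{\mathrm{W},\infty}(\alpha,\beta)$, and then insert the square root a posteriori via the case split $s\lessgtr 1$ so as to land on $\Phi_{\Lambda,\eps}(\sqrt{s})$. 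Your route is more self-contained, avoiding the Prokhorov detour; the paper's route makes the appearance of $\sqrt{\cdot}$ conceptually transparent as a consequence of $d_\mathrm{P}^2\leq d_{\mathrm{W},1}$ rather than an ad hoc case analysis.
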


\begin{proof}[Proof of Lemma \ref{lm:stb-hat-sigma}]
For simplicity of notation, we denote $\alpha_x\coloneqq\alpha(\bxe)$ and $\beta_x\coloneqq\beta(\bxe)$.
\begin{align*}
    &\norm{\hat{\Sigma}_{\alpha,d_X}^{\mathsmaller{(\eps)}}(x)-\hat{\Sigma}_{\beta,d_X}^{\mathsmaller{(\eps)}}(x)}_F\\
    =&{\eps^m\,\nu_m}\norm{\frac{\tilde{\Sigma}_{\alpha,d_X}^{\mathsmaller{(\eps)}}(x)}{\alpha(B_\eps(x))}-\frac{\tilde{\Sigma}_{\beta,d_X}^{\mathsmaller{(\eps)}}(x)}{\beta(B_\eps(x))}}_F\\
    =&\frac{\eps^m\,\nu_m}{\alpha_x\,\beta_x}\norm{{\tilde{\Sigma}_{\alpha,d_X}^{\mathsmaller{(\eps)}}(x)}{\beta_x}-{\tilde{\Sigma}_{\beta,d_X}^{\mathsmaller{(\eps)}}(x)}{\alpha_x}}_F\\
    \leq&\underbrace{\frac{\eps^m\,\nu_m}{\alpha_x}\norm{{\tilde{\Sigma}_{\alpha,d_X}^{\mathsmaller{(\eps)}}(x)}-{\tilde{\Sigma}_{\beta,d_X}^{\mathsmaller{(\eps)}}(x)}}_F}_{T_1}\\
    +&\underbrace{\frac{\eps^m\,\nu_m}{\alpha_x\,\beta_x}|\alpha_x-\beta_x|\norm{{\tilde{\Sigma}_{\beta,d_X}^{\mathsmaller{(\eps)}}(x)}}_F}_{T_2}
\end{align*}
By Remark 4.1 in \cite{pmlr-v97-memoli19a}, we have that $\alpha_x,\beta_x\geq\psi_{\Lambda,D}(\eps)$. Hence, together with Lemma \ref{lm:cmbound}, we have
\[T_1\leq \frac{\eps^m\,\nu_m\,c\,A}{\psi_{\Lambda,D}(\eps)}d_{\mathrm{W},\infty}(\alpha,\beta). \]

To estimate $|\alpha_x-\beta_x|$, we introduce the so-called \emph{Prokhorov distance} $d_\mathrm{P}$ \cite{givens1984class} between probability measures, which is defined by 
\[d_\mathrm{P}(\alpha,\beta)\coloneqq\inf\{\eta:\,\alpha(A)\leq \beta(A^\eta)+\eta\}. \]
Though seemingly asymmetric, $d_\mathrm{P}$ is a symmetric metric on $\pf(X)$ and as a consequence, the roles of $\alpha$ and $\beta$ in the definition are interchangeable. 

Without loss of generality, we assume that $\beta_x\geq \alpha_x$. Let $\xi\coloneqq d_\mathrm{P}(\alpha,\beta)$. Then,
\begin{align*}
    &\beta(\bxe)-\alpha(\bxe)\\
    \leq&\alpha\left((\bxe)^\xi\right)+\xi-\alpha(\bxe)\\
    \leq &\alpha(\bxe)\lc\frac{\alpha\left(B_{\eps+\xi}^{d_X}(x)\right)}{\alpha(\bxe)}-1\rc+\xi\\
    \leq& \lc\lc1+\frac{\xi}{\eps}\rc^\Lambda-1\rc+\xi=\Phi_{\Lambda,\eps}(\xi)\\
    \leq &\Phi_{\Lambda,\eps}\lc\sqrt{d_{\mathrm{W},1}(\alpha,\beta)}\rc
    \leq\Phi_{\Lambda,\eps}\lc\sqrt{d_{\mathrm{W},\infty}(\alpha,\beta)}\rc.
\end{align*}
Since $\Phi_{\Lambda,\eps}$ is increasing, the second to last inequality follows from the fact that $(d_\mathrm{P})^2\leq d_{\mathrm{W},1}$ \cite{gibbs2002choosing} and the last inequality follows from the fact that $d_{\mathrm{W},p}\leq d_{\mathrm{W},q}$ whenever $1\leq p\leq q\leq \infty$ \cite{givens1984class}.

Since $y$ is constructed in $\bxe$ in Equation (\ref{eq:til-cov}), we have $\norm{y-x}\leq \eps$ and thus $\norm{\tilde{\Sigma}_{\beta,d_X}^{\mathsmaller{(\eps)}}(x)}_F\leq \frac{\eps^2}{{\eps^{m}\nu_m}}$. Therefore,
\[T_2\leq  \frac{\eps^2}{\psi_{\Lambda,D}^2(\eps)}\Phi_{\Lambda,\eps}\left(\sqrt{d_{\mathrm{W},\infty}(\alpha,\beta)}\right).\]
Hence, $T_1+T_2\leq \Phi_{\Lambda,D,\eps}^{c,A}(d_{\mathrm{W},\infty}(\alpha,\beta))$
\end{proof}

In \cite{pmlr-v97-memoli19a}, the authors provide a stability theorem for MS with respect to probability measures in $\pf^\Lambda(X)$. Denote $\Psi_{\Lambda,D,\eps}(t)\coloneqq \frac{t}{\psi_{\Lambda,D}(\eps)}+\left[\lc1+\frac{t}{\eps}\rc^\Lambda-1\right]$ for $t\geq 0$.

\begin{lemma}[Theorem 4.6 in \cite{pmlr-v97-memoli19a}]\label{lm:ms-stb}
Assume $\alpha,\beta\in\pf^\Lambda(X)$. Then, 
\begin{align*}
    &\sup_{x\in X}\norm{\mu^{\mathsmaller{(\eps)}}_{\alpha,d_X}(x)-\mu^{\mathsmaller{(\eps)}}_{\beta,d_X}(x)} \\
    \leq &(1+2\eps)\,\Phi_{\Lambda,D,\eps}\lc\sqrt{d_{\mathrm{W},1}(\alpha,\beta)}\rc.
\end{align*}
\end{lemma}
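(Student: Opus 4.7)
The plan is to reduce the problem to two easier stability statements, one for the numerator and one for the denominator of the quotient defining the local mean, and then combine them. Writing the centered local mean as $\mu^{\mathsmaller{(\eps)}}_{\alpha,d_X}(x)-x = N_\alpha(x)/m_\alpha(x)$ with $N_\alpha(x)\coloneqq\int_{\bxe}(y-x)\,d\alpha(y)$ and $m_\alpha(x)\coloneqq\alpha(\bxe)$, and similarly for $\beta$, the algebraic identity
\[
\frac{N_\alpha}{m_\alpha}-\frac{N_\beta}{m_\beta}
=\frac{N_\alpha-N_\beta}{m_\alpha}+\frac{N_\beta}{m_\alpha m_\beta}\bigl(m_\beta-m_\alpha\bigr)
\]
combined with $\|N_\beta\|\leq \eps\,m_\beta$ gives
\[
\bigl\|\mu^{\mathsmaller{(\eps)}}_{\alpha,d_X}(x)-\mu^{\mathsmaller{(\eps)}}_{\beta,d_X}(x)\bigr\|
\leq \frac{\|N_\alpha-N_\beta\|}{m_\alpha}+\frac{\eps\,|m_\alpha-m_\beta|}{m_\alpha}.
\]
Thus it suffices to control the three pieces $m_\alpha$, $|m_\alpha-m_\beta|$, and $\|N_\alpha-N_\beta\|$ uniformly in $x$.

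For the first two pieces I would reuse the ingredients already produced in the proof of Lemma \ref{lm:stb-hat-sigma}. The lower bound $m_\alpha\geq \psi_{\Lambda,D}(\eps)$ follows from the doubling-type condition defining $\pf^\Lambda(X)$ (Remark 4.1 of \cite{pmlr-v97-memoli19a}). The mass-difference estimate $|m_\alpha-m_\beta|\leq \Phi_{\Lambda,\eps}(d_\mathrm{P}(\alpha,\beta))$ is precisely the one derived (en route to $T_2$) inside the proof of Lemma \ref{lm:stb-hat-sigma}: push the Prokhorov inclusion $\alpha(\bxe)\leq \beta((\bxe)^{d_\mathrm{P}})+d_\mathrm{P}$ through the $\pf^\Lambda$ doubling inequality on the enlarged ball. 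Applying $d_\mathrm{P}^2\leq d_{\mathrm{W},1}$ and monotonicity of $\Phi_{\Lambda,\eps}$ turns this into a bound in terms of $\sqrt{d_{\mathrm{W},1}(\alpha,\beta)}$.

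The new work is the numerator estimate $\|N_\alpha-N_\beta\|$. I would fix a coupling $\pi$ realizing $d_\mathrm{P}(\alpha,\beta)=\xi$, so $\pi(\{\|y-y'\|>\xi\})\leq\xi$, and analyze $\phi(y)\coloneqq (y-x)\mathbf{1}_{\bxe}(y)$ through the representation $N_\alpha-N_\beta=\int(\phi(y)-\phi(y'))\,d\pi(y,y')$. Split the domain into three regions: (i) $\pi$-mass with both $y,y'\in\bxe$ and $\|y-y'\|\leq \xi$, where $\|\phi(y)-\phi(y')\|=\|y-y'\|\leq \xi$; (ii) $\pi$-mass where exactly one of $y,y'$ lies in $\bxe$ and $\|y-y'\|\leq\xi$, which forces both points into the shell $B_{\eps+\xi}(x)\setminus B_{\eps-\xi}(x)$ and contributes at most $\eps$ per unit mass, with total mass bounded via the $\pf^\Lambda$ doubling condition applied to the shell; (iii) the $\xi$-far $\pi$-mass, of total size $\leq \xi$ with integrand trivially $\leq 2\eps$. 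Summing the three contributions produces an estimate of the form $\|N_\alpha-N_\beta\|\leq \xi + C\,\eps\,\Phi_{\Lambda,\eps}(\xi)$.

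Finally I would divide through by the lower bound $m_\alpha\geq \psi_{\Lambda,D}(\eps)$: the numerator contribution supplies the $t/\psi_{\Lambda,D}(\eps)$ summand of $\Psi_{\Lambda,D,\eps}$, while the denominator contribution multiplied by $\eps$ supplies the $\bigl((1+t/\eps)^\Lambda-1\bigr)$ summand, and the prefactor $(1+2\eps)$ absorbs the remaining $O(1)$ constants together with the $\eps$ on the shell term. The hard part is (ii): one has to argue that the $\pi$-mass assigned to the boundary shell, along either marginal, admits a clean doubling-type bound uniformly in $x$, because naively splitting by marginal loses the exact form $\Phi_{\Lambda,\eps}(\xi)$ and introduces a worse power of $\eps$ than the stated $(1+2\eps)$. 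A careful choice of which marginal to estimate the shell mass against (the one whose point lies outside $\bxe$) should keep the bound clean enough to reproduce the stated constants.
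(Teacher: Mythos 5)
The first thing to note is that the paper does not prove this statement at all: Lemma \ref{lm:ms-stb} is imported verbatim as Theorem 4.6 of \cite{pmlr-v97-memoli19a} and used as a black box (just as Lemma \ref{lm:cmbound} is imported from \cite{martinez2020shape}), so there is no in-paper proof to compare against. Your overall strategy --- split $\mu_\alpha-\mu_\beta$ into a numerator perturbation and a mass perturbation, bound the mass from below by $\psi_{\Lambda,D}(\eps)$, bound the mass difference by $\Phi_{\Lambda,\eps}(d_\mathrm{P}(\alpha,\beta))$ exactly as in the proof of Lemma \ref{lm:stb-hat-sigma}, and handle the numerator with a Strassen coupling split into near-diagonal, boundary-shell, and far pieces --- is the right kind of argument and closely parallels how the paper proves the smooth-kernel analogue (Lemma \ref{lm:ms-smooth}, via $W_1$-stability of the localized measures plus $1$-Lipschitzness of the mean). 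Your fix for the shell term is also correct as far as it goes: measuring the shell event against the marginal of the point lying \emph{outside} $\bxe$ does give a mass bound of the form $\text{(marginal of }B_\eps\text{)}\cdot\bigl((1+\xi/\eps)^\Lambda-1\bigr)$ rather than the uglier inner-annulus ratio.

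The genuine gap is in the final assembly, which you defer with ``should keep the bound clean enough.'' If you execute your plan literally, region (i) contributes $\xi/m_\alpha$, region (iii) contributes $2\eps\xi/m_\alpha$, the denominator term contributes $\eps\bigl((1+\xi/\eps)^\Lambda-1\bigr)+\eps\xi/m_\alpha$ (after the same WLOG symmetrization used for $T_2$ in Lemma \ref{lm:stb-hat-sigma}), and the two shell sub-cases contribute roughly $\eps\bigl((1+\xi/\eps)^\Lambda-1\bigr)\cdot\bigl(1+\beta(\bxe)/\alpha(\bxe)\bigr)$ because one of the two shell events is naturally measured by $\beta$ while you are dividing by $\alpha(\bxe)$. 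Summing gives something like $(1+3\eps)\xi/\psi_{\Lambda,D}(\eps)+3\eps\bigl((1+\xi/\eps)^\Lambda-1\bigr)$ plus higher-order cross terms in the ratio $\beta(\bxe)/\alpha(\bxe)$, which is of the right \emph{shape} but does not reduce to the claimed uniform prefactor $(1+2\eps)$ applied to both summands of $\Psi_{\Lambda,D,\eps}$ (note in particular that $3\eps>1+2\eps$ once $\eps>1$, and $1+3\eps>1+2\eps$ always). So what you have is a correct proof of a bound of the same qualitative form with worse constants, not a derivation of the cited inequality; closing that gap requires either a more careful symmetrization or simply deferring, as the paper does, to the proof in \cite{pmlr-v97-memoli19a}. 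A minor point you handled silently but should flag: the lemma as printed writes $\Phi_{\Lambda,D,\eps}$, a symbol the paper never defines; your reading of it as $\Psi_{\Lambda,D,\eps}$ is almost certainly the intended one, since that is the function introduced immediately before the lemma.
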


Now we are ready to establish a key lemma for proving Theorem \ref{thm:GT-stable}. Denote by $\Psi_{\Lambda,D,\eps}^{c,A}(t)\coloneqq \Phi_{\Lambda,D,\eps}^{c,A}(t)+2\eps(1+2\eps)\Phi_{\Lambda,D,\eps}\left(\sqrt{t}\right)$ for $t\geq 0$. It is easy to see that $\Psi_{\Lambda,D,\eps}^{c,A}$ is an increasing function such that $\Psi_{\Lambda,D,\eps}^{c,A}(0)=0$. 

\begin{lemma}\label{lm:key-cov-est}
Assume $\alpha,\beta\in\pf^{c,\Lambda}(X)$. Then,

\[\sup_{x\in\R^m}\norm{{\Sigma}_{\alpha,d_X}^{\mathsmaller{(\eps)}}(x)-{\Sigma}_{\beta,d_X}^{\mathsmaller{(\eps)}}(x)}_F\leq \Psi_{\Lambda,D,\eps}^{c,A}(d_{\mathrm{W},\infty}(\alpha,\beta)).\]
\end{lemma}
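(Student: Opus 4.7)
The plan is to use $\hat{\Sigma}_{\alpha,d_X}^{\mathsmaller{(\eps)}}(x)$ as a mediator between $\Sigma_{\alpha,d_X}^{\mathsmaller{(\eps)}}(x)$ and $\Sigma_{\beta,d_X}^{\mathsmaller{(\eps)}}(x)$, so that we can directly invoke Lemma \ref{lm:stb-hat-sigma} and bound the remaining discrepancy using the MS stability estimate of Lemma \ref{lm:ms-stb}. Using the identity $\hat{\Sigma}_{\alpha,d_X}^{\mathsmaller{(\eps)}}(x) = {\Sigma}_{\alpha,d_X}^{\mathsmaller{(\eps)}}(x) + (\mu_{\alpha,d_X}^{\mathsmaller{(\eps)}}(x)-x)\otimes(\mu_{\alpha,d_X}^{\mathsmaller{(\eps)}}(x)-x)$ noted immediately before Lemma \ref{lm:cmbound}, and the analogous one for $\beta$, I apply the triangle inequality in the Frobenius norm to obtain
\[
\norm{\Sigma_{\alpha,d_X}^{\mathsmaller{(\eps)}}(x)-\Sigma_{\beta,d_X}^{\mathsmaller{(\eps)}}(x)}_F \leq \norm{\hat{\Sigma}_{\alpha,d_X}^{\mathsmaller{(\eps)}}(x)-\hat{\Sigma}_{\beta,d_X}^{\mathsmaller{(\eps)}}(x)}_F + R(x),
\]
where $R(x)$ is the Frobenius norm of the difference of the two rank-one outer products.

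For the first summand, Lemma \ref{lm:stb-hat-sigma} directly yields the bound $\Phi_{\Lambda,D,\eps}^{c,A}(d_{\mathrm{W},\infty}(\alpha,\beta))$. For the second summand, the next step is the standard telescoping identity for outer products, namely
\[
a\otimes a - b\otimes b = (a-b)\otimes a + b\otimes(a-b),
\]
applied with $a = \mu_{\alpha,d_X}^{\mathsmaller{(\eps)}}(x)-x$ and $b = \mu_{\beta,d_X}^{\mathsmaller{(\eps)}}(x)-x$. Since the Frobenius norm of a rank-one matrix $u\otimes v$ equals $\norm{u}\cdot\norm{v}$, and since both $\mu_{\alpha,d_X}^{\mathsmaller{(\eps)}}(x)$ and $\mu_{\beta,d_X}^{\mathsmaller{(\eps)}}(x)$ lie in the convex hull of $B_\eps^{d_X}(x)$ and thus are within distance $\eps$ of $x$, we obtain
\[
R(x) \leq 2\eps\,\norm{\mu_{\alpha,d_X}^{\mathsmaller{(\eps)}}(x)-\mu_{\beta,d_X}^{\mathsmaller{(\eps)}}(x)}.
\]

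Finally, I invoke Lemma \ref{lm:ms-stb}, which requires $\alpha,\beta\in\pf^\Lambda(X)$ (guaranteed by our assumption $\alpha,\beta\in\pf^{c,\Lambda}(X)$), to bound the supremum over $x$ of the mean discrepancy by $(1+2\eps)\,\Phi_{\Lambda,D,\eps}(\sqrt{d_{\mathrm{W},1}(\alpha,\beta)})$. Using $d_{\mathrm{W},1}\leq d_{\mathrm{W},\infty}$ together with the monotonicity of $\Phi_{\Lambda,D,\eps}$, this becomes $(1+2\eps)\,\Phi_{\Lambda,D,\eps}(\sqrt{d_{\mathrm{W},\infty}(\alpha,\beta)})$. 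Summing the two contributions recovers exactly $\Psi_{\Lambda,D,\eps}^{c,A}(d_{\mathrm{W},\infty}(\alpha,\beta))$, completing the proof. The only mild subtlety I anticipate is verifying that $\mu_{\alpha,d_X}^{\mathsmaller{(\eps)}}(x)$ is indeed within distance $\eps$ of $x$ when $\alpha\in\pf^{c,\Lambda}(X)$, but this follows immediately from the fact that $\me_{\alpha,d_X}(x)$ is supported on $B_\eps^{d_X}(x)$ together with Jensen's inequality.
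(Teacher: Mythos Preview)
Your proposal is correct and follows essentially the same approach as the paper's proof: both split via the identity $\hat{\Sigma}=\Sigma+(\mu-x)^{\otimes 2}$, apply Lemma~\ref{lm:stb-hat-sigma} to the $\hat{\Sigma}$-difference, telescope the rank-one term and bound it by $2\eps\norm{\mu_\alpha-\mu_\beta}$ using $\norm{\mu-x}\leq\eps$, and then invoke Lemma~\ref{lm:ms-stb} together with $d_{\mathrm{W},1}\leq d_{\mathrm{W},\infty}$. The only cosmetic difference is the particular telescoping identity chosen for $a\otimes a-b\otimes b$.
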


\begin{proof}
For simplicity of notation, we let $\mu_\alpha\coloneqq\mu^{\mathsmaller{(\eps)}}_{\alpha,d_X}(x)$ and $\mu_\beta\coloneqq\mu^{\mathsmaller{(\eps)}}_{\beta,d_X}(x)$.
\begin{align*}
    &\norm{{\Sigma}_{\alpha,d_X}^{\mathsmaller{(\eps)}}(x)-{\Sigma}_{\beta,d_X}^{\mathsmaller{(\eps)}}(x)}_F\\
    \leq &\norm{\hat{\Sigma}_{\alpha,d_X}^{\mathsmaller{(\eps)}}(x)-\hat{\Sigma}_{\beta,d_X}^{\mathsmaller{(\eps)}}(x)}_F\\
    +&\norm{(\mu_\alpha-x)^{\otimes^2}-(\mu_\beta-x)^{\otimes^2}}_F\\
    \leq &\Phi_{\Lambda,D,\eps}^{c,A}(d_{\mathrm{W},\infty}(\alpha,\beta))+\norm{(\mu_\alpha-x)\otimes(\mu_\alpha-\mu_\beta)}_F\\
    +&\norm{(\mu_\alpha-\mu_\beta)\otimes(\mu_\beta-x)}_F\\
    \leq &\Phi_{\Lambda,D,\eps}^{c,A}(d_{\mathrm{W},\infty}(\alpha,\beta))+2\eps\norm{\mu_\alpha-\mu_\beta}.
\end{align*}
By Lemma \ref{lm:ms-stb}, we obtain
\begin{align*}
    &\norm{{\Sigma}_{\alpha,d_X}^{\mathsmaller{(\eps)}}(x)-{\Sigma}_{\beta,d_X}^{\mathsmaller{(\eps)}}(x)}_F\leq \Phi_{\Lambda,D,\eps}^{c,A}(d_{\mathrm{W},\infty}(\alpha,\beta))\\
    +&2\eps(1+2\eps)\Phi_{\Lambda,D,\eps}\left(\sqrt{d_{\mathrm{W},1}(\alpha,\beta)}\right)\\
    \leq &\Psi_{\Lambda,D,\eps}^{c,A}(d_{\mathrm{W},\infty}(\alpha,\beta)).
\end{align*}
We use the fact that $d_{\mathrm{W},1}\leq d_{\mathrm{W},\infty}$ again in the last inequality.
\end{proof}

\begin{proof}[Proof of Theorem \ref{thm:GT-stable}]
For any $x\in X$, one has
\begin{align*}
    &\dw\left(\gamma_{\alpha,d_X}^{\mathsmaller{(\eps,\lambda)}}(x),\gamma_{\beta,d_X}^{\mathsmaller{(\eps,\lambda)}}(x)\right)\\
    =&\sqrt{\lambda\,\dcov^2\left(\Sigma_{\alpha,d_X}^{\mathsmaller{(\eps)}}(x),\Sigma_{\beta,d_X}^{\mathsmaller{(\eps)}}(x)\right)}\\
    \leq&\sqrt{\lambda} \norm{\left(\Sigma_{\alpha,d_X}^{\mathsmaller{(\eps)}}(x)\right)^\frac{1}{2}-\left(\Sigma_{\beta,d_X}^{\mathsmaller{(\eps)}}(x)\right)^\frac{1}{2}}_F\\
    \leq& \sqrt{m\,\lambda} \norm{\Sigma_{\alpha,d_X}^{\mathsmaller{(\eps)}}(x)-\Sigma_{\beta,d_X}^{\mathsmaller{(\eps)}}(x)}_F^\frac{1}{2}\\
    \leq& \sqrt{m\,\lambda\,\Psi_{\Lambda,D,\eps}^{c,A}(d_{\mathrm{W},\infty}(\alpha,\beta))}.
\end{align*}
The first inequality follows from Lemma \ref{lm:tr}. The second inequality follows from Lemma \ref{lm:sqrn}. The last inequality follows from Lemma \ref{lm:key-cov-est}.

Now, for any $x,x'\in X$, one has
\begin{align*}
    &\left|\dgt(x,x')-d^{\mathsmaller{(\eps,\lambda)}}_{{\beta,d_X}}(x,x')\right|\\
    = & \big|\dw(\gamma_{\alpha,d_X}^{\mathsmaller{(\eps,\lambda)}}(x),\gamma_{\alpha,d_X}^{\mathsmaller{(\eps,\lambda)}}(x'))-\dw(\gamma_{\alpha,d_X}^{\mathsmaller{(\eps,\lambda)}}(x),\gamma_{\beta,d_X}^{\mathsmaller{(\eps,\lambda)}}(x'))\\
    +&\dw(\gamma_{\alpha,d_X}^{\mathsmaller{(\eps,\lambda)}}(x),\gamma_{\beta,d_X}^{\mathsmaller{(\eps,\lambda)}}(x'))-\dw(\gamma_{\beta,d_X}^{\mathsmaller{(\eps,\lambda)}}(x),\gamma_{\beta,d_X}^{\mathsmaller{(\eps,\lambda)}}(x'))\big|\\
    \leq &\dw(\gamma_{\alpha,d_X}^{\mathsmaller{(\eps,\lambda)}}(x'),\gamma_{\beta,d_X}^{\mathsmaller{(\eps,\lambda)}}(x))+\dw(\gamma_{\alpha,d_X}^{\mathsmaller{(\eps,\lambda)}}(x),\gamma_{\beta,d_X}^{\mathsmaller{(\eps,\lambda)}}(x))\\
    \leq & 2\sqrt{m\,\lambda\,\Psi_{\Lambda,D,\eps}^{c,A}(d_{\mathrm{W},\infty}(\alpha,\beta))}.
\end{align*}
\end{proof}

\paragraph{Smooth kernels.} As mentioned in Remark \ref{rmk:stb}, if we compute local covariance matrices via a smooth kernel, we would obtain a more general stability theorem. The following definition characterizes the requirements of a smooth kernel.

\begin{definition}
Let $f:[0,\infty)\rightarrow(0,\infty)$ be a bounded and differentiable function such that:
\begin{enumerate}
    \item $M_m\coloneqq\int_0^\infty r^{\frac{m}{2}-1}f(r)\,dr<\infty$.
    \item There exists $C>0$ such that $rf(r)\leq C,\,\forall r\in[0,\infty)$.
    \item There exists $L>0$ such that $|f'(r)|\leq L$ and $r^\frac{3}{2}|f'(r)|\leq L$ for $r\in[0,\infty)$.
\end{enumerate}
Then, we define the multiscale smooth kernel $K:\R^m\times \R^m\times(0,\infty)\rightarrow \R$ associated with $f$ by 
\[K(x,y,\eps)\coloneqq\frac{1}{C_m(\eps)}f\lc\frac{\norm{y-x}^2}{\eps^2}\rc, \]
where $C_m(\eps)\coloneqq\frac{1}{2}\eps^mM_m\omega_{m-1}$ and $\omega_{m-1}$ is the surface area of the unit sphere $\mathbb{S}^{m-1}$.
\end{definition}

\begin{remark}
The definition is a combination of Definition 2, assumptions of Theorem 1 in \cite{martinez2020shape} and assumptions of Remark 4.4 in \cite{pmlr-v97-memoli19a}.
\end{remark}

Now we define the GT distance with respect to the smooth kernels and state our main result as follows.

The mean of $\alpha$ at $x\in X$ with respect to $K$ is defined as follows:
\begin{equation}\label{eq:smooth-mean}
    \mu^{\mathsmaller{(\eps)}}_{\alpha,K}(x)\coloneqq \frac{\int_{\R^m}y\,K(x,y,\eps)\,\alpha(dy)}{\int_{\R^m}K(x,y,\eps)\,\alpha(dy)}.
\end{equation}
The local covariance ${\Sigma}_{\alpha,K}^{\mathsmaller{(\eps)}}(x)$ of $\alpha$ generated through $K$ is defined by the following matrix:
\[ \frac{\int_{\R^m}\lc{y}-\mu^{\mathsmaller{(\eps)}}_{\alpha,K}(x)\rc\otimes \lc {y}- \mu^{\mathsmaller{(\eps)}}_{\alpha,K}(x)\rc K(x,y,\eps)\,\alpha(d {y})}{\int_{\R^m}K(x,y,\eps)\,\alpha(dy)}.\]

\begin{remark}\label{rmk:x-replace}
In all the integrals above, the domain of integration $\R^m$ can be replaced by $X$ since $\alpha$ is supported on $X$.
\end{remark}

Then, with respect to a smooth kernel $K$ we define the GT distance $d_{\alpha,K}^{\mathsmaller{(\eps,\lambda)}}(x,x')$ between $x,x'\in X$ by the following quantity 
\begin{equation}\label{eq:gt-smooth}
    \left(\norm{x-x'}^2+\lambda\cdot \dcov^2\lc\Sigma_{{\alpha,K}}^{\mathsmaller{(\eps)}}(x),\Sigma_{{\alpha,K}}^{\mathsmaller{(\eps)}}(x')\rc\right)^\frac{1}{2}.
\end{equation}

\begin{theorem}[Stability of GT for smooth kernels]\label{thm:gt-stable-smooth}
There exists a positive constant $Z>0$ such that for $\alpha,\beta\in \pf(X)$, we have
\[\norm{d_{\alpha,K}^{\mathsmaller{(\eps,\lambda)}}-d_{\beta,K}^{\mathsmaller{(\eps,\lambda)}}}_\infty\leq 2\sqrt{ m\,\lambda\,Z\,d_{\mathrm{W},1}(\alpha,\beta)}. \]
\end{theorem}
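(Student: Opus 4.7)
The plan is to mirror the proof of Theorem \ref{thm:GT-stable}, but exploit the smoothness of $K$ to replace the measure-theoretic ball-volume estimates (Lemmas \ref{lm:cmbound}, \ref{lm:stb-hat-sigma}, \ref{lm:ms-stb}) by direct Kantorovich--Rubinstein duality arguments. This will yield a cleaner, globally Lipschitz bound on the covariance matrices in the $d_{\mathrm{W},1}$ metric, and will eliminate the need to restrict to $\mathcal{P}_f^{c,\Lambda}(X)$.

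First, I would establish a uniform positive lower bound on the normalising integral $I_\alpha(x) \coloneqq \int_X K(x,y,\eps)\,\alpha(dy)$. Since $f>0$ is continuous and $\norm{y-x}^2/\eps^2 \leq D^2/\eps^2$ on $X$, one has $K(x,y,\eps)\geq c_1>0$ uniformly in $x,y\in X$; integrating against any $\alpha\in\mathcal{P}_f(X)$ yields $I_\alpha(x)\geq c_1$. Next, I would establish global Lipschitz bounds on the three integrands $y\mapsto K(x,y,\eps)$, $y\mapsto y\,K(x,y,\eps)$, and $y\mapsto (y\otimes y)\,K(x,y,\eps)$. Writing $r=\norm{y-x}^2/\eps^2$, one has $|\nabla_y K(x,y,\eps)|=\frac{2|f'(r)|\sqrt{r}}{C_m(\eps)\,\eps}$; splitting at $r=1$ and invoking conditions (2) and (3) of the smooth kernel — so that $|f'(r)|\sqrt{r}\leq L$ for $r\leq 1$ and $|f'(r)|\sqrt{r}=r^{3/2}|f'(r)|/r\leq L$ for $r\geq 1$ — gives a uniform gradient bound. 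Combined with the boundedness of $K$ and the diameter bound $\diam(X)\leq D$, this also yields Lipschitz bounds on the other two integrands, with constants depending only on $\eps,m,D,C,L,M_m$.

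Third, I would combine these bounds with Kantorovich--Rubinstein duality, $|\int g\,d\alpha-\int g\,d\beta|\leq \mathrm{Lip}(g)\,d_{\mathrm{W},1}(\alpha,\beta)$, and the elementary quotient inequality applied to $\mu_{\alpha,K}^{\mathsmaller{(\eps)}}(x)=\frac{\int y\,K\,d\alpha}{\int K\,d\alpha}$ and (after rewriting the covariance as $\frac{\int (y-x)^{\otimes 2}K\,d\alpha}{\int K\,d\alpha}-(\mu_{\alpha,K}^{\mathsmaller{(\eps)}}(x)-x)^{\otimes 2}$) to obtain
\[\sup_{x\in X}\norm{\mu_{\alpha,K}^{\mathsmaller{(\eps)}}(x)-\mu_{\beta,K}^{\mathsmaller{(\eps)}}(x)}\leq Z_1\,d_{\mathrm{W},1}(\alpha,\beta),\]
\[\sup_{x\in X}\norm{\Sigma_{\alpha,K}^{\mathsmaller{(\eps)}}(x)-\Sigma_{\beta,K}^{\mathsmaller{(\eps)}}(x)}_F\leq Z\,d_{\mathrm{W},1}(\alpha,\beta),\]
for suitable constants $Z_1,Z>0$.

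Finally, the passage from covariance stability to stability of $d_{\alpha,K}^{\mathsmaller{(\eps,\lambda)}}$ is verbatim the last portion of the proof of Theorem \ref{thm:GT-stable}: Lemma \ref{lm:tr} bounds $\dcov^2$ by $\norm{\Sigma_\alpha^{1/2}-\Sigma_\beta^{1/2}}_F^2$, Lemma \ref{lm:sqrn} bounds this by $m\,\norm{\Sigma_\alpha-\Sigma_\beta}_F$, and two applications of the triangle inequality for $\dw$ (as in the final display of the proof of Theorem \ref{thm:GT-stable}) give $\norm{d_{\alpha,K}^{\mathsmaller{(\eps,\lambda)}}-d_{\beta,K}^{\mathsmaller{(\eps,\lambda)}}}_\infty\leq 2\sqrt{m\,\lambda\,Z\,d_{\mathrm{W},1}(\alpha,\beta)}$. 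The main obstacle is the bookkeeping in the third step, namely propagating the Lipschitz constants through the ratio while keeping track of $Z$ in terms of $\eps,m,D$ and the kernel constants $C,L,M_m$; the gradient bound in the second step is the only genuinely delicate computation, and the first and fourth steps are essentially immediate given the preceding material.
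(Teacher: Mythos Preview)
Your proposal is correct and follows the same overall architecture as the paper: establish a uniform Frobenius bound $\sup_x\norm{\Sigma_{\alpha,K}^{\mathsmaller{(\eps)}}(x)-\Sigma_{\beta,K}^{\mathsmaller{(\eps)}}(x)}_F\leq Z\,d_{\mathrm{W},1}(\alpha,\beta)$, then feed this through Lemmas~\ref{lm:tr} and~\ref{lm:sqrn} and the triangle inequality for $\dw$ exactly as in the last display of the proof of Theorem~\ref{thm:GT-stable}. Your decomposition of the covariance as $\frac{\int(y-x)^{\otimes 2}K\,d\alpha}{\int K\,d\alpha}-(\mu_{\alpha,K}^{\mathsmaller{(\eps)}}(x)-x)^{\otimes 2}$ and your lower bound on the normaliser are likewise identical to what the paper does in Lemma~\ref{lm:key-smooth-stb}.

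The one genuine difference is in how you obtain the two intermediate estimates. The paper outsources them: the bound on the un-normalised covariance $\tilde{\Sigma}_{\alpha,K}^{\mathsmaller{(\eps)}}$ is quoted as Theorem~1 of \cite{martinez2020shape} (Lemma~\ref{thm:tilde-sigma}), and the bound on the mean is deduced from a black-box $d_{\mathrm{W},1}$-stability result for kernel-localised measures taken from \cite{pmlr-v97-memoli19a} (Lemma~\ref{lm:smooth-wt}, used via Lemma~\ref{lm:ms-smooth}). You instead derive both bounds directly, by showing that $y\mapsto K(x,y,\eps)$, $y\mapsto yK(x,y,\eps)$, and $y\mapsto (y\otimes y)K(x,y,\eps)$ are Lipschitz on $X$ and invoking Kantorovich--Rubinstein duality together with a quotient estimate. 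This is a slightly more elementary and self-contained route; it avoids the detour through $d_{\mathrm{W},1}$-stability of the localised measures and makes the dependence of $Z$ on the kernel constants more transparent, at the cost of a little extra calculus. Your gradient computation $|\nabla_y K|=\tfrac{2|f'(r)|\sqrt{r}}{C_m(\eps)\,\eps}$ and the case split at $r=1$ using conditions (2)--(3) are fine (the paper, working only on the compact $X$, instead uses $|f'|\leq L$ and the crude bound $\tfrac{2LD}{\eps^2 C_m(\eps)}$).
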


The proof of the theorem is based on the following series of lemmas.

\begin{lemma}[Remark 4.4 in \cite{pmlr-v97-memoli19a}]\label{lm:smooth-wt}
Fix any compact metric space $Y$ (not necessarily Euclidean) and $C>0$. If $g:[0,\infty)\rightarrow(0,\infty)$ is $C$-Lipschitz, then there exist positive constants $U$ and $V$ depending only on $g$ and $Y$ such that 
\[d_{\mathrm{W},1}(m_\alpha^g(y),m_\beta^g(y))\leq\frac{2\,C\,\diam(Y)+V}{U}d_{\mathrm{W},1}(\alpha,\beta), \]
where $y\in Y$ and the probability measures at the left hand side are defined by \[m_\alpha^g(y)(A)\coloneqq\frac{\int_Ag(d(y,z))\alpha(dz)}{\int_Y g(d(y,z))\alpha(dz)},\] 
for any measurable set $A\subset Y$.
\end{lemma}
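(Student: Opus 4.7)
The plan is to prove the lemma via Kantorovich--Rubinstein duality: since $d_{\mathrm{W},1}$ between two probability measures equals the supremum of $\int \phi\,d\mu - \int\phi\,d\nu$ over $1$-Lipschitz functions $\phi$, it suffices to bound this quantity when $\mu=m_\alpha^g(y)$ and $\nu=m_\beta^g(y)$ for a fixed $y\in Y$, uniformly over $1$-Lipschitz $\phi$. Since adding a constant to $\phi$ does not affect the difference, I will normalize so that $\phi$ vanishes at some fixed reference point, which together with the $1$-Lipschitz assumption gives $\|\phi\|_\infty\le \diam(Y)$.

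Next, I would introduce the abbreviations $A_\alpha\coloneqq\int_Y \phi(z)g(d(y,z))\,\alpha(dz)$, $B_\alpha\coloneqq\int_Y g(d(y,z))\,\alpha(dz)$, and analogously $A_\beta,B_\beta$. The core algebraic step is the identity
\[
\frac{A_\alpha}{B_\alpha}-\frac{A_\beta}{B_\beta}=\frac{A_\alpha-A_\beta}{B_\beta}+\frac{A_\alpha}{B_\alpha}\cdot\frac{B_\beta-B_\alpha}{B_\beta},
\]
which splits the target quantity into a numerator difference and a denominator difference. Now I would observe that $z\mapsto g(d(y,z))$ is $C$-Lipschitz (composition of the $C$-Lipschitz function $g$ with the $1$-Lipschitz function $d(y,\cdot)$), so by Kantorovich--Rubinstein duality applied to the test function $g(d(y,\cdot))$ we get $|B_\alpha-B_\beta|\le C\,d_{\mathrm{W},1}(\alpha,\beta)$. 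Similarly, $\phi\cdot g(d(y,\cdot))$ is Lipschitz; using the standard product rule for Lipschitz functions together with the bounds $\|\phi\|_\infty\le\diam(Y)$ and $\|g(d(y,\cdot))\|_\infty\le G\coloneqq \sup_{r\in[0,\diam(Y)]}g(r)$, its Lipschitz constant is at most $C\diam(Y)+G$, yielding $|A_\alpha-A_\beta|\le (C\diam(Y)+G)\,d_{\mathrm{W},1}(\alpha,\beta)$.

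To control the denominators I would use compactness: since $Y$ is compact and $g$ is continuous and strictly positive on $[0,\diam(Y)]$, we have $g_{\min}\coloneqq \inf_{r\in[0,\diam(Y)]}g(r)>0$, hence $B_\alpha,B_\beta\ge g_{\min}$. Combined with $|A_\alpha/B_\alpha|\le\|\phi\|_\infty\le \diam(Y)$, the algebraic identity gives
\[
\left|\int\phi\,dm_\alpha^g(y)-\int\phi\,dm_\beta^g(y)\right|\le \frac{C\diam(Y)+G}{g_{\min}}\,d_{\mathrm{W},1}(\alpha,\beta)+\frac{\diam(Y)\cdot C}{g_{\min}}\,d_{\mathrm{W},1}(\alpha,\beta).
\]
Taking the supremum over $1$-Lipschitz $\phi$ and identifying $U\coloneqq g_{\min}$ and $V\coloneqq G$ (both depending only on $g$ and $Y$) yields the claim.

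The main obstacle is the bookkeeping of constants and the verification that $U=g_{\min}$ is indeed strictly positive uniformly in $y$, which follows from compactness of $Y$ plus continuity and positivity of $g$; a secondary subtle point is the normalization of $\phi$, which must be done carefully so that the product $\phi\cdot g(d(y,\cdot))$ has the claimed Lipschitz constant without losing the factor $2$ visible in the $2C\diam(Y)$ term of the statement (that factor arises precisely from summing the numerator and denominator contributions above). Everything else reduces to routine estimates once the duality reformulation is in place.
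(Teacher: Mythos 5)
Your argument is correct and complete: the Kantorovich--Rubinstein reformulation, the splitting $\frac{A_\alpha}{B_\alpha}-\frac{A_\beta}{B_\beta}=\frac{A_\alpha-A_\beta}{B_\beta}+\frac{A_\alpha}{B_\alpha}\cdot\frac{B_\beta-B_\alpha}{B_\beta}$, the Lipschitz bounds $|B_\alpha-B_\beta|\leq C\,d_{\mathrm{W},1}(\alpha,\beta)$ and $|A_\alpha-A_\beta|\leq (C\,\diam(Y)+G)\,d_{\mathrm{W},1}(\alpha,\beta)$, and the uniform lower bound $B_\alpha,B_\beta\geq g_{\min}>0$ from compactness all check out, and they assemble into exactly the stated constant with $U=g_{\min}$ and $V=G$. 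Note, however, that the paper itself offers no proof of this lemma --- it is imported verbatim as Remark 4.4 of the cited Wasserstein transform paper --- so there is nothing internal to compare against; the shape of the constant $\frac{2C\,\diam(Y)+V}{U}$ indicates that your decomposition is the same one underlying the original source, and your write-up can stand as a self-contained justification.
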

\begin{remark}\label{rmk:rst-domain}
Notice that in the definition of $m_\alpha^g(\cdot)$, we only used the restriction of $g$ on $[0,\diam(Y)]$. So the result still holds true assuming $g$ is a Lipschitz function from $[0,\diam(Y)]$ to $(0,\infty)$.
\end{remark}

\begin{lemma}\label{lm:ms-smooth}
Let $C=C(\eps,L,D)\coloneqq\frac{2L\,D}{\eps^2}$. There exist positive constants $U$ and $V$ depending only on $f$ and $X$ such that 
\[\norm{\mu^{\mathsmaller{(\eps)}}_{\alpha,K}(x)-\mu^{\mathsmaller{(\eps)}}_{\beta,K}(x)}\leq\frac{2\,CD+V}{U}d_{\mathrm{W},1}(\alpha,\beta). \]
\end{lemma}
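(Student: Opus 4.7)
The plan is to reduce the claim to Lemma~\ref{lm:smooth-wt} applied to the localized measures $m_\alpha^g(x), m_\beta^g(x)$ built from an appropriate radial kernel $g$, and then invoke the elementary fact that the mean functional is $1$-Lipschitz with respect to $d_{\mathrm{W},1}$. The main obstacle is essentially bookkeeping: choosing $g$ so that (i) the common normalizing factor $C_m(\eps)^{-1}$ appearing in $K$ cancels and (ii) the Lipschitz constant of $g$ on $[0,D]$ matches the prescribed $C=2LD/\eps^2$.

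First I would set $g:[0,D]\to(0,\infty)$, $g(r)\coloneqq f(r^2/\eps^2)$, so that $K(x,y,\eps) = C_m(\eps)^{-1}\,g(\|y-x\|)$. The factor $C_m(\eps)^{-1}$ cancels between numerator and denominator of Equation~(\ref{eq:smooth-mean}); consequently $\mu^{\mathsmaller{(\eps)}}_{\alpha,K}(x) = \mean\bigl(m_\alpha^g(x)\bigr)$, where
\[ m_\alpha^g(x)(A) \coloneqq \frac{\int_A g(\|y-x\|)\,\alpha(dy)}{\int_X g(\|y-x\|)\,\alpha(dy)}, \]
and similarly for $\beta$ (using Remark~\ref{rmk:x-replace} to restrict the domain of integration to $X$). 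Direct differentiation gives $g'(r) = (2r/\eps^2)\,f'(r^2/\eps^2)$, so the assumption $|f'|\leq L$ together with $r\leq D$ yields $|g'(r)|\leq 2LD/\eps^2 = C$; thus $g$ is $C$-Lipschitz on $[0,D]$. Applying Lemma~\ref{lm:smooth-wt} (with $Y=X$ and Remark~\ref{rmk:rst-domain}) then produces positive constants $U,V$ depending only on $g$ (equivalently, on $f$ and $\eps$) and on $X$ such that
\[ d_{\mathrm{W},1}\bigl(m_\alpha^g(x), m_\beta^g(x)\bigr) \leq \frac{2CD + V}{U}\, d_{\mathrm{W},1}(\alpha,\beta). \]

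To conclude I would invoke the $1$-Lipschitz property of the mean with respect to $d_{\mathrm{W},1}$: for any probability measures $\mu,\nu$ supported on $X$ and any coupling $\pi\in\Pi(\mu,\nu)$,
\[ \bigl\|\mean(\mu) - \mean(\nu)\bigr\| = \Bigl\|\int (y-y')\,d\pi(y,y')\Bigr\| \leq \int \|y-y'\|\,d\pi(y,y'), \]
and taking an infimum over $\pi$ gives $\|\mean(\mu) - \mean(\nu)\|\leq d_{\mathrm{W},1}(\mu,\nu)$. Chaining this inequality with the preceding display and using $\mu^{\mathsmaller{(\eps)}}_{\alpha,K}(x) = \mean(m_\alpha^g(x))$ (and likewise for $\beta$) yields the desired bound on $\|\mu^{\mathsmaller{(\eps)}}_{\alpha,K}(x) - \mu^{\mathsmaller{(\eps)}}_{\beta,K}(x)\|$.
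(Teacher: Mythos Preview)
Your proposal is correct and follows essentially the same approach as the paper: set $g(t)=f(t^2/\eps^2)$, verify it is $C$-Lipschitz on $[0,D]$, apply Lemma~\ref{lm:smooth-wt} via Remark~\ref{rmk:rst-domain}, and then bound the difference of means by $d_{\mathrm{W},1}$ between the localized measures. The only cosmetic difference is that you supply a short coupling argument for the $1$-Lipschitzness of the mean functional, whereas the paper simply cites this as a standard fact.
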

\begin{proof}
Let $g(t)\coloneqq f\lc\frac{t^2}{\eps^2}\rc$ and $Y=X$. Then, we have that $\mu^{\mathsmaller{(\eps)}}_{\alpha,K}(x)=\mean\lc m_\alpha^g(x)\rc$. Since $g'(t)=\frac{2t}{\eps^2}f'\lc\frac{t^2}{\eps^2}\rc$, we have that $|g'(t)|\leq \frac{2\,L\,D}{\eps^2}$ for all $t\in[0,D]$, which implies that $g$ is $\frac{2\,L\,D}{\eps^2}$-Lipschitz on $[0,D]$. Then, by Lemma \ref{lm:smooth-wt} and Remark \ref{rmk:rst-domain}, we have that 
\[d_{\mathrm{W},1}\left(m_\alpha^g(x),m_\beta^g(x)\right)\leq\frac{2\,CD+V}{U}d_{\mathrm{W},1}(\alpha,\beta), \]
where $U,V$ are positive constants depending on $f$ and $X$.
Then, by a standard result in Euclidean space \cite{rubner1998metric}, we have that
\begin{align*}
    \norm{\mu^{\mathsmaller{(\eps)}}_{\alpha,K}(x)-\mu^{\mathsmaller{(\eps)}}_{\beta,K}(x)}&\leq d_{\mathrm{W},1}\left(m_\alpha^g(x),m_\beta^g(x)\right)\\
    &\leq \frac{2\,CD+V}{U}\,d_{\mathrm{W},1}(\alpha,\beta).
\end{align*}
\end{proof}

Denote by $\Tilde{\Sigma}_{\alpha,K}^{\mathsmaller{(\eps)}}(x)$ the following matrix:
\begin{equation}\label{eq:til-cov-smooth}
    \int_{\R^m}\lc{y}-x\rc\otimes \lc {y}- x\rc K(x,y,\eps)\,\alpha(d {y}). 
\end{equation}

\begin{lemma}[Theorem 1 in \cite{martinez2020shape}]\label{thm:tilde-sigma}
There exists a constant $A_f>0$ only depending on $f$ such that for any $\alpha,\beta\in\mathcal{P}_f(X)$, we have
\[\norm{\Tilde{\Sigma}_{\alpha,K}^{\mathsmaller{(\eps)}}(x)-\Tilde{\Sigma}_{\beta,K}^{\mathsmaller{(\eps)}}(x)}_F\leq A_f\,d_{\mathrm{W},1}(\alpha,\beta).\]
\end{lemma}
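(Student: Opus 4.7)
The plan is to regard $\widetilde{\Sigma}^{\mathsmaller{(\eps)}}_{\alpha,K}(x)$ as an $\R^{m\times m}$--valued first moment of $\alpha$ and apply the Kantorovich--Rubinstein duality entrywise. More precisely, for fixed $x\in\R^m$ and indices $i,j\in\{1,\ldots,m\}$, define the scalar function
\[
G^{(i,j)}_x(y)\;\coloneqq\;(y_i-x_i)(y_j-x_j)\,K(x,y,\eps)\;=\;\frac{(y_i-x_i)(y_j-x_j)}{C_m(\eps)}\,f\!\lc\frac{\norm{y-x}^2}{\eps^2}\rc,
\]
so that the $(i,j)$--entry of $\widetilde{\Sigma}^{\mathsmaller{(\eps)}}_{\alpha,K}(x)$ equals $\int_{\R^m}G^{(i,j)}_x(y)\,\alpha(dy)$. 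By Kantorovich--Rubinstein duality, if we can show that $G^{(i,j)}_x$ is globally Lipschitz with some constant $L_f$ depending only on $f$ (and on $\eps$, $m$), then
\[
\left|\bigl(\widetilde{\Sigma}^{\mathsmaller{(\eps)}}_{\alpha,K}(x)\bigr)_{ij}-\bigl(\widetilde{\Sigma}^{\mathsmaller{(\eps)}}_{\beta,K}(x)\bigr)_{ij}\right|\;\leq\; L_f\,d_{\mathrm{W},1}(\alpha,\beta),
\]
and summing the squares over $(i,j)$ yields the Frobenius bound with $A_f = m\,L_f$.

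\paragraph{Lipschitz bound on $G^{(i,j)}_x$.} The main technical step is bounding $\norm{\nabla_y G^{(i,j)}_x}$ uniformly in $y$. Writing $r=\norm{y-x}^2/\eps^2$, differentiation gives
\[
\partial_{y_k} G^{(i,j)}_x(y)\;=\;\frac{1}{C_m(\eps)}\Bigl[\bigl(\delta_{ik}(y_j-x_j)+\delta_{jk}(y_i-x_i)\bigr)f(r)\;+\;(y_i-x_i)(y_j-x_j)\,f'(r)\,\tfrac{2(y_k-x_k)}{\eps^2}\Bigr].
\]
For the first two terms I will use the combination of boundedness of $f$ (which is assumed) and the condition $rf(r)\leq C$: these together imply $\norm{y-x}\,f(r)=\eps\sqrt{r}\,f(r)$ is globally bounded (small $r$ controlled by $f$ bounded, large $r$ by $rf(r)\leq C$). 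For the third term, the key identity is
\[
\norm{y-x}^2\,|y_k-x_k|\,|f'(r)|\;\leq\;\norm{y-x}^3\,|f'(r)|\;=\;\eps^{3}\,r^{3/2}\,|f'(r)|\;\leq\;\eps^{3}L,
\]
by the hypothesis $r^{3/2}|f'(r)|\leq L$. Thus each summand in $\partial_{y_k}G^{(i,j)}_x$ is controlled by a constant depending only on $f,\eps,m$, proving the Lipschitz estimate with some $L_f=L_f(f,\eps,m)$.

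\paragraph{Combining the ingredients and main obstacle.} Once the Lipschitz bound is established, the proof is finished in one line by applying Kantorovich--Rubinstein duality to each of the $m^2$ scalar functions $G^{(i,j)}_x$ and then bounding the Frobenius norm by $m$ times the maximum entrywise error. The main obstacle is the second condition I just invoked: the integrand $(y_i-x_i)(y_j-x_j)K(x,y,\eps)$ is a product of an unbounded quadratic and a kernel whose derivative $f'$ is only singularity-controlled through the slightly unusual two-sided bound $|f'(r)|\leq L$ and $r^{3/2}|f'(r)|\leq L$. One must interleave these two bounds carefully with $rf(r)\leq C$ so that the factors of $\norm{y-x}$ coming from the quadratic and from $\partial_{y_k}K$ are absorbed by the matching powers of $r$ in the kernel bounds; this is precisely the role played by the $r^{3/2}$ weight in condition 3 of the definition of a smooth kernel.
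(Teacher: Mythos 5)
Your proof is correct, but note that the paper does not actually prove this lemma: it is imported verbatim as Theorem~1 of \cite{martinez2020shape}, so there is no internal proof to compare against. What you have supplied is a self-contained derivation, and it is the natural one: each entry of $\Tilde{\Sigma}_{\alpha,K}^{\mathsmaller{(\eps)}}(x)$ is the integral against $\alpha$ of the scalar function $G^{(i,j)}_x$, your gradient computation is right, and the interleaving of the three kernel hypotheses is exactly what is needed --- boundedness of $f$ controls $\sqrt{r}f(r)$ near $0$, the bound $rf(r)\leq C$ controls it at infinity, and $r^{3/2}|f'(r)|\leq L$ absorbs the cubic factor $\norm{y-x}^3$ produced by the quadratic integrand times $\partial_{y_k}K$. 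Kantorovich--Rubinstein duality then gives the entrywise bound, and $\norm{M}_F\leq m\max_{i,j}|M_{ij}|$ finishes the argument; this is also consonant with how the paper treats the analogous scalar quantity $M_\alpha=\int_X K(x,y,\eps)\,\alpha(dy)$ inside the proof of Lemma~\ref{lm:key-smooth-stb}, where a Lipschitz estimate on $K(x,\cdot,\eps)$ is combined with the same duality. One small discrepancy worth flagging: your Lipschitz constant is of order $\eps/C_m(\eps)\sim\eps^{1-m}$, so the resulting $A_f$ depends on $\eps$ and $m$ as well as on $f$ when $m\geq 2$, whereas the lemma as quoted asserts dependence on $f$ only. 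This is harmless downstream, since the constant $Z$ in Lemma~\ref{lm:key-smooth-stb} is explicitly allowed to depend on $f$, $\eps$, $D$ and $X$, but the stated dependence of the constant would need to be adjusted if your argument were to replace the citation.
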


\begin{lemma}\label{lm:key-smooth-stb}
There exists a positive constant $Z$ depending on $f,\eps,D$ and $X$ such that for any $\alpha,\beta\in\pf(X)$
\[\norm{{\Sigma}_{\alpha,K}^{\mathsmaller{(\eps)}}(x)-{\Sigma}_{\beta,K}^{\mathsmaller{(\eps)}}(x)}_F\leq Z\,d_{\mathrm{W},1}(\alpha,\beta).\]
\end{lemma}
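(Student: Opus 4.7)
The plan is to mirror the strategy used in Lemma \ref{lm:key-cov-est}, but with the smooth-kernel analogues: Lemma \ref{lm:ms-smooth} replacing Lemma \ref{lm:ms-stb} for mean stability, and Theorem \ref{thm:tilde-sigma} replacing Lemma \ref{lm:cmbound} for stability of the unnormalized second moment. The key intermediate object will be the normalized hat-matrix
\[
\hat{\Sigma}_{\alpha,K}^{\mathsmaller{(\eps)}}(x) \coloneqq \frac{\tilde{\Sigma}_{\alpha,K}^{\mathsmaller{(\eps)}}(x)}{I_\alpha(x)}, \qquad I_\alpha(x) \coloneqq \int_X K(x,y,\eps)\,\alpha(dy),
\]
together with the identity $\hat{\Sigma}_{\alpha,K}^{\mathsmaller{(\eps)}}(x) = \Sigma_{\alpha,K}^{\mathsmaller{(\eps)}}(x) + (\mu_{\alpha,K}^{\mathsmaller{(\eps)}}(x)-x)^{\otimes 2}$, which follows by expanding the square in the definition of the local covariance.

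First, by the triangle inequality I would split
\[
\bigl\lVert\Sigma_{\alpha,K}^{\mathsmaller{(\eps)}}(x)-\Sigma_{\beta,K}^{\mathsmaller{(\eps)}}(x)\bigr\rVert_F \leq \bigl\lVert\hat{\Sigma}_{\alpha,K}^{\mathsmaller{(\eps)}}(x)-\hat{\Sigma}_{\beta,K}^{\mathsmaller{(\eps)}}(x)\bigr\rVert_F + \bigl\lVert(\mu_\alpha-x)^{\otimes 2}-(\mu_\beta-x)^{\otimes 2}\bigr\rVert_F,
\]
where $\mu_\alpha,\mu_\beta$ are the corresponding smooth-kernel means. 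The second term is routine: using $u^{\otimes 2}-v^{\otimes 2}= u\otimes(u-v)+(u-v)\otimes v$ together with $\|\mu_\alpha-x\|,\|\mu_\beta-x\|\leq D$ and Lemma \ref{lm:ms-smooth} gives a bound of the form $2D\cdot \frac{2CD+V}{U}\,d_{\mathrm{W},1}(\alpha,\beta)$.

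For the first term I would adapt the argument in Lemma \ref{lm:stb-hat-sigma}: write
\[
\hat{\Sigma}_{\alpha,K}^{\mathsmaller{(\eps)}}(x)-\hat{\Sigma}_{\beta,K}^{\mathsmaller{(\eps)}}(x) = \frac{1}{I_\alpha(x)}\Bigl(\tilde{\Sigma}_{\alpha,K}^{\mathsmaller{(\eps)}}(x)-\tilde{\Sigma}_{\beta,K}^{\mathsmaller{(\eps)}}(x)\Bigr) + \frac{I_\beta(x)-I_\alpha(x)}{I_\alpha(x)\,I_\beta(x)}\,\tilde{\Sigma}_{\beta,K}^{\mathsmaller{(\eps)}}(x).
\]
The numerator of the first piece is controlled by $A_f\,d_{\mathrm{W},1}(\alpha,\beta)$ via Theorem \ref{thm:tilde-sigma}. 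For the second piece, $\|\tilde{\Sigma}_{\beta,K}^{\mathsmaller{(\eps)}}(x)\|_F\leq D^2\,I_\beta(x)$ since $\|y-x\|\leq D$ on $X$, so the key is controlling $|I_\alpha(x)-I_\beta(x)|$ and providing a uniform positive lower bound for $I_\alpha,I_\beta$. The lower bound $I_\alpha(x)\geq \tfrac{1}{C_m(\eps)}\inf_{r\in[0,D^2/\eps^2]}f(r) =: I_{\min}>0$ follows because $f$ is continuous and strictly positive on a compact interval. For the difference $|I_\alpha(x)-I_\beta(x)|$, the function $y\mapsto K(x,y,\eps)$ has gradient $\frac{2(y-x)}{C_m(\eps)\eps^2}f'(\|y-x\|^2/\eps^2)$, whose norm is at most $\frac{2DL}{C_m(\eps)\eps^2}$ by the assumption $|f'|\leq L$, so Kantorovich–Rubinstein duality bounds $|I_\alpha(x)-I_\beta(x)|$ by this Lipschitz constant times $d_{\mathrm{W},1}(\alpha,\beta)$.

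Combining the three ingredients and absorbing all prefactors $(D,U,V,C,A_f,I_{\min},C_m(\eps),L)$ into a single constant $Z=Z(f,\eps,D,X)$ yields the claim. The main obstacle is the bookkeeping around the normalizer $I_\alpha(x)$: one must verify that both its lower bound and its Lipschitz modulus are uniform in $x\in X$ and depend only on $f,\eps,D,X$, which is why the hypotheses on $f$ (positivity, boundedness of $f'$) are used simultaneously. Once this is handled, everything else is a triangle-inequality combination of previously proved estimates.
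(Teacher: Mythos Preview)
Your proposal is correct and follows essentially the same approach as the paper: the same identity $\Sigma_{\alpha,K}^{\mathsmaller{(\eps)}}(x)=\hat{\Sigma}_{\alpha,K}^{\mathsmaller{(\eps)}}(x)-(\mu_\alpha-x)^{\otimes 2}$, the same three-piece split (your $I_\alpha$ is the paper's $M_\alpha$), the same positivity lower bound on the normalizer, the same Lipschitz estimate for $K(x,\cdot,\eps)$ via Kantorovich duality, and the same invocations of Theorem~\ref{thm:tilde-sigma} and Lemma~\ref{lm:ms-smooth}. The only differences are notational and in the order of presentation.
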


\begin{proof}
To simplify our notations, denote $M_\alpha\coloneqq\int_XK(x,y,\eps)\alpha(dy)$, $\mu_\alpha\coloneqq\mu^{\mathsmaller{\eps}}_{\alpha,K}(x)$, $\Sigma_\alpha\coloneqq{\Sigma}_{\alpha,K}^{\mathsmaller{(\eps)}}(x)$ and $\tilde{\Sigma}_\alpha\coloneqq\tilde{\Sigma}_{\alpha,K}^{\mathsmaller{(\eps)}}(x)$.
Note that $\Sigma_\alpha=\frac{\tilde{\Sigma}_\alpha}{M_\alpha}-(\mu_\alpha-x)\otimes(\mu_\alpha-x)$. Then,
\begin{align*}
    \norm{\Sigma_\alpha-\Sigma_\beta}_F&\leq\norm{\frac{1}{M_\alpha}\tilde{\Sigma}_\alpha-\frac{1}{M_\beta}\tilde{\Sigma}_\beta}_F\\
    &+\norm{(\mu_\alpha-x)^{\otimes^2}-(\mu_\beta-x)^{\otimes^2}}\\
    &\leq \underbrace{\frac{1}{M_\alpha}\norm{\tilde{\Sigma}_\alpha-\tilde{\Sigma}_\beta}_F}_{T_1}+\underbrace{\frac{|M_\alpha-M_\beta|}{M_\alpha\,M_\beta}\norm{\tilde{\Sigma}_\beta}_F}_{T_2}\\
    &+\underbrace{\norm{\mu_\alpha-\mu_\beta}\lc\norm{\mu_\alpha-x}+\norm{\mu_\beta-x}\rc}_{T_3}
\end{align*}

Since $f$ is continuous and positive by assumption, there exists $W=W(\eps,D)>0$ such that for any $t\in\left[0,\frac{D^2}{\eps^2}\right]$, $f(t)\geq W$. Hence, $M_\alpha\geq W$. Then, by Lemma \ref{thm:tilde-sigma}, we have that $T_1\leq \frac{A_f}{W}d_{\mathrm{W},1}(\alpha,\beta).$

Since $f$ is $L$-Lipschitz, we have that for a $y_1,y_2\in X$,
\begin{align*}
    &|K(x,y_1,\eps)-K(x,y_2,\eps)|\\
    =&\frac{1}{C_m(\eps)}\left|f\lc\frac{\norm{y_1-x}^2}{\eps^2}\rc-f\lc\frac{\norm{y_2-x}^2}{\eps^2}\rc\right|\\
    \leq &\frac{L}{\eps^2\,C_m(\eps)}\left|\norm{y_1-x}^2-\norm{y_2-x}^2\right|\\
    =&\frac{L}{\eps^2\,C_m(\eps)}\norm{y_1-y_2}\norm{2x-y_1-y_2}\\
    \leq &\frac{2\,L\,D}{\eps^2\,C_m(\eps)}\norm{y_1-y_2}.
\end{align*}
So $K(x,\cdot,\eps)$ is a $\frac{2\,LD}{\eps^2\,C_m(\eps)}$-Lipschitz function on $X$. Then, by the Kantorovich duality (see for example  Remark 6.5 in \cite{villani2008optimal}), we have
\begin{align*}
    |M_\alpha-M_\beta|&=\left|\int_XK(x,y,\eps)\alpha(dy)-\int_XK(x,y,\eps)\beta(dy)\right|\\
    &\leq \frac{2\,LD}{\eps^2\,C_m(\eps)}\,d_{\mathrm{W},1}(\alpha,\beta).
\end{align*}
As for $\frac{1}{M_\beta}\norm{\tilde{\Sigma}_\beta}_F$, we know from Equation (\ref{eq:til-cov-smooth}) and Remark \ref{rmk:x-replace} that
\begin{align*}
    \frac{\norm{\tilde{\Sigma}_\beta}_F}{M_\beta}&\leq \frac{\int_{X}\norm{\lc{y}-x\rc^{\otimes^2}}_F K(x,y,\eps)\,\beta(d {y})}{\int_XK(x,y,\eps)\,\beta(dy)}\\
    &\leq D^2.
\end{align*}
Thus, $T_2\leq \frac{2\,L\,D^3}{\eps^2\,C_m(\eps)\,W}\,d_{\mathrm{W},1}(\alpha,\beta).$

As for $T_3$, we first have by Lemma \ref{lm:ms-smooth} that
\[\norm{\mu_\alpha-\mu_\beta}\leq\frac{2\,CD+V}{U}d_{\mathrm{W},1}(\alpha,\beta). \]
Then, by Equation (\ref{eq:smooth-mean}) and Remark \ref{rmk:x-replace}, we have that
\begin{align*}
    \norm{\mu_\alpha-x}&=\norm{\frac{\int_{X}(y-x)\,K(x,y,\eps)\,\alpha(dy)}{\int_{X}K(x,y,\eps)\,\alpha(dy)}}\\
    &\leq \frac{\int_{X}\norm{y-x}\,K(x,y,\eps)\,\alpha(dy)}{\int_{X}K(x,y,\eps)\,\alpha(dy)}\leq D.
\end{align*}
Hence, $T_3\leq \frac{2D(2CD+V)}{U}\,d_{\mathrm{W},1}(\alpha,\beta).$

By adding up the three inequalities regarding upper bounds of $T_1,T_2$ and $T_3$, we conclude the proof.
\end{proof}

Then, based on Lemma \ref{lm:key-smooth-stb}, with a similar proof as the one for Theorem \ref{thm:GT-stable}, we obtain the stability theorem (Theorem \ref{thm:gt-stable-smooth})
\subsection{Comparing WT and GT on line segments (proof of Proposition \ref{prop:line})}\label{app:proof-prop-line}

\begin{figure}[htb]
    \centering
    \includegraphics[width=0.5\linewidth]{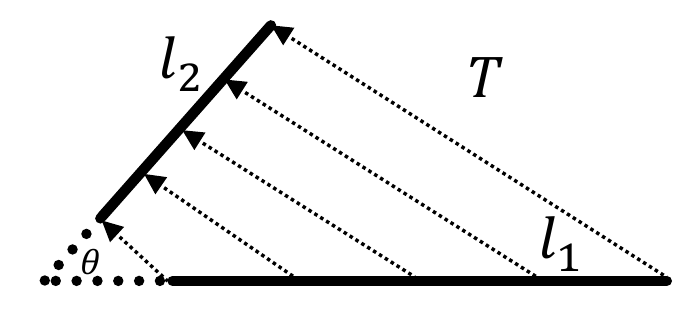} 
    \caption{Illustration of the linear map $T$ from $l_1$ to $l_2$. 
    }
    \label{fig:GT}
\end{figure}

\begin{proof}
Since $x_i$ is the mean of $\alpha_i$ for $i=1,2$, the leftmost inequality in the statement of the proposition follows directly from Remark \ref{rmk:bound}.

We now compute $\dw(\gamma_1,\gamma_2)$ explicitly. Without loss of generality, we assume $l_1$ is parametrized by $l_1(t)=(t\cdot s_1,0)$ and $l_2$ is parametrized by $l_2(t)=(a_0+t\cdot s_2\cos{\theta},b_0+t\cdot s_2\sin{\theta})$ for $t\in[0,1]$. Then, $x_1=\left(\frac{s_1}{2},0\right)$, $x_2=\left(a_0+\frac{s_2\cos{\theta}}{2},b_0+\frac{s_2\sin{\theta}}{2}\right)$, $\Sigma_1=\begin{pmatrix}\frac{s_1^2}{12}&0\\0&0\end{pmatrix}$ and $\Sigma_2=\begin{pmatrix}\frac{s_2^2}{12}(\cos{\theta})^2&\frac{s_2^2}{12}\sin{\theta}\cos{\theta}\\\frac{s_2^2}{12}\sin{\theta}\cos{\theta}&\frac{s_2^2}{12}(\sin{\theta})^2\end{pmatrix}$.
Then, by definition of $\dcov$ in Section \ref{sec:bg}, it is easy to check that 
\[\dcov^2(\Sigma_1,\Sigma_2)=\frac{s_1^2}{12}+\frac{s_2^2}{12}-\frac{s_1s_2}{6}\cos{\theta}. \]
Therefore, 
\begin{align*}
    \dw^2(\gamma_1,\gamma_2)=&\norm{x_1-x_2}^2+\dcov^2(\Sigma_1,\Sigma_2)\\
    =&\frac{s_1^2}{3}+\frac{s_2^2}{3}-\frac{2s_1s_2\cos{\theta}}{3}\\
    +&a_0^2+b_0^2+a_0s_2\cos{\theta}+b_0s_2\sin{\theta}-a_0s_1.
\end{align*}

Next we compute $\dw(\alpha_1,\alpha_2)$. Consider a linear map $T:l_1\rightarrow l_2$ defined by taking $ l_1(t)$ to $ l_2(t)$. See Figure \ref{fig:GT} for an illustration. Then, it is easy to check that $T_\#\alpha_1=\alpha_2$. This gives rise to a transport plan $\pi=(\mathrm{Id}\times T)_\#\alpha_1\in\Pi(\alpha_1,\alpha_2)$, where $\mathrm{Id}: l_1\rightarrow l_1$ is the identity map on $ l_1$. Then, 
\begin{align*}
   &\dw^2(\alpha_1,\alpha_2)\leq\int_{(x,x')\in l_1\times l_2}\norm{x-x'}^2d\pi(x\times x')\\
   &=\int_{x\in l_1}\norm{x-T(x)}^2d\alpha_1(x)\\
   &=\int_0^1 \left|a_0+ts_2\cos{\theta}-ts_1\right|^2+\left|b_0+ts_2\sin{\theta}\right|^2dt\\
   &=\frac{s_1^2}{3}+\frac{s_2^2}{3}-\frac{2s_1s_2\cos{\theta}}{3}\\
    &+a_0^2+b_0^2+a_0s_2\cos{\theta}+b_0s_2\sin{\theta}-a_0s_1\\
    &=\dw^2(\gamma_1,\gamma_2).
\end{align*}
By Remark \ref{rmk:bound}, we know $\dw(\alpha_1,\alpha_2)\geq\dw(\gamma_1,\gamma_2)$ and thus $\dw(\alpha_1,\alpha_2)=\dw(\gamma_1,\gamma_2)$
\end{proof}

\subsection{Anisotropic neighborhood (proof of Theorem \ref{thm:ellipsoid})}\label{app:proof-ellip}
\begin{proof}
When the dimension $m=1$, we have
\[\mu^{\mathsmaller{(\eps)}}_{{\alpha,d_X}}(x_0)=\frac{\int_{-\eps}^\eps(x_0+z)f(x_0+z)dz}{\int_{-\eps}^\eps f(x_0+z)dz}, \]
and 
\[\Sigma_\alpha^{\mathsmaller{(\eps)}}(x_0)=\frac{\int_{-\eps}^{\eps}\left(z+x_0-\mu^{\mathsmaller{(\eps)}}_{{\alpha,d_X}}(x_0)\right)^2f(z+x_0)dz}{\int_{-\eps}^\eps f(x_0+z)dz}. \]
By replacing $f(x_0+z)$ with its Taylor expansion around $x_0$, we obtain
\begin{align*}
  \Sigma_\alpha^{\mathsmaller{(\eps)}}(x_0)&=\frac{1}{3}\eps^2+\frac{-5(f'(x_0))^2+2f(x_0)f''(x_0)}{45f(x_0)^2}\eps^4+O(\eps^6)\\
  &=\frac{\eps^2}{3}(1+h(x_0)\eps^2+O(\eps^4)),
\end{align*}
where $h(x_0)=\frac{-5(f'(x_0))^2+2f(x_0)f''(x_0)}{9f(x_0)^2}$. Since $d=1$, for any $x_1\in B_\eps(x_0)$, we have
\begin{align}
    &\dcov^2(\Sigma_\alpha^{\mathsmaller{(\eps)}}(x_0),\Sigma_\alpha^{\mathsmaller{(\eps)}}(x_1))=\left|\sqrt{\Sigma_\alpha^{\mathsmaller{(\eps)}}(x_0)}-\sqrt{\Sigma_\alpha^{\mathsmaller{(\eps)}}(x_1)}\right|^2\label{eq:1d_trick}\\
    &=\lc\frac{h(x_0)-h(x_1)}{2\sqrt{3}}\eps^3+O(\eps^5)\rc^2\\
    &=\frac{(h'(x_0))^2}{12}\eps^6\norm{x_0-x_1}^2+O(\eps^9),
\end{align}
where the first equality holds since $m=1$ and the last equality follows from the Taylor expansion of $h$ at $x_0$ and $\norm{x_0-x_1}\leq\eps$.

So, if $x_1\in B_\eps^{\lambda,\alpha}(x_0)$, we have
\begin{align}
    \eps^2&\geq \norm{x_0-x_1}^2+\eps^{-6}\dcov^2(\Sigma_\alpha^{\mathsmaller{(\eps)}}(x_0),\Sigma_\alpha^{\mathsmaller{(\eps)}}(x_1))\\
    &=\lc1+\frac{(h'(x_0))^2}{12}\rc\norm{x_0-x_1}^2+O(\eps^3),\label{eq:1d-ell}
\end{align}
Let $x_1=x_0+a\eps$, then we have $|a|\leq\sqrt{\frac{12}{12+(h'(x_0))^2}}=:a_0$ by discarding the higher order term. This implies that $B_\eps^{\lambda,\alpha}(x_0)$ is approximately a Euclidean ball $B_{a_0\eps}(x_0)$. More precisely, consider any decreasing sequence $\{\eps_n\}_{n=1}^\infty$ approaching $0$ and $a\in\R$ such that $|a|< a_0$. Define $x_n\coloneqq x_0+a\eps_n$. Then, when $n$ is large enough, we have
\[|a|^2+O(\eps_n)\leq a_0^2. \]
This implies by inequality (\ref{eq:1d-ell}) that $x_n\in B_{\eps_n}^{\lambda,\alpha}(x_0)$ when $n$ is large enough, which shows $x_0+a\in\frac{1}{\eps_n}B_{\eps_n}^{\lambda,\alpha}(x_0)$. Hence, $(B_{a_0}(x_0))^\circ\subset\limsup_{n\rightarrow\infty}\frac{1}{\eps_n}B_{\eps_n}^{\lambda,\alpha}(x_0)$. Thus, $B_{a_0}(x_0)\subset\overline{\limsup_{n\rightarrow\infty}\frac{1}{\eps_n}B_{\eps_n}^{\lambda,\alpha}(x_0)}$. Conversely, suppose $x\in \limsup_{n\rightarrow\infty}\frac{1}{\eps_n}B_{\eps_n}^{\lambda,\alpha}(x_0)$, then for $n$ large enough, there exists $x_n\in B_{\eps_n}^{\lambda,\alpha}(x_0) $ such that $x = x_0 +\frac{1}{\eps_n}(x_n-x_0)$. By inequality (\ref{eq:1d-ell}), we have that $\norm{\frac{1}{\eps_n}(x_n-x_0)}^2+O(\eps_n)\leq a_0^2$. Therefore, \[\norm{x-x_0}=\norm{\frac{1}{\eps_n}(x_n-x_0)}=\lim_{n\rightarrow \infty}\norm{\frac{1}{\eps_n}(x_n-x_0)}\leq a_0.\]
Thus, $x\in B_{a_0}(x_0)$ and $\limsup_{n\rightarrow\infty}\frac{1}{\eps_n}B_{\eps_n}^{\lambda,\alpha}(x_0)\subset B_{a_0}(x_0)$. Thus $\overline{\limsup_{n\rightarrow\infty}\frac{1}{\eps_n}B_{\eps_n}^{\lambda,\alpha}(x_0)}= B_{a_0}(x_0)$. Since the sequence $\{\eps_n\}_{n=1}^\infty$ is arbitrary, we conclude that $\overline{\limsup_{\eps\rightarrow 0}\frac{1}{\eps}B_{\eps}^{\lambda,\alpha}(x_0)}= B_{a_0}(x_0)$.

When $m>1$, there is no formula analogous to Equation (\ref{eq:1d_trick}) that helps simplify the computation of the Taylor expansion of $\dcov$, yet through a direct and tedious calculation, we are able to compute the Taylor expansion of $\dcov$ around $\mathbf{x_0}$ and show that there exists an $m$-dimensional PSD matrix function $H(\mathbf{x_0})$ (which boils down to $\frac{(h'(x_0))^2}{12}$ when $m=1$) depending only on $f$ such that for $\mathbf{x_1}\in B_\eps^{\lambda,\alpha}(\mathbf{x_0})$ we have
\[(\mathbf{x_1}-\mathbf{x_0})^\mathrm{T}(I_m+H(\mathbf{x_0}))(\mathbf{x_1}-\mathbf{x_0})+O(\eps^3)\leq\eps^2, \]
where $I_m$ is the $m$-dimensional identity matrix. Write again $\mathbf{x_1}=\mathbf{x_0}+\mathbf{a}\eps$ for some vector $\mathbf{a}\in\R^m$. By discarding the higher order term, we have 
\[\mathbf{a}^\mathrm{T}(I_m+H(\mathbf{x_0}))\mathbf{a}\leq 1.\]
A similar argument as in the 1-dimensional case indicates that $\overline{\limsup_{\eps\rightarrow 0}\frac{1}{\eps}B_{\eps}^{\lambda,\alpha}(\mathbf{x}_0)}= \{\mathbf{x}_0+\mathbf{a}:\,\mathbf{a}^\mathrm{T}(I_m+H(\mathbf{x_0}))\,\mathbf{a}\leq 1\}$, which is an ellipsoid centered at $\mathbf{x}_0$.
\end{proof}

\subsection{A new trace formula (proof of Theorem \ref{thm:redux})}\label{app:proof-thm-redux}

\begin{proof}[Proof of Theorem \ref{thm:redux}]
The main idea is to prove that $AB$ and $A^\frac{1}{2}BA^\frac{1}{2}$ share the same spectrum. In the course of proving the theorem, we found a discussion website \cite{eigen} where user Ahmad Bazzi proved the fact for another purpose. In the following, we present our original proof which is different from the one by Ahmad Bazzi.

The case when one of the matrices is invertible is trivial and we found it mentioned in \cite{bhatia2019matrix}. Without loss of generality assume $A$ is invertible. Then, we have $AB= A^{\frac{1}{2}}\cdot A^\frac{1}{2}BA^\frac{1}{2}\cdot A^{-\frac{1}{2}},$ which implies that $AB$ and $A^\frac{1}{2}BA^\frac{1}{2}$ are similar to each other and thus they share the same spectrum. Thus, the sum of square root of eigenvalues of $AB$ counted with multiplicity is the same as the sum of square root of eigenvalues of $A^\frac{1}{2}BA^\frac{1}{2}$ counted with multiplicity, which is exactly $\mathrm{tr}\lc\left(A^\frac{1}{2}BA^\frac{1}{2}\right)^\frac{1}{2}\rc$.

Now suppose $A$ is singular. If $B$ is invertible, then similarly we have that $B^\frac{1}{2}AB^\frac{1}{2}$ and $AB$ are similar and everything else follows from the fact that $\mathrm{tr}\lc\left(A^\frac{1}{2}BA^\frac{1}{2}\right)^\frac{1}{2}\rc=\mathrm{tr}\lc\left(B^\frac{1}{2}AB^\frac{1}{2}\right)^\frac{1}{2}\rc$.

If $B$ is also singular, let $B_t=tI+B$ where $t\geq 0$ and $I$ is the $n$-dimensional identity matrix. $B_t$ is then positive definite and thus invertible when $t>0$. Then, by previous analysis, $A^\frac{1}{2}B_tA^\frac{1}{2}$ and $AB_t$ share the same spectrum for all $t>0$. Since $A^\frac{1}{2}B_tA^\frac{1}{2}=tA+A^\frac{1}{2}BA^\frac{1}{2}$ and $AB_t=tA+AB$, by continuity of eigenvalues, we conclude that $A^\frac{1}{2}BA^\frac{1}{2}$ and $AB$ share the same spectrum by letting $t$ going to 0.
Therefore, $\mathrm{tr}\lc\left(A^\frac{1}{2}BA^\frac{1}{2}\right)^\frac{1}{2}\rc=\mathrm{tr}\lc(AB)^\frac{1}{2}\rc. $ 
\end{proof}

\section{Details about implementations}
\label{app:experiments}

\subsection{T-junction clustering}
In this experiment, we compare the clustering results for the first 2 iterations based on GT with those of MS, WT2 and WT1 on the T-junction dataset. The results are shown in Figure~(\ref{fig:tlines-supp}). 
\begin{figure}[htb!]
    \centering
    \subfloat[2D and 3D MDS at $\tau=1$]{
    \label{fig:tline-mds-1-supp}
        \fbox{
            \begin{minipage}[t][2.35cm][t]{0.08\textwidth}
                \centering
                MS \\ 
                \includegraphics[width=0.85\textwidth]{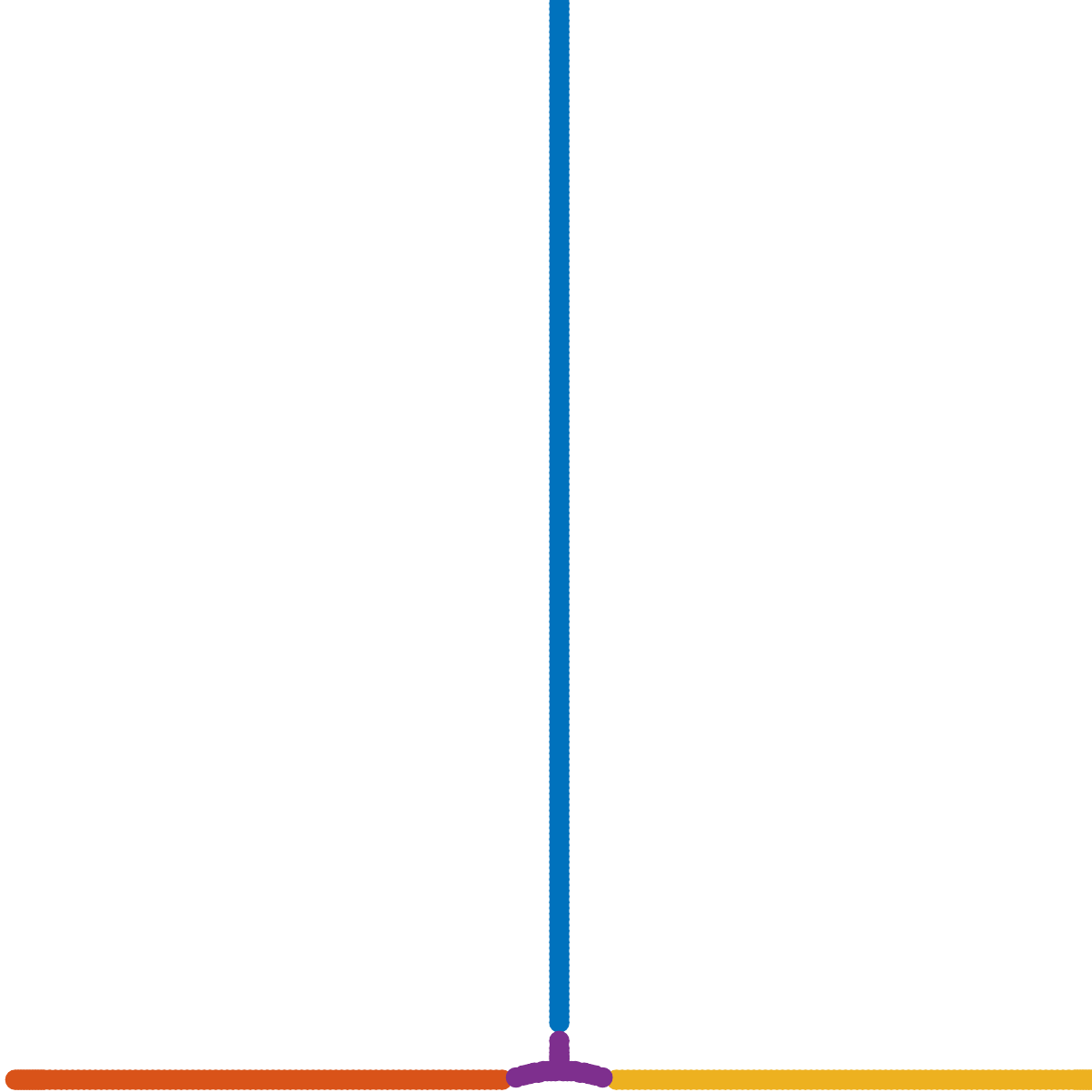} 
                \\ 
            \end{minipage}
            \begin{minipage}[t][2.35cm][t]{0.08\textwidth}
                \centering
                GT-$\lambda$-1 
                \\  \includegraphics[width=1\textwidth]{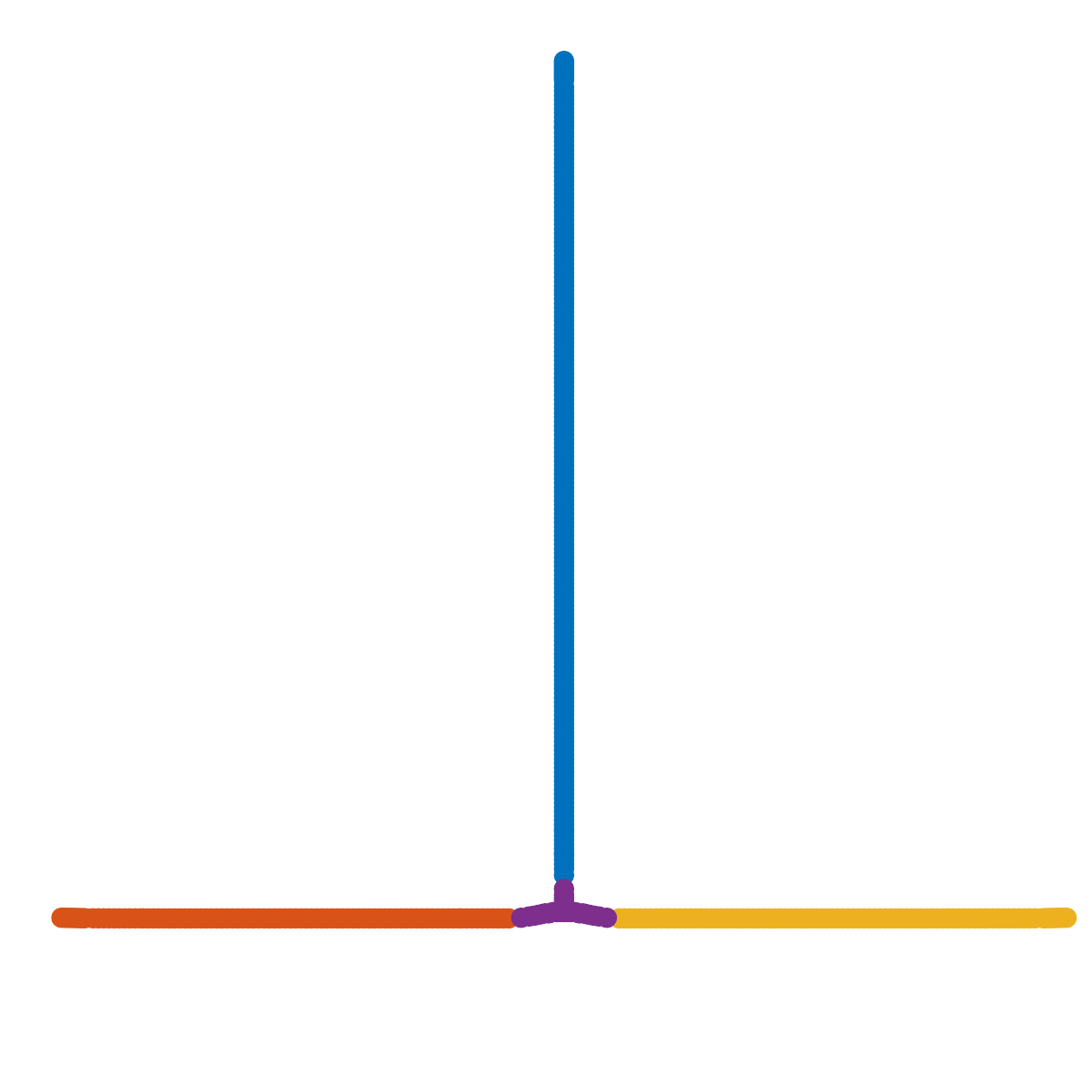}
                \\
               \includegraphics[width=1\textwidth]{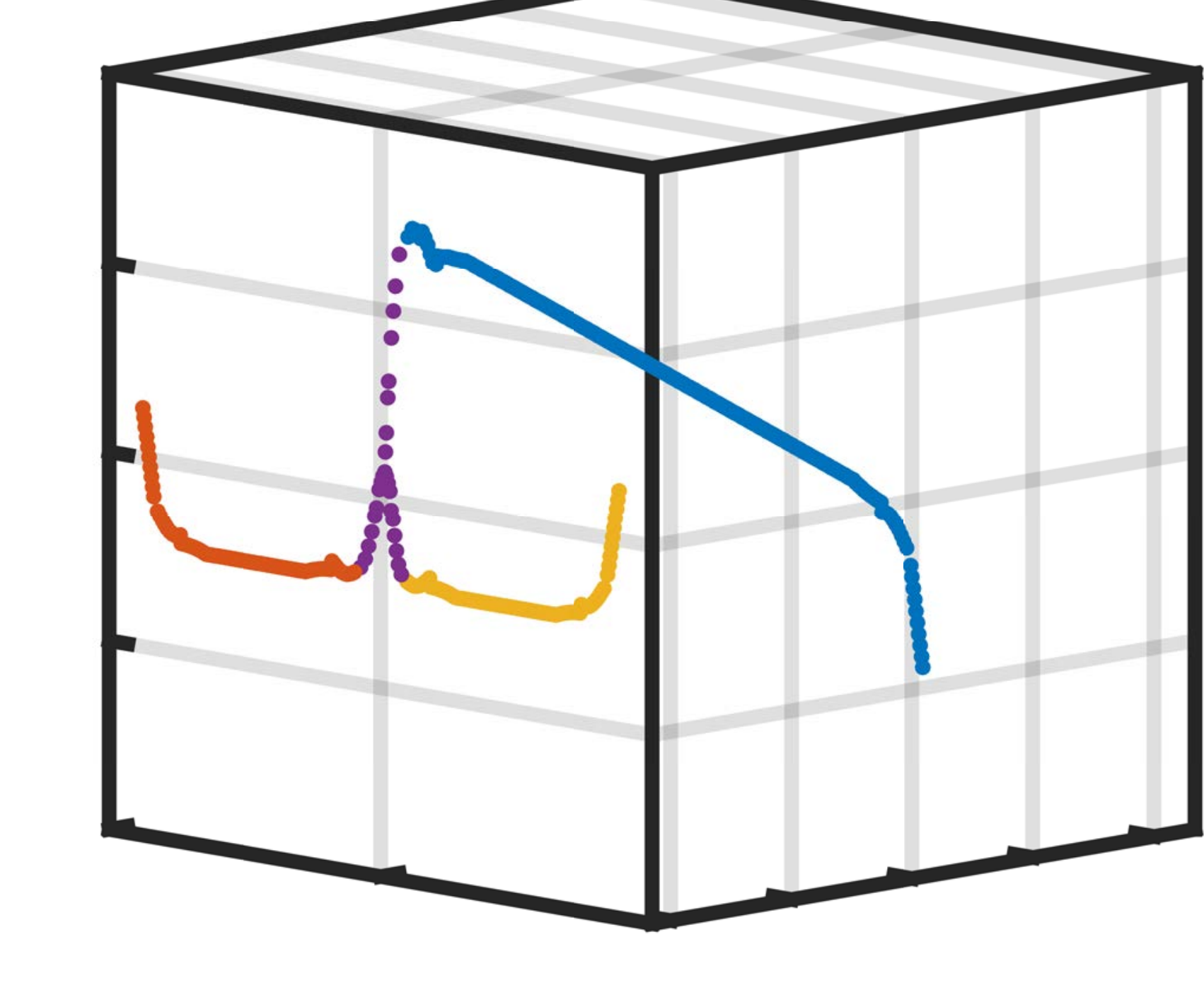} 
            \end{minipage}
            \begin{minipage}[t][2.35cm][t]{0.08\textwidth}
                \centering
                GT-$\lambda$-5 \\ 
                \includegraphics[width=1\textwidth]{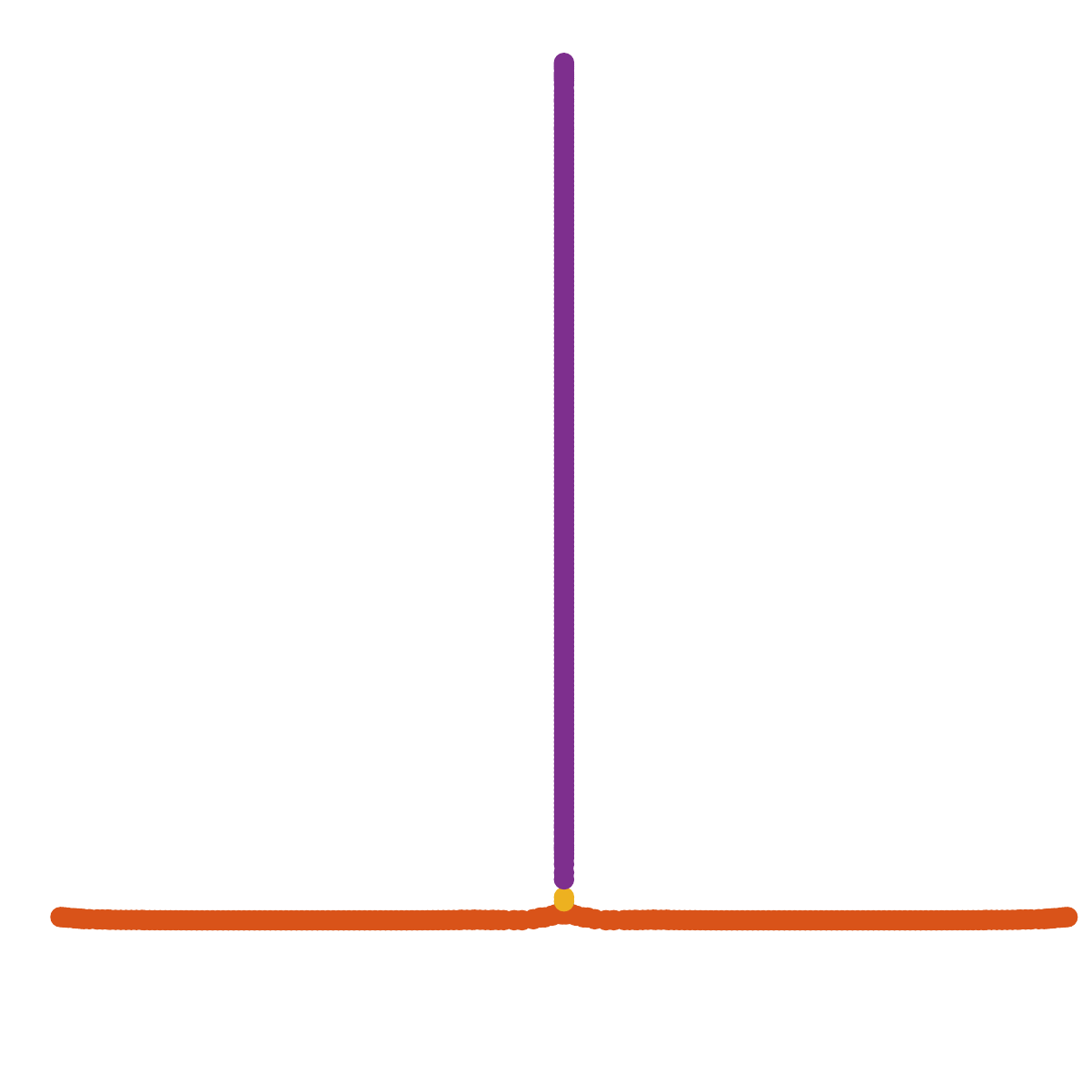}
                \\
               \includegraphics[width=1\textwidth]{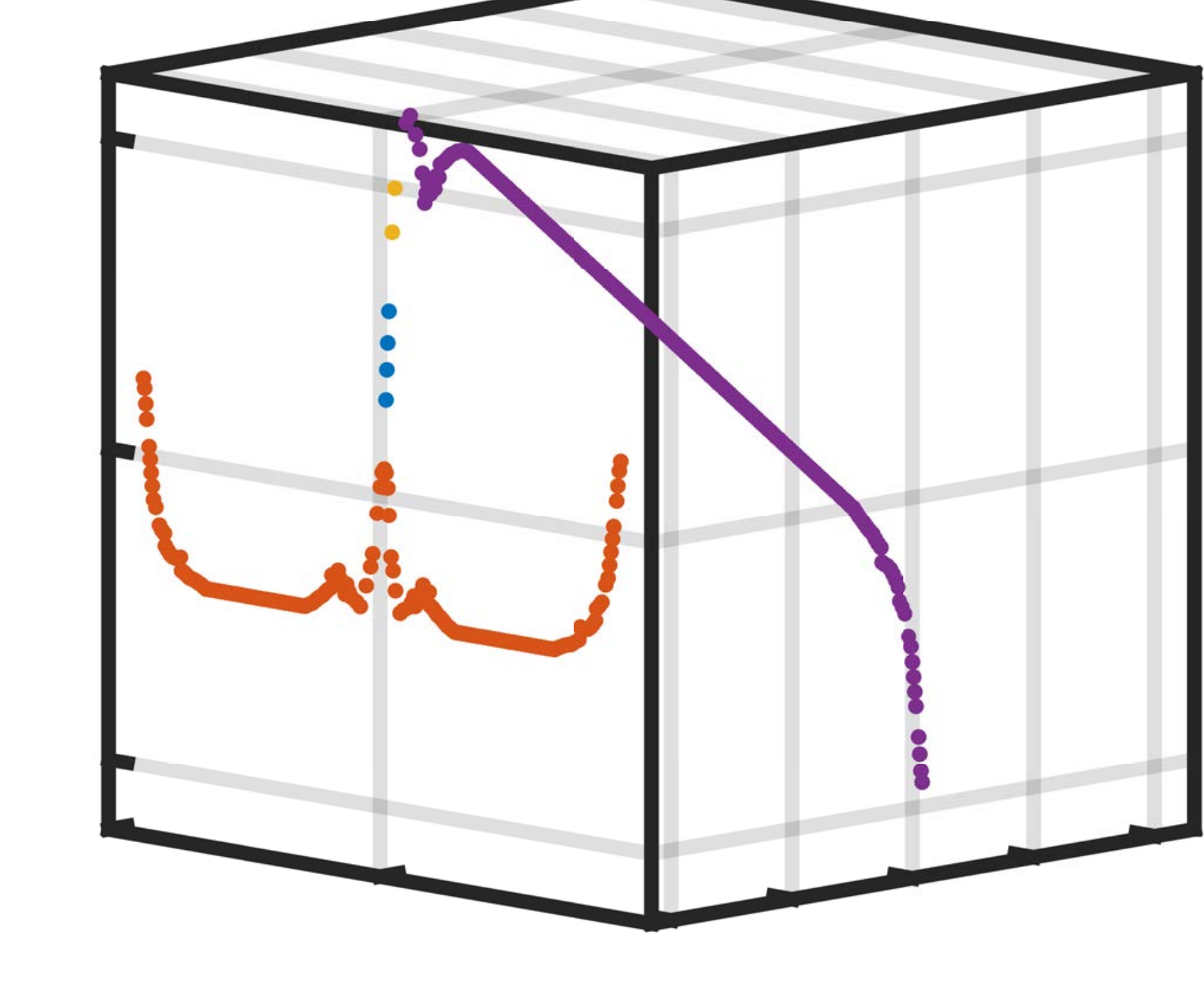} 
            \end{minipage}
            \begin{minipage}[t][2.35cm][t]{0.08\textwidth}
                \centering
                WT2 \\ 
                \includegraphics[width=1\textwidth]{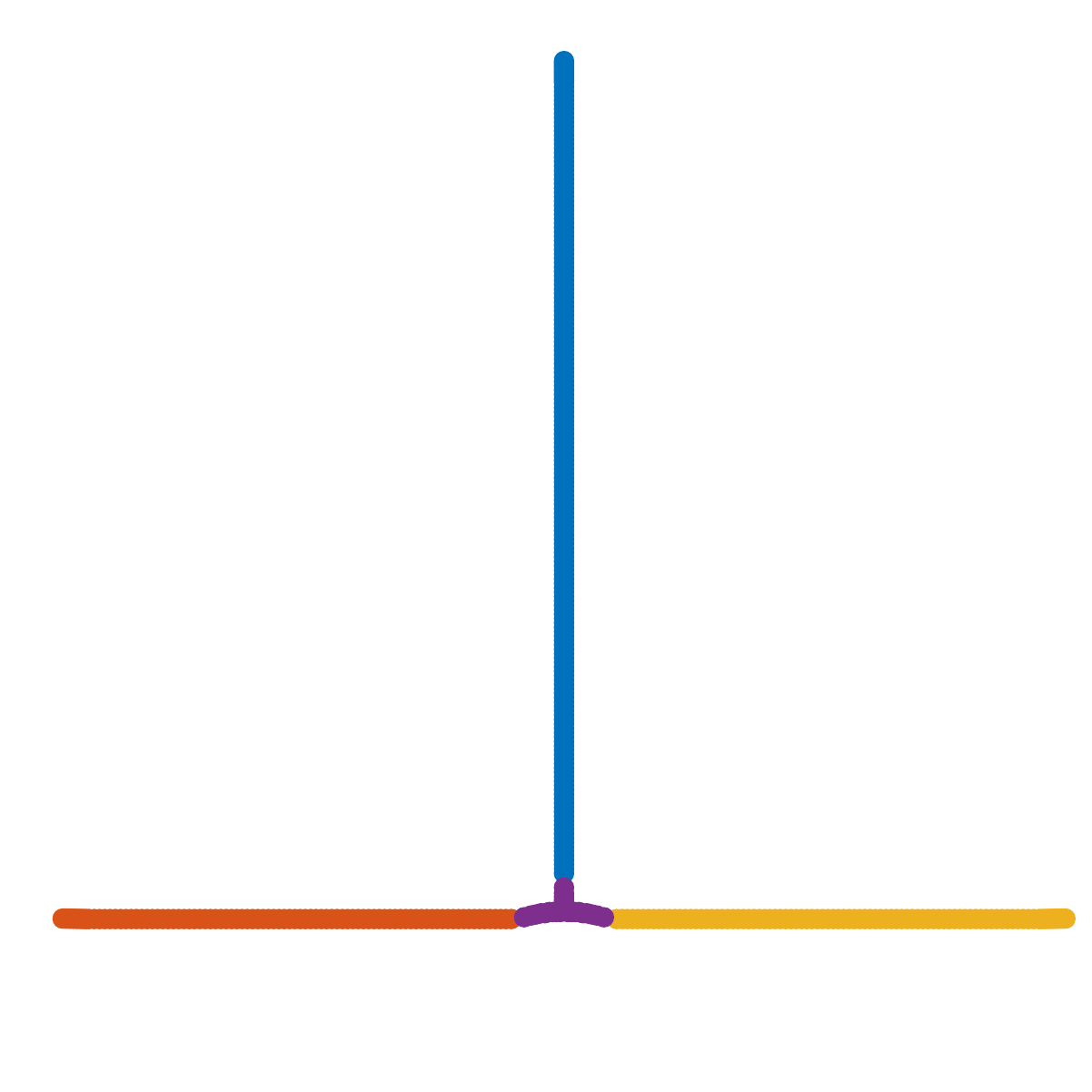}
                \\
               \includegraphics[width=1\textwidth]{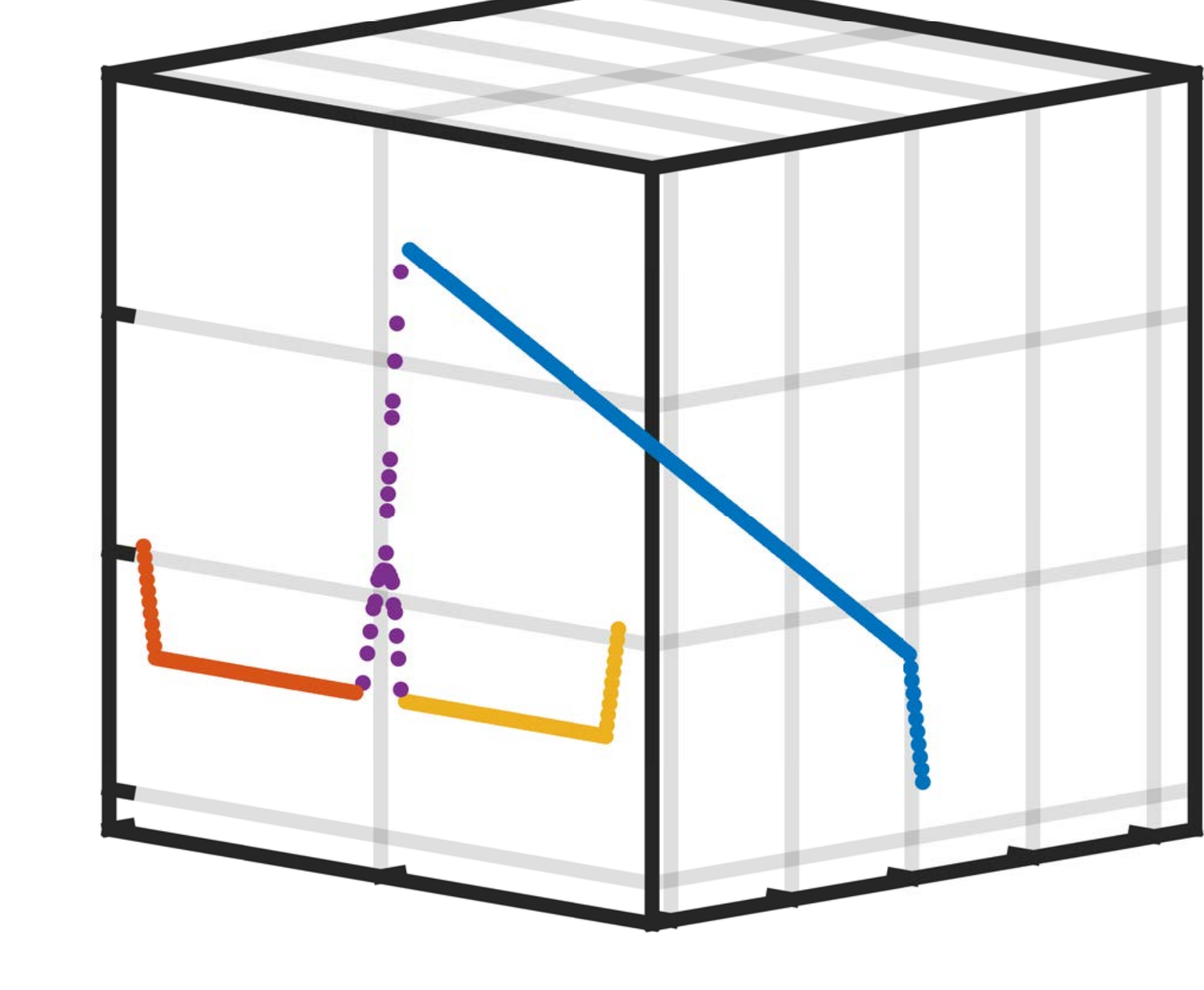} 
            \end{minipage}
            \begin{minipage}[t][2.35cm][t]{0.08\textwidth}
                \centering
                WT1 \\ 
                \includegraphics[width=1\textwidth]{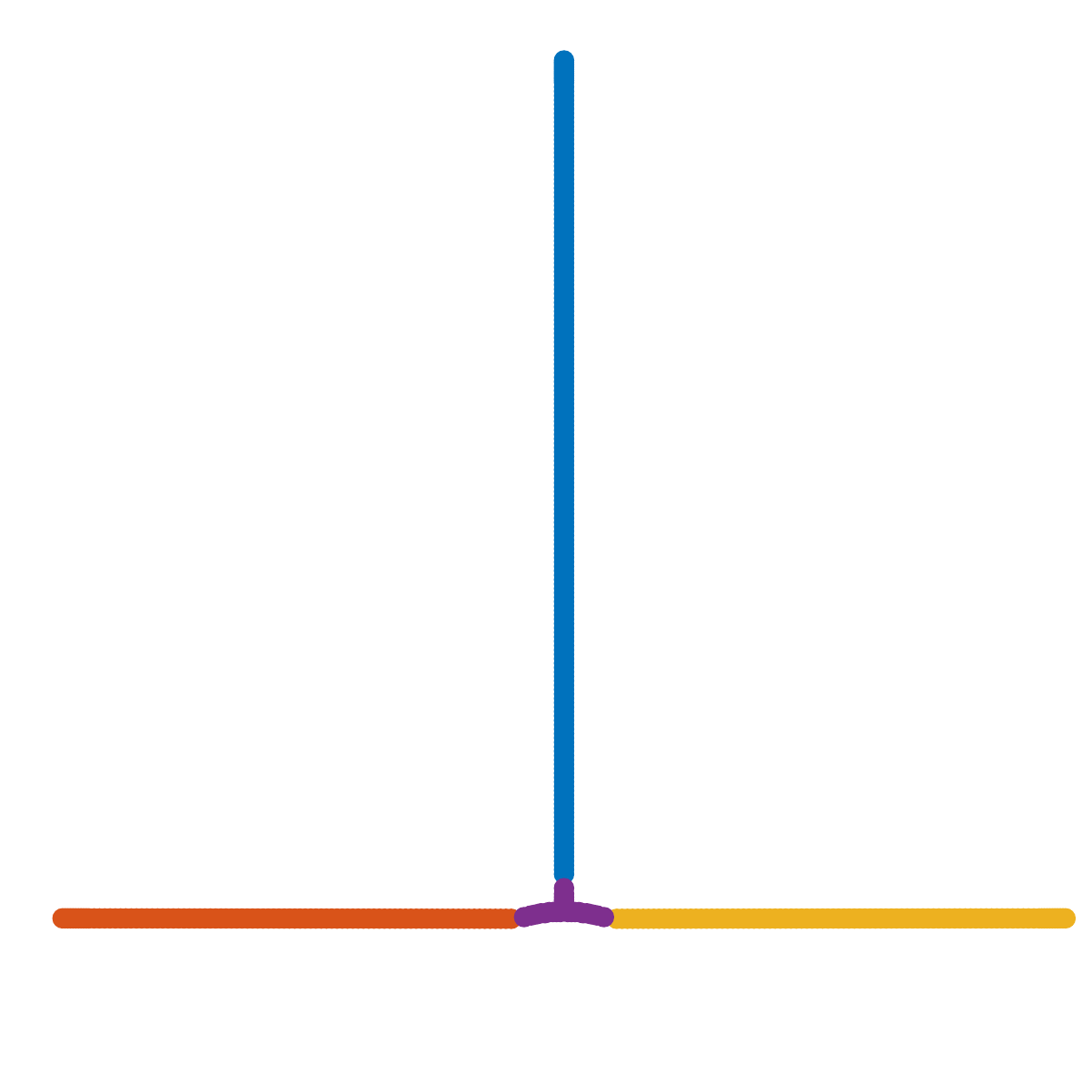}
                \\
               \includegraphics[width=1\textwidth]{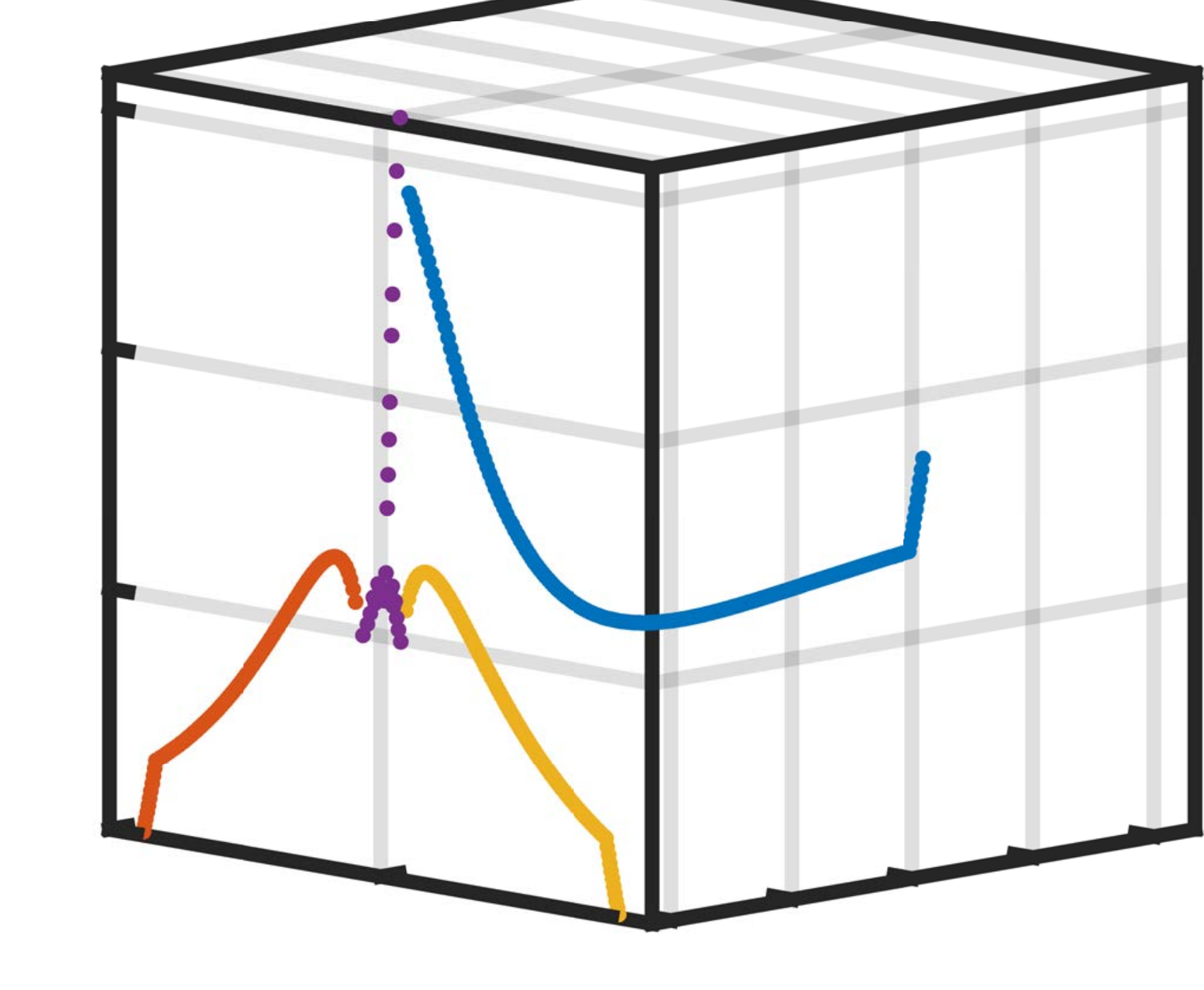} 
            \end{minipage}
        }
    }
    \subfloat[Dendrograms at $\tau=1$]{
    \label{fig:tline-dend-1-supp}
        \fbox{
            \begin{minipage}[t][2.35cm][t]{0.08\textwidth}
                \centering
                MS \\ 
                \includegraphics[width=1\textwidth]{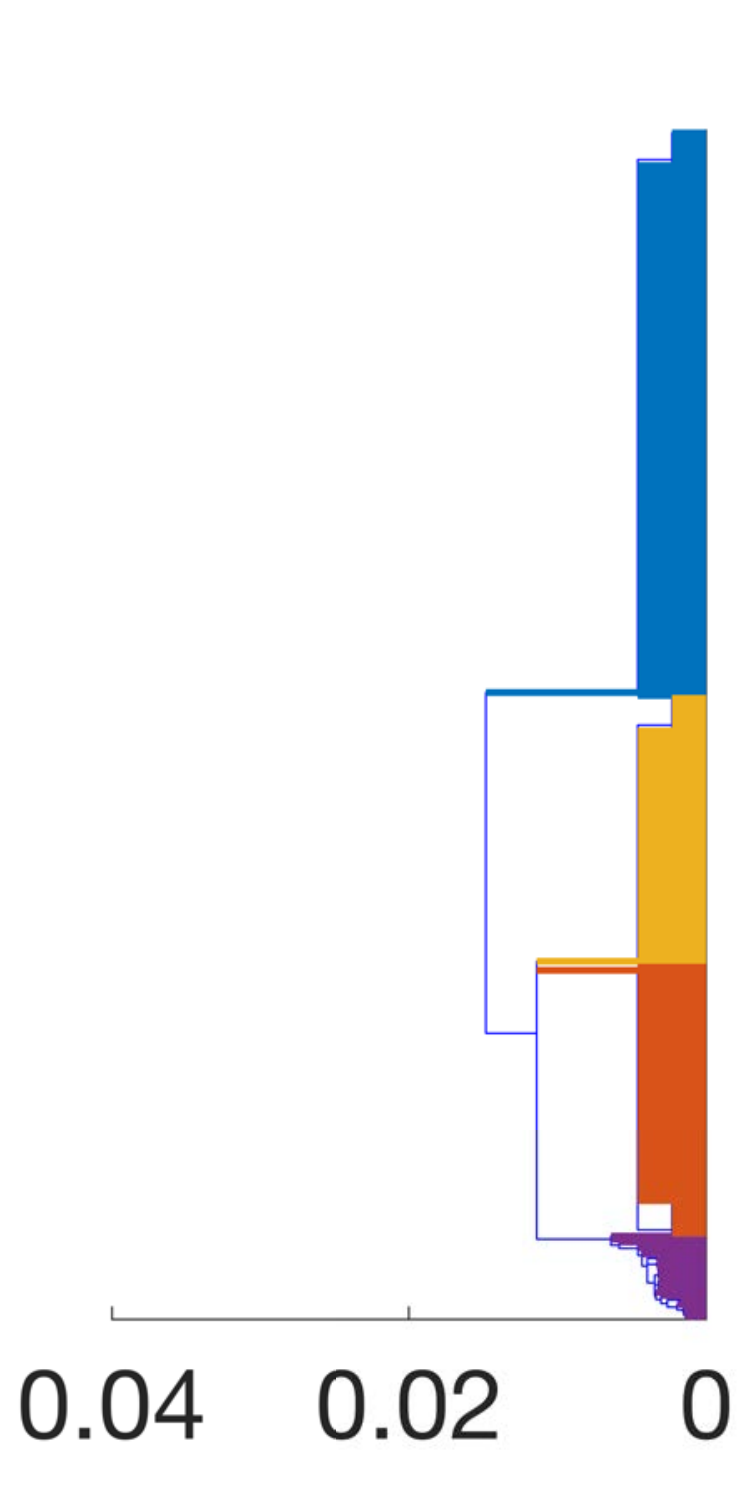}
            \end{minipage}
            \begin{minipage}[t][2.35cm][t]{0.08\textwidth}
                \centering
                GT-$\lambda$-1 \\ 
                \includegraphics[width=1\textwidth]{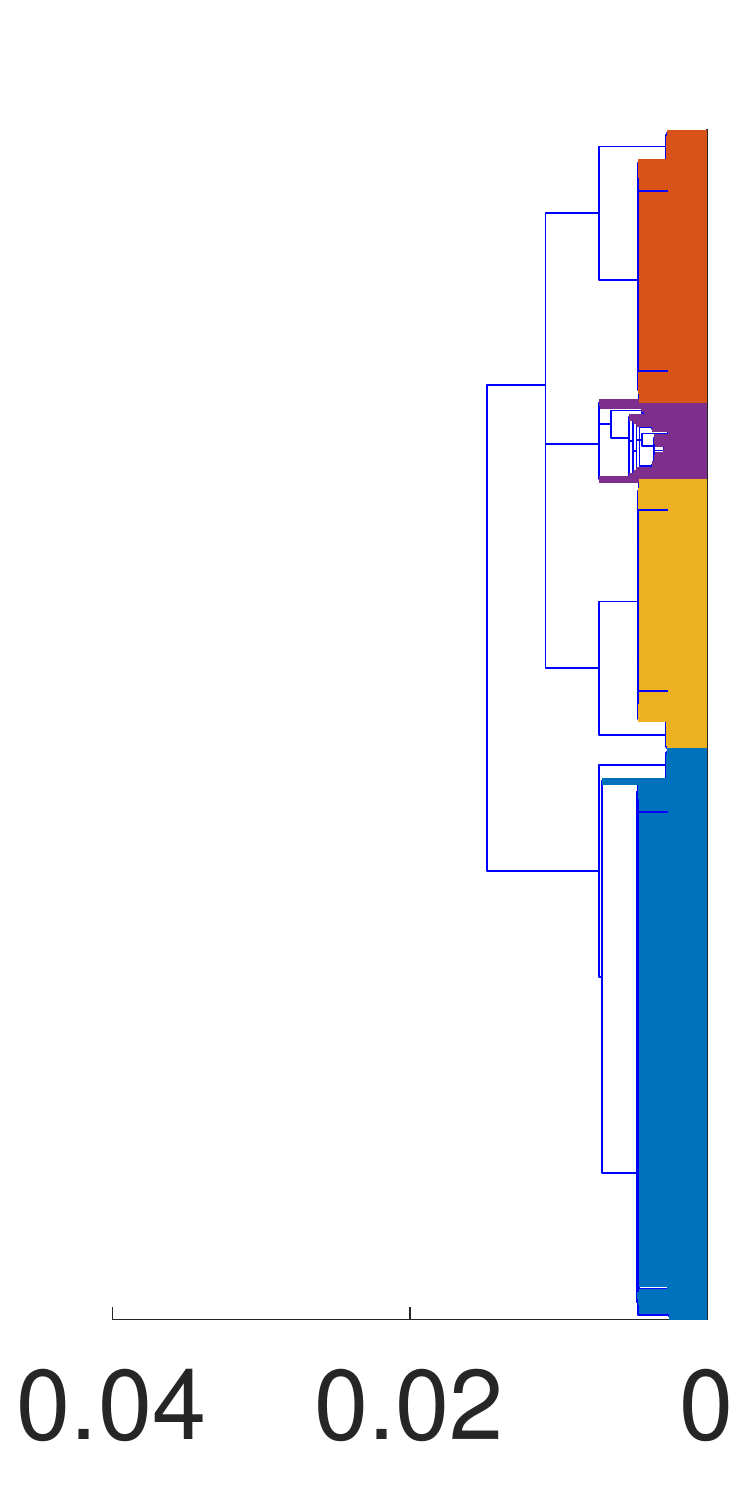}
            \end{minipage}
            \begin{minipage}[t][2.35cm][t]{0.08\textwidth}
                \centering
                GT-$\lambda$-5 \\ 
                \includegraphics[width=1\textwidth]{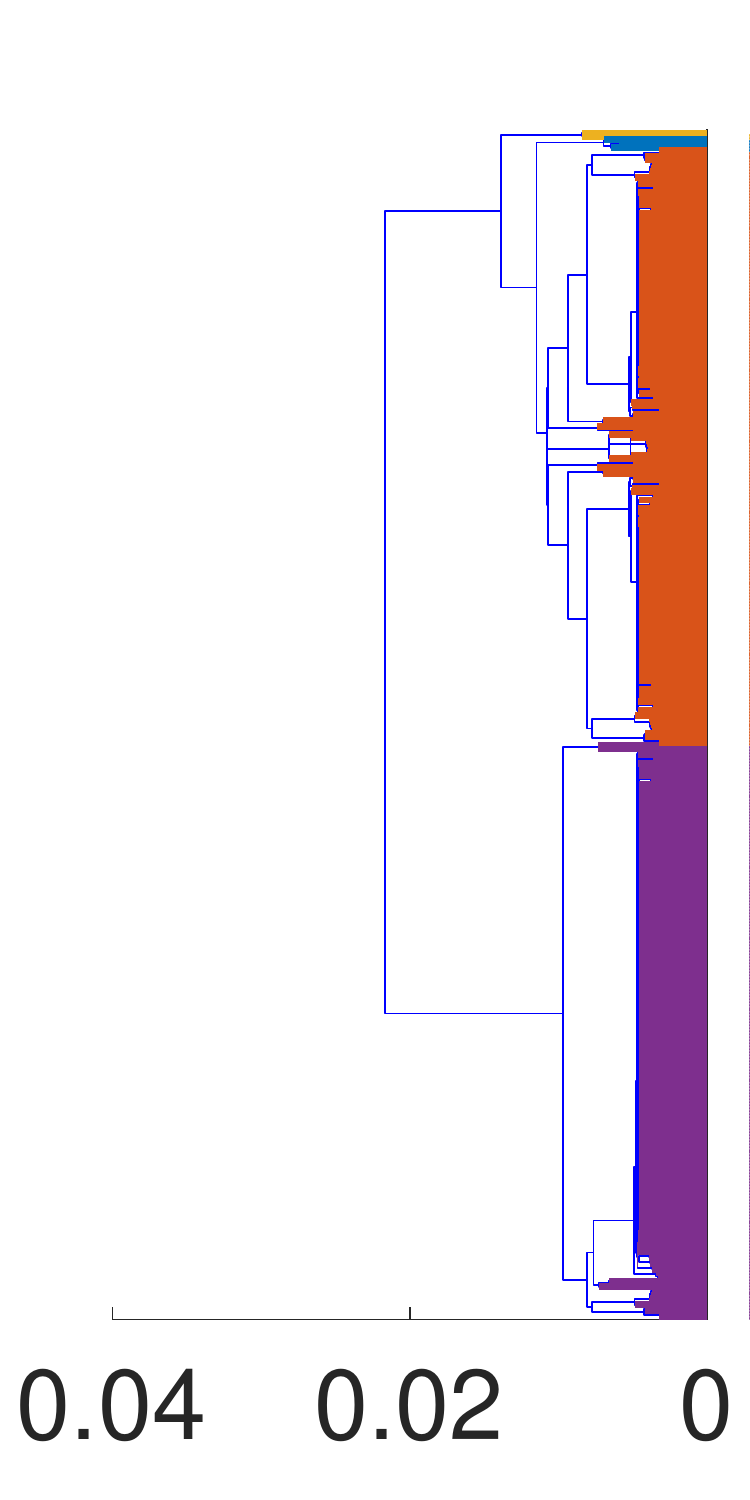}
            \end{minipage}  
            \begin{minipage}[t][2.35cm][t]{0.08\textwidth}
                \centering
                WT2 \\ 
                \includegraphics[width=1\textwidth]{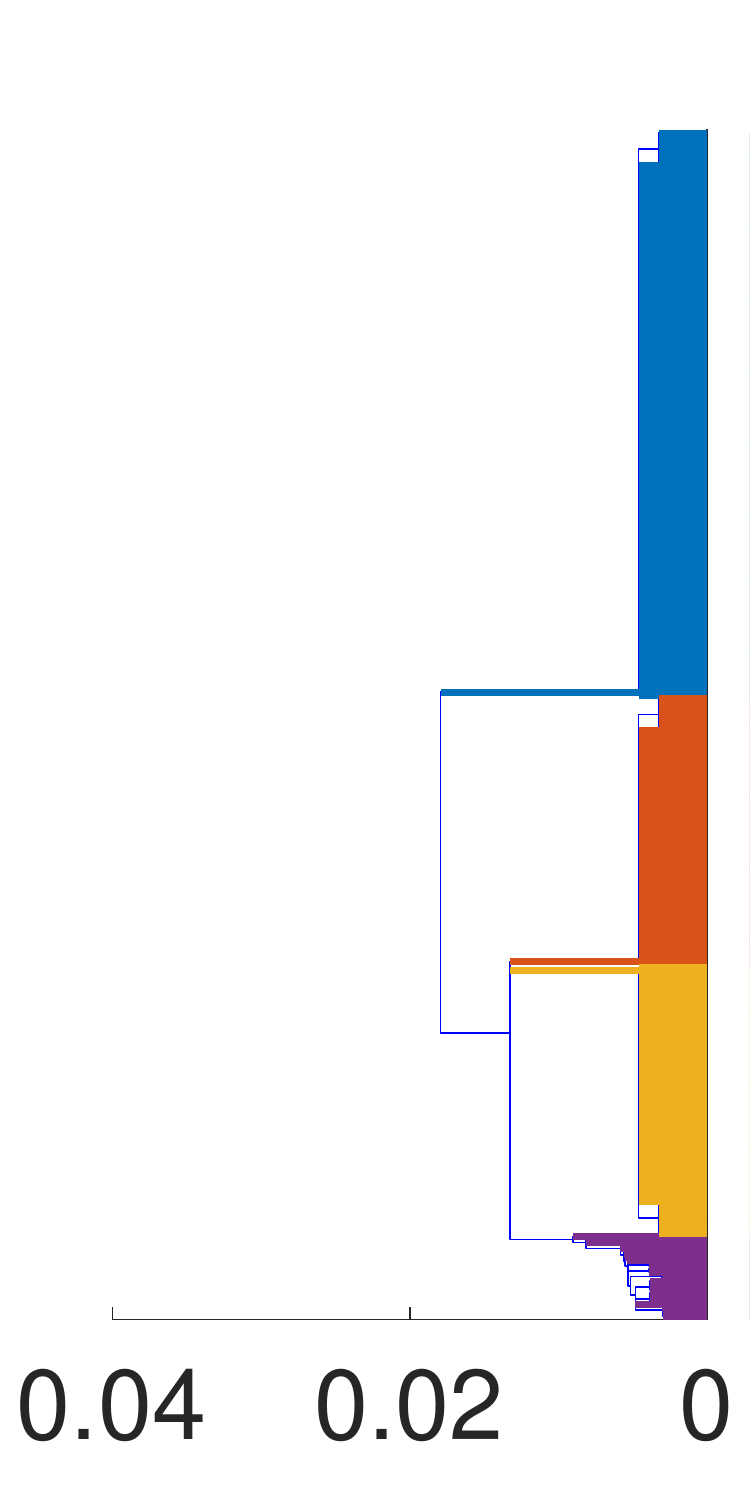}
            \end{minipage}
            \begin{minipage}[t][2.35cm][t]{0.08\textwidth}
                \centering
                WT1 \\ 
                \includegraphics[width=1\textwidth]{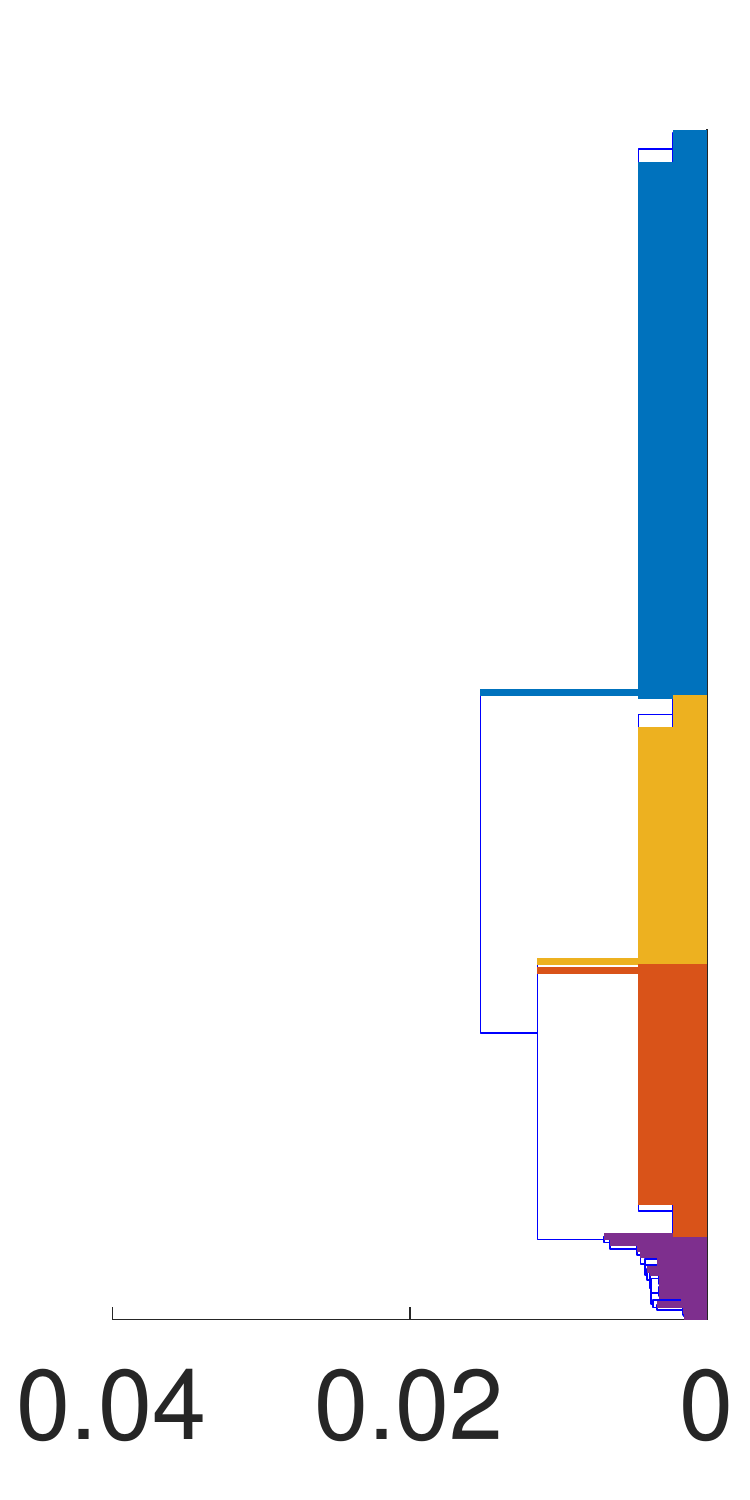}
            \end{minipage}    
        } 
    }

    \subfloat[2D and 3D MDS at $\tau=2$]{
    \label{fig:tline-mds-2-supp}
        \fbox{
            \begin{minipage}[t][2.35cm][t]{0.08\textwidth}
                \centering
                MS \\ 
                \includegraphics[width=0.85\textwidth]{figures/tline/ms-tline-2d-axis-2.pdf} 
                \\ 
            \end{minipage}
            \begin{minipage}[t][2.35cm][t]{0.08\textwidth}
                \centering
                GT-$\lambda$-1 
                \\  \includegraphics[width=1\textwidth]{figures/tline/gtv-tline-lamb-1-2d-axis-2.pdf}
                \\
               \includegraphics[width=1\textwidth]{figures/tline/gtv-tline-lamb-1-3d-axis-2.pdf} 
            \end{minipage}
            \begin{minipage}[t][2.35cm][t]{0.08\textwidth}
                \centering
                GT-$\lambda$-5 \\ 
                \includegraphics[width=1\textwidth]{figures/tline/gtv-tline-lamb-5-2d-axis-2.pdf}
                \\
               \includegraphics[width=1\textwidth]{figures/tline/gtv-tline-lamb-5-3d-axis-2.pdf} 
            \end{minipage}
            \begin{minipage}[t][2.35cm][t]{0.08\textwidth}
                \centering
                WT2 \\ 
                \includegraphics[width=1\textwidth]{figures/tline/wt2-fix-emd-tline-2d-axis-2.pdf}
                \\
               \includegraphics[width=1\textwidth]{figures/tline/wt2-fix-emd-tline-3d-axis-2.pdf} 
            \end{minipage}
            \begin{minipage}[t][2.35cm][t]{0.08\textwidth}
                \centering
                WT1 \\ 
                \includegraphics[width=1\textwidth]{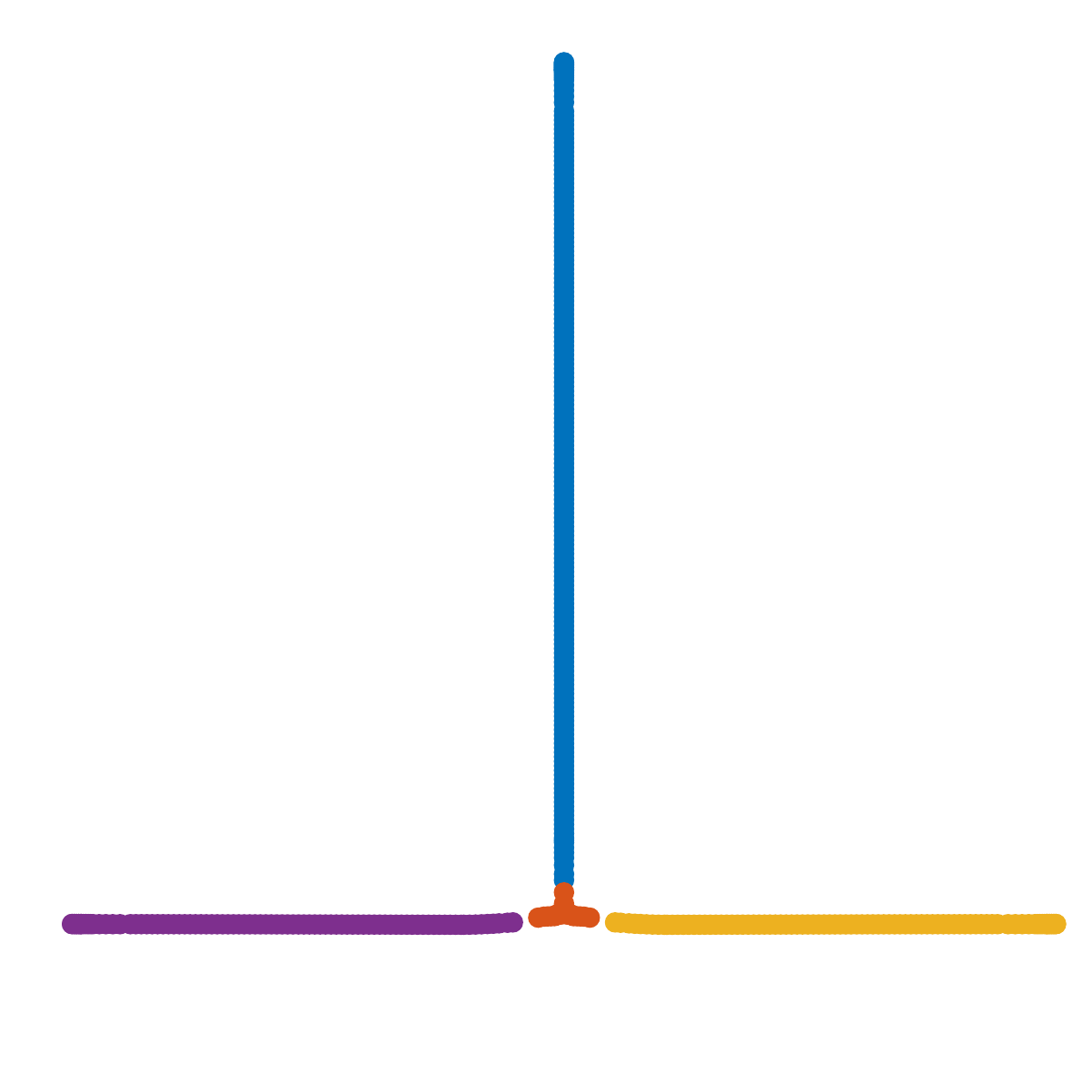}
                \\
               \includegraphics[width=1\textwidth]{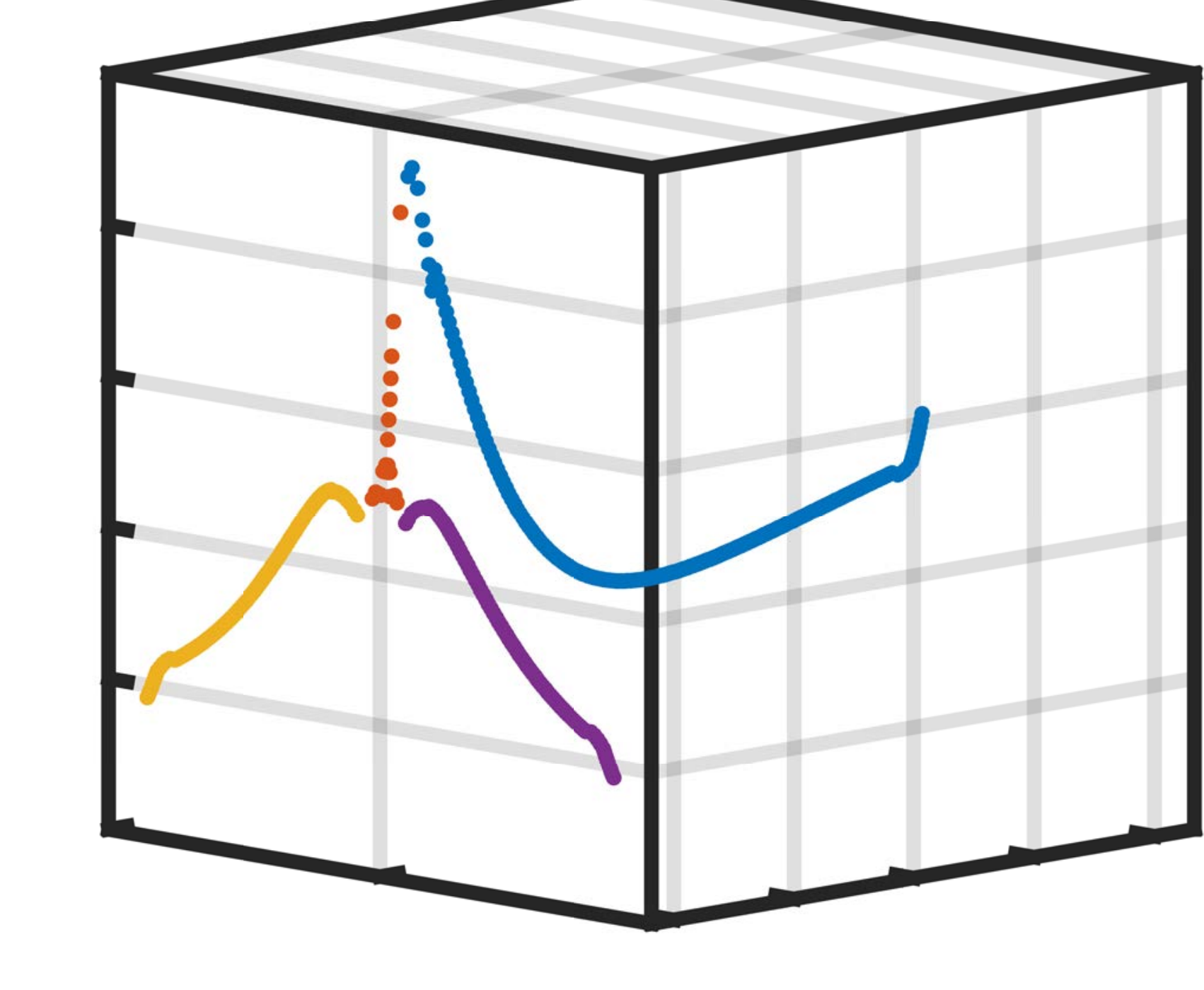} 
            \end{minipage}
        }
    }
    \subfloat[Dendrograms at $\tau=2$]{
    \label{fig:tline-dend-2-supp}
        \fbox{
            \begin{minipage}[t][2.35cm][t]{0.08\textwidth}
                \centering
                MS \\ 
                \includegraphics[width=1\textwidth]{figures/tline/ms-tline-dend-2.pdf}
            \end{minipage}
            \begin{minipage}[t][2.35cm][t]{0.08\textwidth}
                \centering
                GT-$\lambda$-1 \\ 
                \includegraphics[width=1\textwidth]{figures/tline/gtv-tline-lamb-1-dend-2.pdf}
            \end{minipage}
            \begin{minipage}[t][2.35cm][t]{0.08\textwidth}
                \centering
                GT-$\lambda$-5 \\ 
                \includegraphics[width=1\textwidth]{figures/tline/gtv-tline-lamb-5-dend-2.pdf}
            \end{minipage}  
            \begin{minipage}[t][2.35cm][t]{0.08\textwidth}
                \centering
                WT2 \\ 
                \includegraphics[width=1\textwidth]{figures/tline/wt2-fix-emd-tline-dend-2.pdf}
            \end{minipage}
            \begin{minipage}[t][2.35cm][t]{0.08\textwidth}
                \centering
                WT1 \\ 
                \includegraphics[width=1\textwidth]{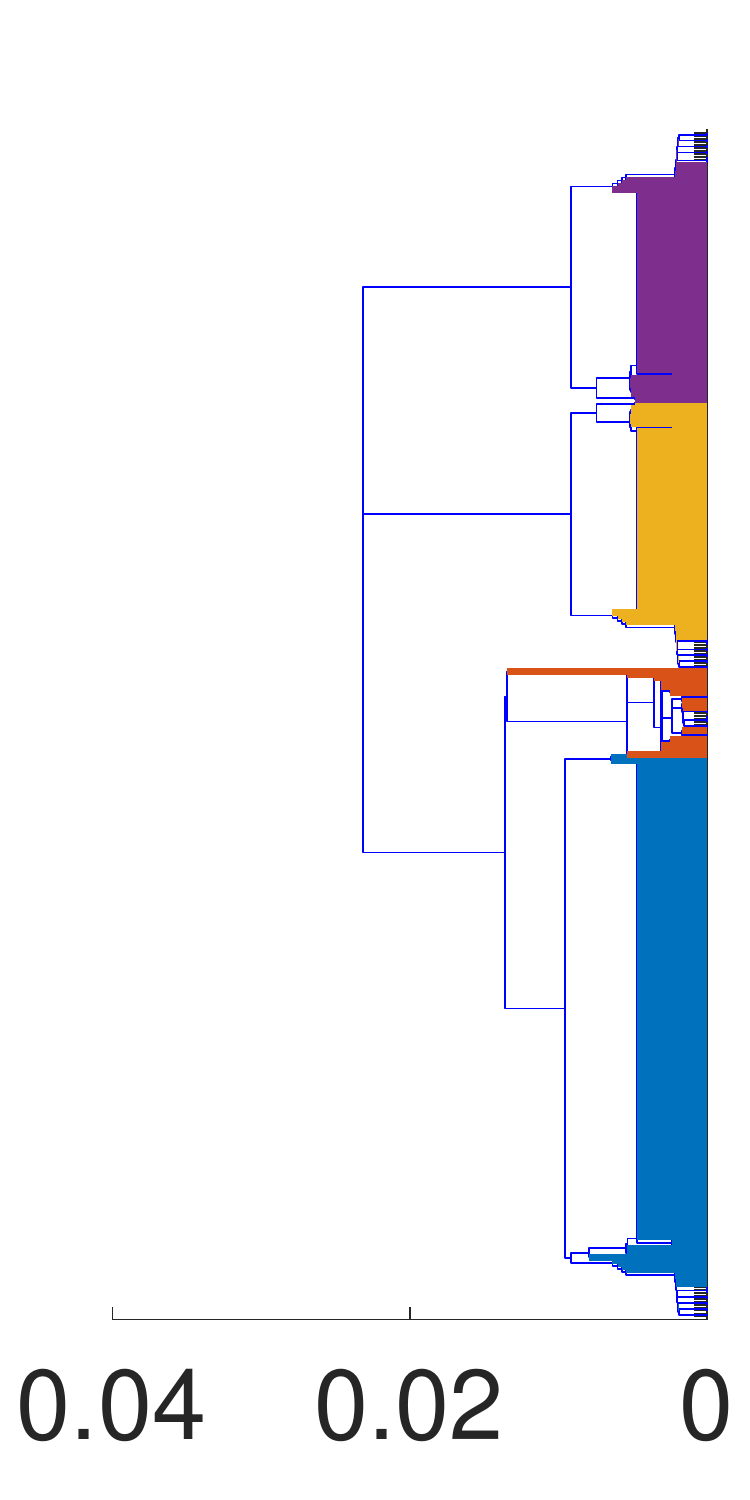}
            \end{minipage}    
        } 
    }
    \caption{T-junction clustering. 
    {
    (a): The first column shows the updated point cloud based on MS after 1 iteration; the next four columns shows the 2D and 3D MDS plots of distance matrices based on GT with $\lambda$=1, GT with $\lambda$=5, WT2 and WT1 after 1 iteration, respectively. Different colors in (a) represent different clusters, which are obtained by slicing the dendrograms illustrated in (b). 
    (b): The five columns demonstrate the clustering dendrograms using methods in (a).
    (c): The first column shows the updated point cloud based on MS after 2 iterations; the next four columns shows the 2D and 3D MDS plots of distance matrices based on GT with $\lambda$=1, GT with $\lambda$=5, WT2 and WT1 after 2 iterations, respectively. Different colors in (c) represent different clusters, which are obtained by slicing the dendrograms illustrated in (d). 
    (d): The five columns demonstrate the clustering dendrograms using methods in (c). 
    } }
    \label{fig:tlines-supp}
\end{figure}

\subsection{Ameliorating the chaining effect}
In this experiment, we examine how data geometry influences the performance of GT, MS, WT2 and WT1 on ameliorating the chaining effect. The results are shown in Figure~(\ref{fig:supp-ballchains}). Note that GT with $\lambda=5$ generates clearly better clustering results when $e=1/0.2$ than other methods.
\begin{figure}[htb]
    \centering
    \subfloat[MS]{
    \fbox{
        \begin{minipage}[t]{0.09\textwidth}
            \centering
            \begin{minipage}[t]{1\textwidth}
                \centering
                $e$:1/1 \\ 
                \includegraphics[width=1\textwidth]{figures/ellip/ellip-ori-seq-1-10.pdf} 
            \end{minipage}
            \begin{minipage}[t][0.55cm][t]{1\textwidth}
                \centering
                \includegraphics[width=1\textwidth]{figures/ellip/ms-ellip-seq-dend-1-10.pdf}
            \end{minipage}
            \begin{minipage}[t]{1\textwidth}
                \centering
                $e$:1/1 \\ 
                \includegraphics[width=1\textwidth]{figures/ellip/ellip-ori-seq-1-10.pdf} 
            \end{minipage}
            \begin{minipage}[t][0.55cm][t]{1\textwidth}
                \centering
                \includegraphics[width=1\textwidth]{figures/ellip/ms-ellip-seq-dend-1-10.pdf}
            \end{minipage}
        \end{minipage}
        \begin{minipage}[t]{0.09\textwidth}
            \centering
            \begin{minipage}[t]{1\textwidth}
                \centering
                $e$:1/0.8 \\ 
                \includegraphics[width=1\textwidth]{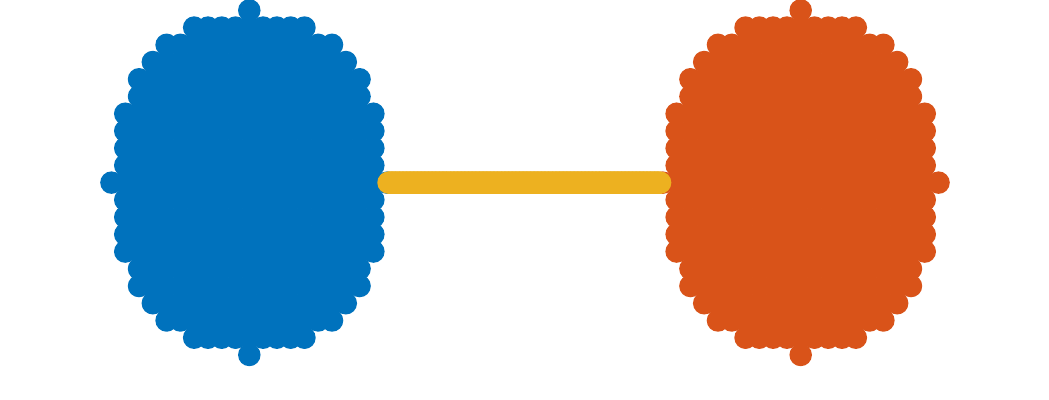} 
            \end{minipage}
            \begin{minipage}[t][0.55cm][t]{1\textwidth}
                \centering
                \includegraphics[width=1\textwidth]{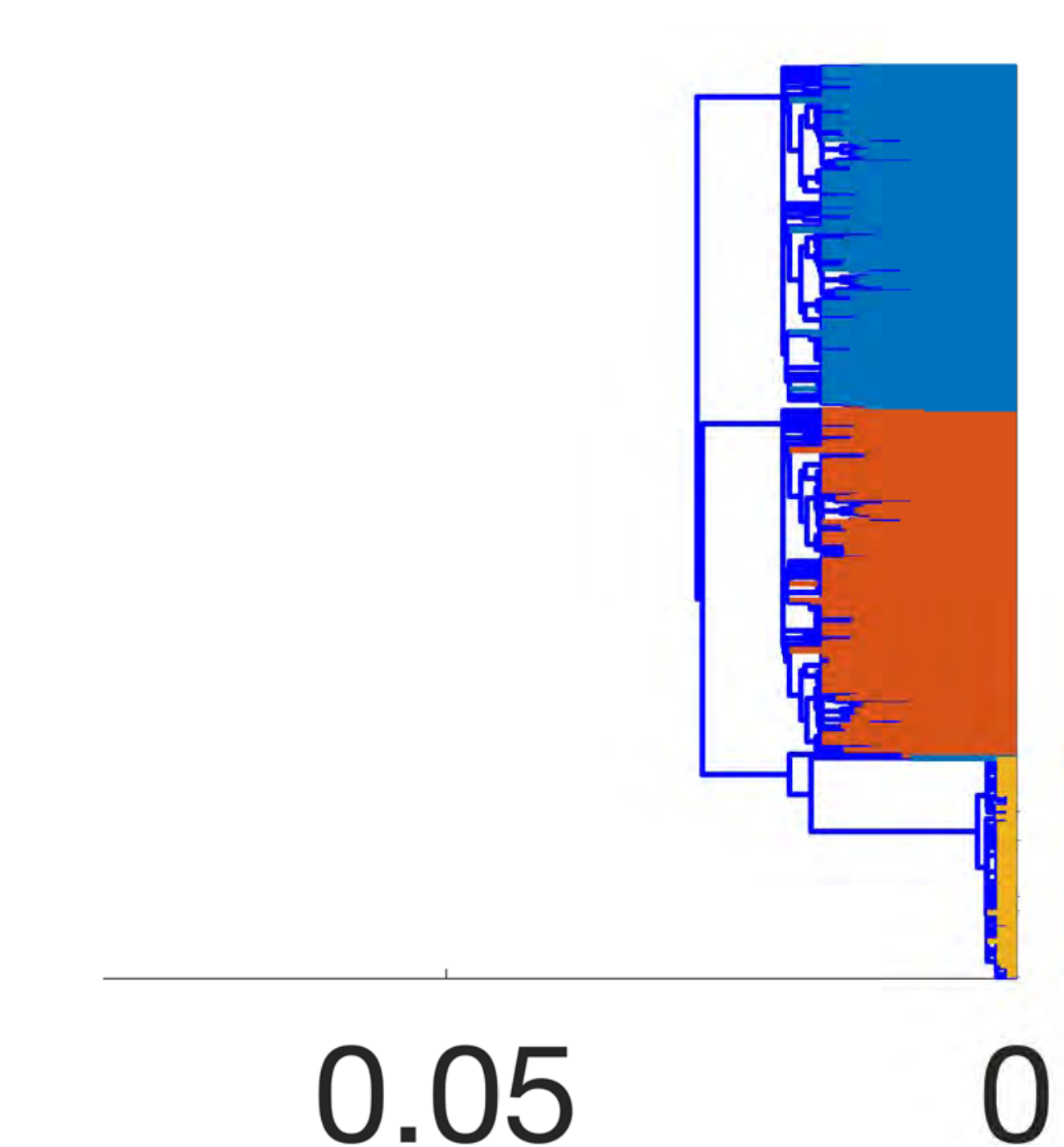}
            \end{minipage}
            \begin{minipage}[t]{1\textwidth}
                \centering
                $e$:0.8/1 \\ 
                \includegraphics[width=1\textwidth]{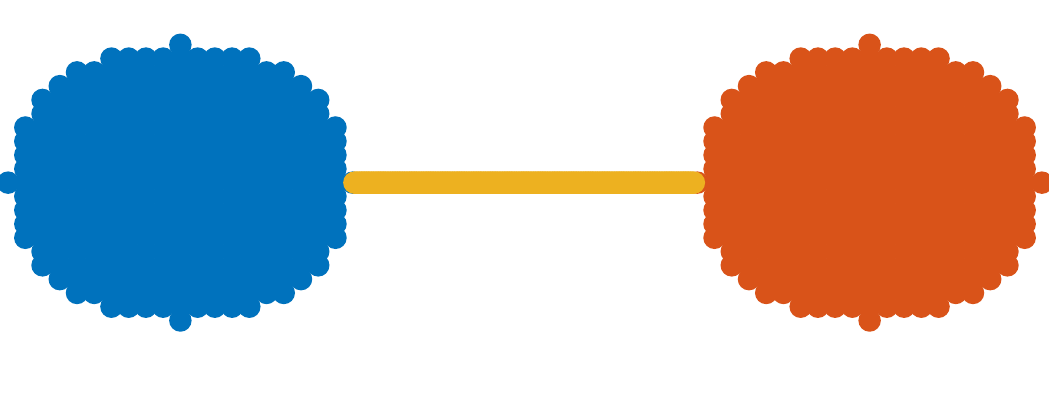} 
            \end{minipage}
            \begin{minipage}[t][0.55cm][t]{1\textwidth}
                \centering
                \includegraphics[width=1\textwidth]{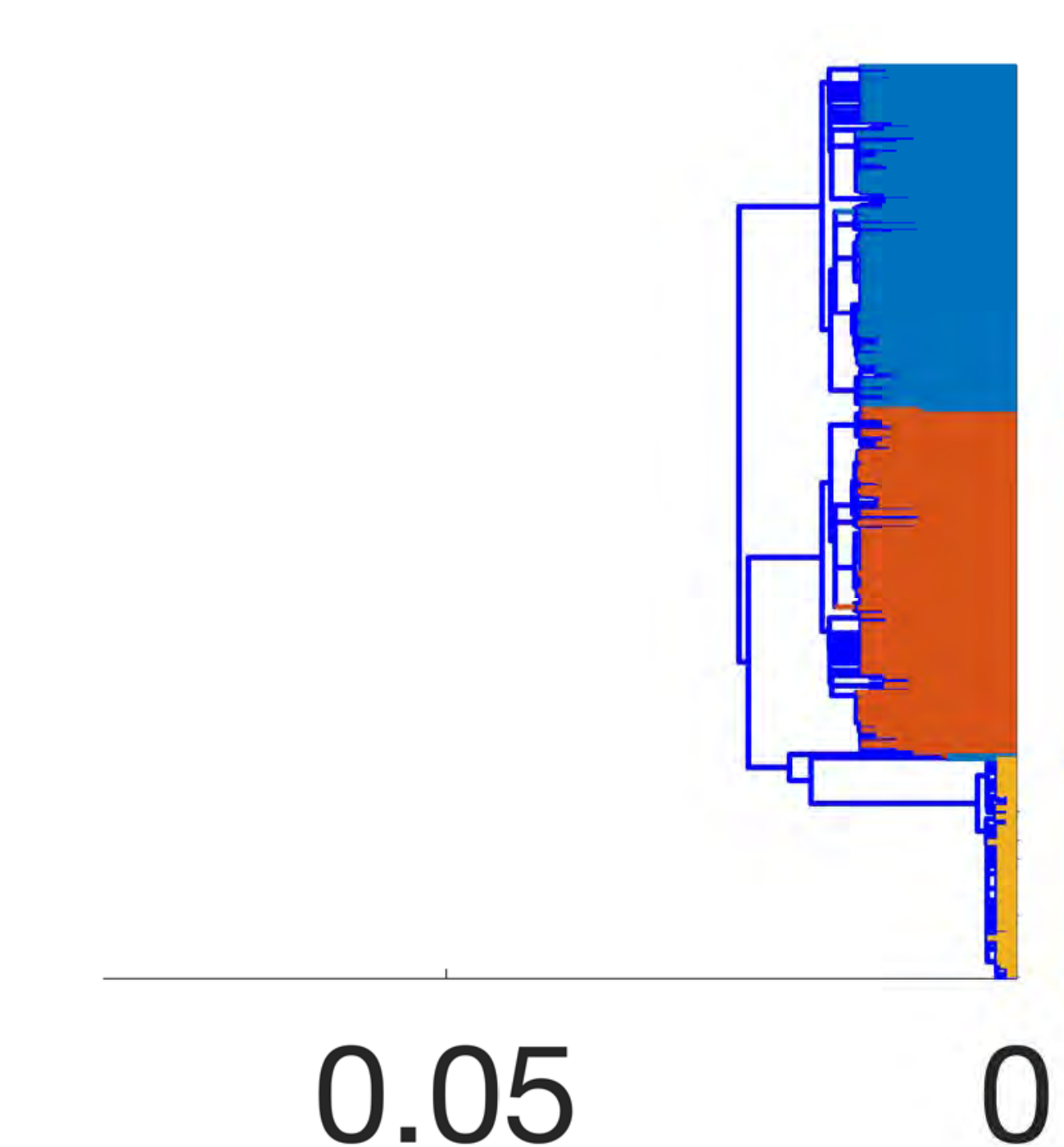}
            \end{minipage}
        \end{minipage}
        \begin{minipage}[t]{0.09\textwidth}
            \centering
            \begin{minipage}[t]{1\textwidth}
                \centering
                $e$:1/0.6 \\ 
                \includegraphics[width=1\textwidth]{figures/ellip/ellip-ori-seq-1-6.pdf} 
            \end{minipage}
            \begin{minipage}[t][0.55cm][t]{1\textwidth}
                \centering
                \includegraphics[width=1\textwidth]{figures/ellip/ms-ellip-seq-dend-1-6.pdf}
            \end{minipage}
            \begin{minipage}[t]{1\textwidth}
                \centering
                $e$:0.6/1 \\ 
                \includegraphics[width=1\textwidth]{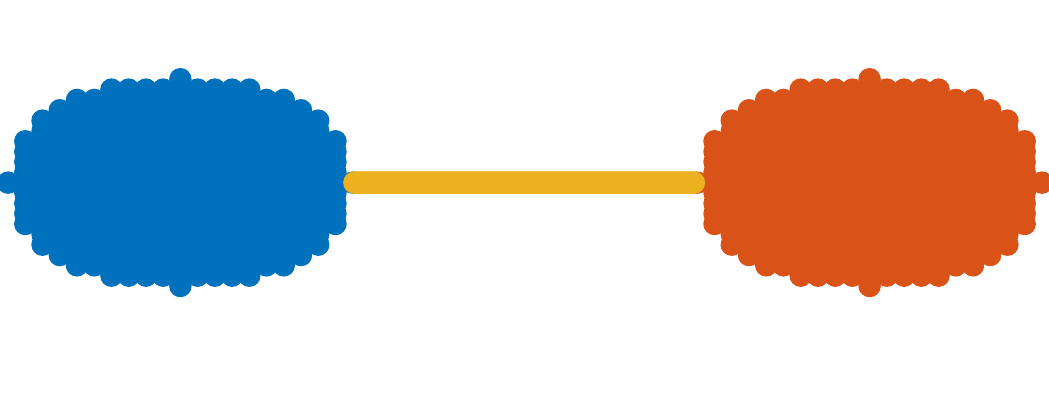} 
            \end{minipage}
            \begin{minipage}[t][0.55cm][t]{1\textwidth}
                \centering
                \includegraphics[width=1\textwidth]{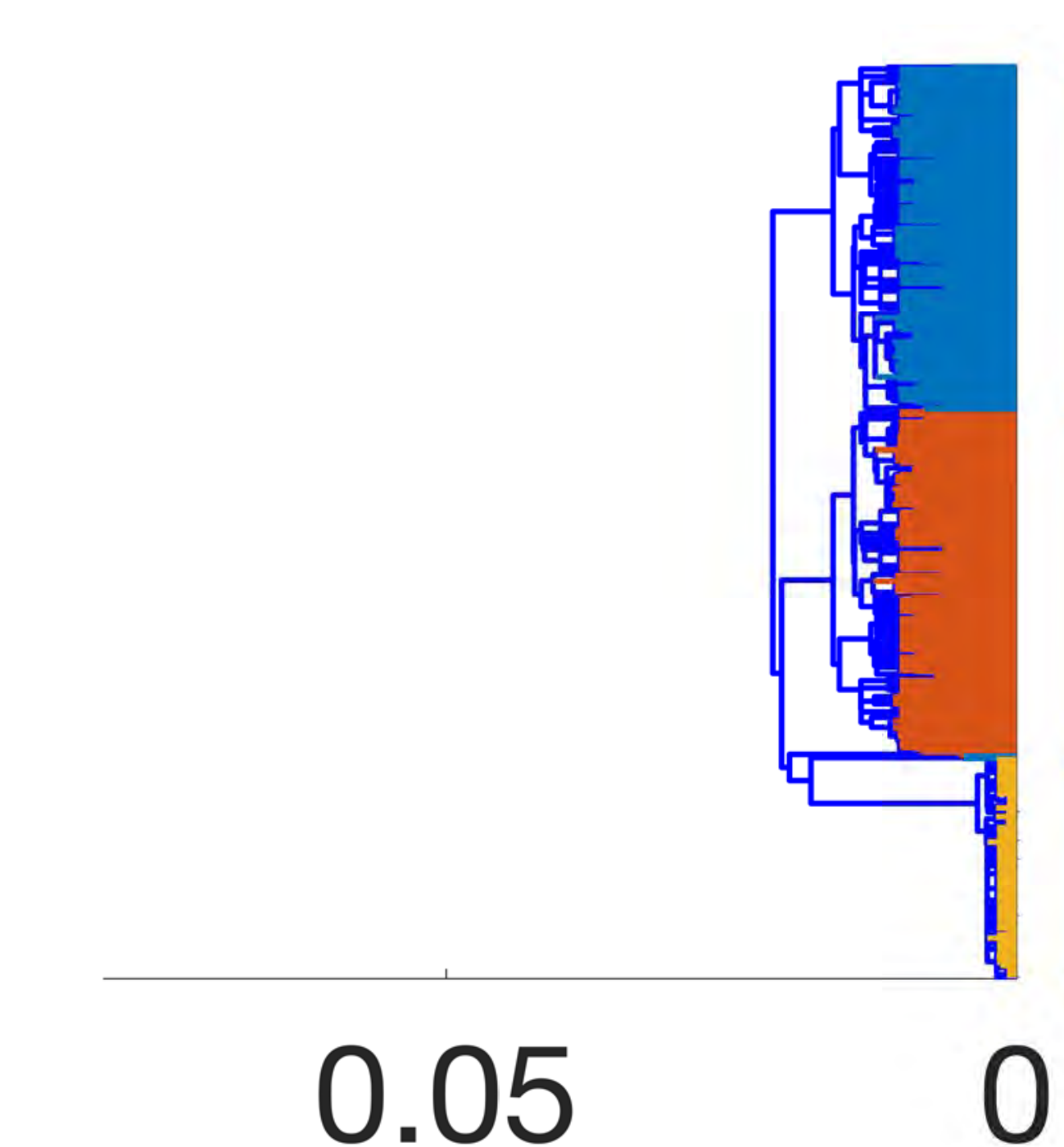}
            \end{minipage}
        \end{minipage}
        \begin{minipage}[t]{0.09\textwidth}
            \centering
            \begin{minipage}[t]{1\textwidth}
                \centering
                $e$:1/0.4 \\ 
                \includegraphics[width=1\textwidth]{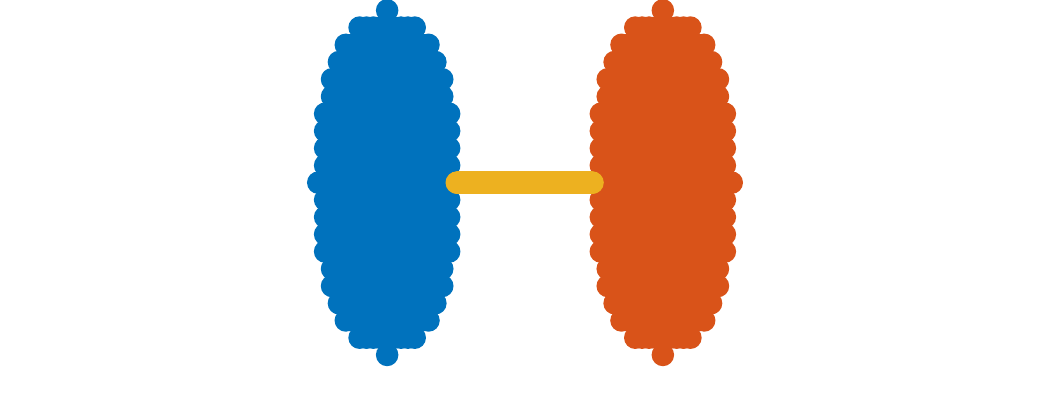} 
            \end{minipage}
            \begin{minipage}[t][0.55cm][t]{1\textwidth}
                \centering
                \includegraphics[width=1\textwidth]{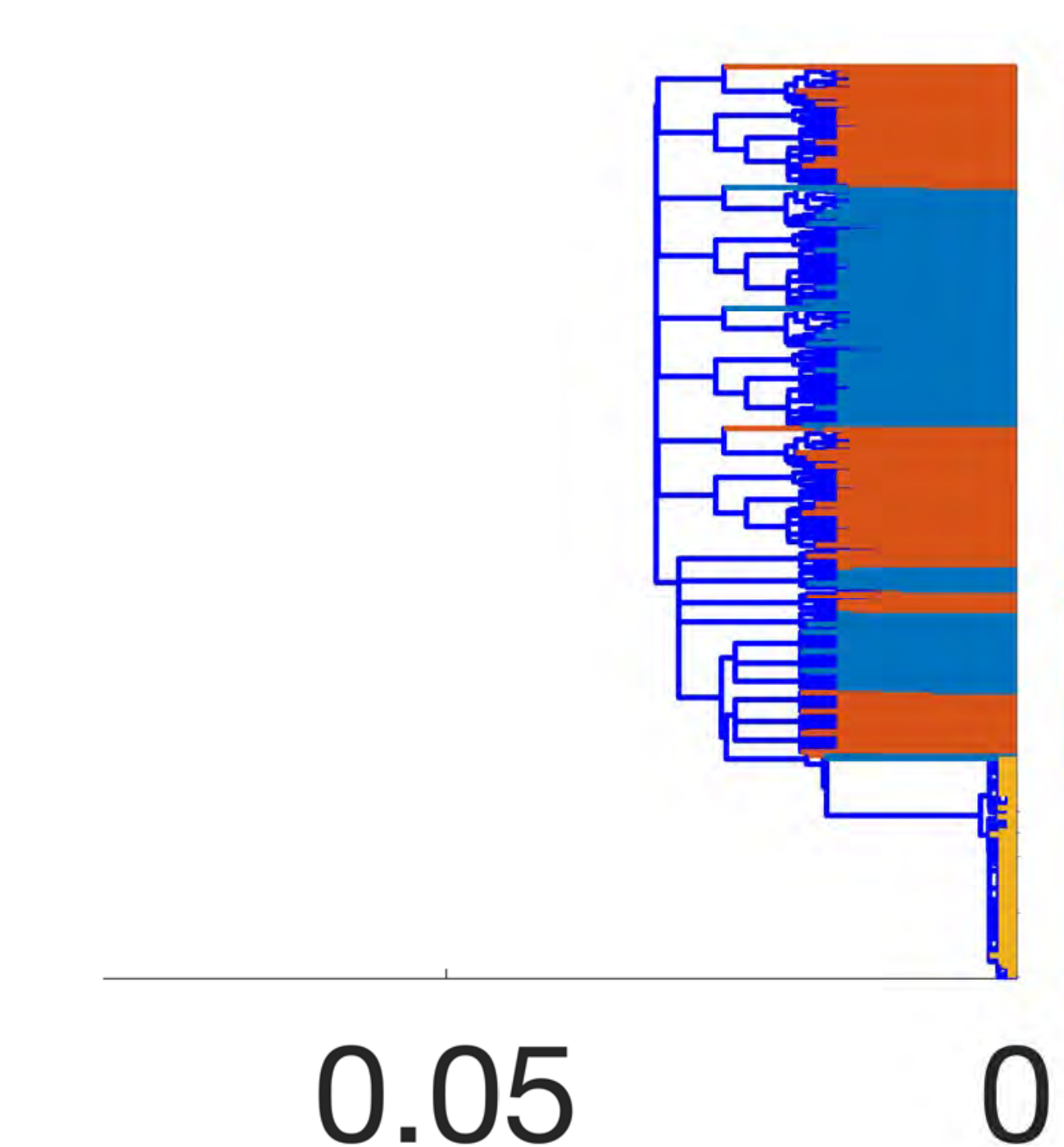}
            \end{minipage}
            \begin{minipage}[t]{1\textwidth}
                \centering
                $e$:0.4/1 \\ 
                \includegraphics[width=1\textwidth]{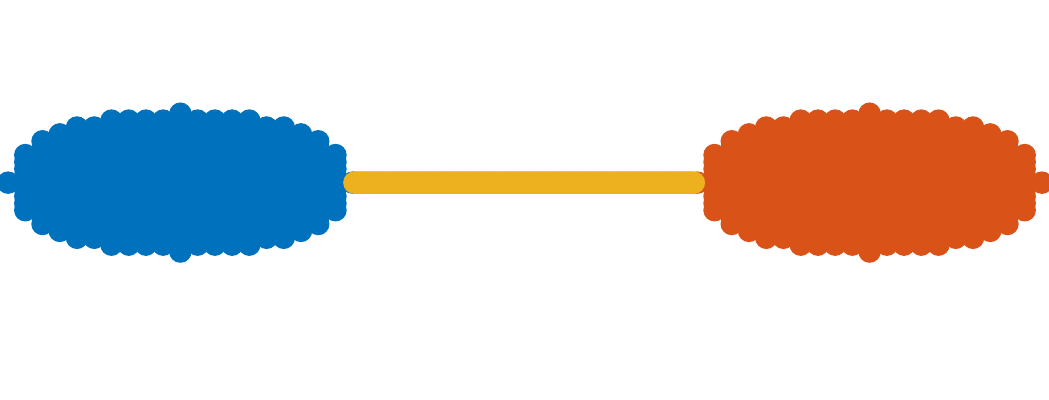} 
            \end{minipage}
            \begin{minipage}[t][0.55cm][t]{1\textwidth}
                \centering
                \includegraphics[width=1\textwidth]{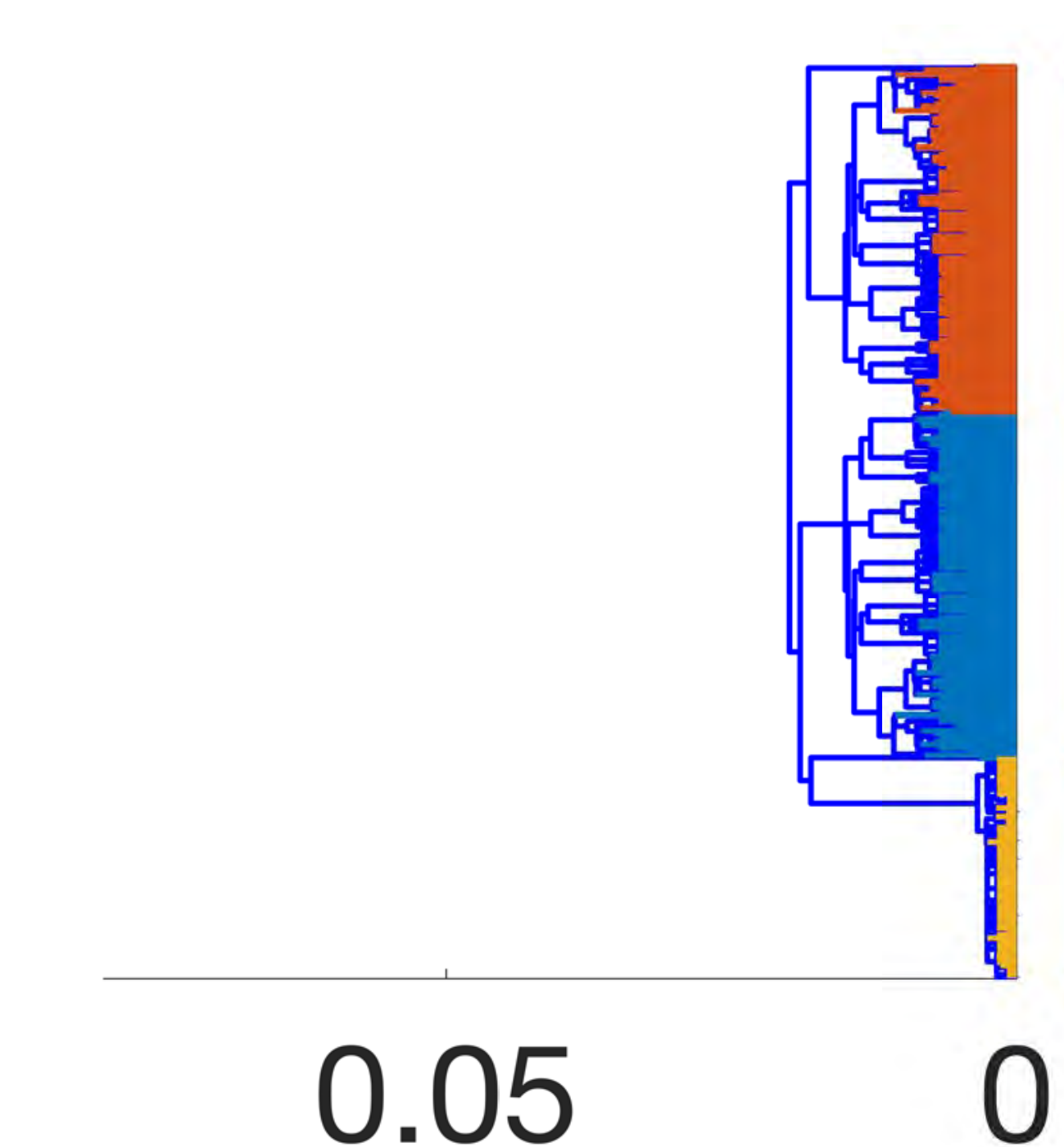}
            \end{minipage}
        \end{minipage}
        \begin{minipage}[t]{0.09\textwidth}
            \centering
            \begin{minipage}[t]{1\textwidth}
                \centering
                $e$:1/0.2 \\ 
                \includegraphics[width=1\textwidth]{figures/ellip/ellip-ori-seq-1-2.pdf} 
            \end{minipage}
            \begin{minipage}[t][0.55cm][t]{1\textwidth}
                \centering
                \includegraphics[width=1\textwidth]{figures/ellip/ms-ellip-seq-dend-1-2.pdf}
            \end{minipage}
            \begin{minipage}[t]{1\textwidth}
                \centering
                $e$:0.2/1 \\ 
                \includegraphics[width=1\textwidth]{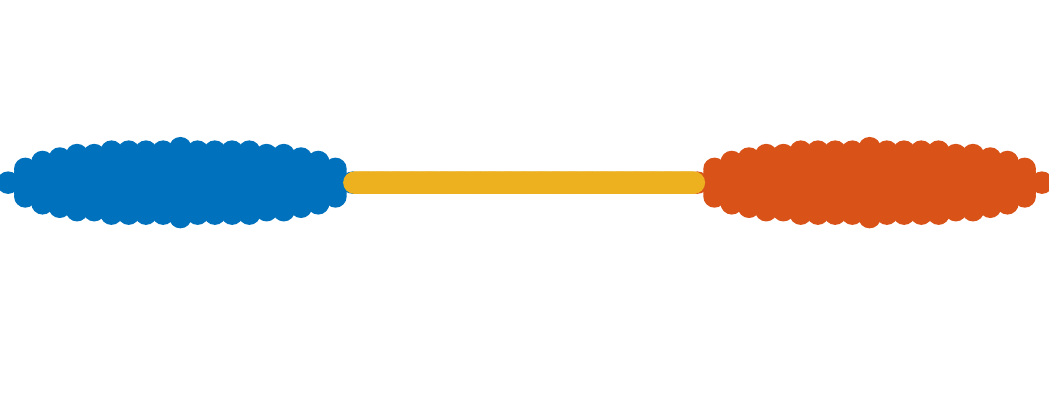} 
            \end{minipage}
            \begin{minipage}[t][0.55cm][t]{1\textwidth}
                \centering
                \includegraphics[width=1\textwidth]{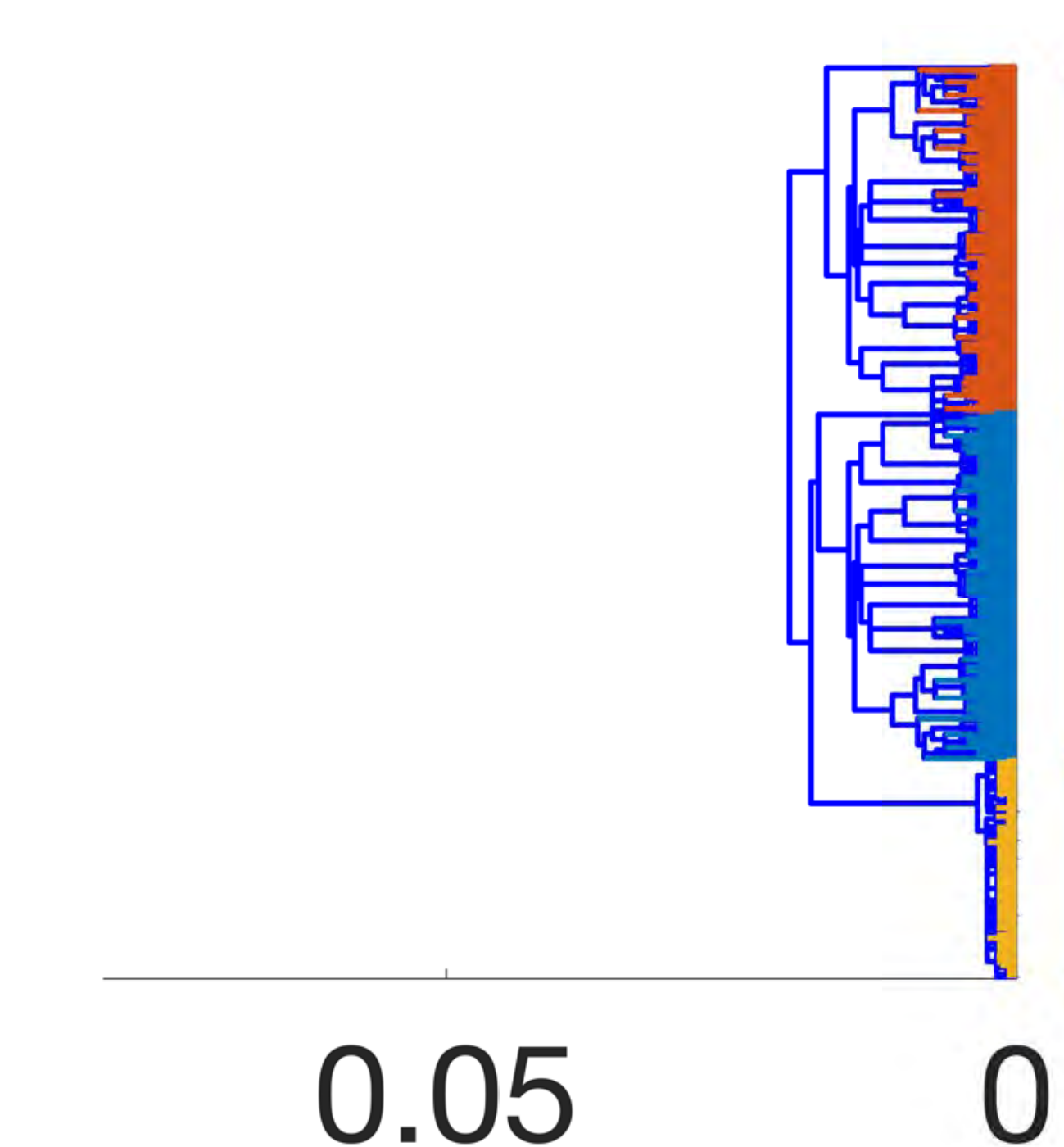}
            \end{minipage}
        \end{minipage}
    } } 
    
    \subfloat[GT-$\lambda$-1]{
    \label{fig:ballchains-gt1}
    \fbox{
        \begin{minipage}[t]{0.09\textwidth}
            \centering
            \begin{minipage}[t]{1\textwidth}
                \centering
                $e$:1/1 \\ 
                \includegraphics[width=1\textwidth]{figures/ellip/ellip-ori-seq-1-10.pdf} 
            \end{minipage}
            \begin{minipage}[t][0.55cm][t]{1\textwidth}
                \centering
                \includegraphics[width=1\textwidth]{figures/ellip/gtv-ellip-lamb-1-seq-dend-1-10.pdf}
            \end{minipage}
            \begin{minipage}[t]{1\textwidth}
                \centering
                $e$:1/1 \\ 
                \includegraphics[width=1\textwidth]{figures/ellip/ellip-ori-seq-1-10.pdf} 
            \end{minipage}
            \begin{minipage}[t][0.55cm][t]{1\textwidth}
                \centering
                \includegraphics[width=1\textwidth]{figures/ellip/gtv-ellip-lamb-1-seq-dend-1-10.pdf}
            \end{minipage}
        \end{minipage}
        \begin{minipage}[t]{0.09\textwidth}
            \centering
            \begin{minipage}[t]{1\textwidth}
                \centering
                $e$:1/0.8 \\ 
                \includegraphics[width=1\textwidth]{figures/ellip/ellip-ori-seq-1-8.pdf} 
            \end{minipage}
            \begin{minipage}[t][0.55cm][t]{1\textwidth}
                \centering
                \includegraphics[width=1\textwidth]{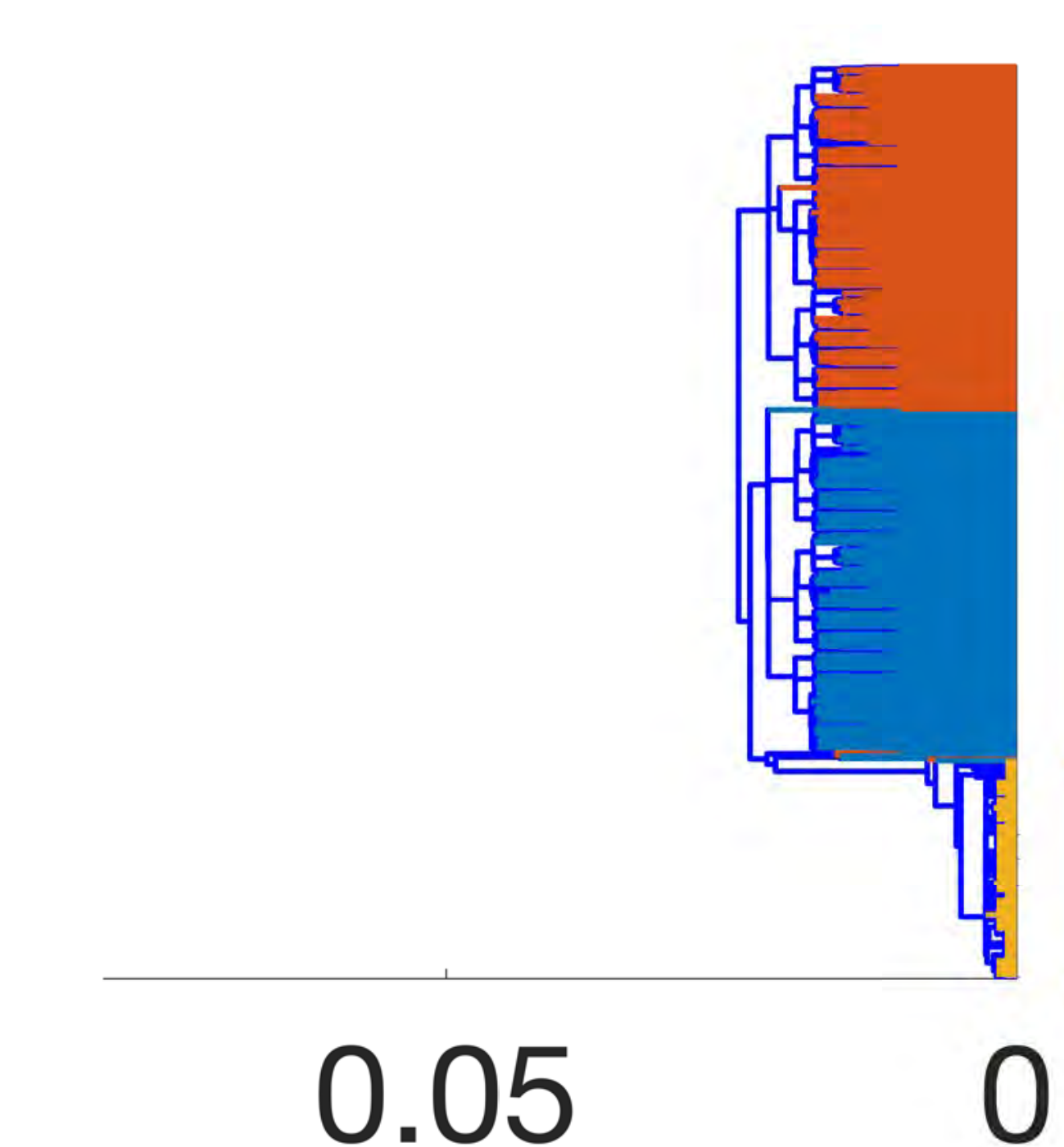}
            \end{minipage}
            \begin{minipage}[t]{1\textwidth}
                \centering
                $e$:0.8/1 \\ 
                \includegraphics[width=1\textwidth]{figures/ellip/ellip-ori-seq-8-1.pdf} 
            \end{minipage}
            \begin{minipage}[t][0.55cm][t]{1\textwidth}
                \centering
                \includegraphics[width=1\textwidth]{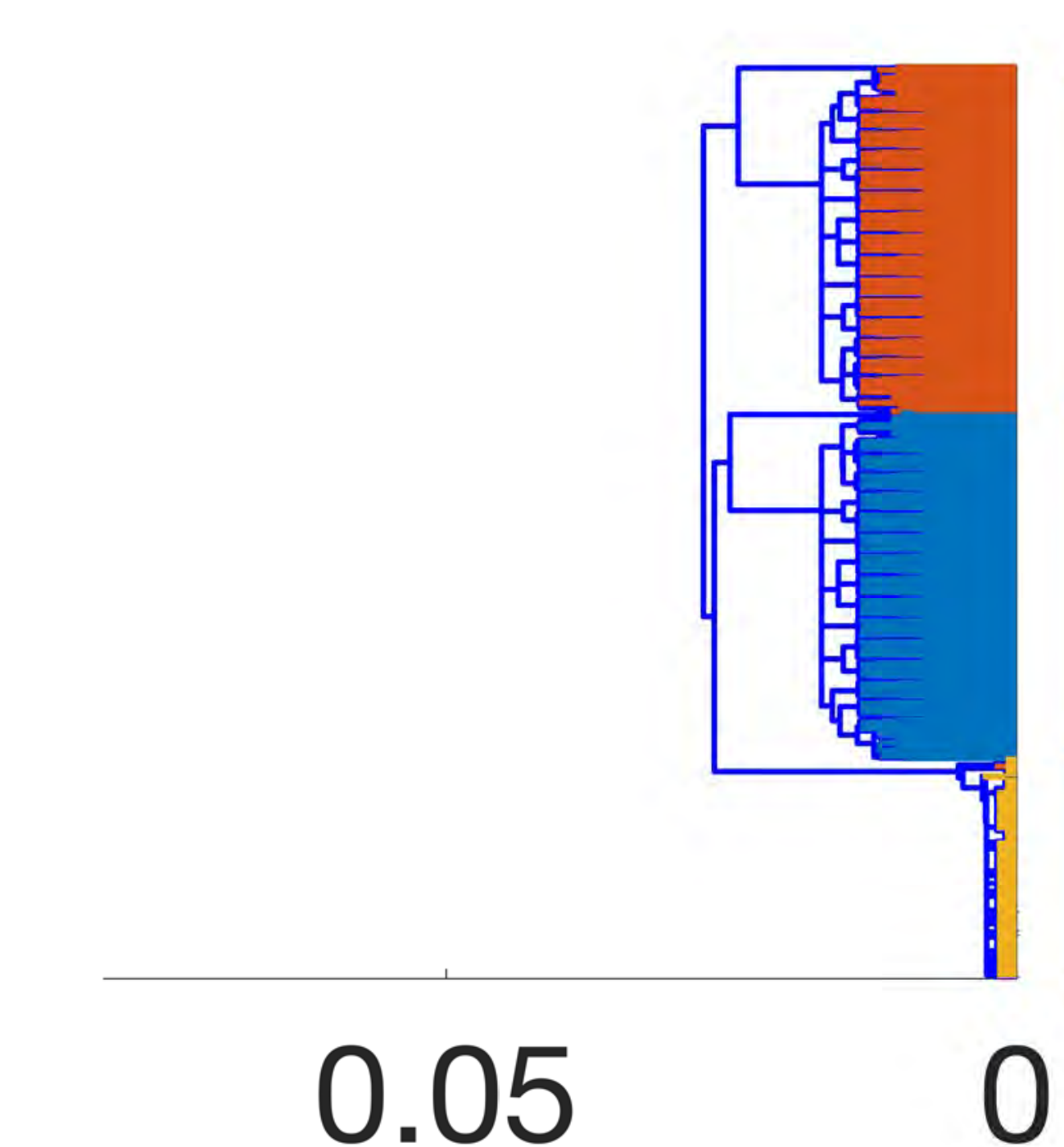}
            \end{minipage}
        \end{minipage}
        \begin{minipage}[t]{0.09\textwidth}
            \centering
            \begin{minipage}[t]{1\textwidth}
                \centering
                $e$:1/0.6 \\ 
                \includegraphics[width=1\textwidth]{figures/ellip/ellip-ori-seq-1-6.pdf} 
            \end{minipage}
            \begin{minipage}[t][0.55cm][t]{1\textwidth}
                \centering
                \includegraphics[width=1\textwidth]{figures/ellip/gtv-ellip-lamb-1-seq-dend-1-6.pdf}
            \end{minipage}
            \begin{minipage}[t]{1\textwidth}
                \centering
                $e$:0.6/1 \\ 
                \includegraphics[width=1\textwidth]{figures/ellip/ellip-ori-seq-6-1.pdf} 
            \end{minipage}
            \begin{minipage}[t][0.55cm][t]{1\textwidth}
                \centering
                \includegraphics[width=1\textwidth]{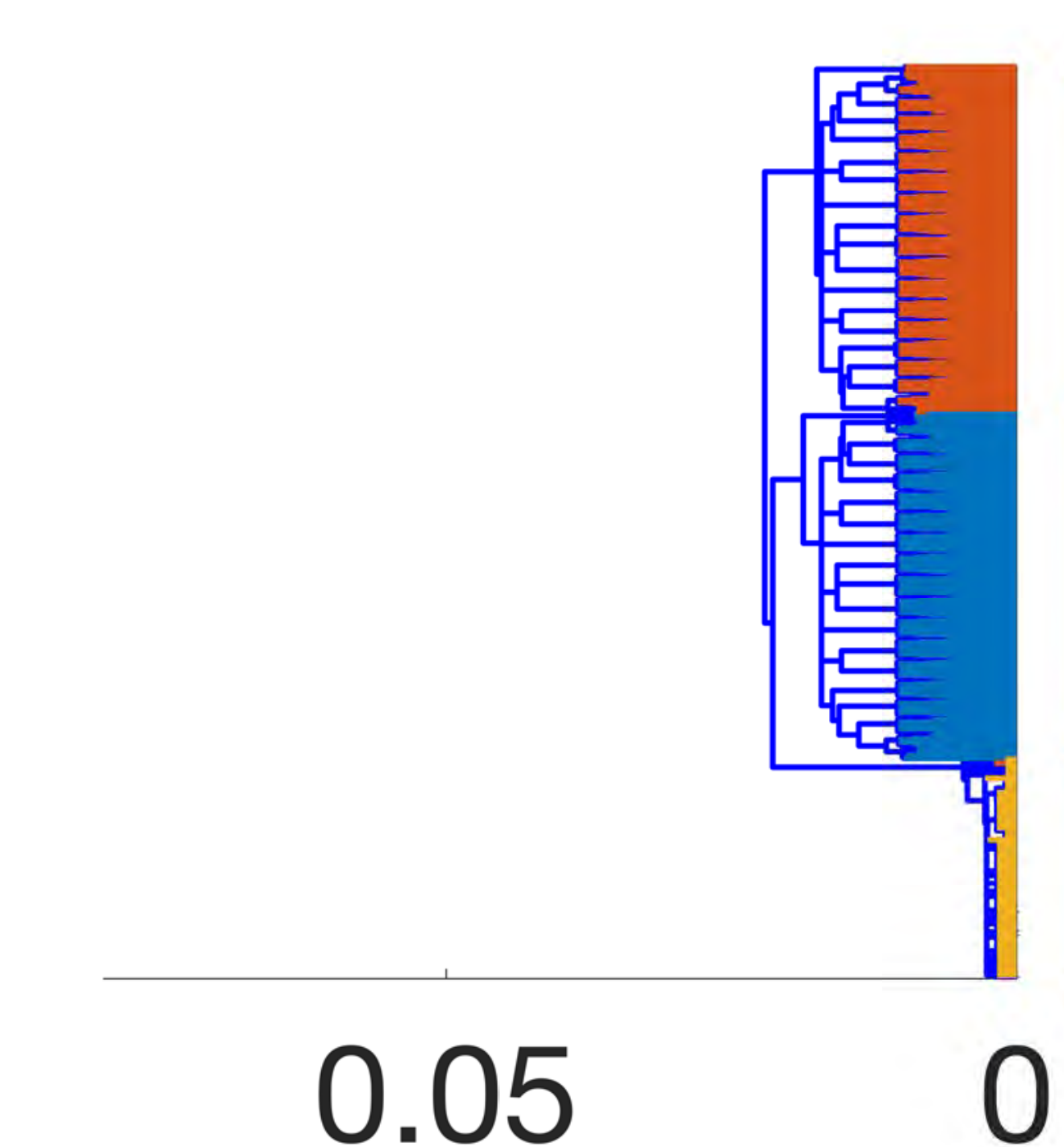}
            \end{minipage}
        \end{minipage}
        \begin{minipage}[t]{0.09\textwidth}
            \centering
            \begin{minipage}[t]{1\textwidth}
                \centering
                $e$:1/0.4 \\ 
                \includegraphics[width=1\textwidth]{figures/ellip/ellip-ori-seq-1-4.pdf} 
            \end{minipage}
            \begin{minipage}[t][0.55cm][t]{1\textwidth}
                \centering
                \includegraphics[width=1\textwidth]{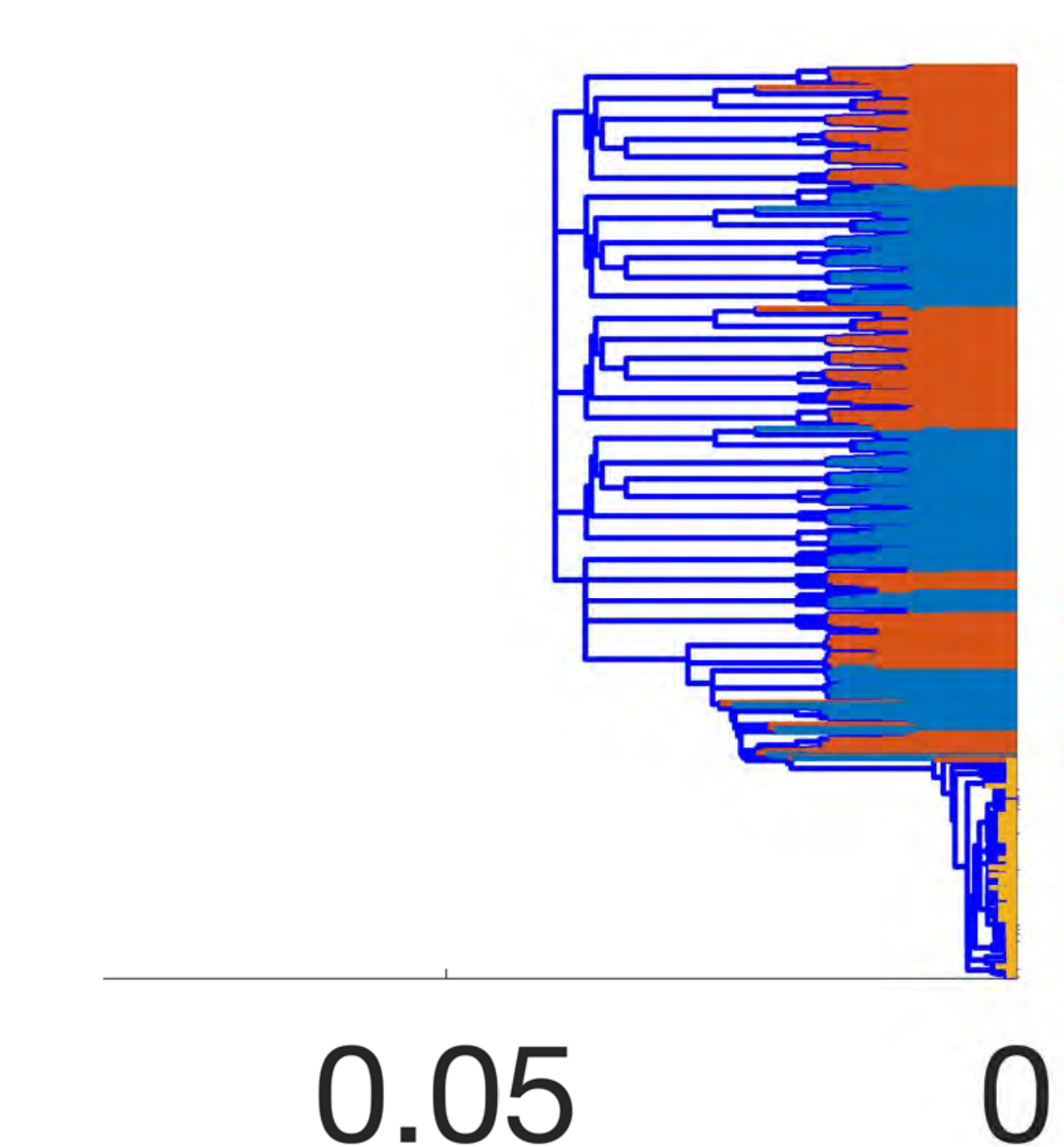}
            \end{minipage}
            \begin{minipage}[t]{1\textwidth}
                \centering
                $e$:0.4/1 \\ 
                \includegraphics[width=1\textwidth]{figures/ellip/ellip-ori-seq-4-1.pdf} 
            \end{minipage}
            \begin{minipage}[t][0.55cm][t]{1\textwidth}
                \centering
                \includegraphics[width=1\textwidth]{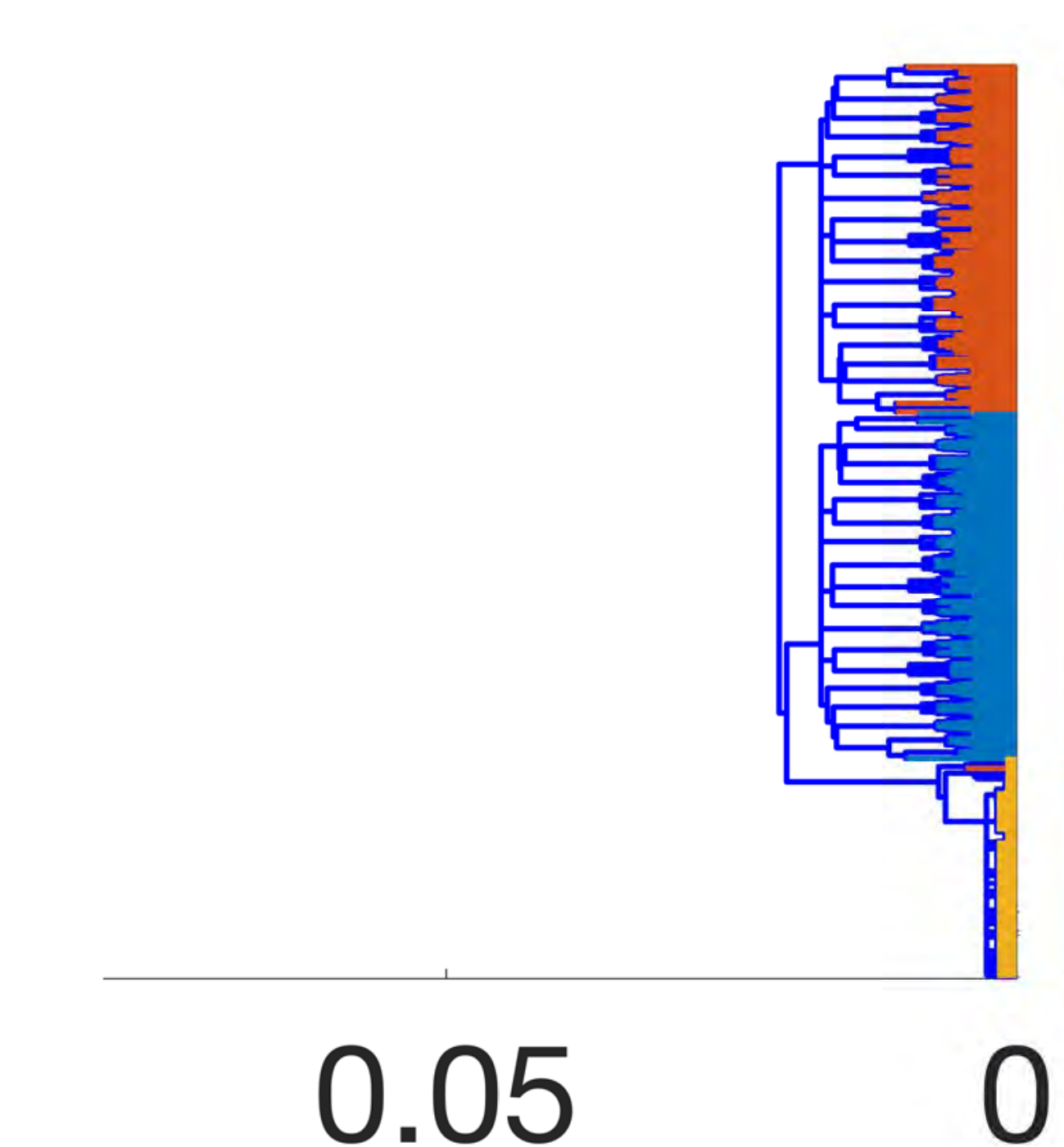}
            \end{minipage}
        \end{minipage}
        \begin{minipage}[t]{0.09\textwidth}
            \centering
            \begin{minipage}[t]{1\textwidth}
                \centering
                $e$:1/0.2 \\ 
                \includegraphics[width=1\textwidth]{figures/ellip/ellip-ori-seq-1-2.pdf} 
            \end{minipage}
            \begin{minipage}[t][0.55cm][t]{1\textwidth}
                \centering
                \includegraphics[width=1\textwidth]{figures/ellip/gtv-ellip-lamb-1-seq-dend-1-2.pdf}
            \end{minipage}
            \begin{minipage}[t]{1\textwidth}
                \centering
                $e$:0.2/1 \\ 
                \includegraphics[width=1\textwidth]{figures/ellip/ellip-ori-seq-2-1.pdf} 
            \end{minipage}
            \begin{minipage}[t][0.55cm][t]{1\textwidth}
                \centering
                \includegraphics[width=1\textwidth]{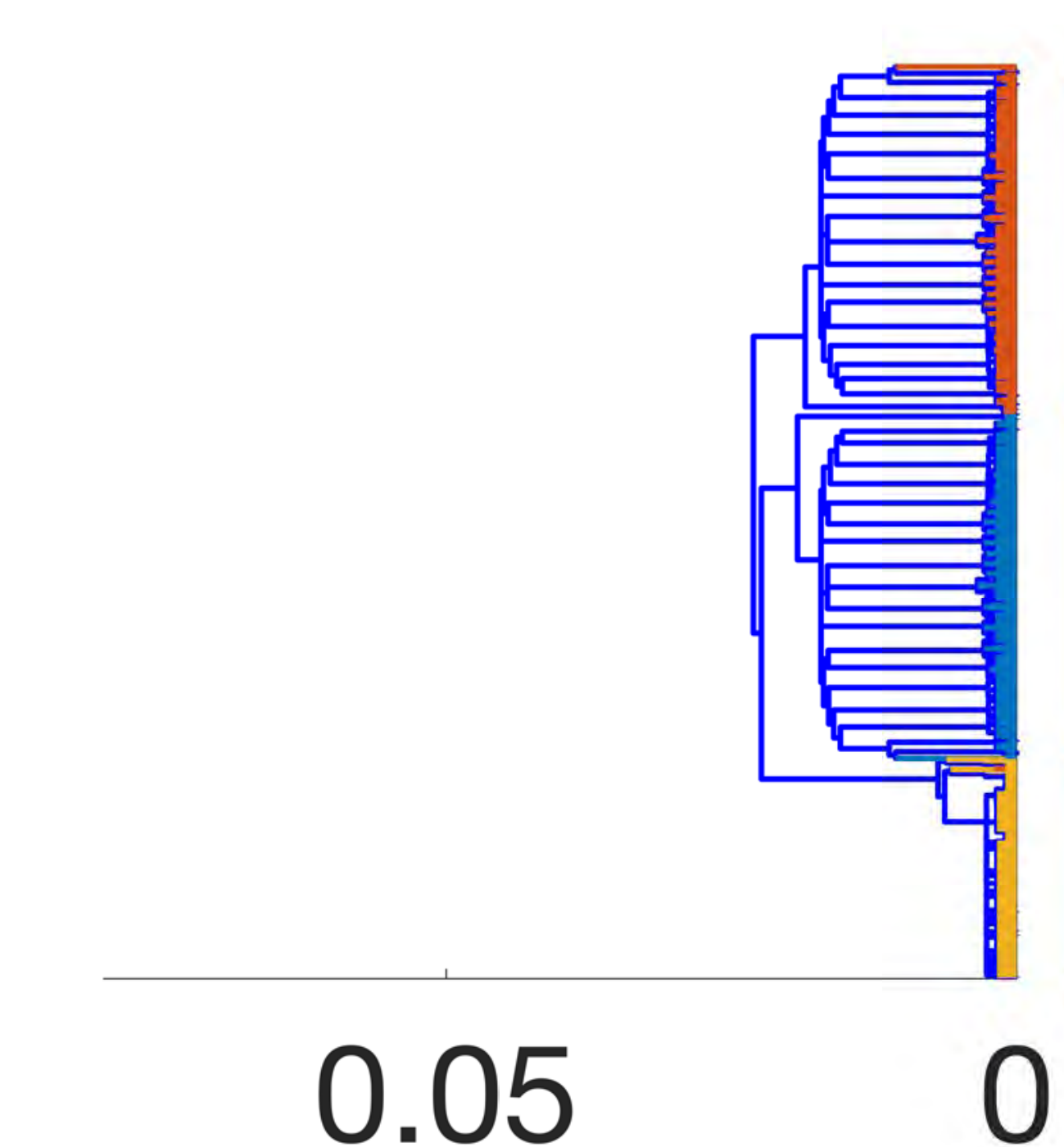}
            \end{minipage}
        \end{minipage}
    } }
    \subfloat[GT-$\lambda$-5]{
    \label{fig:ballchains-gt5}
    \fbox{
        \begin{minipage}[t]{0.09\textwidth}
            \centering
            \begin{minipage}[t]{1\textwidth}
                \centering
                $e$:1/1 \\ 
                \includegraphics[width=1\textwidth]{figures/ellip/ellip-ori-seq-1-10.pdf} 
            \end{minipage}
            \begin{minipage}[t][0.55cm][t]{1\textwidth}
                \centering
                \includegraphics[width=1\textwidth]{figures/ellip/gtv-ellip-lamb-5-seq-dend-1-10.pdf}
            \end{minipage}
            \begin{minipage}[t]{1\textwidth}
                \centering
                $e$:1/1 \\ 
                \includegraphics[width=1\textwidth]{figures/ellip/ellip-ori-seq-1-10.pdf} 
            \end{minipage}
            \begin{minipage}[t][0.55cm][t]{1\textwidth}
                \centering
                \includegraphics[width=1\textwidth]{figures/ellip/gtv-ellip-lamb-5-seq-dend-1-10.pdf}
            \end{minipage}
        \end{minipage}
        \begin{minipage}[t]{0.09\textwidth}
            \centering
            \begin{minipage}[t]{1\textwidth}
                \centering
                $e$:1/0.8 \\ 
                \includegraphics[width=1\textwidth]{figures/ellip/ellip-ori-seq-1-8.pdf} 
            \end{minipage}
            \begin{minipage}[t][0.55cm][t]{1\textwidth}
                \centering
                \includegraphics[width=1\textwidth]{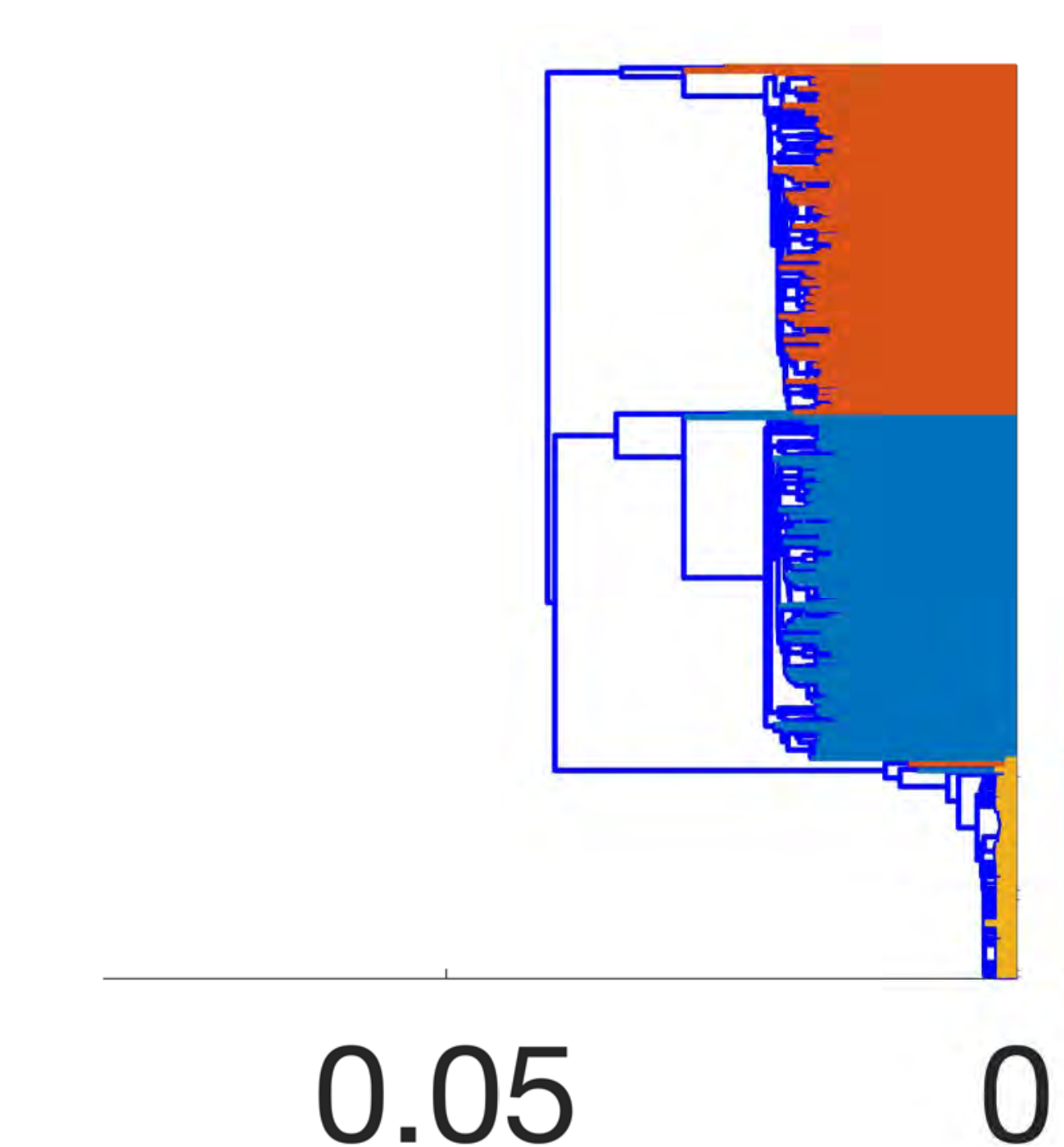}
            \end{minipage}
            \begin{minipage}[t]{1\textwidth}
                \centering
                $e$:0.8/1 \\ 
                \includegraphics[width=1\textwidth]{figures/ellip/ellip-ori-seq-8-1.pdf} 
            \end{minipage}
            \begin{minipage}[t][0.55cm][t]{1\textwidth}
                \centering
                \includegraphics[width=1\textwidth]{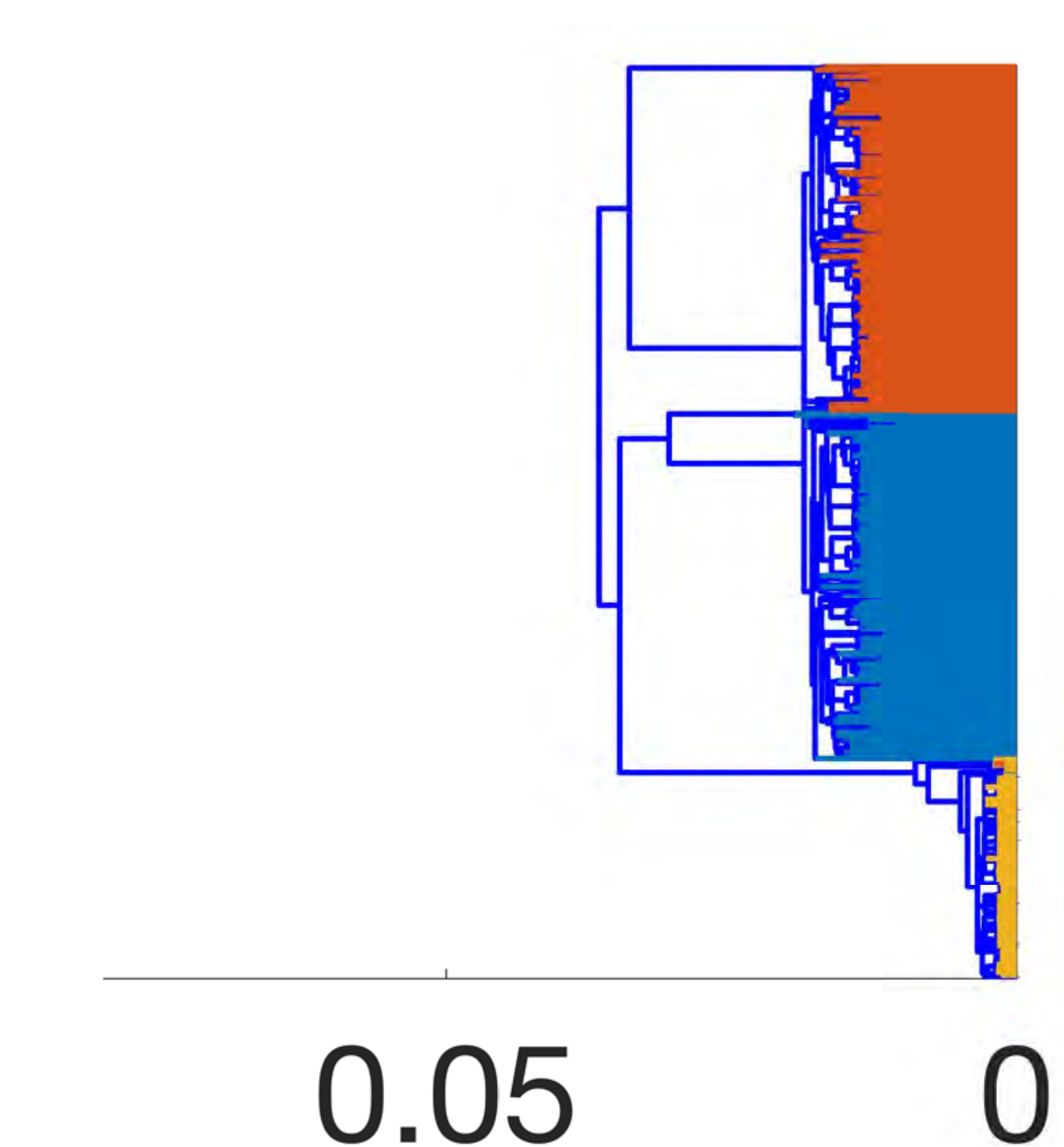}
            \end{minipage}
        \end{minipage}
        \begin{minipage}[t]{0.09\textwidth}
            \centering
            \begin{minipage}[t]{1\textwidth}
                \centering
                $e$:1/0.6 \\ 
                \includegraphics[width=1\textwidth]{figures/ellip/ellip-ori-seq-1-6.pdf} 
            \end{minipage}
            \begin{minipage}[t][0.55cm][t]{1\textwidth}
                \centering
                \includegraphics[width=1\textwidth]{figures/ellip/gtv-ellip-lamb-5-seq-dend-1-6.pdf}
            \end{minipage}
            \begin{minipage}[t]{1\textwidth}
                \centering
                $e$:0.6/1 \\ 
                \includegraphics[width=1\textwidth]{figures/ellip/ellip-ori-seq-6-1.pdf} 
            \end{minipage}
            \begin{minipage}[t][0.55cm][t]{1\textwidth}
                \centering
                \includegraphics[width=1\textwidth]{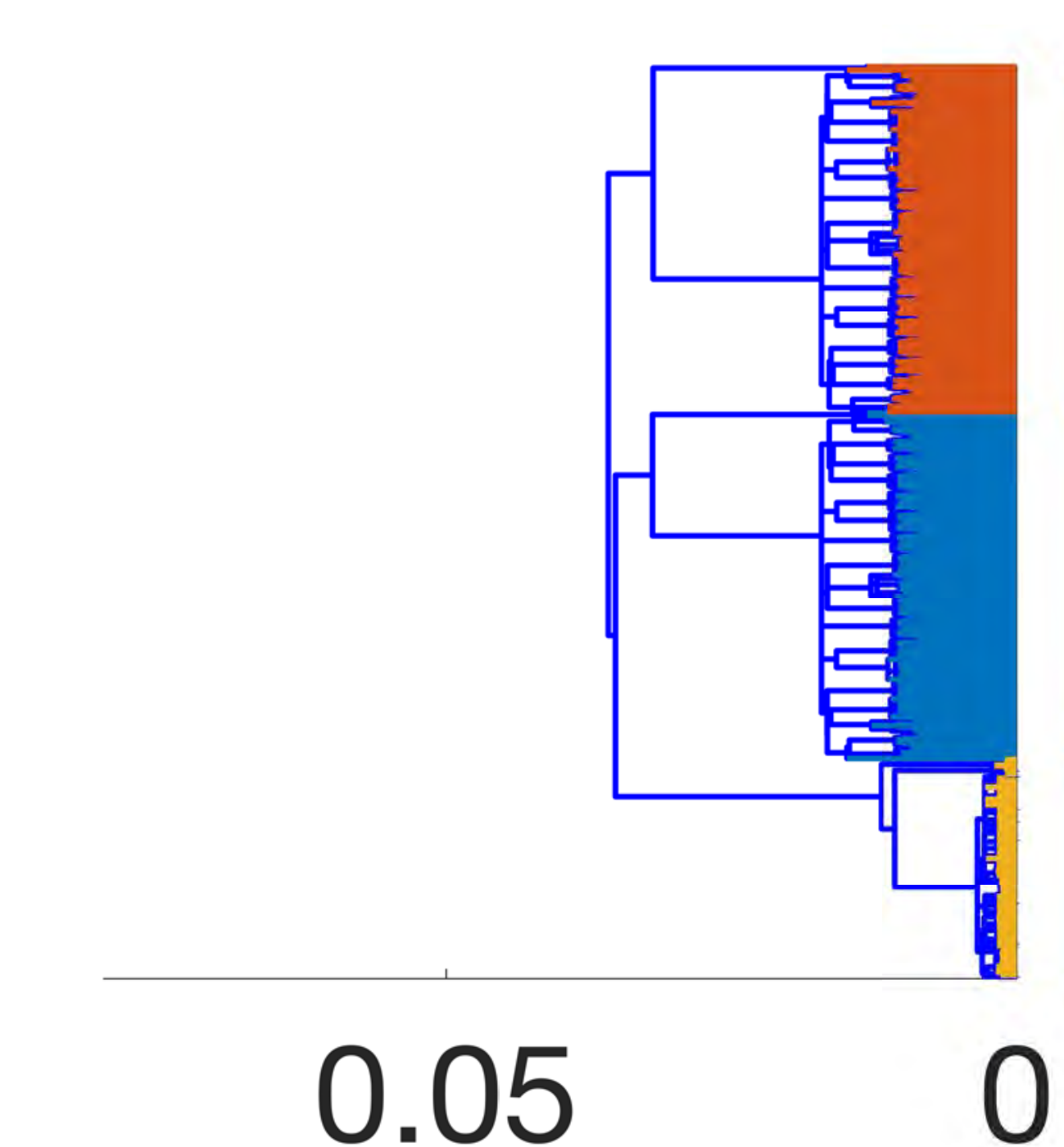}
            \end{minipage}
        \end{minipage}
        \begin{minipage}[t]{0.09\textwidth}
            \centering
            \begin{minipage}[t]{1\textwidth}
                \centering
                $e$:1/0.4 \\ 
                \includegraphics[width=1\textwidth]{figures/ellip/ellip-ori-seq-1-4.pdf} 
            \end{minipage}
            \begin{minipage}[t][0.55cm][t]{1\textwidth}
                \centering
                \includegraphics[width=1\textwidth]{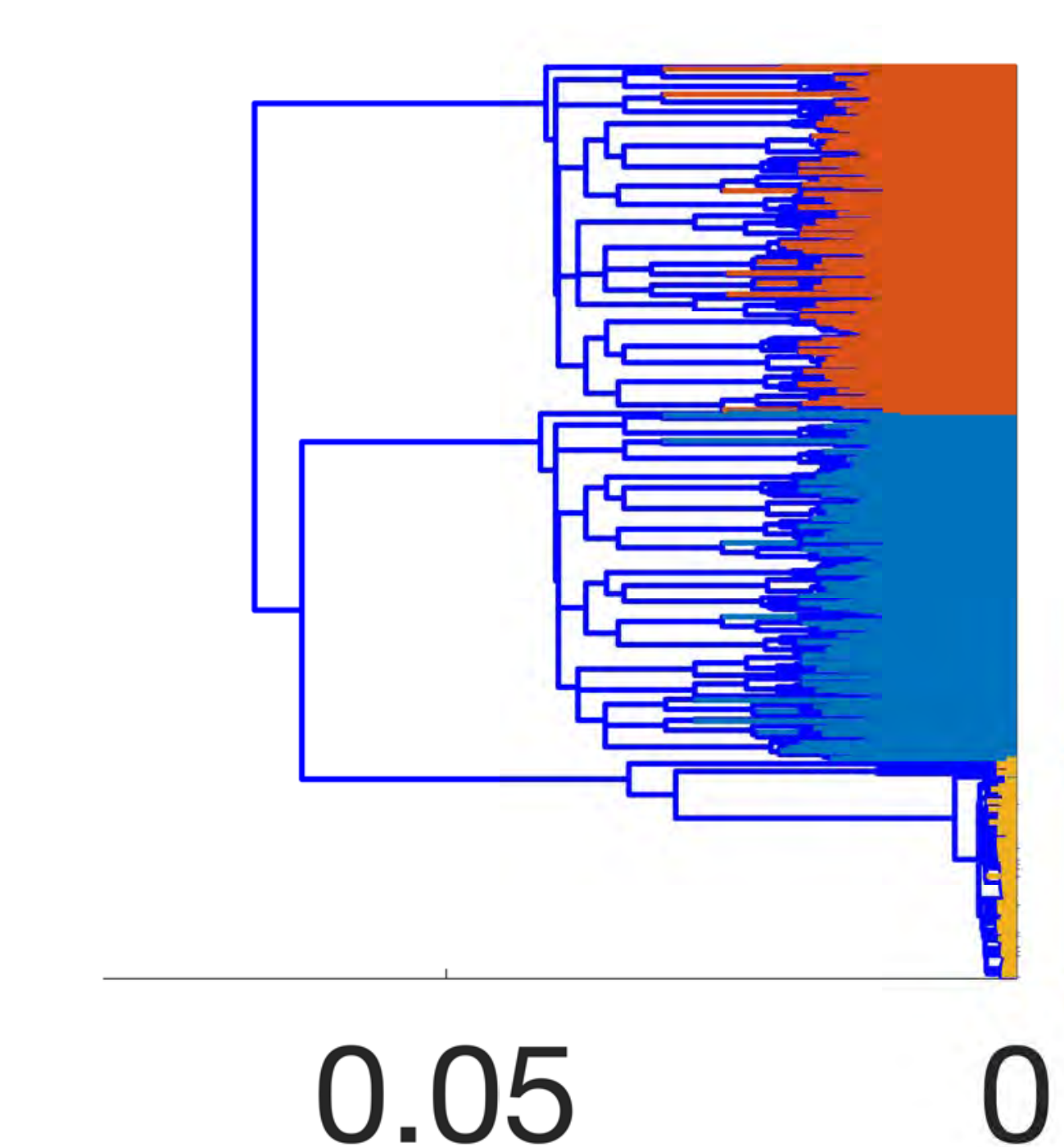}
            \end{minipage}
            \begin{minipage}[t]{1\textwidth}
                \centering
                $e$:0.4/1 \\ 
                \includegraphics[width=1\textwidth]{figures/ellip/ellip-ori-seq-4-1.pdf} 
            \end{minipage}
            \begin{minipage}[t][0.55cm][t]{1\textwidth}
                \centering
                \includegraphics[width=1\textwidth]{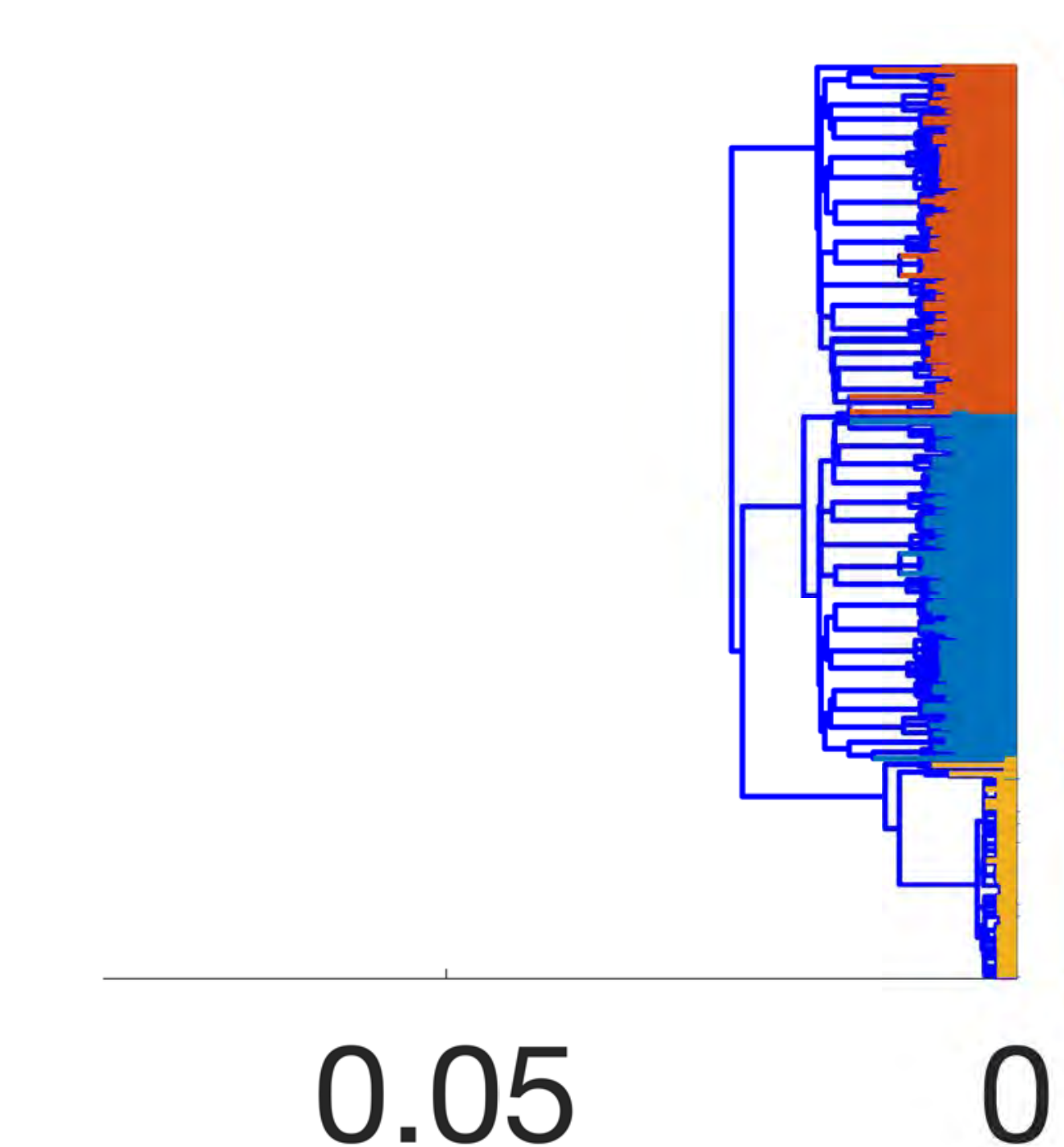}
            \end{minipage}
        \end{minipage}
        \begin{minipage}[t]{0.09\textwidth}
            \centering
            \begin{minipage}[t]{1\textwidth}
                \centering
                $e$:1/0.2 \\ 
                \includegraphics[width=1\textwidth]{figures/ellip/ellip-ori-seq-1-2.pdf} 
            \end{minipage}
            \begin{minipage}[t][0.55cm][t]{1\textwidth}
                \centering
                \includegraphics[width=1\textwidth]{figures/ellip/gtv-ellip-lamb-5-seq-dend-1-2.pdf}
            \end{minipage}
            \begin{minipage}[t]{1\textwidth}
                \centering
                $e$:0.2/1 \\ 
                \includegraphics[width=1\textwidth]{figures/ellip/ellip-ori-seq-2-1.pdf} 
            \end{minipage}
            \begin{minipage}[t][0.55cm][t]{1\textwidth}
                \centering
                \includegraphics[width=1\textwidth]{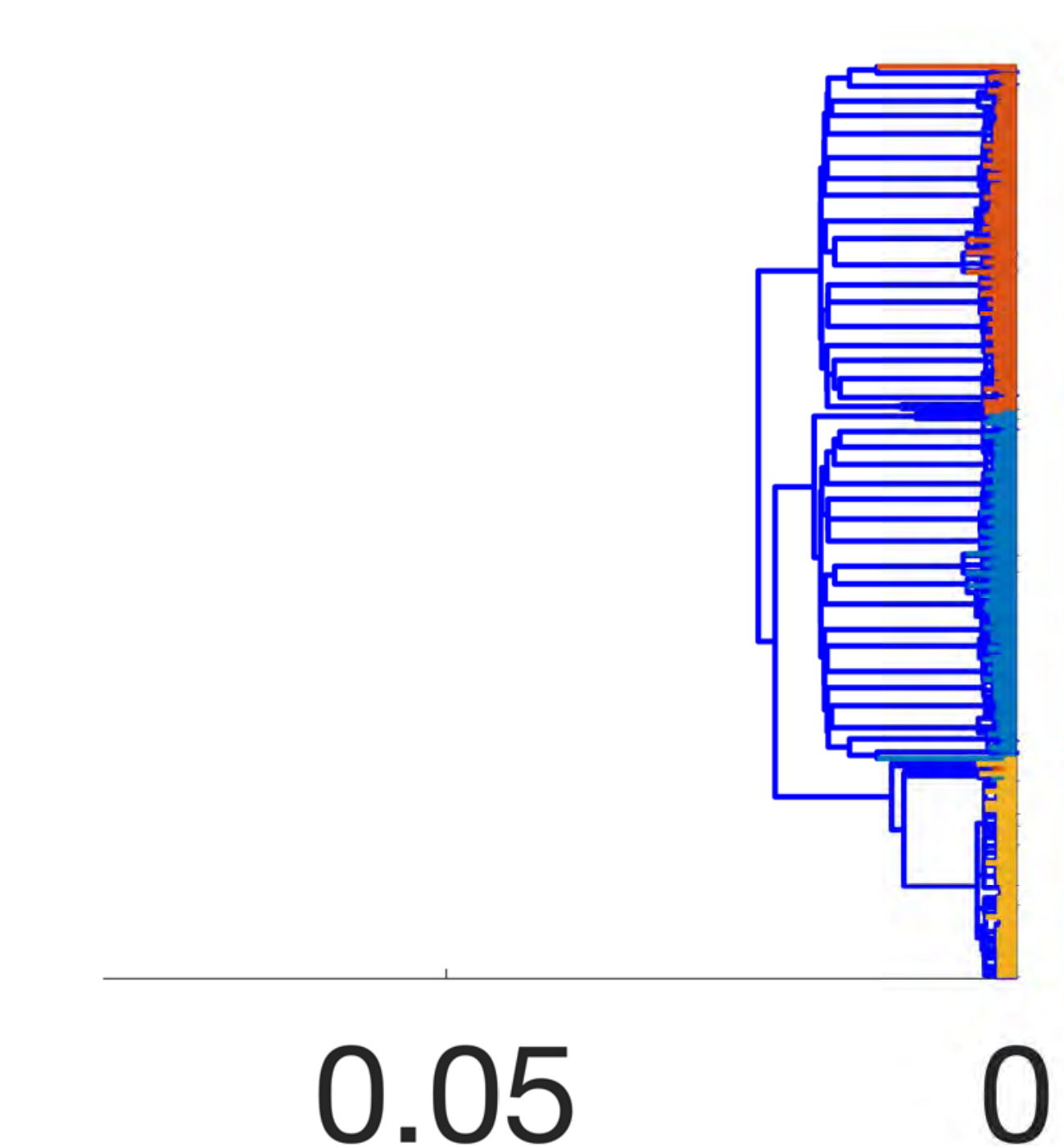}
            \end{minipage}
        \end{minipage}
    } }
    
    \subfloat[WT2]{
    \fbox{
        \begin{minipage}[t]{0.09\textwidth}
            \centering
            \begin{minipage}[t]{1\textwidth}
                \centering
                $e$:1/1 \\ 
                \includegraphics[width=1\textwidth]{figures/ellip/ellip-ori-seq-1-10.pdf} 
            \end{minipage}
            \begin{minipage}[t][0.55cm][t]{1\textwidth}
                \centering
                \includegraphics[width=1\textwidth]{figures/ellip/wt2-fix-emd-ellip-seq-dend-1-10.pdf}
            \end{minipage}
            \begin{minipage}[t]{1\textwidth}
                \centering
                $e$:1/1 \\ 
                \includegraphics[width=1\textwidth]{figures/ellip/ellip-ori-seq-1-10.pdf} 
            \end{minipage}
            \begin{minipage}[t][0.55cm][t]{1\textwidth}
                \centering
                \includegraphics[width=1\textwidth]{figures/ellip/wt2-fix-emd-ellip-seq-dend-1-10.pdf}
            \end{minipage}
        \end{minipage}
        \begin{minipage}[t]{0.09\textwidth}
            \centering
            \begin{minipage}[t]{1\textwidth}
                \centering
                $e$:1/0.8 \\ 
                \includegraphics[width=1\textwidth]{figures/ellip/ellip-ori-seq-1-8.pdf} 
            \end{minipage}
            \begin{minipage}[t][0.55cm][t]{1\textwidth}
                \centering
                \includegraphics[width=1\textwidth]{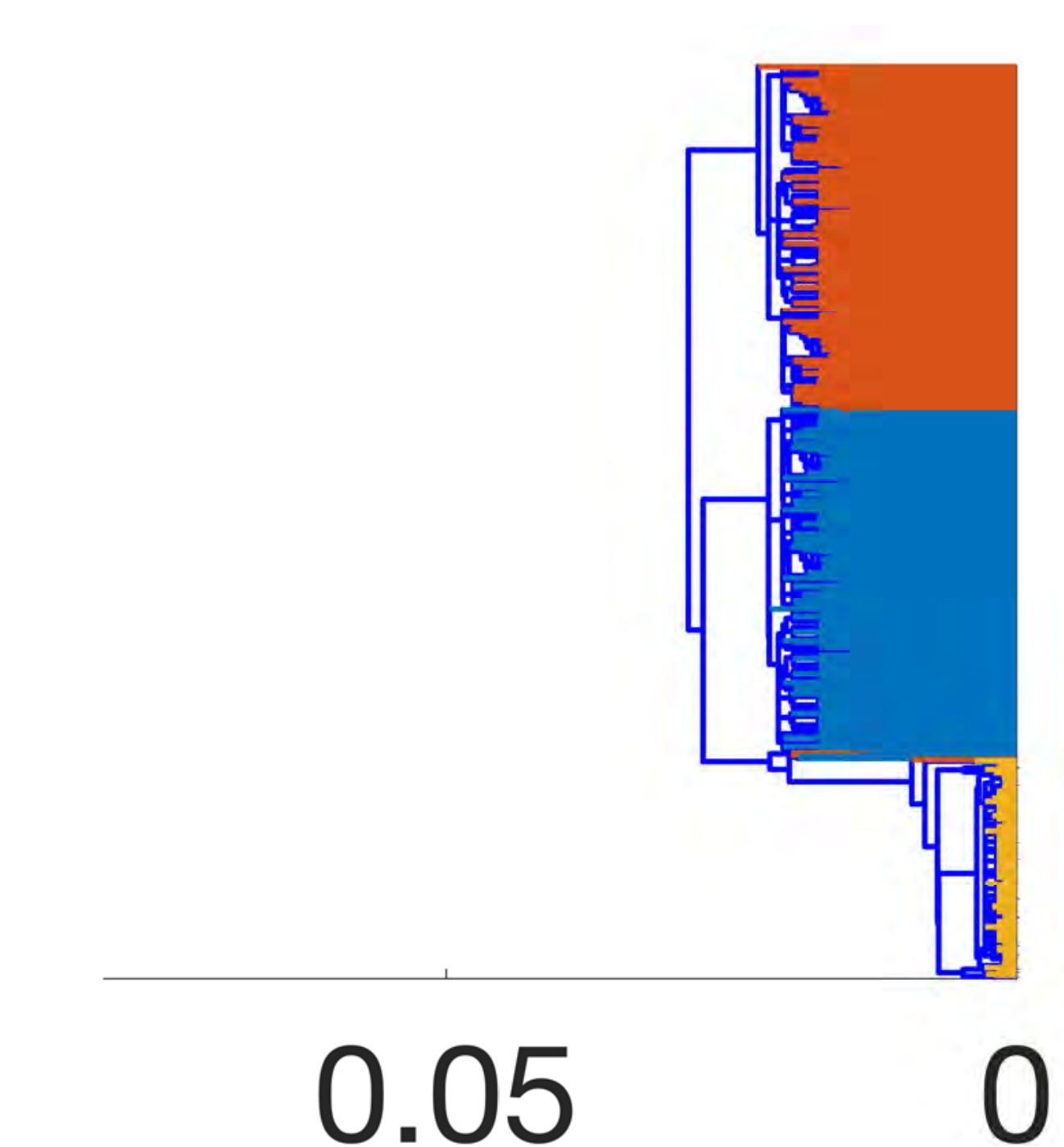}
            \end{minipage}
            \begin{minipage}[t]{1\textwidth}
                \centering
                $e$:0.8/1 \\ 
                \includegraphics[width=1\textwidth]{figures/ellip/ellip-ori-seq-8-1.pdf} 
            \end{minipage}
            \begin{minipage}[t][0.55cm][t]{1\textwidth}
                \centering
                \includegraphics[width=1\textwidth]{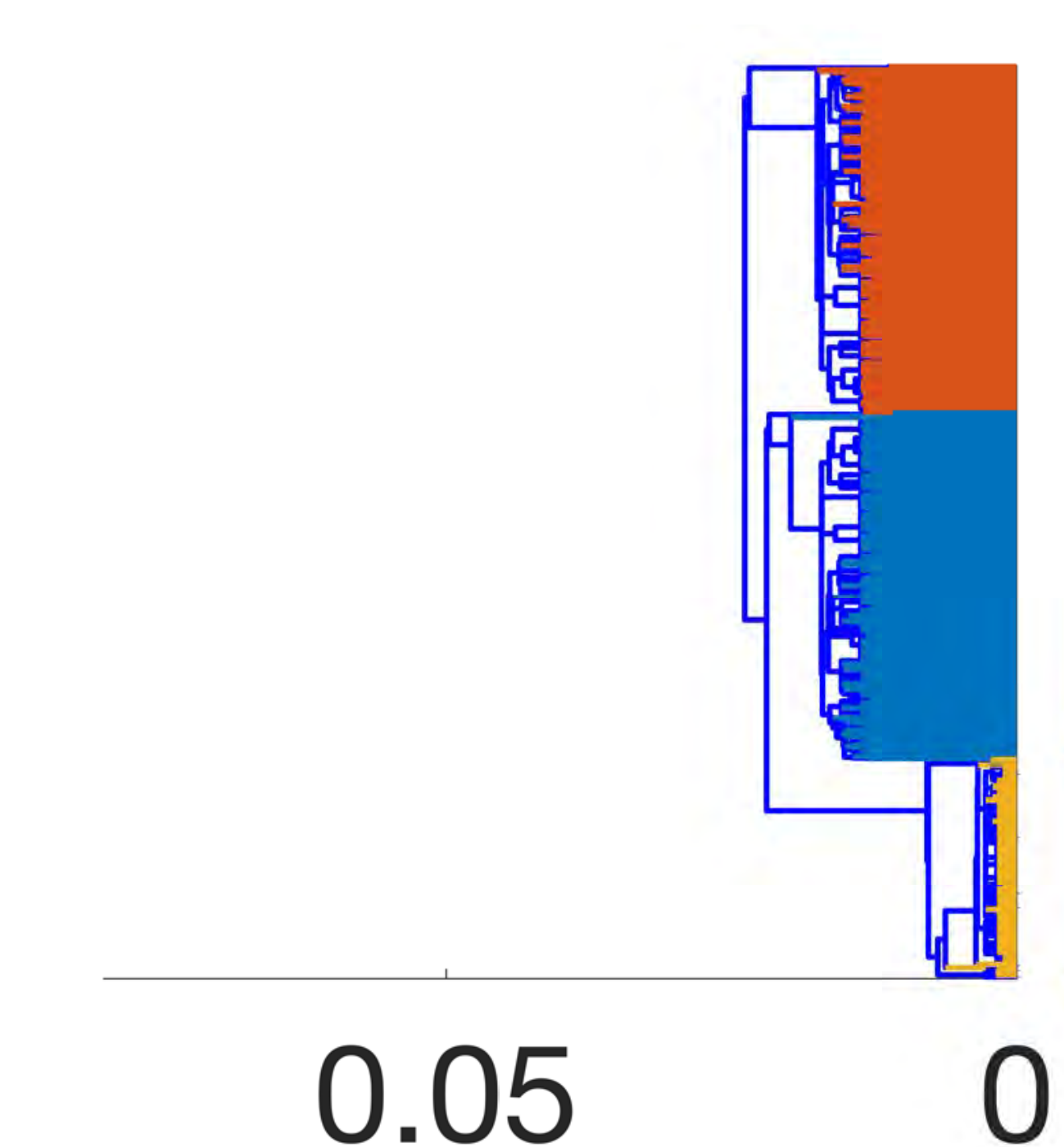}
            \end{minipage}
        \end{minipage}
        \begin{minipage}[t]{0.09\textwidth}
            \centering
            \begin{minipage}[t]{1\textwidth}
                \centering
                $e$:1/0.6 \\ 
                \includegraphics[width=1\textwidth]{figures/ellip/ellip-ori-seq-1-6.pdf} 
            \end{minipage}
            \begin{minipage}[t][0.55cm][t]{1\textwidth}
                \centering
                \includegraphics[width=1\textwidth]{figures/ellip/wt2-fix-emd-ellip-seq-dend-1-6.pdf}
            \end{minipage}
            \begin{minipage}[t]{1\textwidth}
                \centering
                $e$:0.6/1 \\ 
                \includegraphics[width=1\textwidth]{figures/ellip/ellip-ori-seq-6-1.pdf} 
            \end{minipage}
            \begin{minipage}[t][0.55cm][t]{1\textwidth}
                \centering
                \includegraphics[width=1\textwidth]{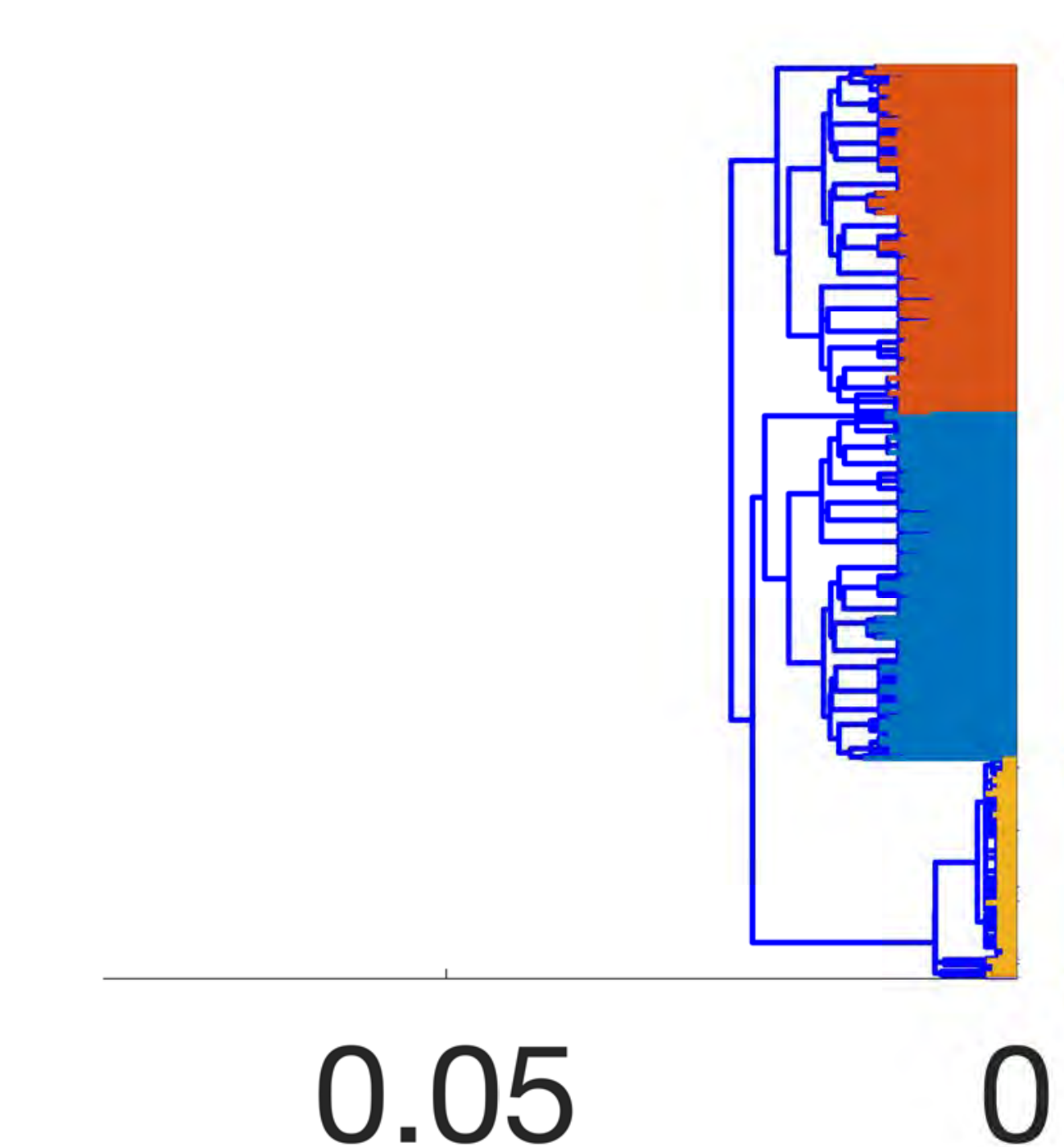}
            \end{minipage}
        \end{minipage}
        \begin{minipage}[t]{0.09\textwidth}
            \centering
            \begin{minipage}[t]{1\textwidth}
                \centering
                $e$:1/0.4 \\ 
                \includegraphics[width=1\textwidth]{figures/ellip/ellip-ori-seq-1-4.pdf} 
            \end{minipage}
            \begin{minipage}[t][0.55cm][t]{1\textwidth}
                \centering
                \includegraphics[width=1\textwidth]{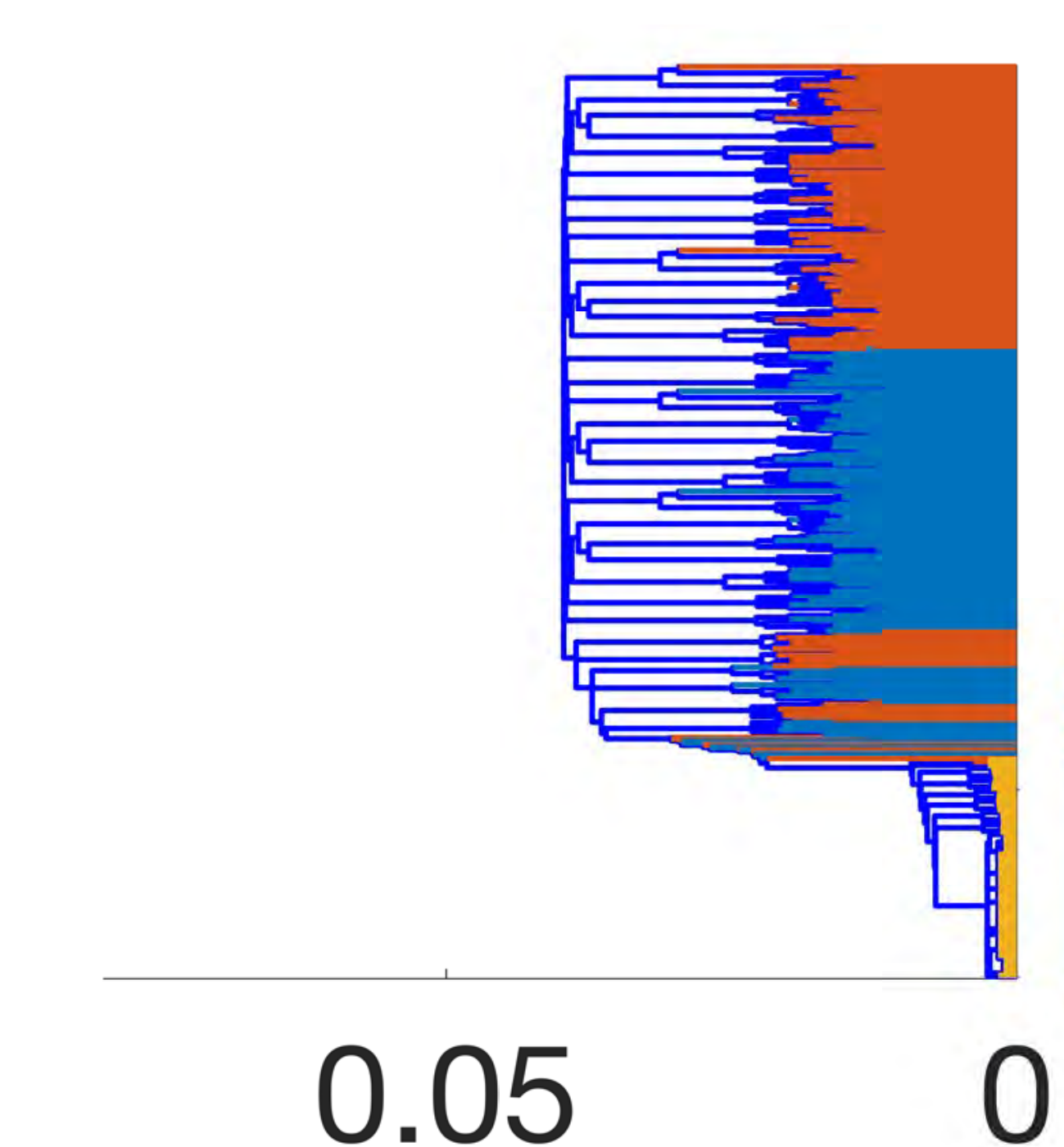}
            \end{minipage}
            \begin{minipage}[t]{1\textwidth}
                \centering
                $e$:0.4/1 \\ 
                \includegraphics[width=1\textwidth]{figures/ellip/ellip-ori-seq-4-1.pdf} 
            \end{minipage}
            \begin{minipage}[t][0.55cm][t]{1\textwidth}
                \centering
                \includegraphics[width=1\textwidth]{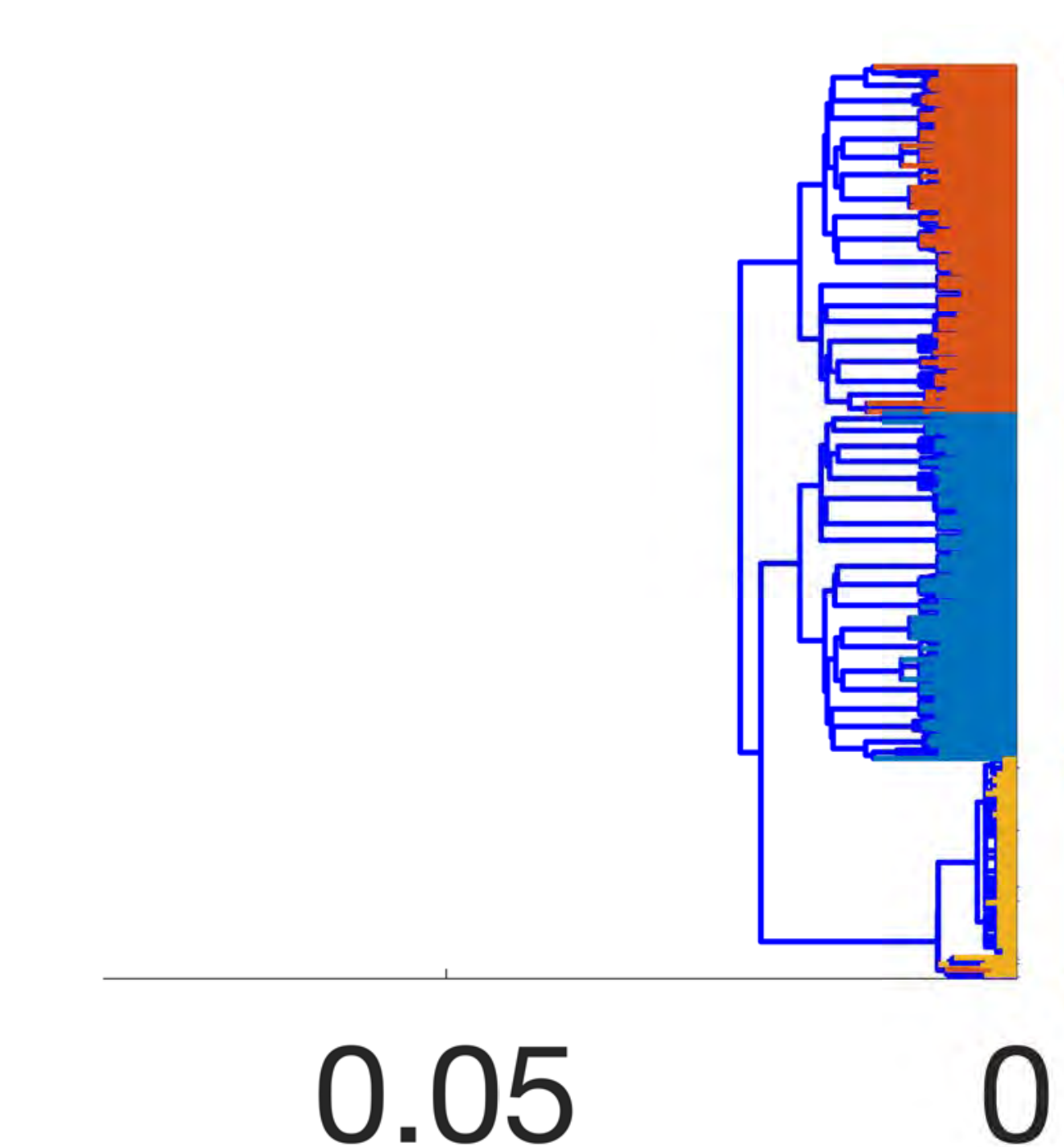}
            \end{minipage}
        \end{minipage}
        \begin{minipage}[t]{0.09\textwidth}
            \centering
            \begin{minipage}[t]{1\textwidth}
                \centering
                $e$:1/0.2 \\ 
                \includegraphics[width=1\textwidth]{figures/ellip/ellip-ori-seq-1-2.pdf} 
            \end{minipage}
            \begin{minipage}[t][0.55cm][t]{1\textwidth}
                \centering
                \includegraphics[width=1\textwidth]{figures/ellip/wt2-fix-emd-ellip-seq-dend-1-2.pdf}
            \end{minipage}
            \begin{minipage}[t]{1\textwidth}
                \centering
                $e$:0.2/1 \\ 
                \includegraphics[width=1\textwidth]{figures/ellip/ellip-ori-seq-2-1.pdf} 
            \end{minipage}
            \begin{minipage}[t][0.55cm][t]{1\textwidth}
                \centering
                \includegraphics[width=1\textwidth]{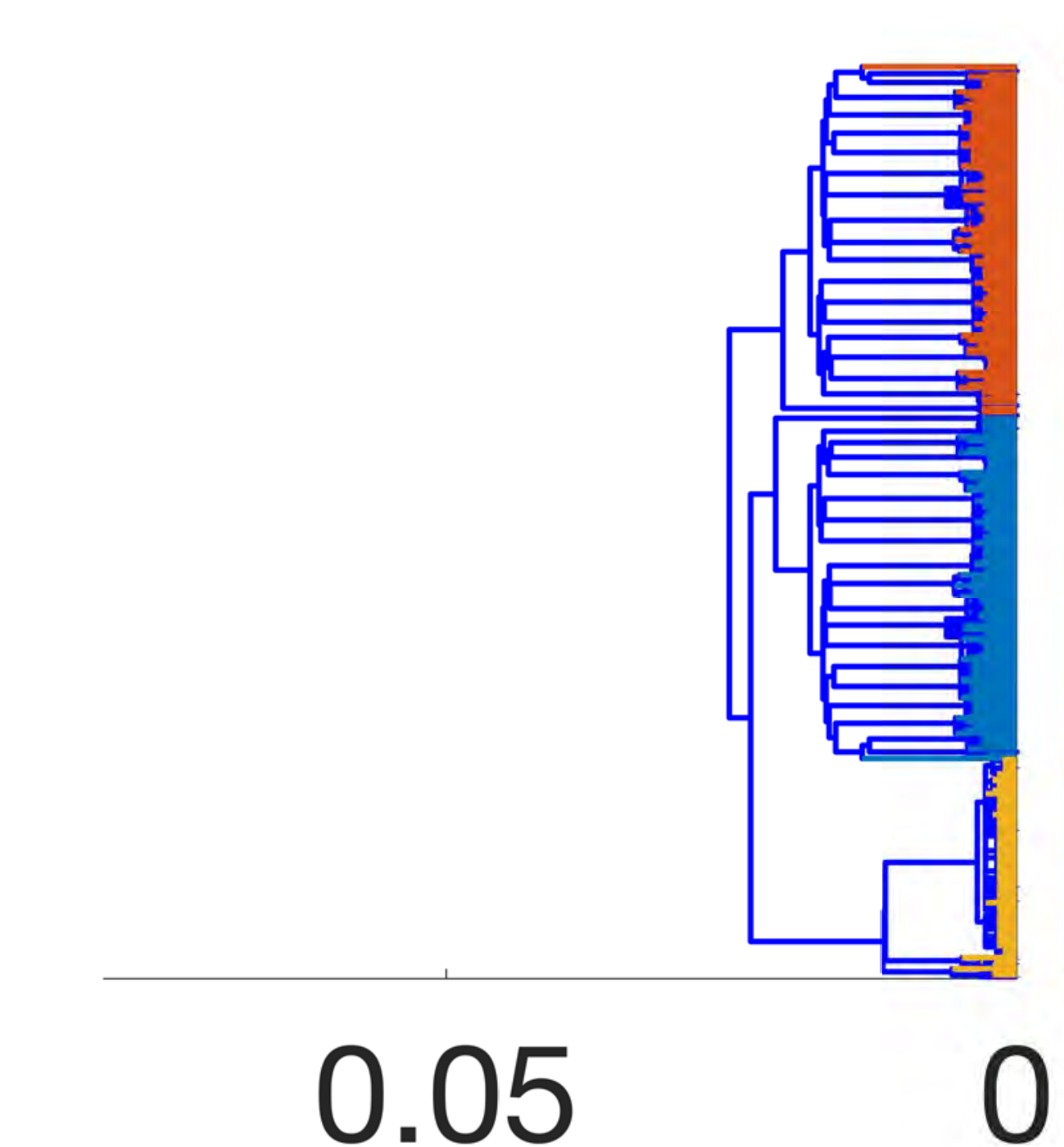}
            \end{minipage}
        \end{minipage}
    } }
    \subfloat[WT1]{
    \fbox{
        \begin{minipage}[t]{0.09\textwidth}
            \centering
            \begin{minipage}[t]{1\textwidth}
                \centering
                $e$:1/1 \\ 
                \includegraphics[width=1\textwidth]{figures/ellip/ellip-ori-seq-1-10.pdf} 
            \end{minipage}
            \begin{minipage}[t][0.55cm][t]{1\textwidth}
                \centering
                \includegraphics[width=1\textwidth]{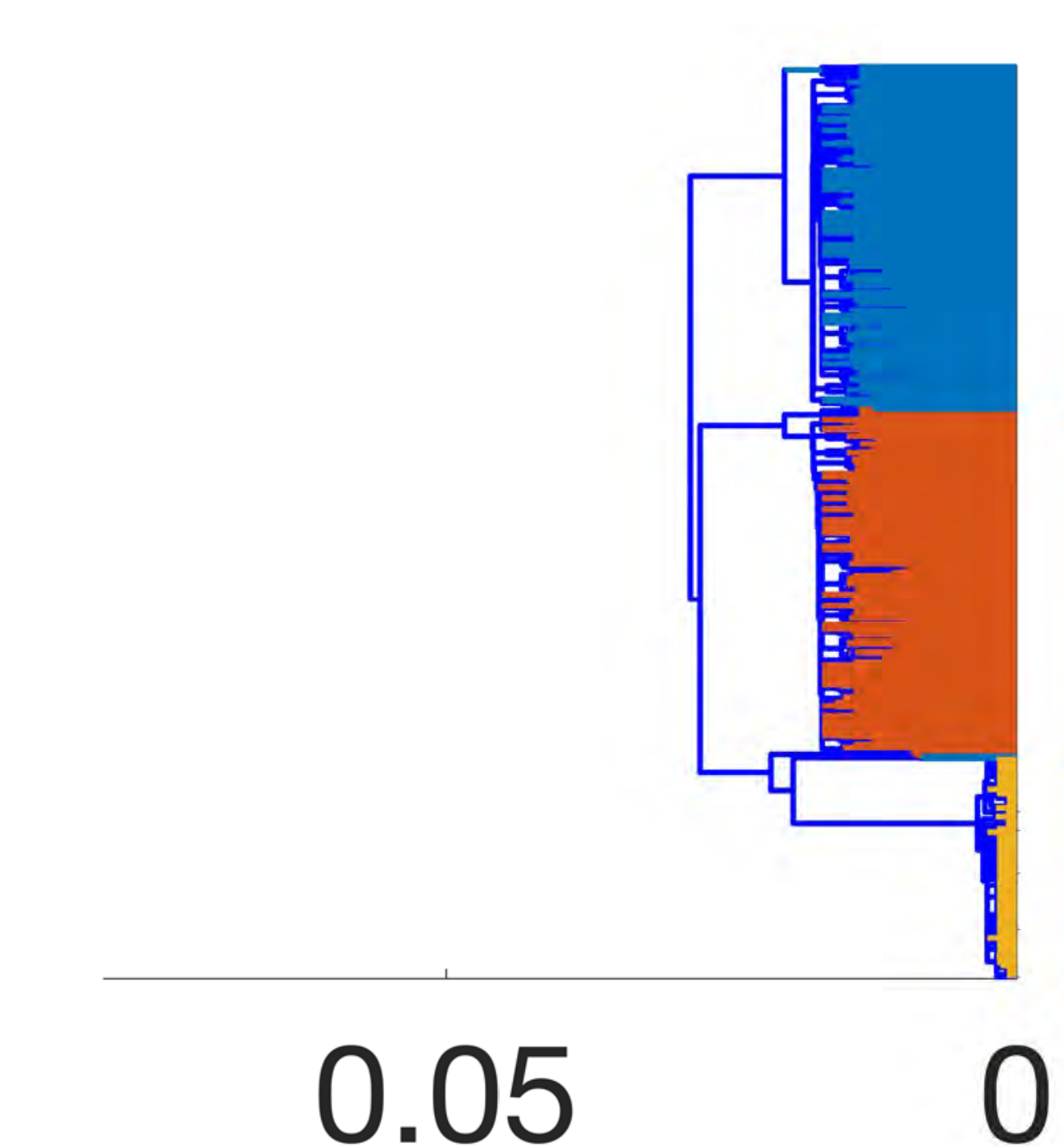}
            \end{minipage}
            \begin{minipage}[t]{1\textwidth}
                \centering
                $e$:1/1 \\ 
                \includegraphics[width=1\textwidth]{figures/ellip/ellip-ori-seq-1-10.pdf} 
            \end{minipage}
            \begin{minipage}[t][0.55cm][t]{1\textwidth}
                \centering
                \includegraphics[width=1\textwidth]{figures/ellip/wt1-fix-emd-ellip-seq-dend-1-10.pdf}
            \end{minipage}
        \end{minipage}
        \begin{minipage}[t]{0.09\textwidth}
            \centering
            \begin{minipage}[t]{1\textwidth}
                \centering
                $e$:1/0.8 \\ 
                \includegraphics[width=1\textwidth]{figures/ellip/ellip-ori-seq-1-8.pdf} 
            \end{minipage}
            \begin{minipage}[t][0.55cm][t]{1\textwidth}
                \centering
                \includegraphics[width=1\textwidth]{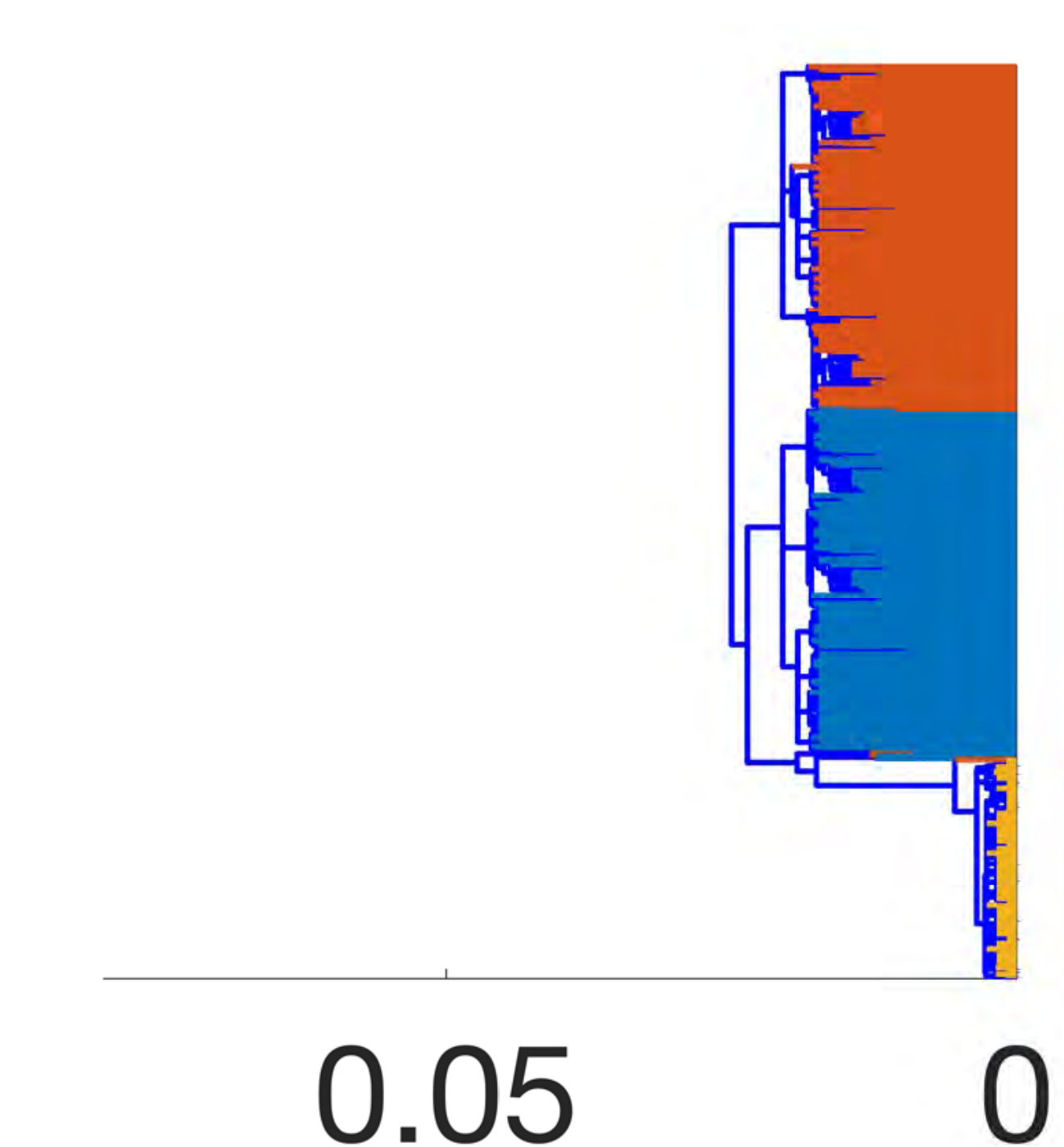}
            \end{minipage}
            \begin{minipage}[t]{1\textwidth}
                \centering
                $e$:0.8/1 \\ 
                \includegraphics[width=1\textwidth]{figures/ellip/ellip-ori-seq-8-1.pdf} 
            \end{minipage}
            \begin{minipage}[t][0.55cm][t]{1\textwidth}
                \centering
                \includegraphics[width=1\textwidth]{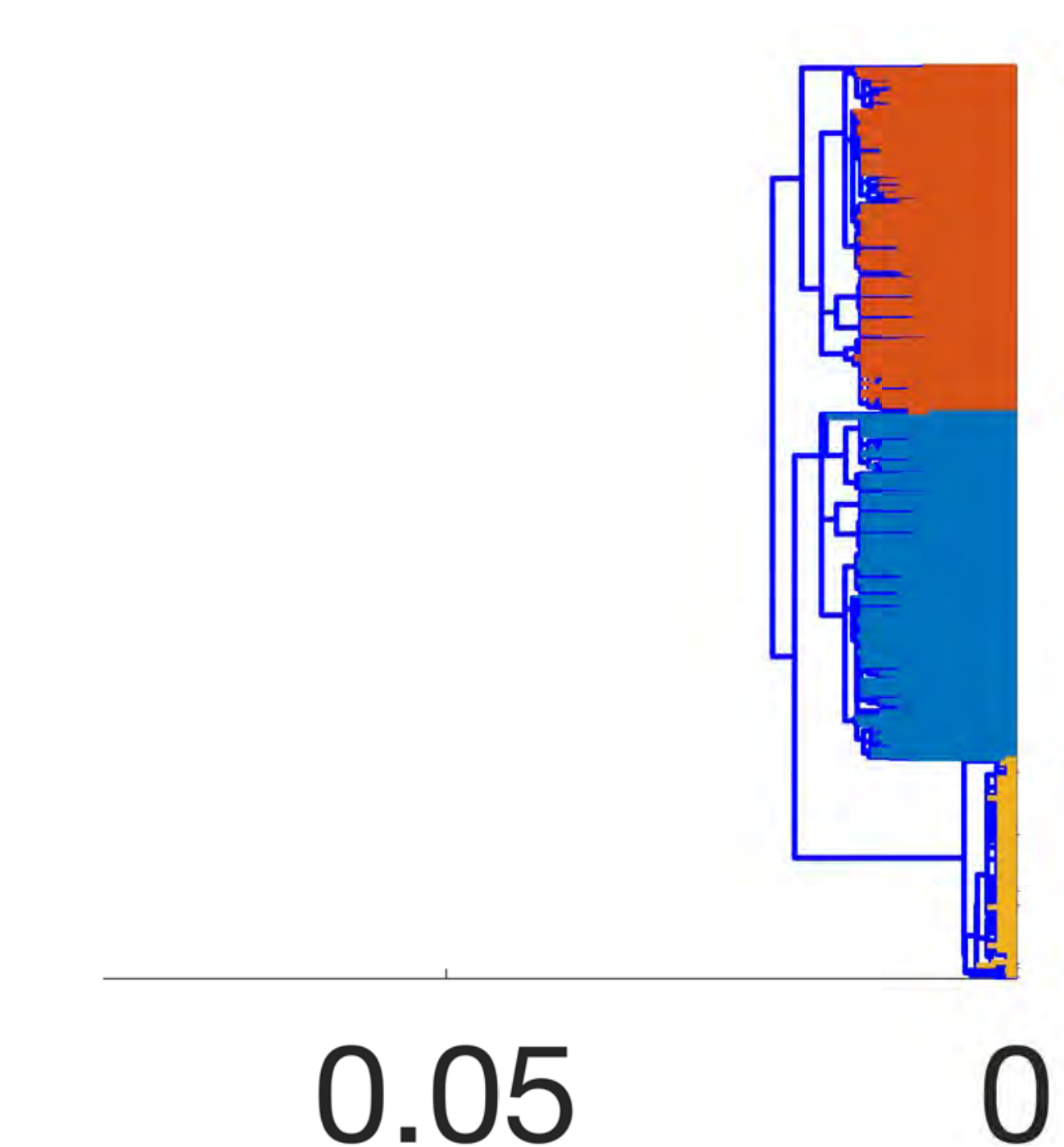}
            \end{minipage}
        \end{minipage}
        \begin{minipage}[t]{0.09\textwidth}
            \centering
            \begin{minipage}[t]{1\textwidth}
                \centering
                $e$:1/0.6 \\ 
                \includegraphics[width=1\textwidth]{figures/ellip/ellip-ori-seq-1-6.pdf} 
            \end{minipage}
            \begin{minipage}[t][0.55cm][t]{1\textwidth}
                \centering
                \includegraphics[width=1\textwidth]{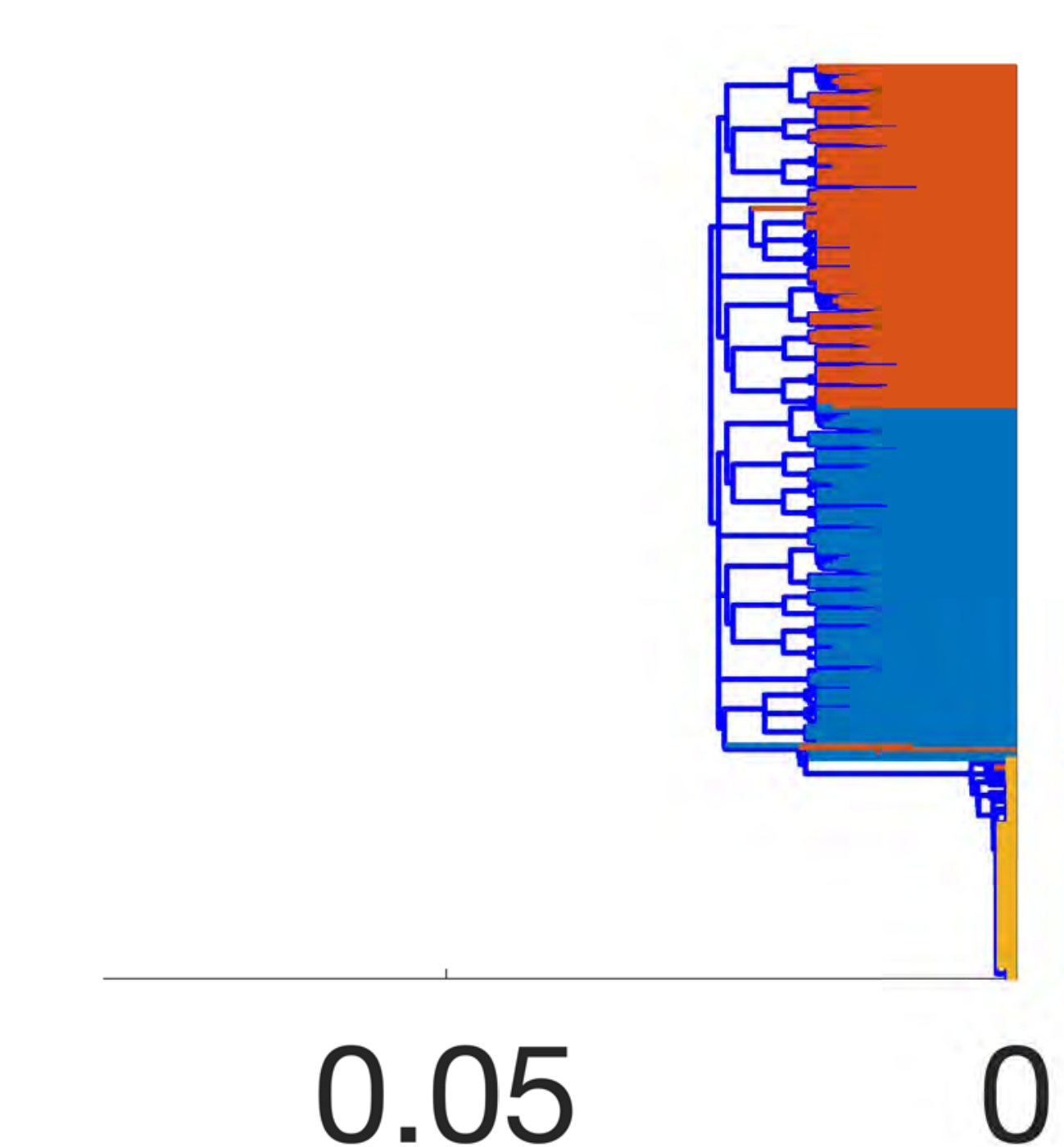}
            \end{minipage}
            \begin{minipage}[t]{1\textwidth}
                \centering
                $e$:0.6/1 \\ 
                \includegraphics[width=1\textwidth]{figures/ellip/ellip-ori-seq-6-1.pdf} 
            \end{minipage}
            \begin{minipage}[t][0.55cm][t]{1\textwidth}
                \centering
                \includegraphics[width=1\textwidth]{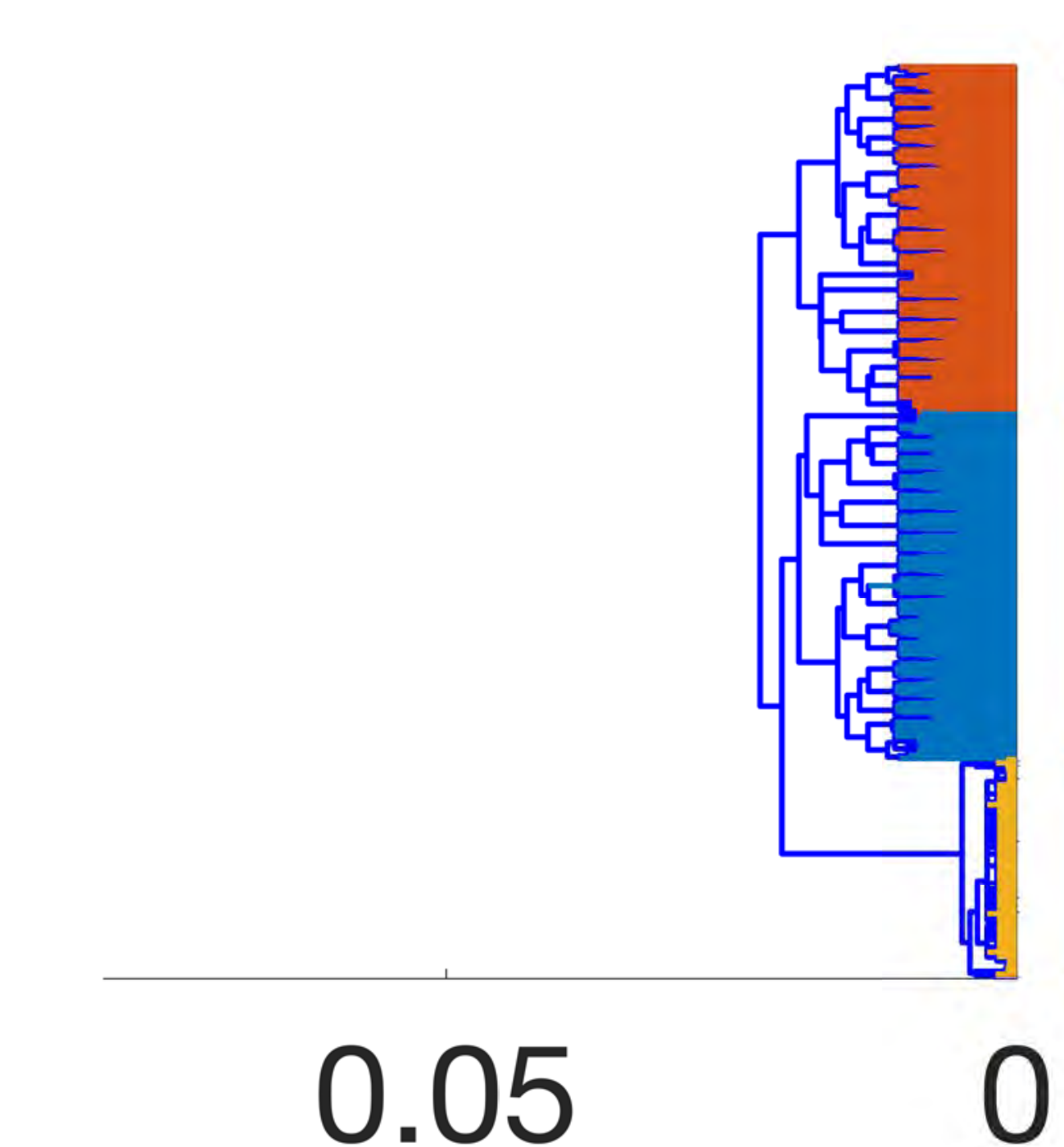}
            \end{minipage}
        \end{minipage}
        \begin{minipage}[t]{0.09\textwidth}
            \centering
            \begin{minipage}[t]{1\textwidth}
                \centering
                $e$:1/0.4 \\ 
                \includegraphics[width=1\textwidth]{figures/ellip/ellip-ori-seq-1-4.pdf} 
            \end{minipage}
            \begin{minipage}[t][0.55cm][t]{1\textwidth}
                \centering
                \includegraphics[width=1\textwidth]{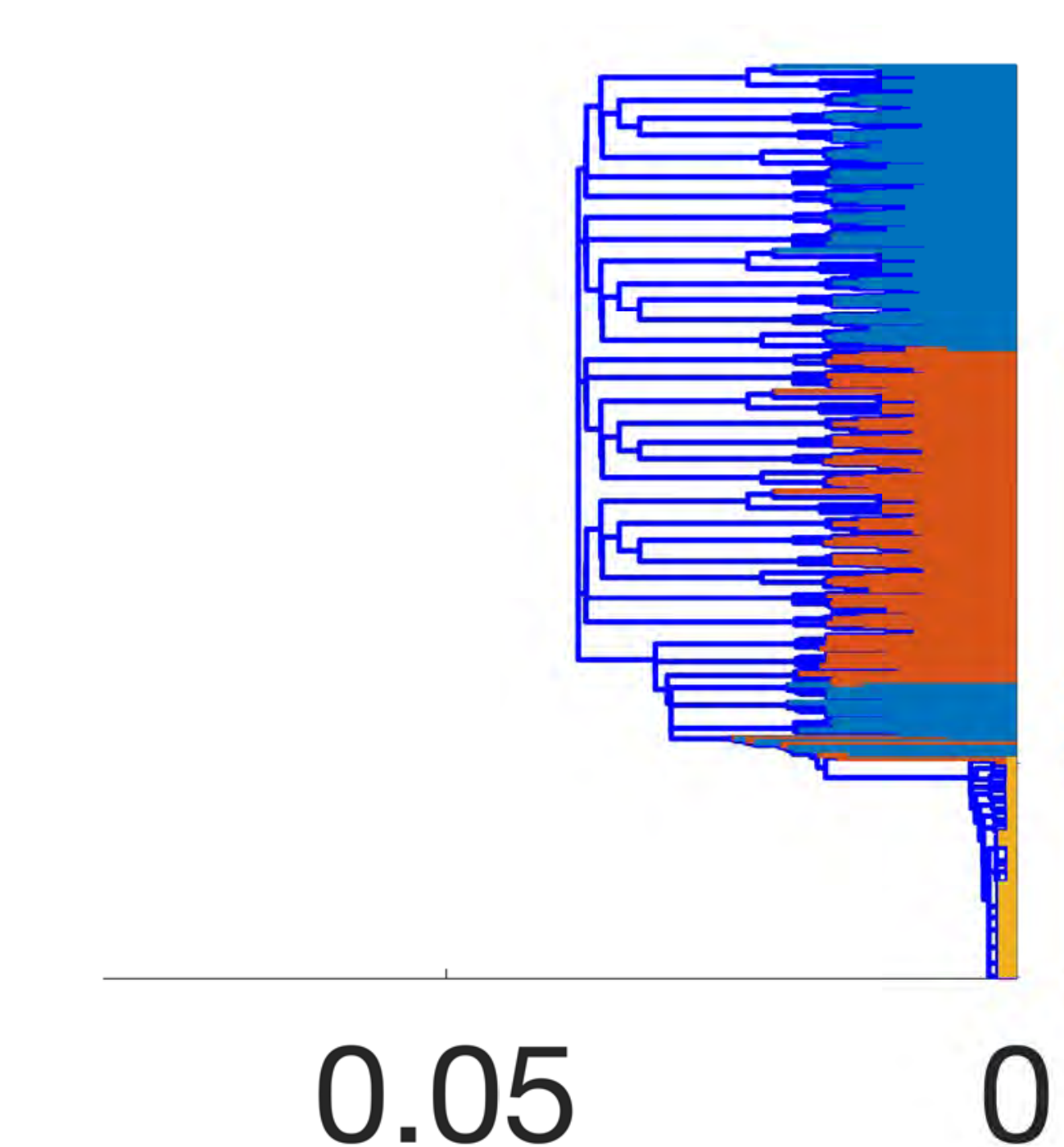}
            \end{minipage}
            \begin{minipage}[t]{1\textwidth}
                \centering
                $e$:0.4/1 \\ 
                \includegraphics[width=1\textwidth]{figures/ellip/ellip-ori-seq-4-1.pdf} 
            \end{minipage}
            \begin{minipage}[t][0.55cm][t]{1\textwidth}
                \centering
                \includegraphics[width=1\textwidth]{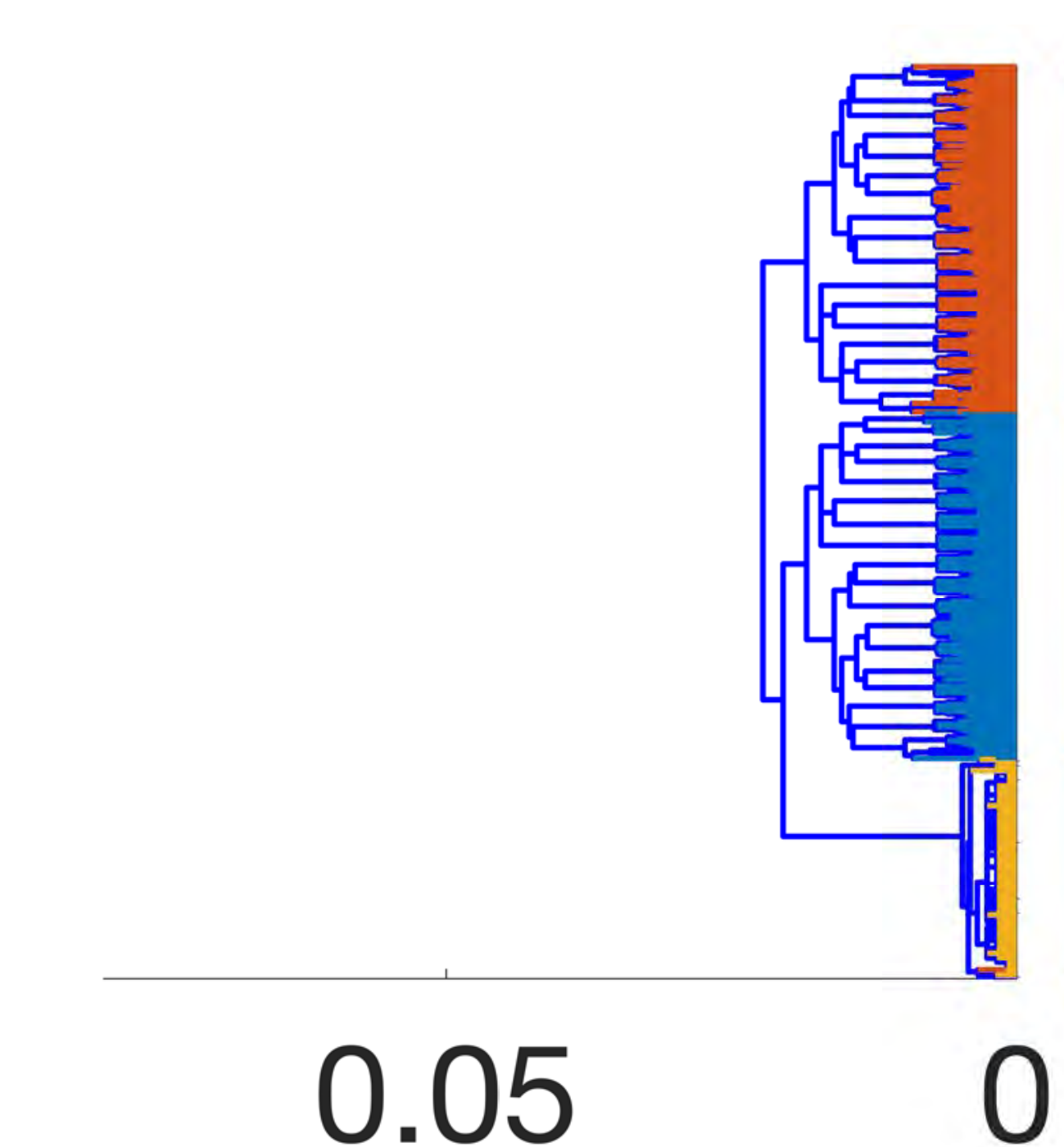}
            \end{minipage}
        \end{minipage}
        \begin{minipage}[t]{0.09\textwidth}
            \centering
            \begin{minipage}[t]{1\textwidth}
                \centering
                $e$:1/0.2 \\ 
                \includegraphics[width=1\textwidth]{figures/ellip/ellip-ori-seq-1-2.pdf} 
            \end{minipage}
            \begin{minipage}[t][0.55cm][t]{1\textwidth}
                \centering
                \includegraphics[width=1\textwidth]{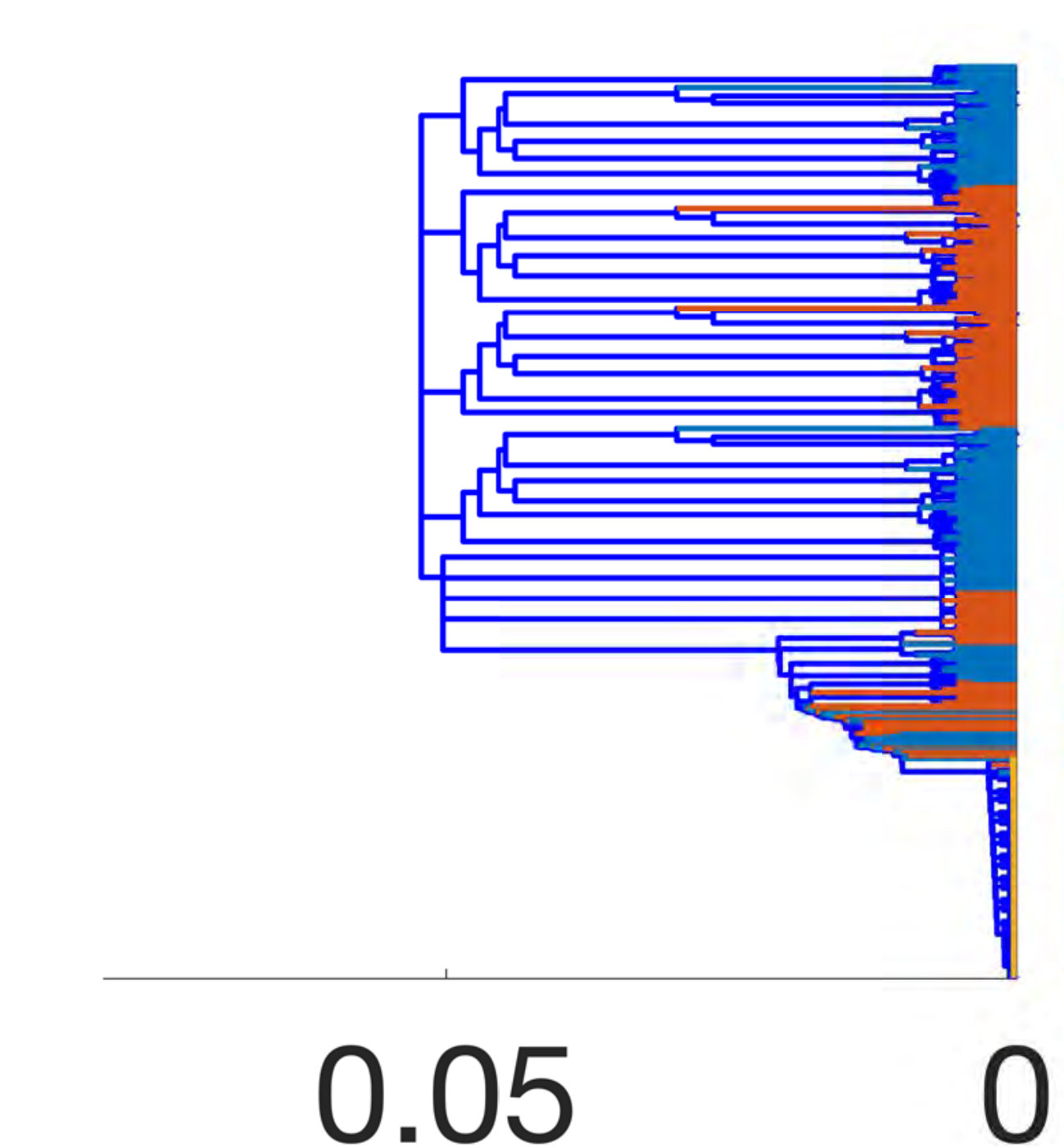}
            \end{minipage}
            \begin{minipage}[t]{1\textwidth}
                \centering
                $e$:0.2/1 \\ 
                \includegraphics[width=1\textwidth]{figures/ellip/ellip-ori-seq-2-1.pdf} 
            \end{minipage}
            \begin{minipage}[t][0.55cm][t]{1\textwidth}
                \centering
                \includegraphics[width=1\textwidth]{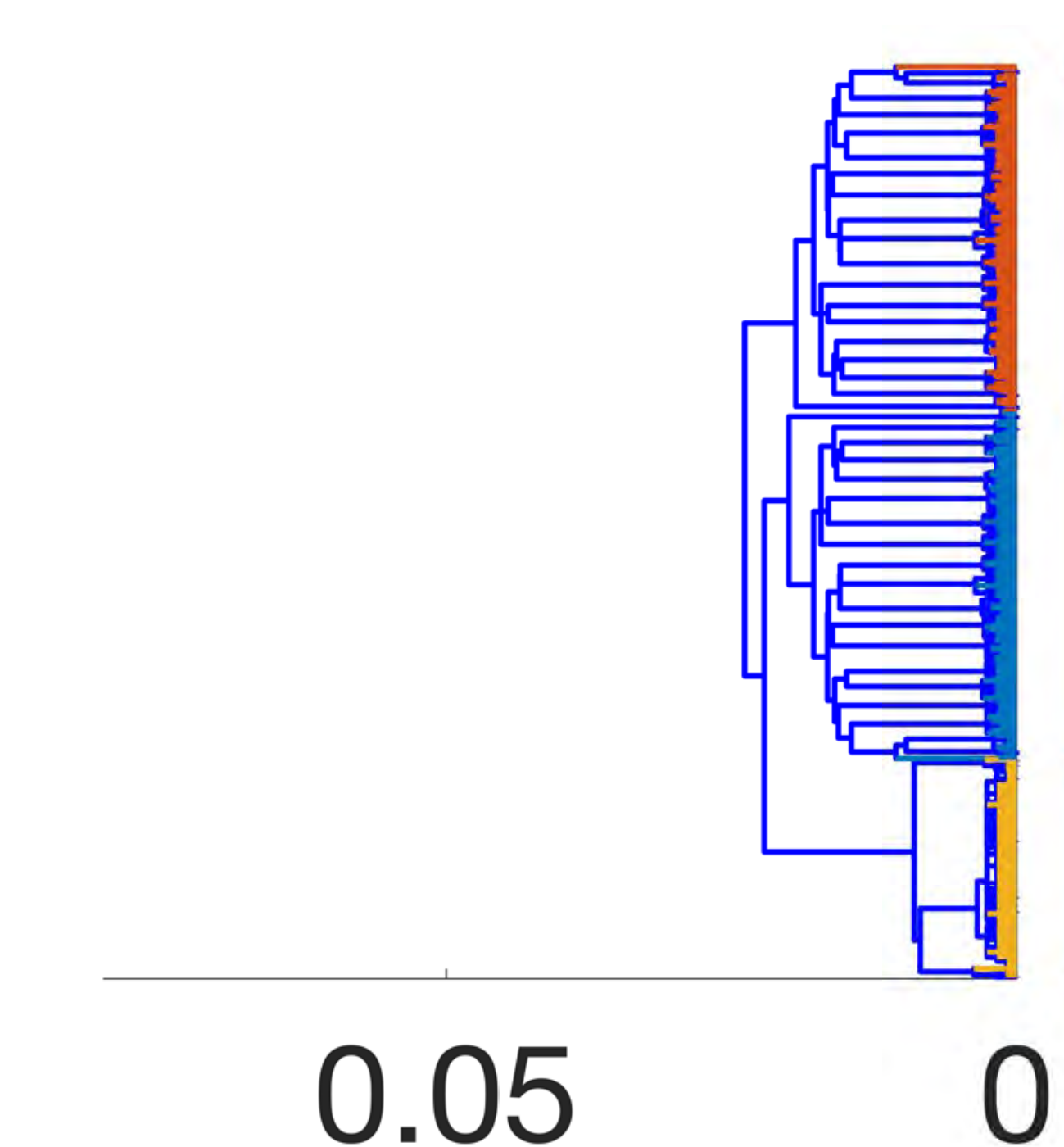}
            \end{minipage}
        \end{minipage}
    } }
    \caption{Chaining effect: influence of geometry. For each subgraph, each (dumbbell-shape) point cloud represents the result after applying the linear transformation $T$ with eccentricity $e$ to the original dataset ($e=1$). All the corresponding dendrograms have the same x-axis limit.}
    \label{fig:supp-ballchains}
\end{figure}

\subsection{Denoising of a spiral}
\label{app:denoising-spiral}
In this example, we analyze a spiral composed of 600 points lying in the square $[-30, 30]^2$ together with 150 outliers (following the uniform distribution). We compare the performance of MS, GT, WT2 and WT1 in the course of 4 iterations. Results are shown in Figure~\ref{fig:supp-spiral}. We see that GT both absorbs outliers faster and resolves the spiral shape with better quality than MS, WT2 and WT1 do. 
\begin{figure}[htb]
    \centering
    \subfloat[MS]{
		\begin{minipage}[t]{0.08\textwidth}
		    \centering
		    $\tau=0$ \\ 
            \includegraphics[width=1\textwidth]{figures/spiral/ms-spiral-0.pdf}
        \end{minipage}
        \begin{minipage}[t]{0.08\textwidth}
		    \centering
		    $\tau=1$ \\ 
            \includegraphics[width=1\textwidth]{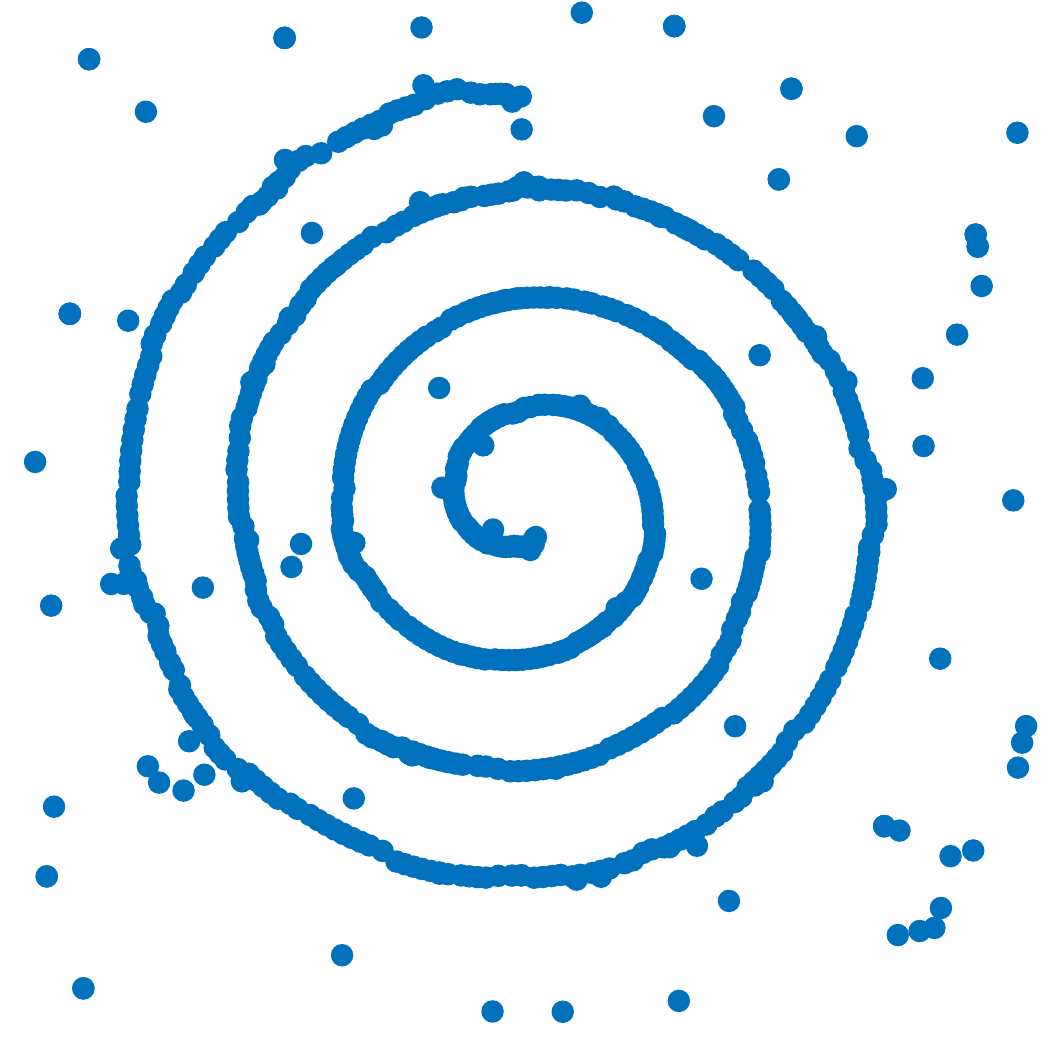}
        \end{minipage}
        \begin{minipage}[t]{0.08\textwidth}
		    \centering
		    $\tau=2$ \\ 
            \includegraphics[width=1\textwidth]{figures/spiral/ms-spiral-2.pdf}
        \end{minipage}
        \begin{minipage}[t]{0.08\textwidth}
		    \centering
		    $\tau=3$ \\ 
            \includegraphics[width=1\textwidth]{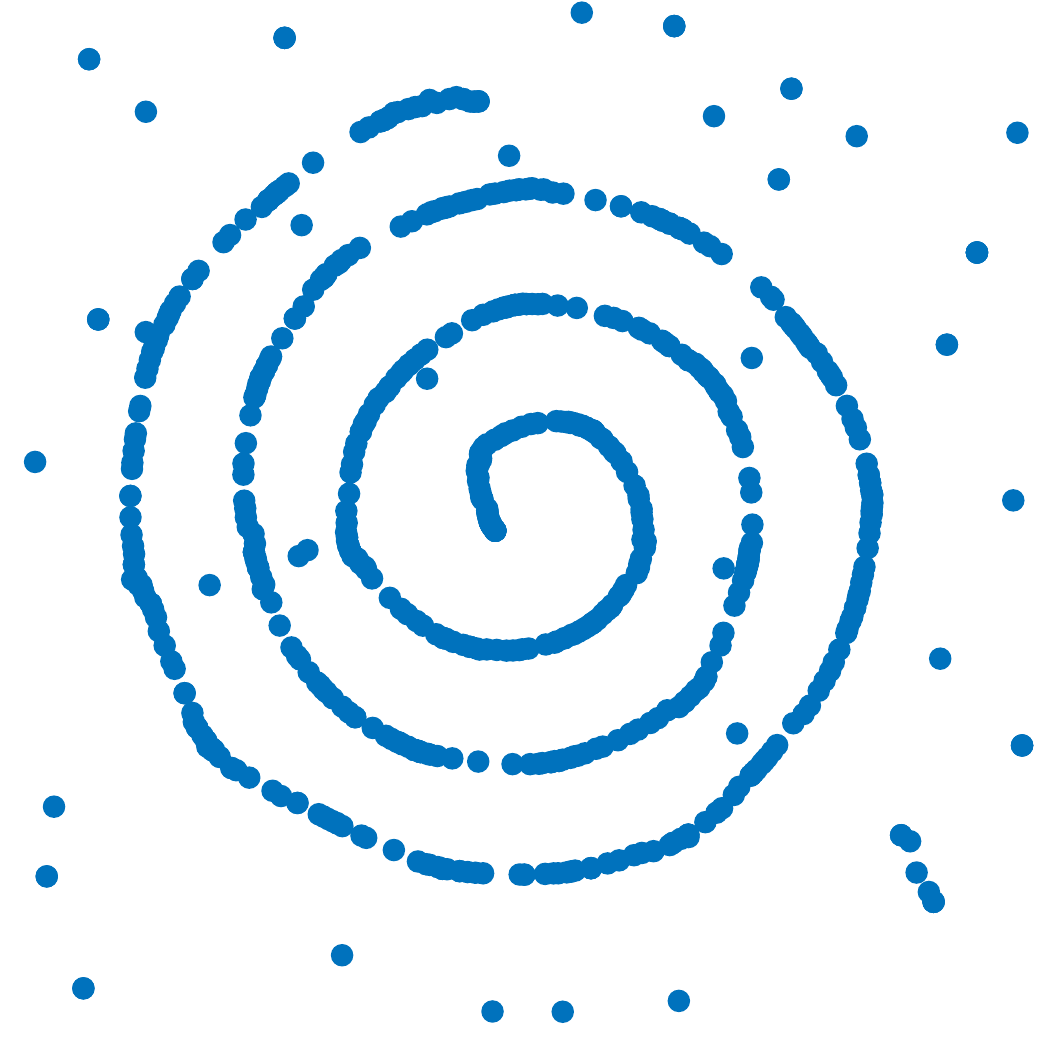}
        \end{minipage}
        \begin{minipage}[t]{0.08\textwidth}
		    \centering
		    $\tau=4$ \\ 
            \includegraphics[width=1\textwidth]{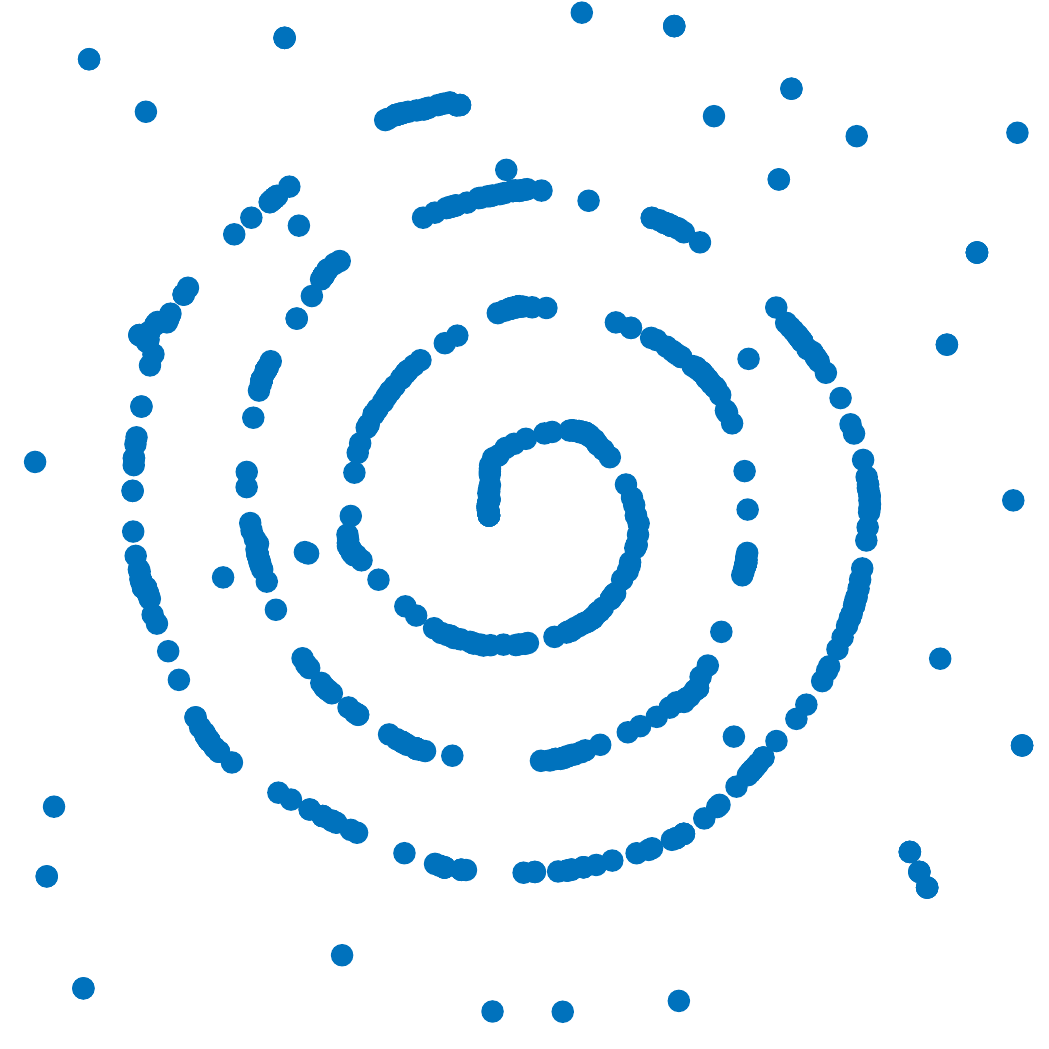}
        \end{minipage}
	}

	\subfloat[GT-$\lambda$-1]{
		\begin{minipage}[t]{0.08\textwidth}
		    \centering
            \includegraphics[width=1\textwidth]{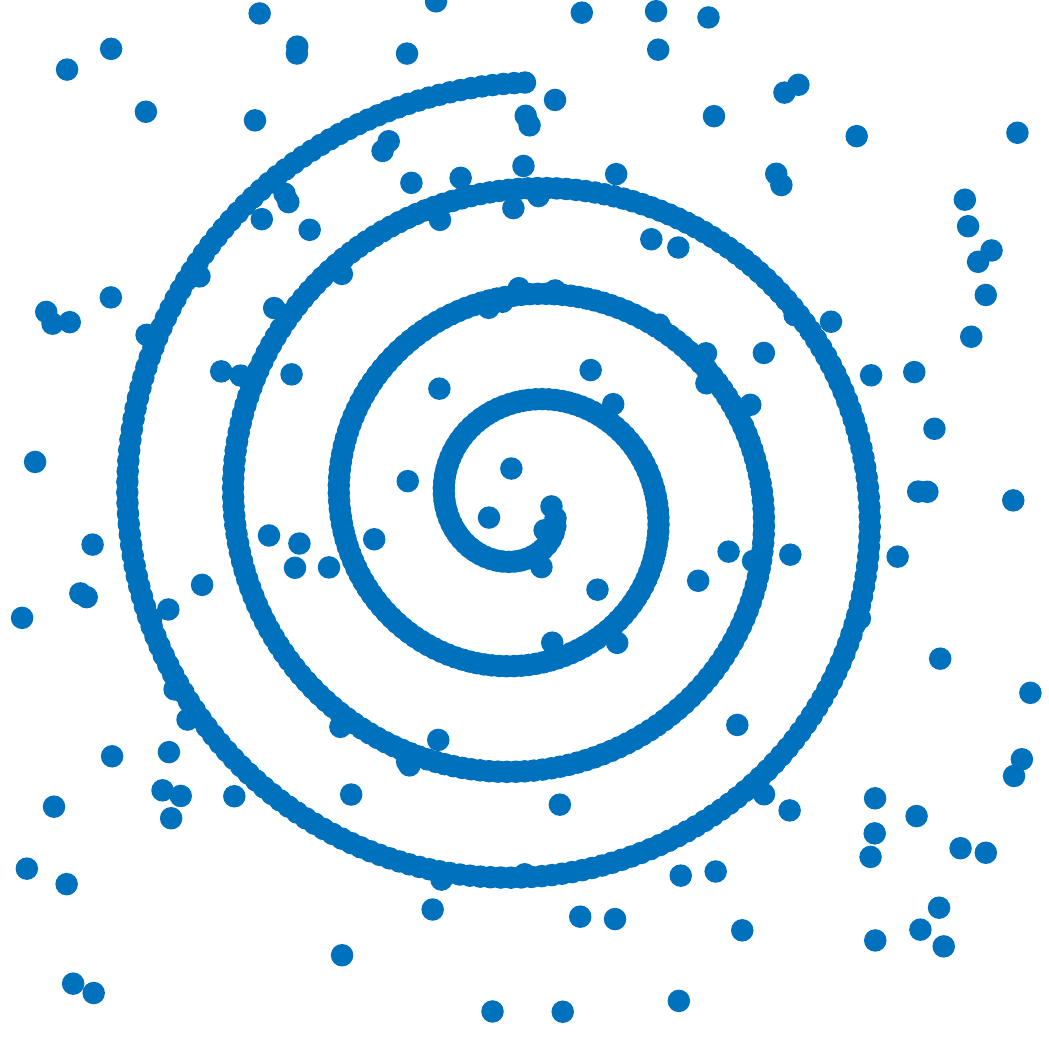}
        \end{minipage}
        \begin{minipage}[t]{0.08\textwidth}
		    \centering
            \includegraphics[width=1\textwidth]{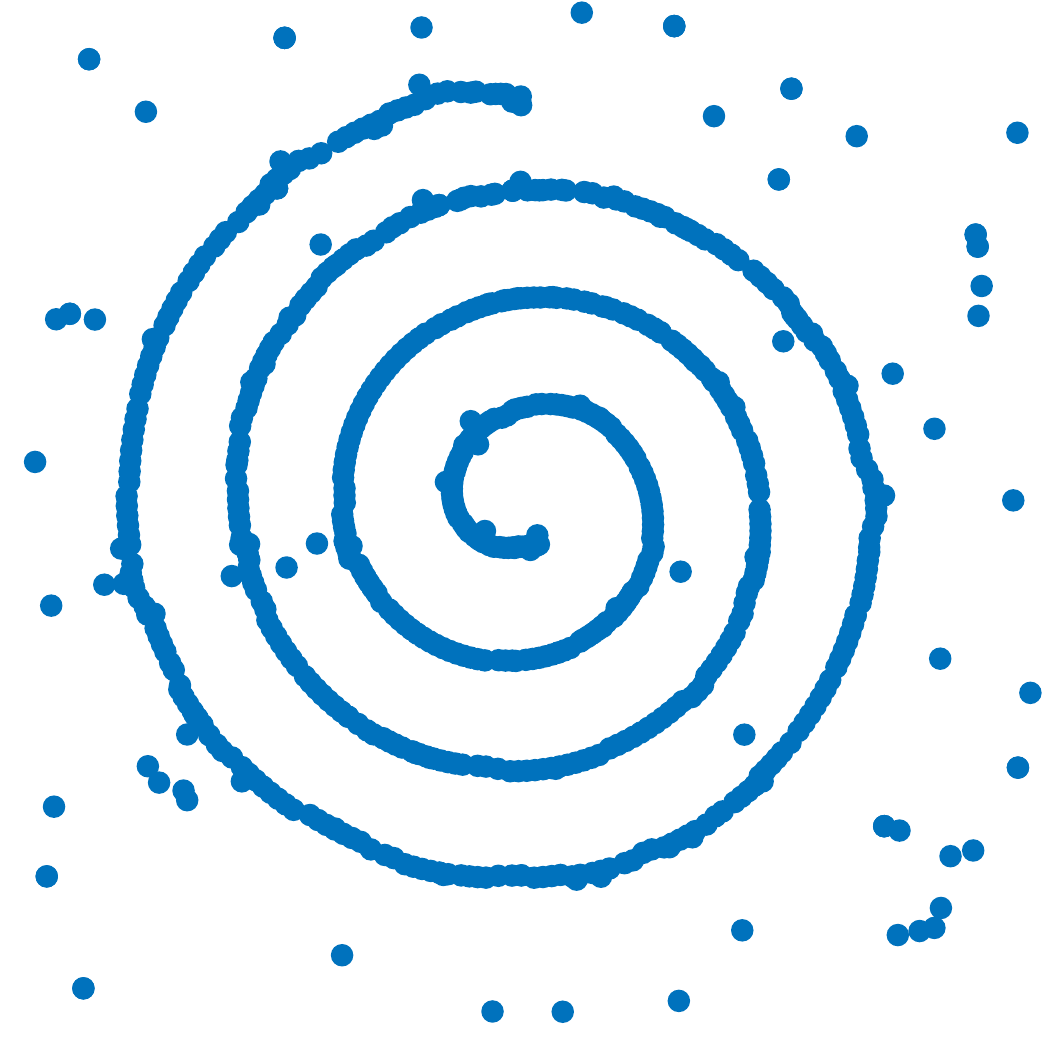}
        \end{minipage}
        \begin{minipage}[t]{0.08\textwidth}
		    \centering
            \includegraphics[width=1\textwidth]{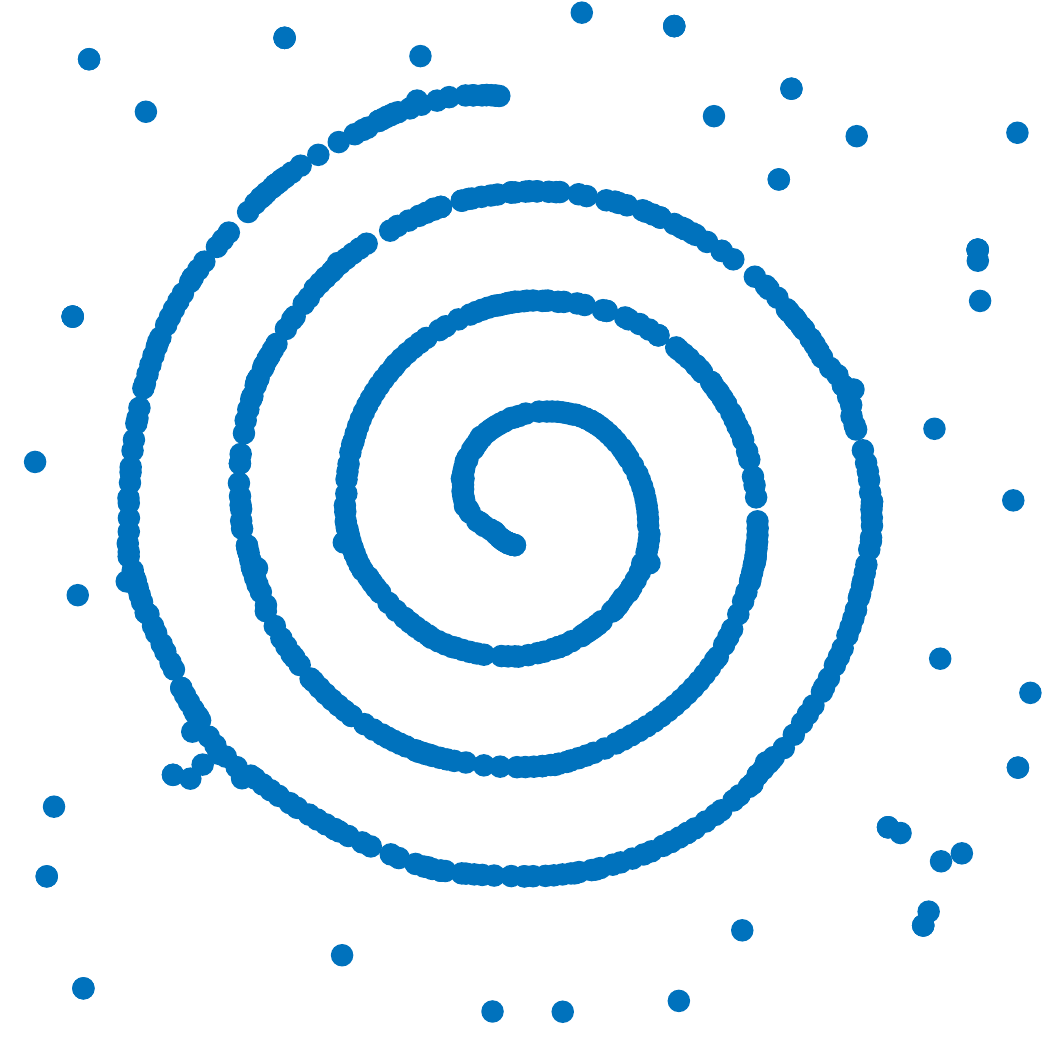}
        \end{minipage}
        \begin{minipage}[t]{0.08\textwidth}
		    \centering
            \includegraphics[width=1\textwidth]{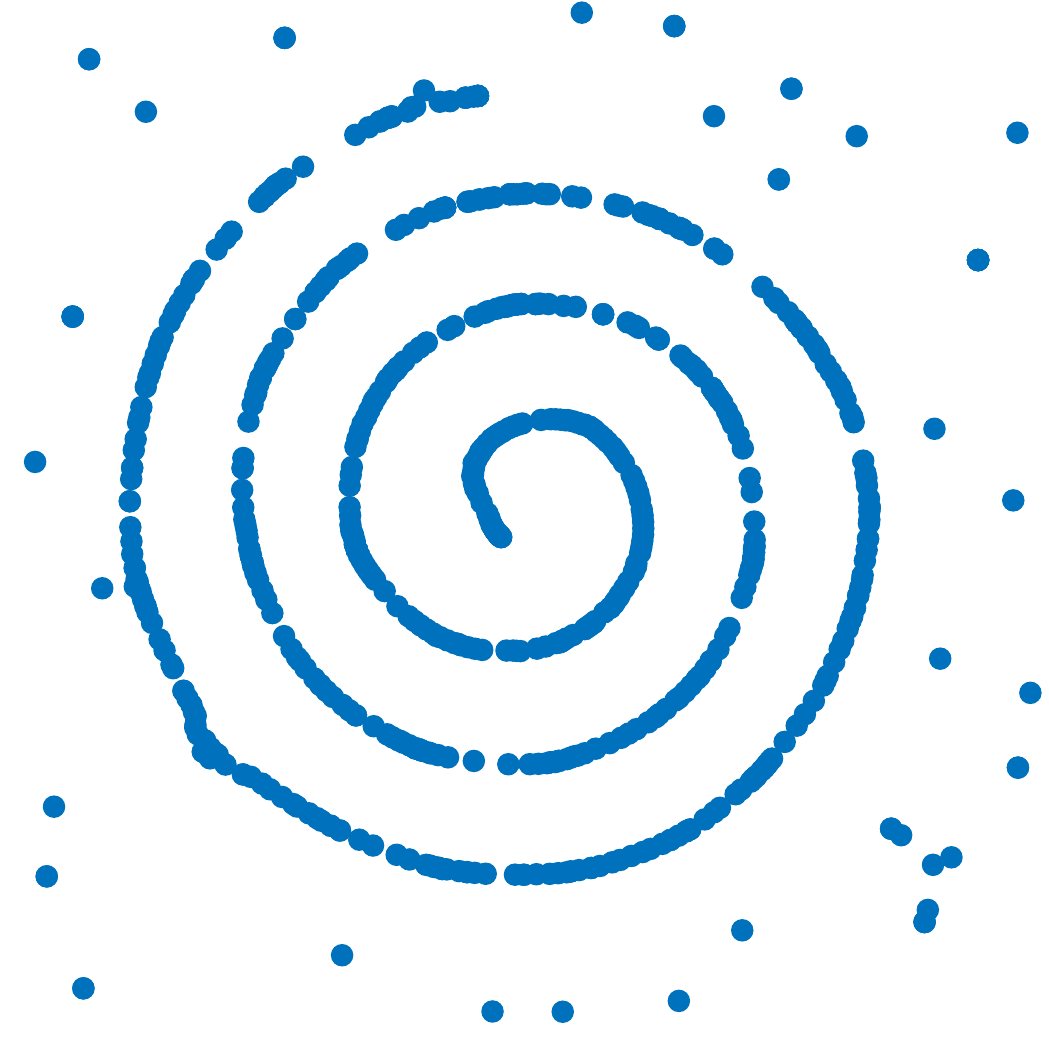}
        \end{minipage}
        \begin{minipage}[t]{0.08\textwidth}
		    \centering
            \includegraphics[width=1\textwidth]{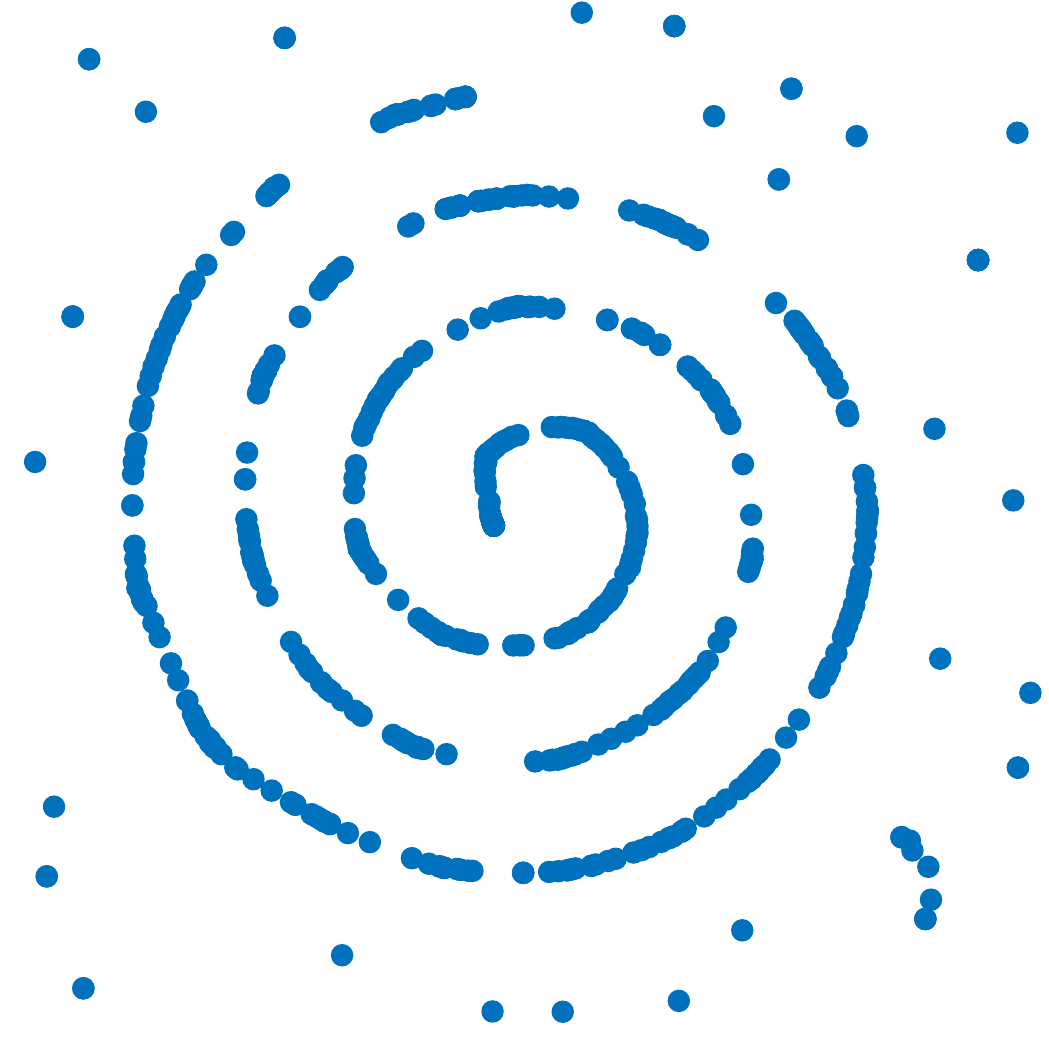}
        \end{minipage}
	}
	
	\subfloat[WT2]{
        \begin{minipage}[t]{0.08\textwidth}
		    \centering
            \includegraphics[width=1\textwidth]{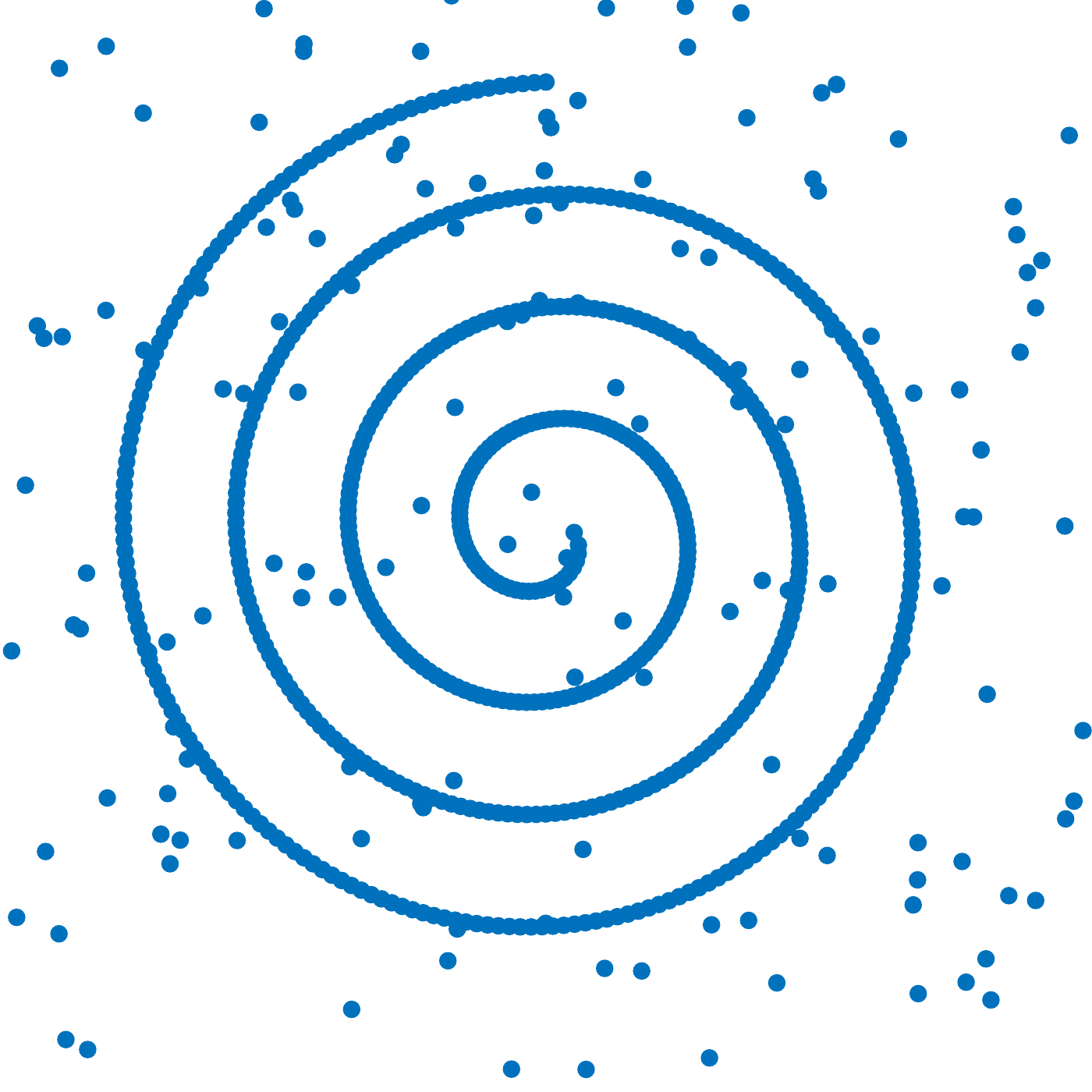}
        \end{minipage}
        \begin{minipage}[t]{0.08\textwidth}
		    \centering
            \includegraphics[width=1\textwidth]{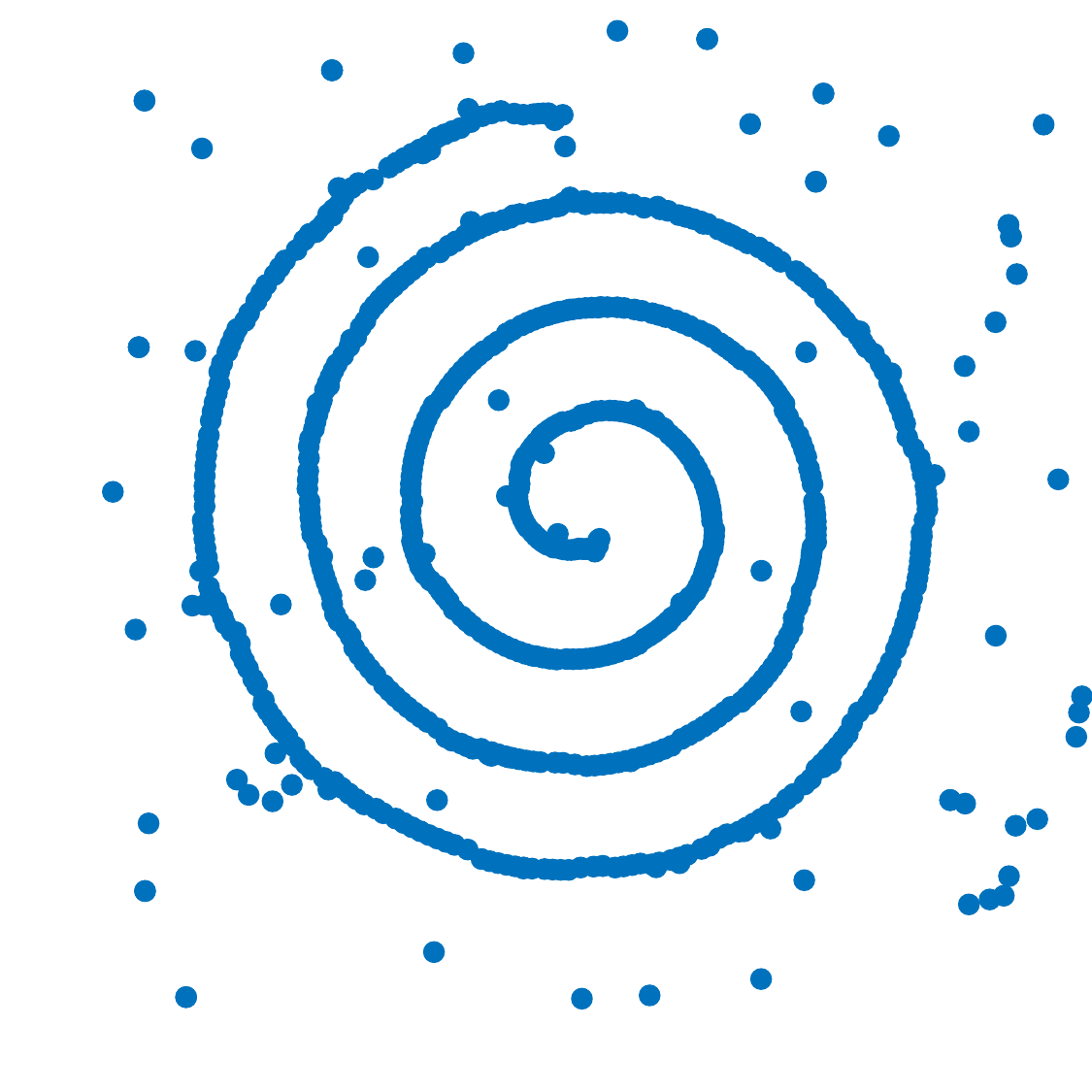}
        \end{minipage}
        \begin{minipage}[t]{0.08\textwidth}
		    \centering
            \includegraphics[width=1\textwidth]{figures/spiral/wt2-fix-emd-spiral-2.pdf}
        \end{minipage}
        \begin{minipage}[t]{0.08\textwidth}
		    \centering
            \includegraphics[width=1\textwidth]{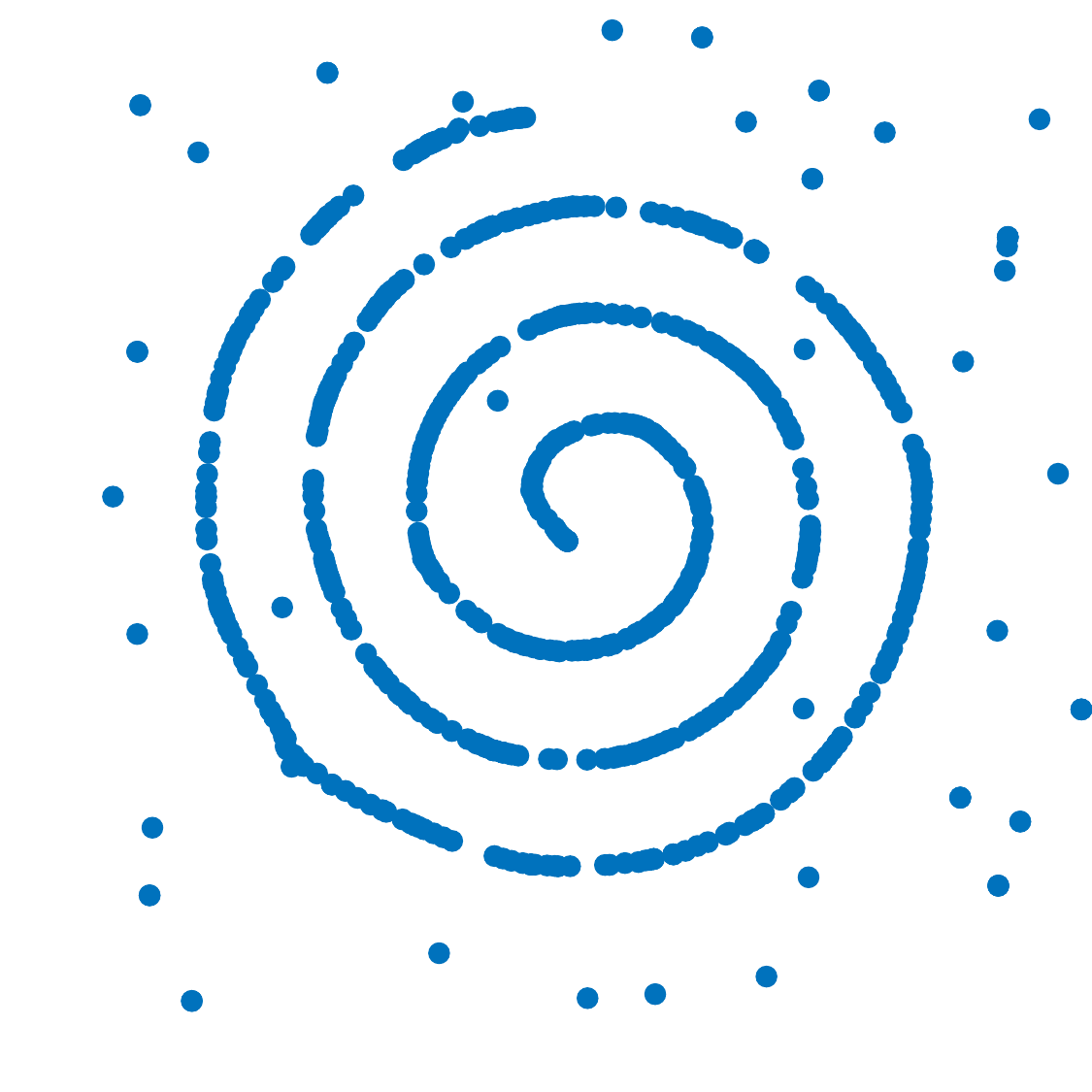}
        \end{minipage}
        \begin{minipage}[t]{0.08\textwidth}
		    \centering
            \includegraphics[width=1\textwidth]{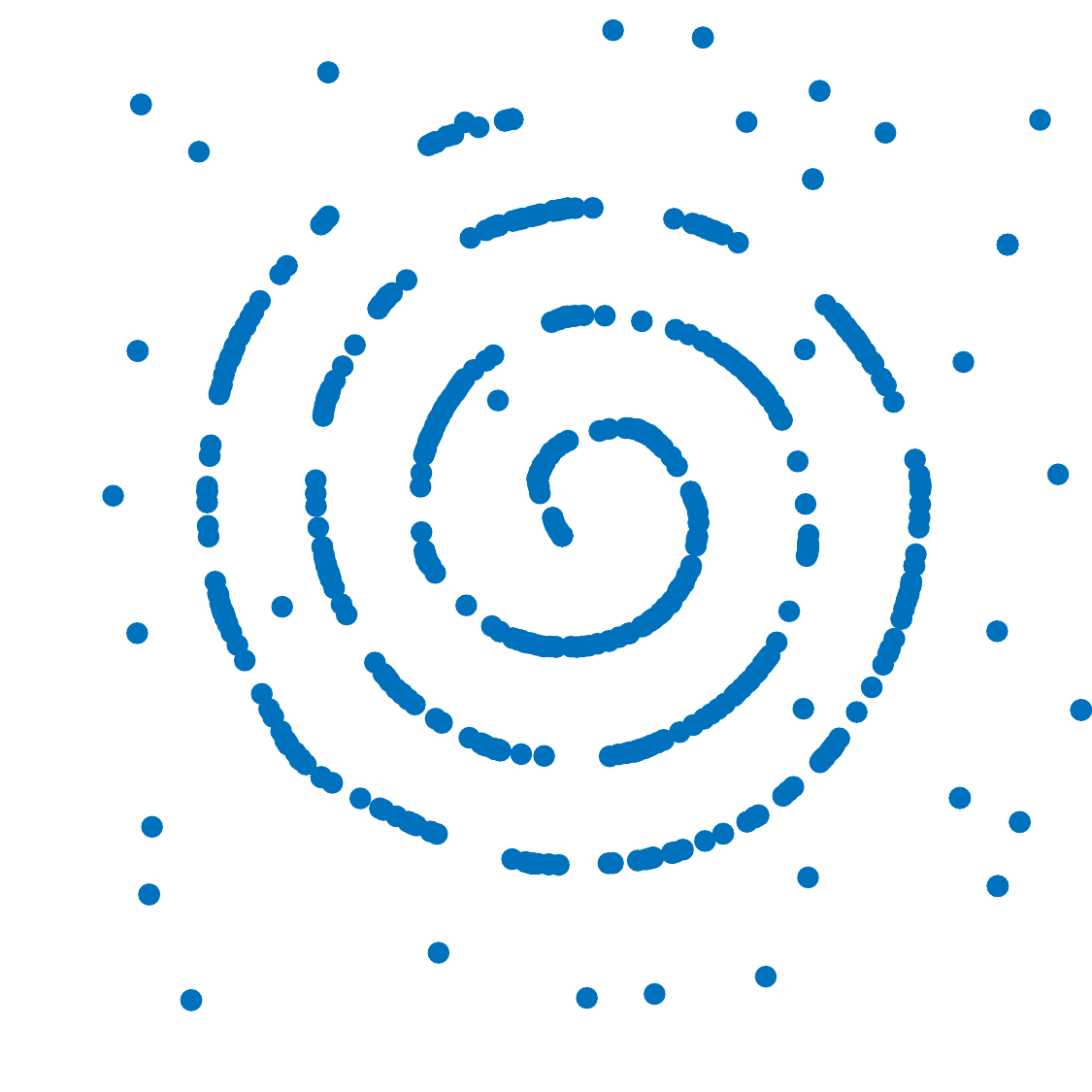}
        \end{minipage}
	}

    \subfloat[WT1]{
        \begin{minipage}[t]{0.08\textwidth}
            \centering
            \includegraphics[width=1\textwidth]{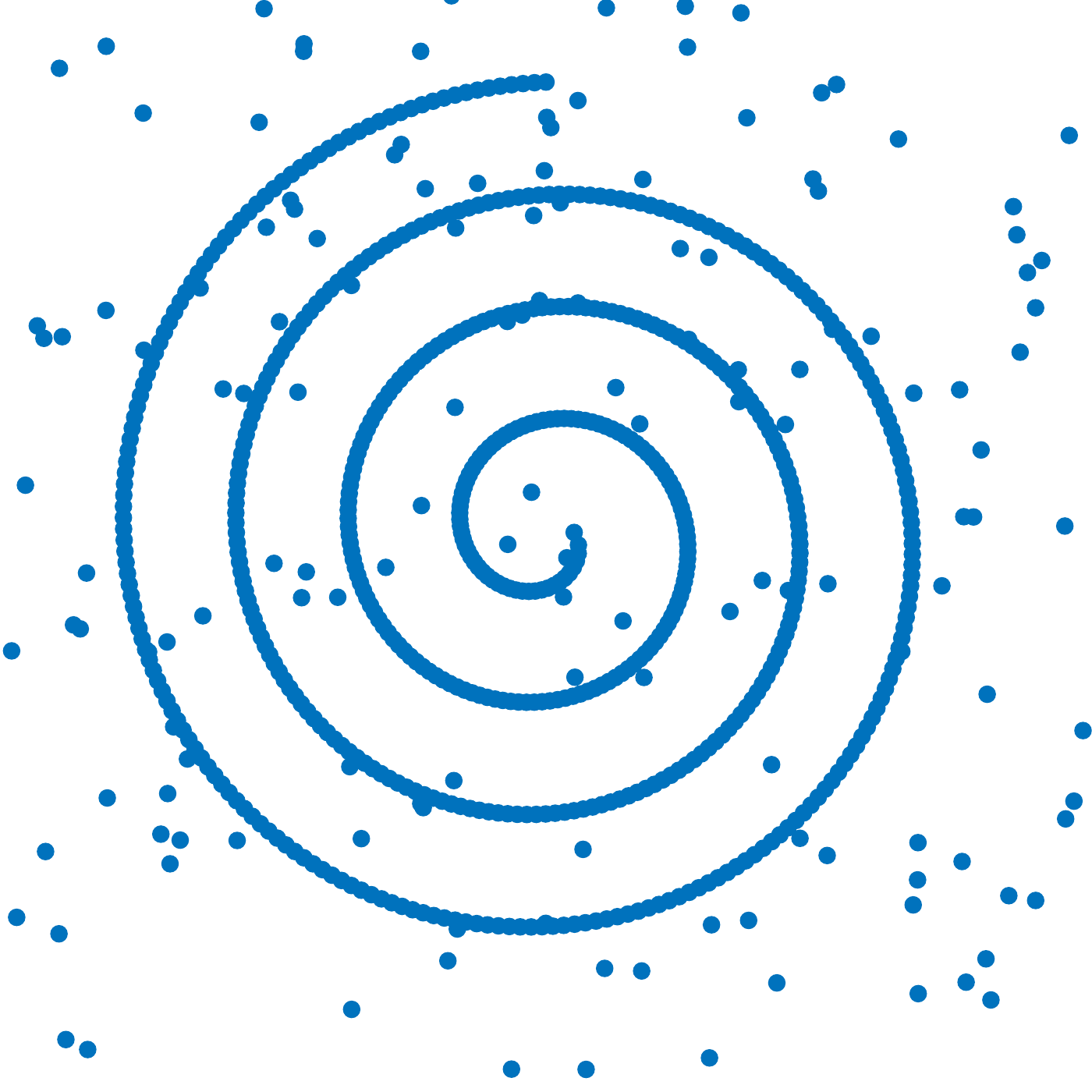}
        \end{minipage}
        \begin{minipage}[t]{0.08\textwidth}
            \centering
            \includegraphics[width=1\textwidth]{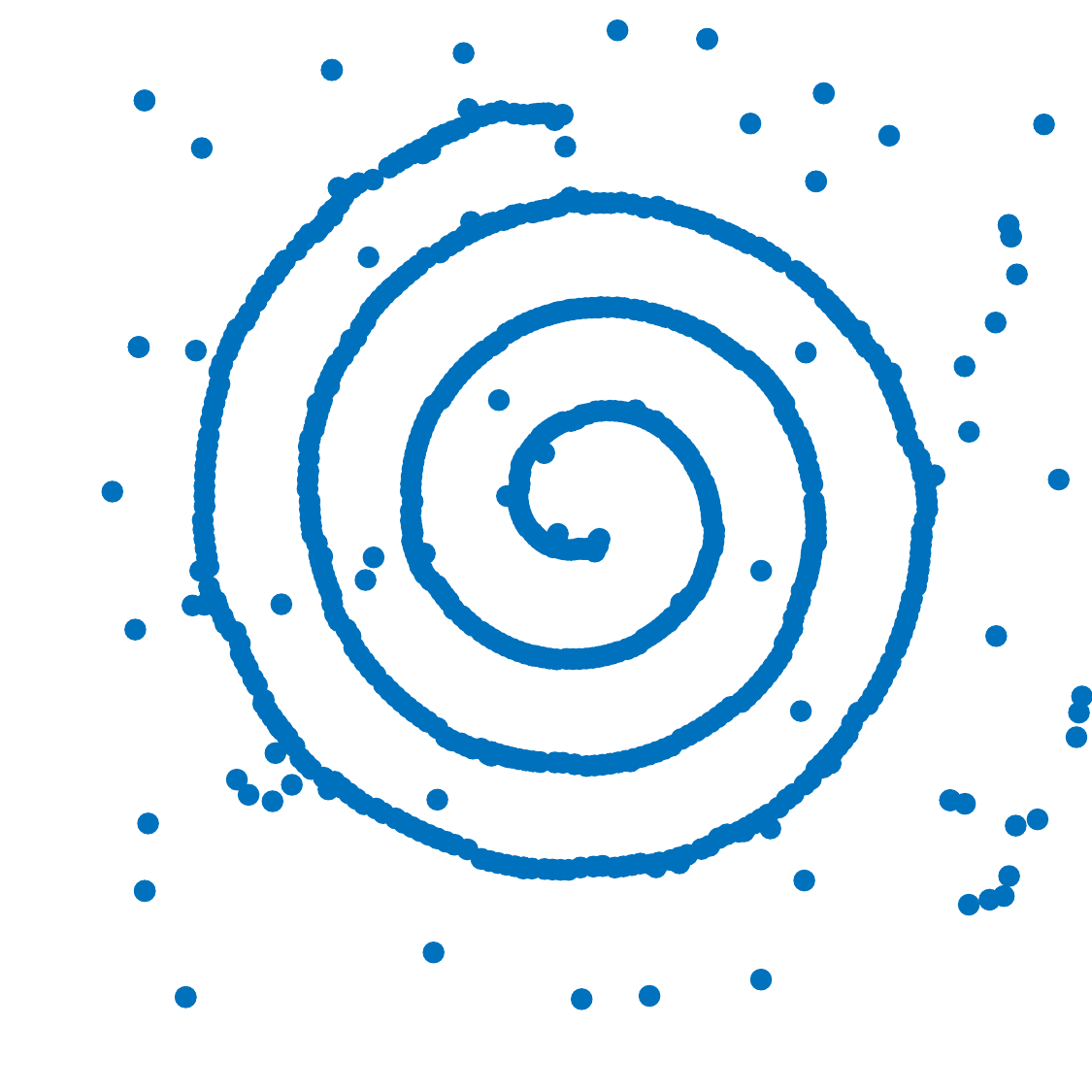}
        \end{minipage}
        \begin{minipage}[t]{0.08\textwidth}
            \centering
            \includegraphics[width=1\textwidth]{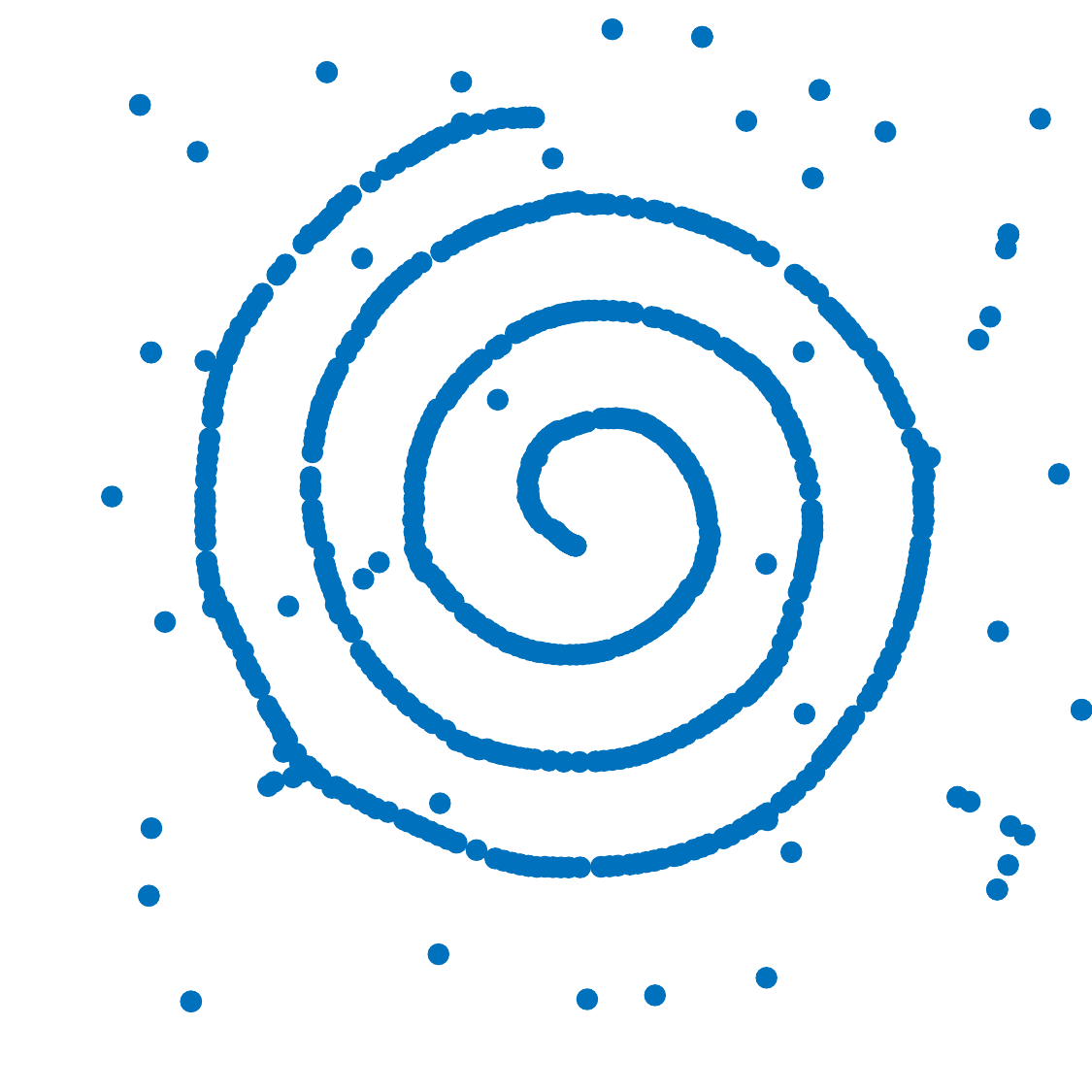}
        \end{minipage}
        \begin{minipage}[t]{0.08\textwidth}
            \centering
            \includegraphics[width=1\textwidth]{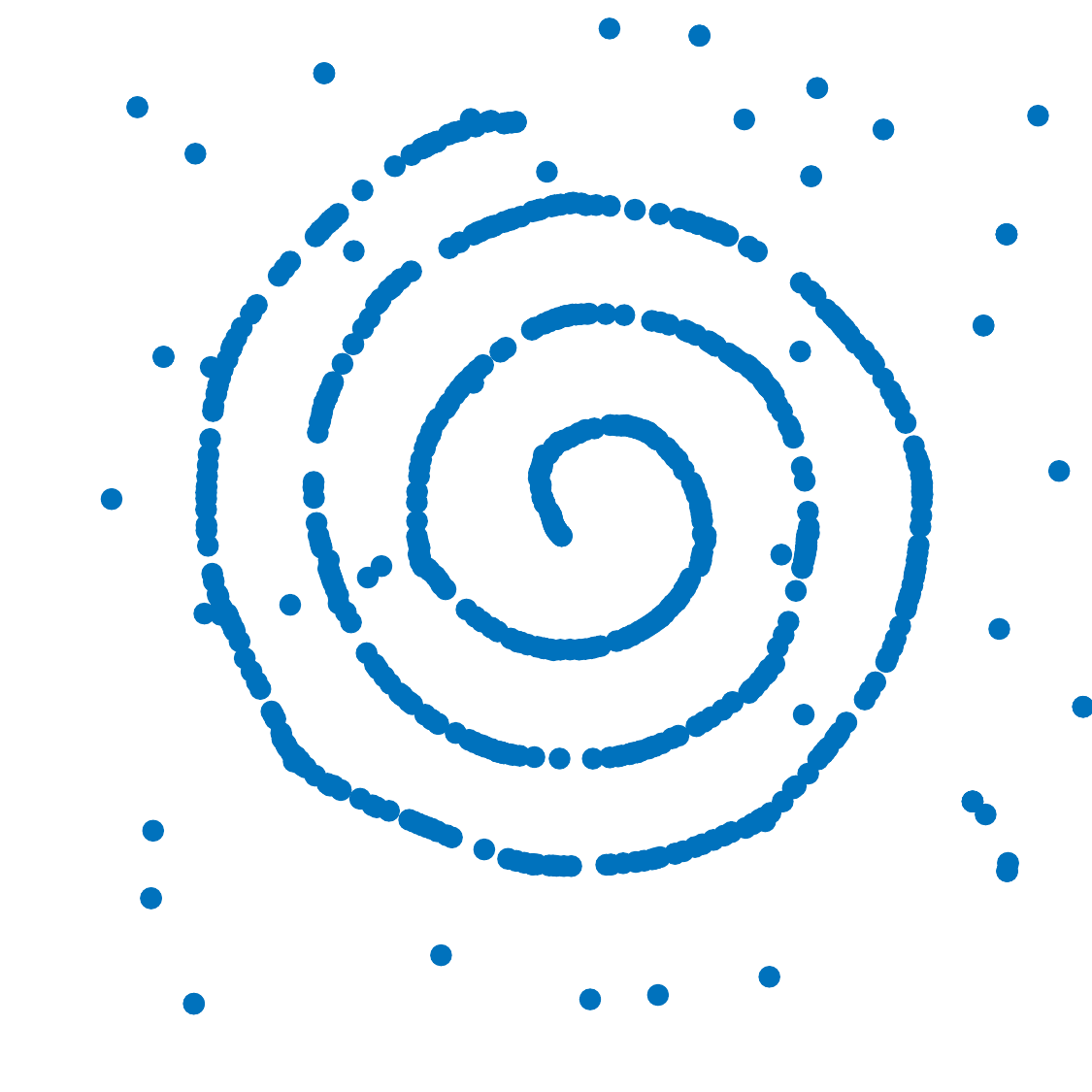}
        \end{minipage}
        \begin{minipage}[t]{0.08\textwidth}
            \centering
            \includegraphics[width=1\textwidth]{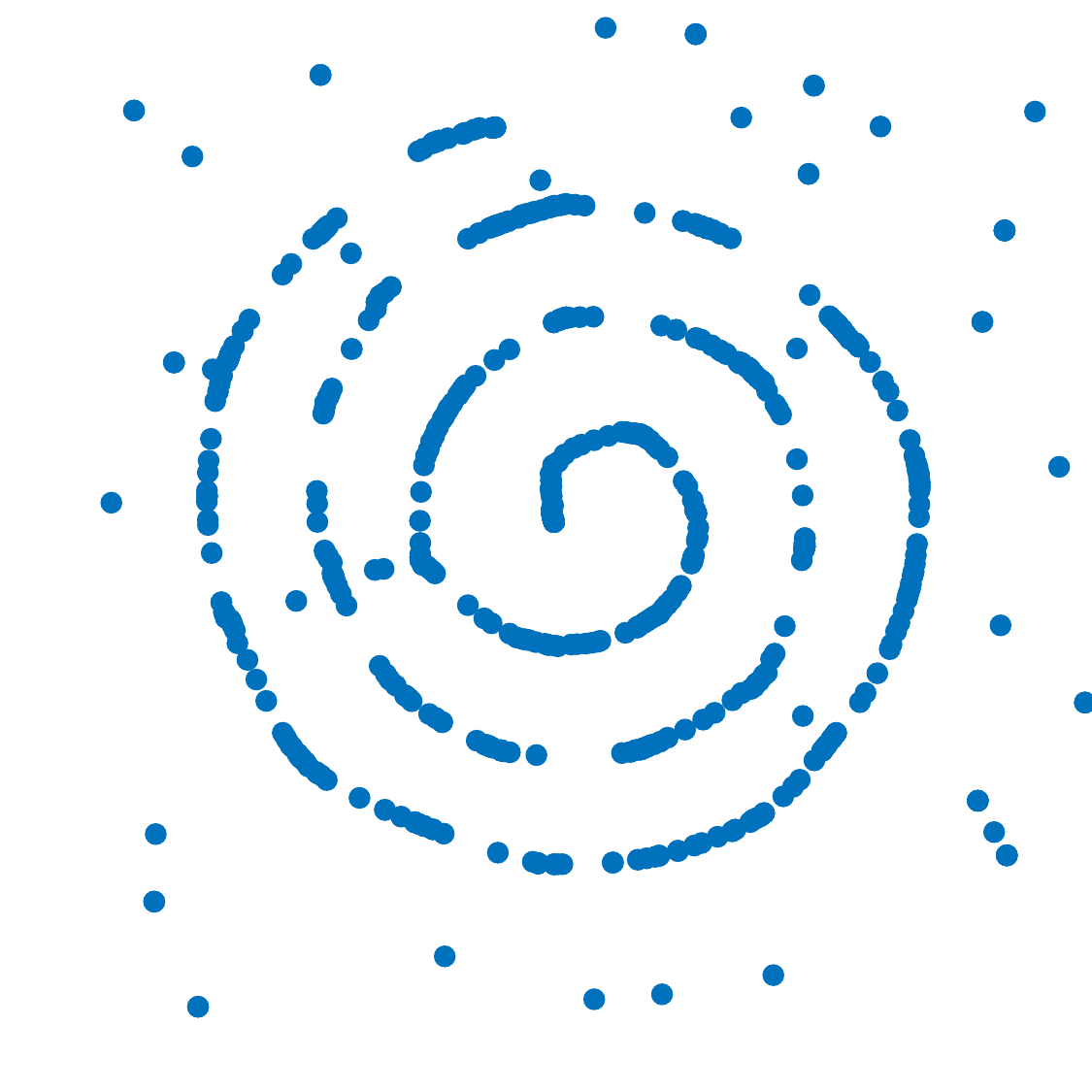}
        \end{minipage}
    }
    \caption{Denoising of a spiral with outliers. }
    \label{fig:supp-spiral}
\end{figure}

\subsection{Denoising of concentric circles}

In this example, we analyze a dataset composed of two concentric circles with random perturbations on points by small values. Each circle has 250 points lying in the square $[-2, 2]^2$. We compare the performance of MS, GT, WT2 and WT1 in the course of 4 iterations. Results are shown in Figure~\ref{fig:supp-concen}. 

\begin{figure}[htb]
    \centering
    \subfloat[MS]{
        \begin{minipage}[t]{0.08\textwidth}
            \centering
            $\tau=0$ \\ 
            \includegraphics[width=1\textwidth]{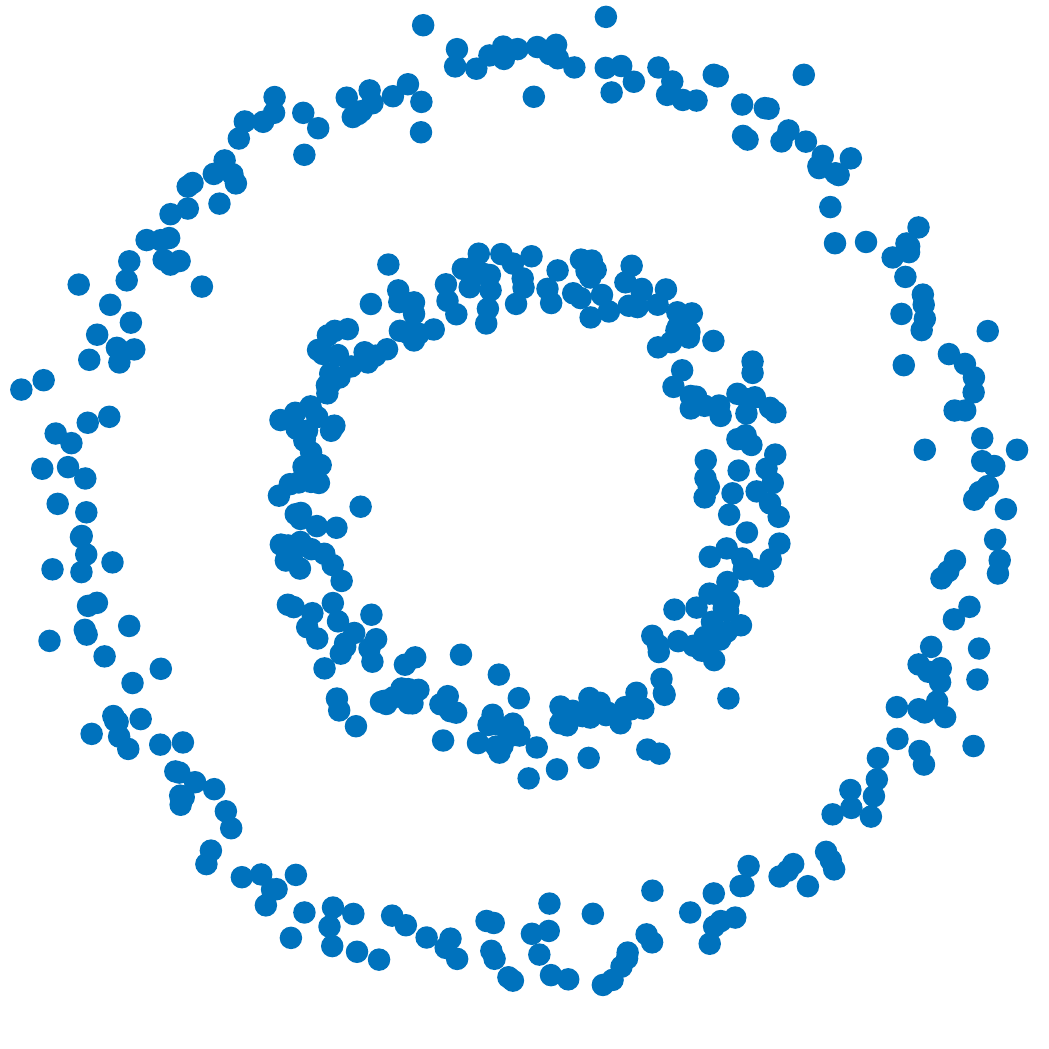}
        \end{minipage}
        \begin{minipage}[t]{0.08\textwidth}
            \centering
            $\tau=1$ \\ 
            \includegraphics[width=1\textwidth]{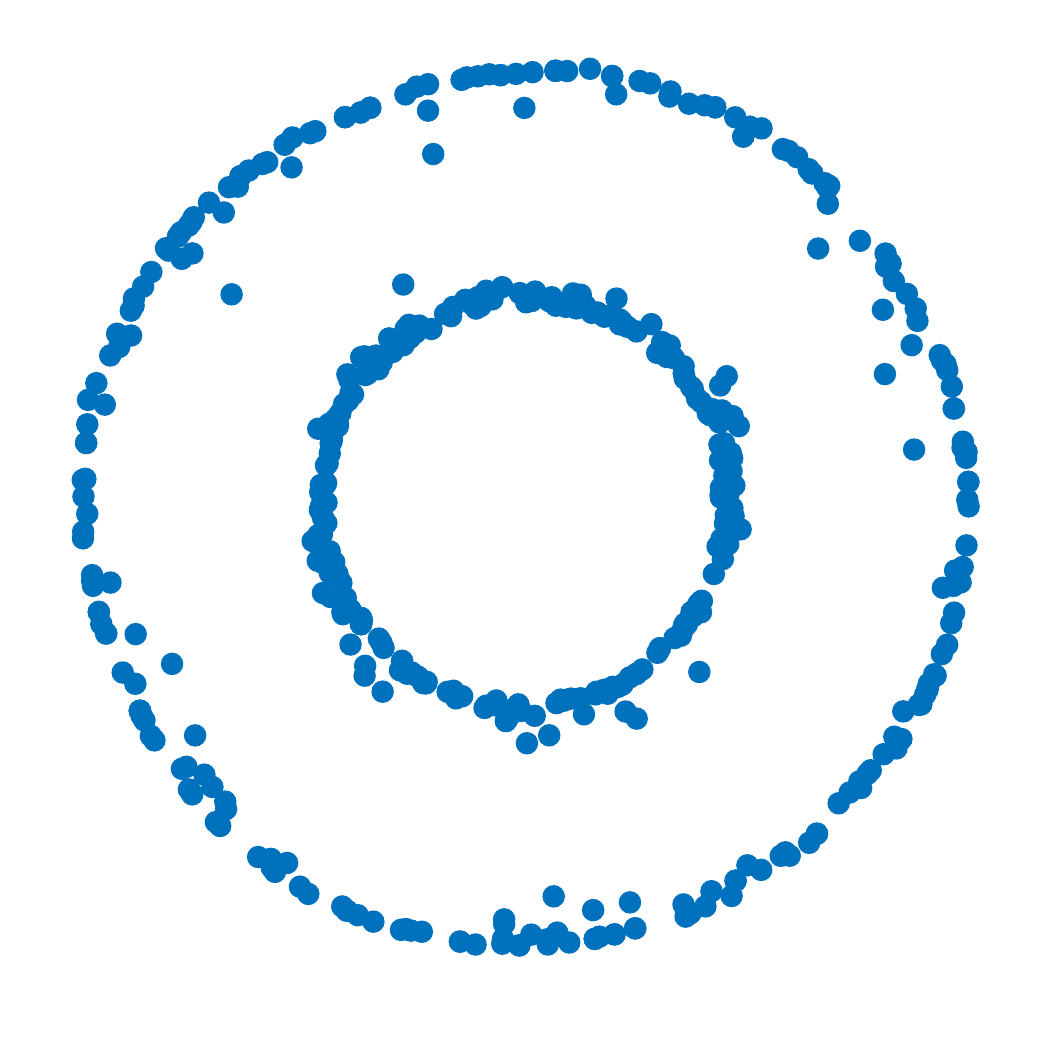}
        \end{minipage}
        \begin{minipage}[t]{0.08\textwidth}
            \centering
            $\tau=2$ \\ 
            \includegraphics[width=1\textwidth]{figures/concen/ms-concen-2.pdf}
        \end{minipage}
        \begin{minipage}[t]{0.08\textwidth}
            \centering
            $\tau=3$ \\ 
            \includegraphics[width=1\textwidth]{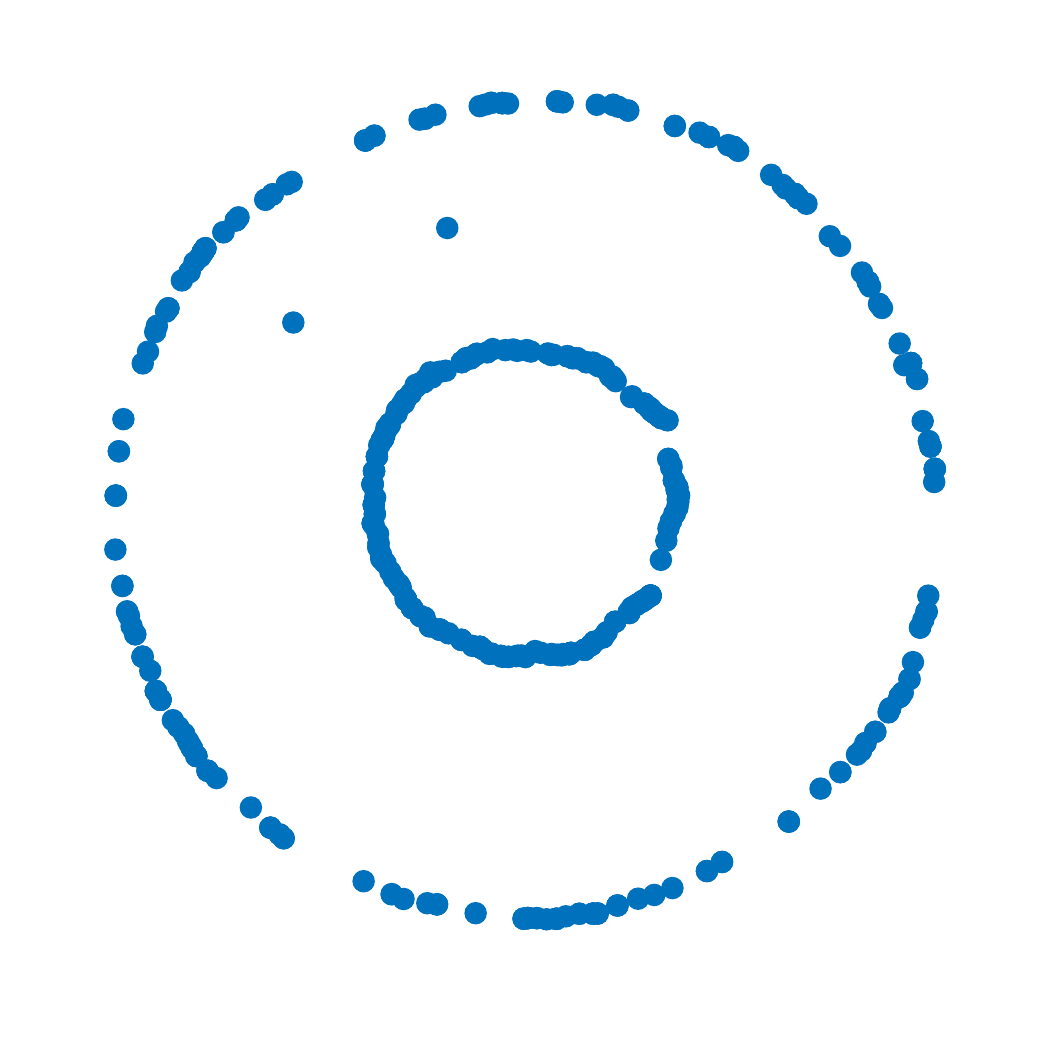}
        \end{minipage}
        \begin{minipage}[t]{0.08\textwidth}
            \centering
            $\tau=4$ \\ 
            \includegraphics[width=1\textwidth]{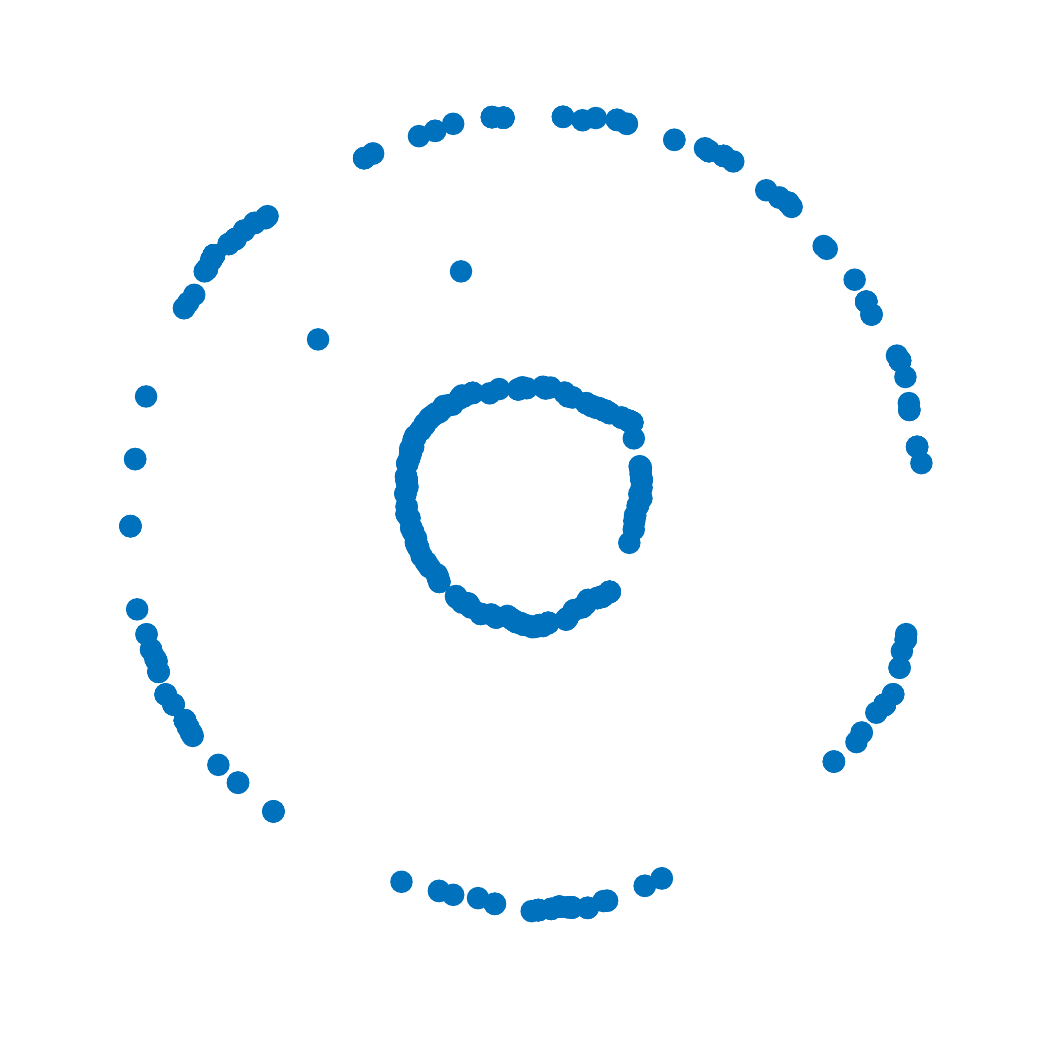}
        \end{minipage}
    }

    \subfloat[GT-$\lambda$-2.5]{
        \begin{minipage}[t]{0.08\textwidth}
            \centering
            \includegraphics[width=1\textwidth]{figures/concen/gtv-concen-0.pdf}
        \end{minipage}
        \begin{minipage}[t]{0.08\textwidth}
            \centering
            \includegraphics[width=1\textwidth]{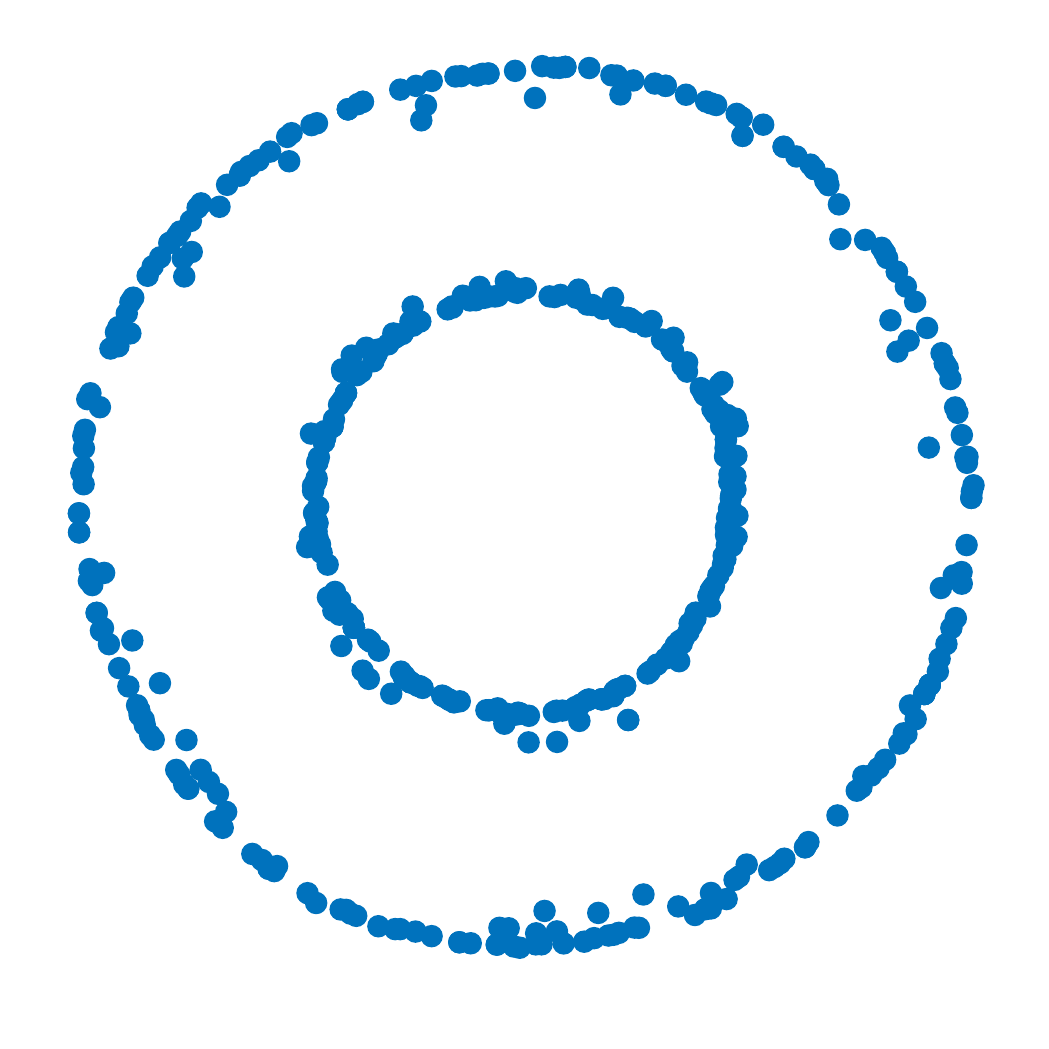}
        \end{minipage}
        \begin{minipage}[t]{0.08\textwidth}
            \centering
            \includegraphics[width=1\textwidth]{figures/concen/gtv-concen-2.pdf}
        \end{minipage}
        \begin{minipage}[t]{0.08\textwidth}
            \centering
            \includegraphics[width=1\textwidth]{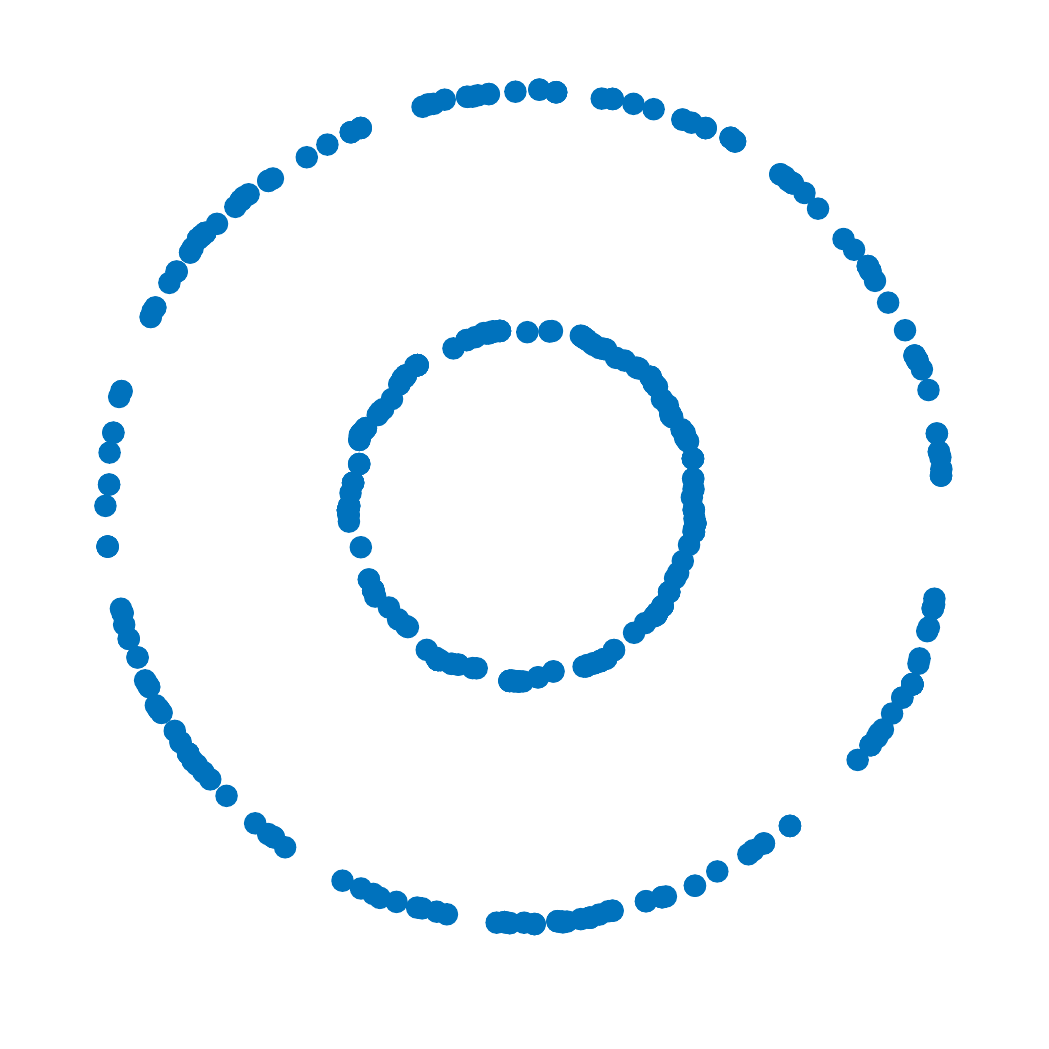}
        \end{minipage}
        \begin{minipage}[t]{0.08\textwidth}
            \centering
            \includegraphics[width=1\textwidth]{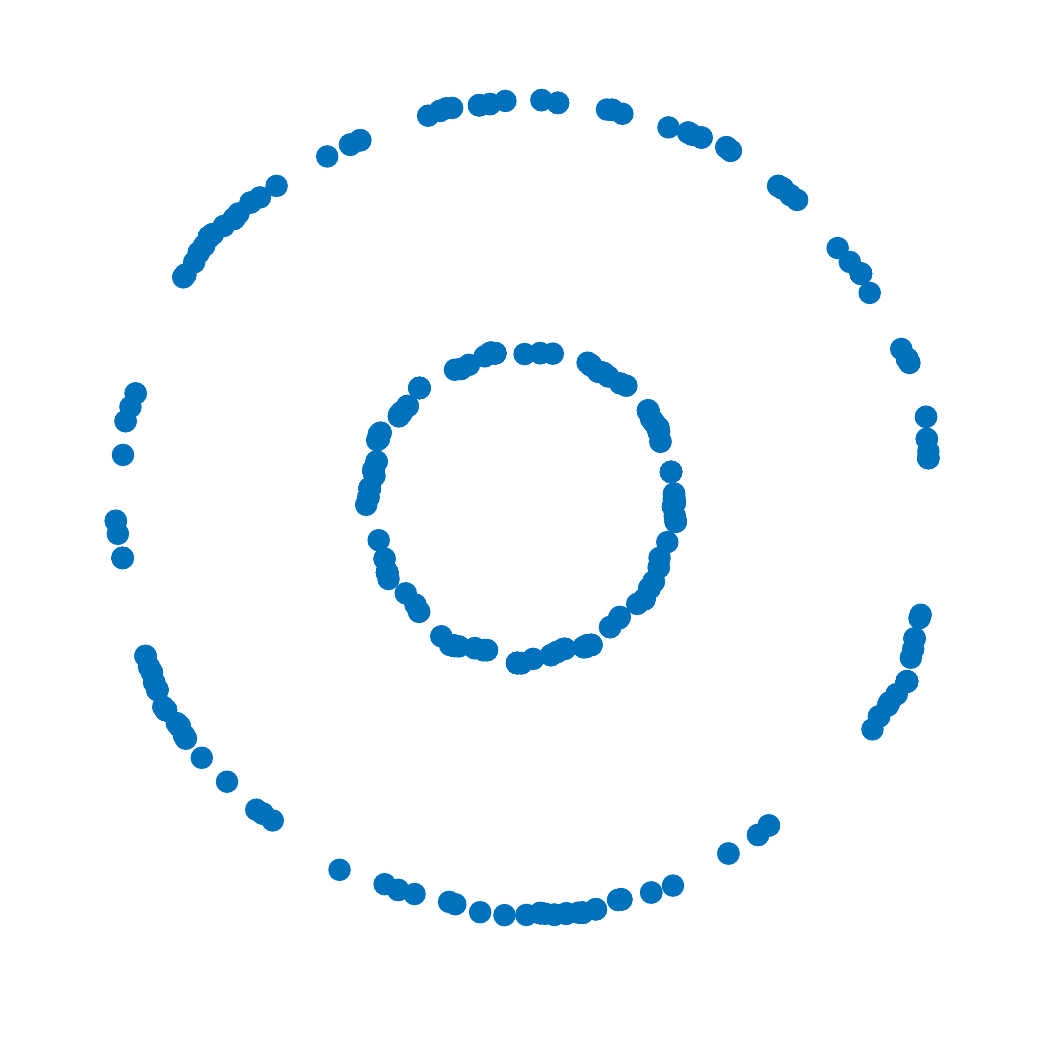}
        \end{minipage}
    }

    \subfloat[WT2]{
        \begin{minipage}[t]{0.08\textwidth}
            \centering
            \includegraphics[width=1\textwidth]{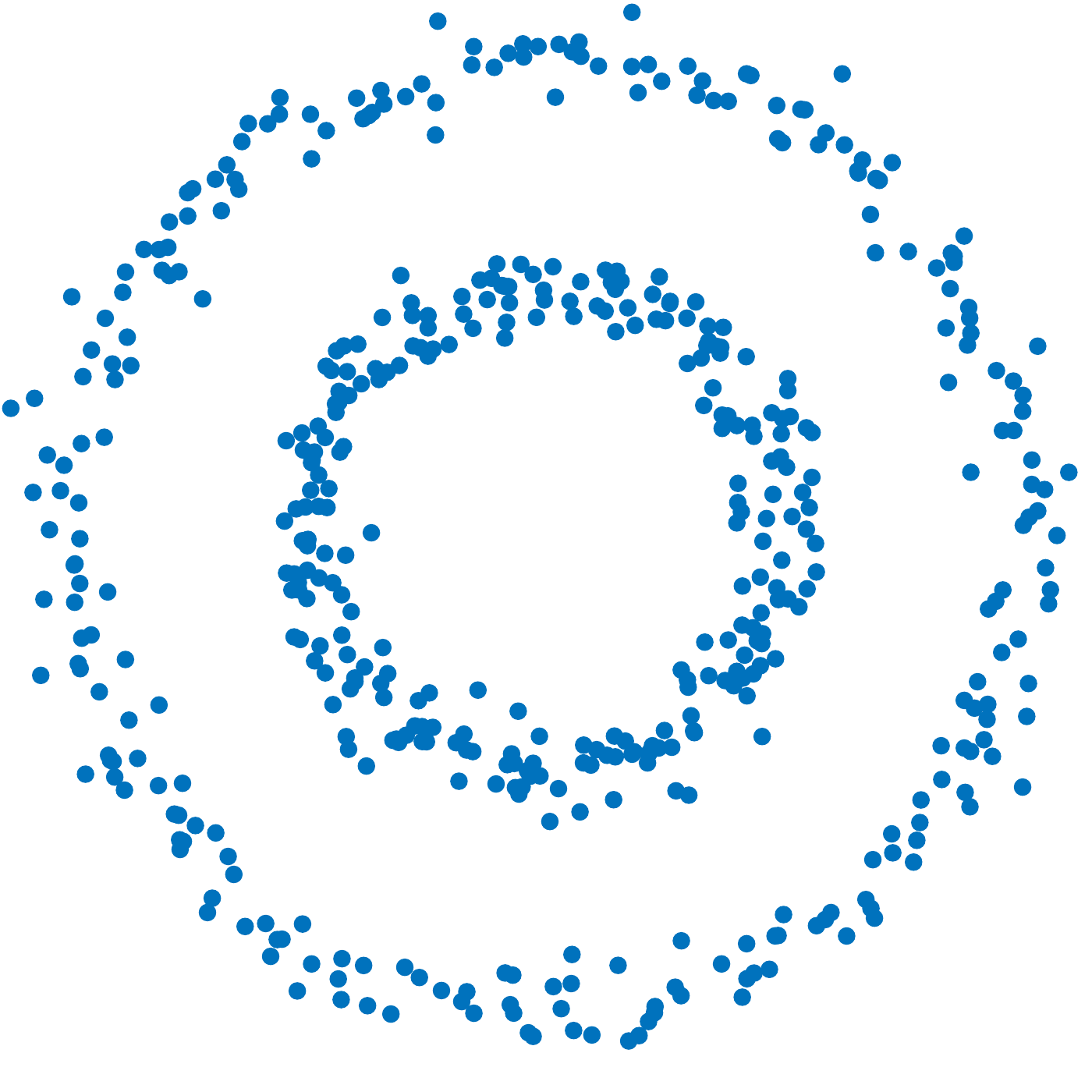}
        \end{minipage}
        \begin{minipage}[t]{0.08\textwidth}
            \centering
            \includegraphics[width=1\textwidth]{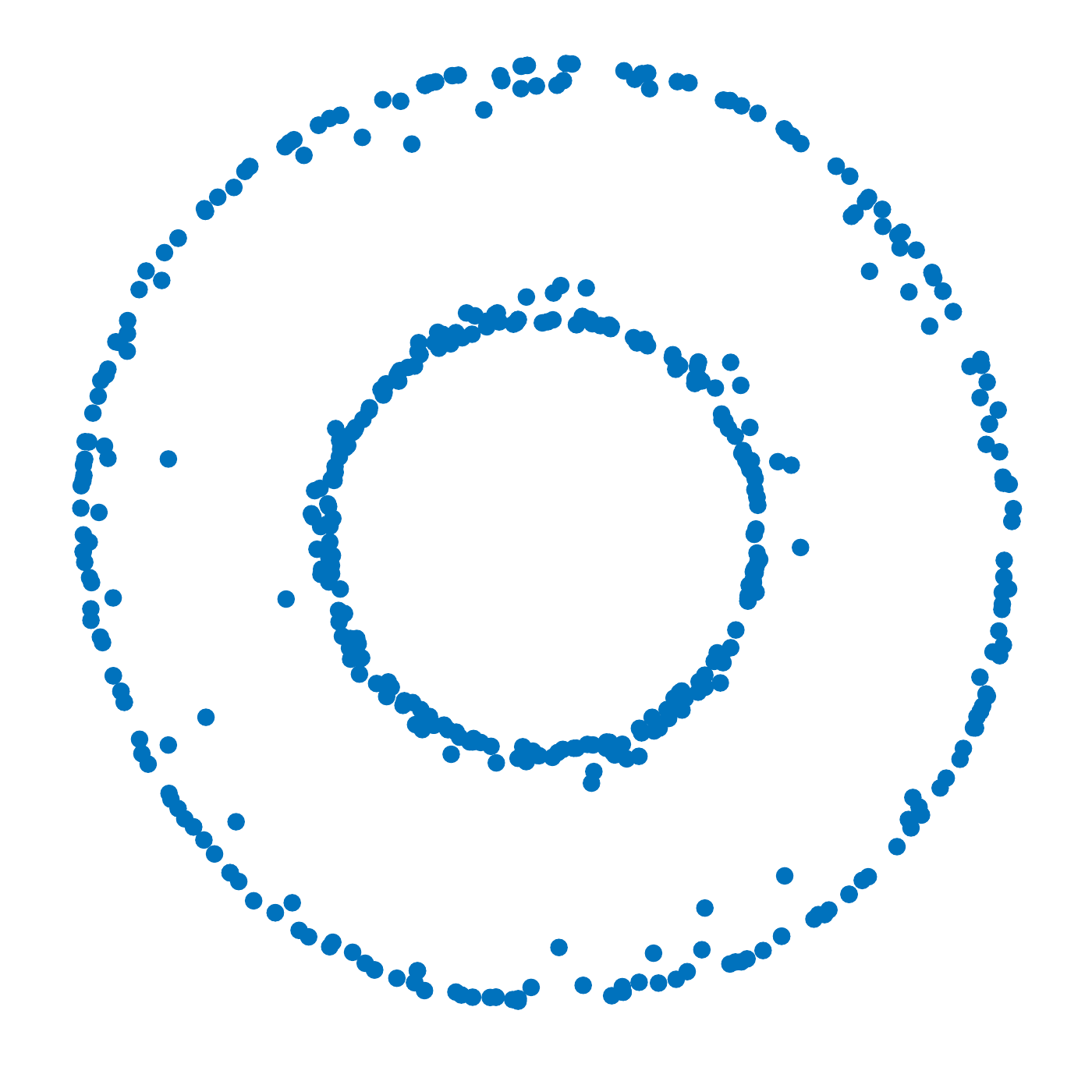}
        \end{minipage}
        \begin{minipage}[t]{0.08\textwidth}
            \centering
            \includegraphics[width=1\textwidth]{figures/concen/wt2-fix-emd-concencircles-2.pdf}
        \end{minipage}
        \begin{minipage}[t]{0.08\textwidth}
            \centering
            \includegraphics[width=1\textwidth]{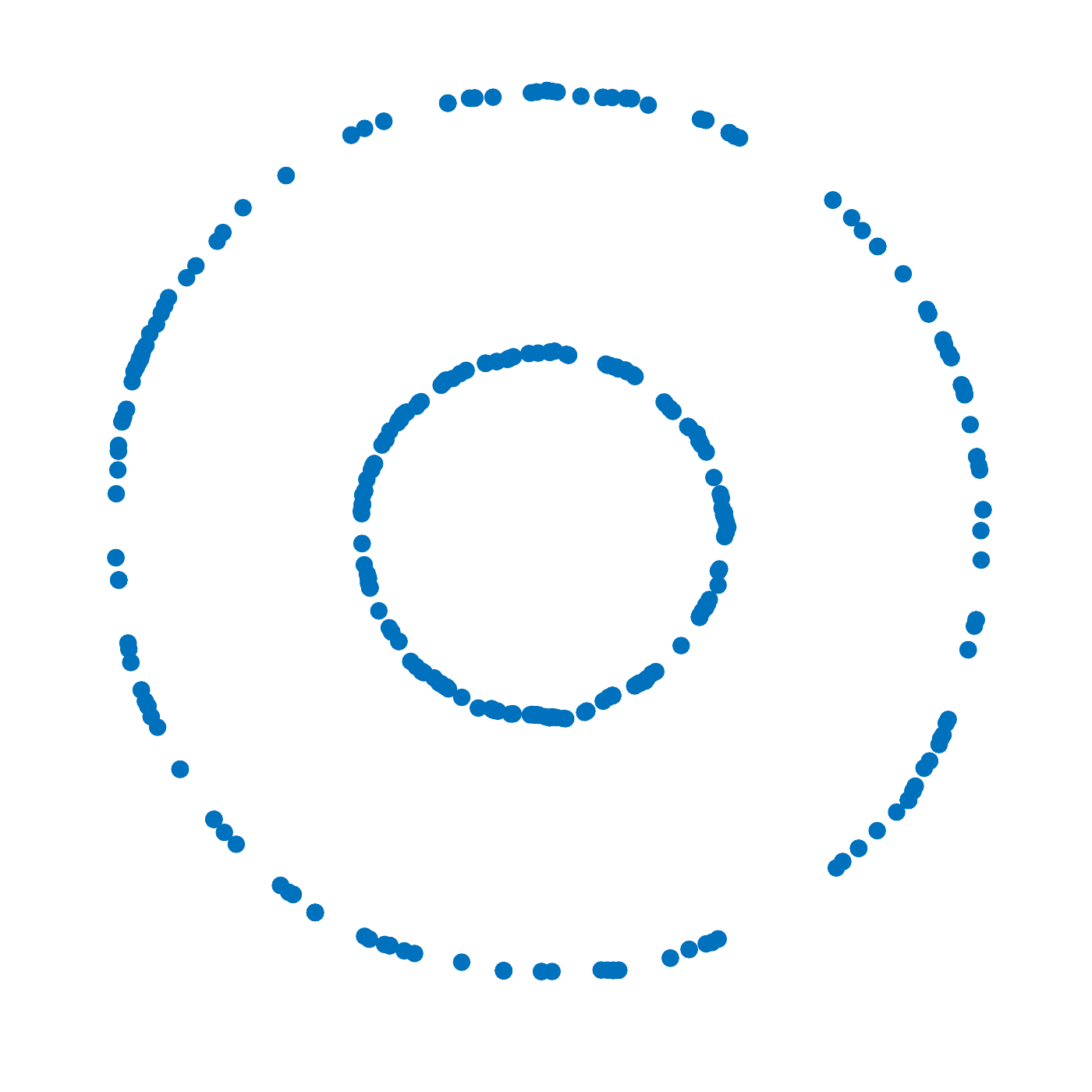}
        \end{minipage}
        \begin{minipage}[t]{0.08\textwidth}
            \centering
            \includegraphics[width=1\textwidth]{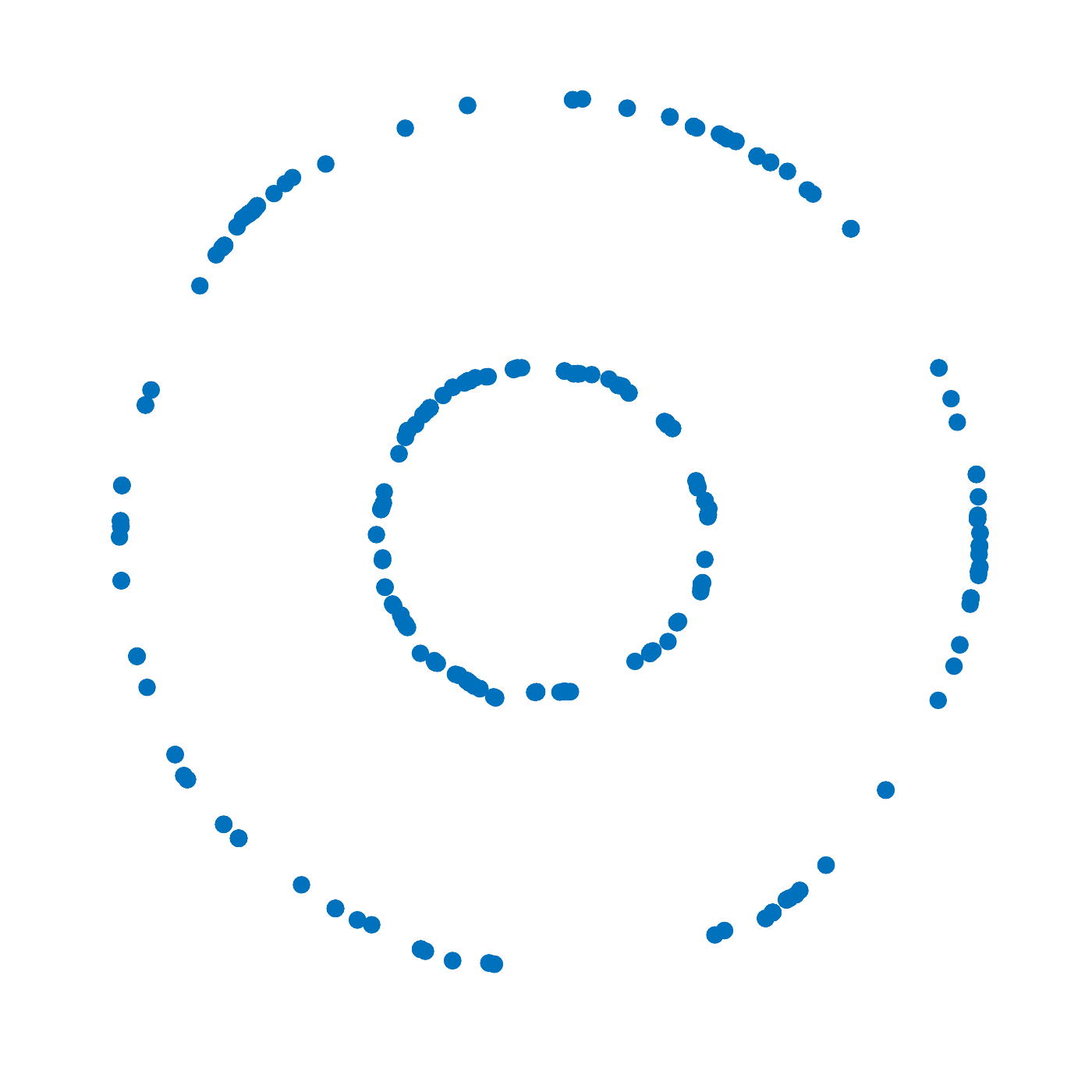}
        \end{minipage}
    }

    \subfloat[WT1]{
        \begin{minipage}[t]{0.08\textwidth}
            \centering
            \includegraphics[width=1\textwidth]{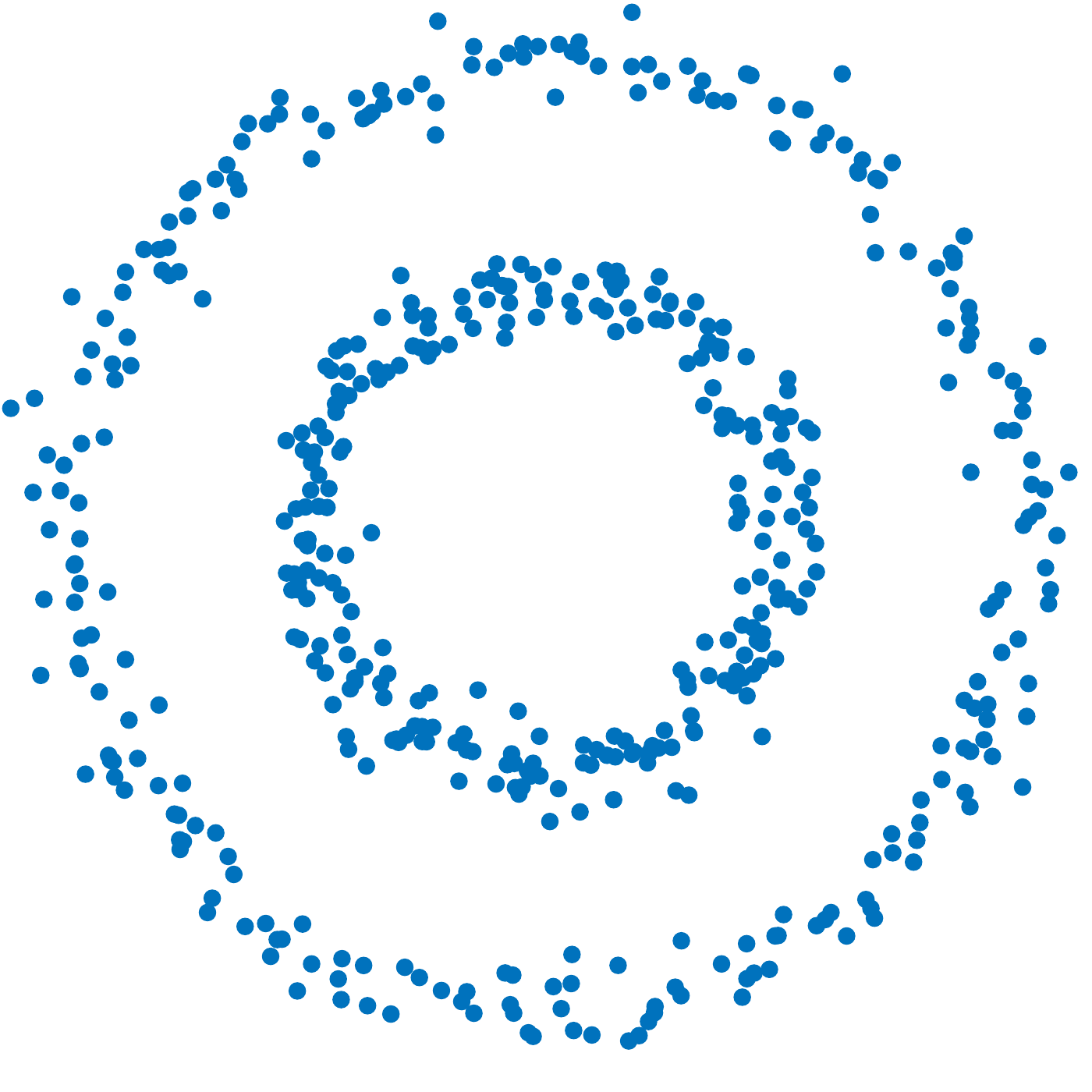}
        \end{minipage}
        \begin{minipage}[t]{0.08\textwidth}
            \centering
            \includegraphics[width=1\textwidth]{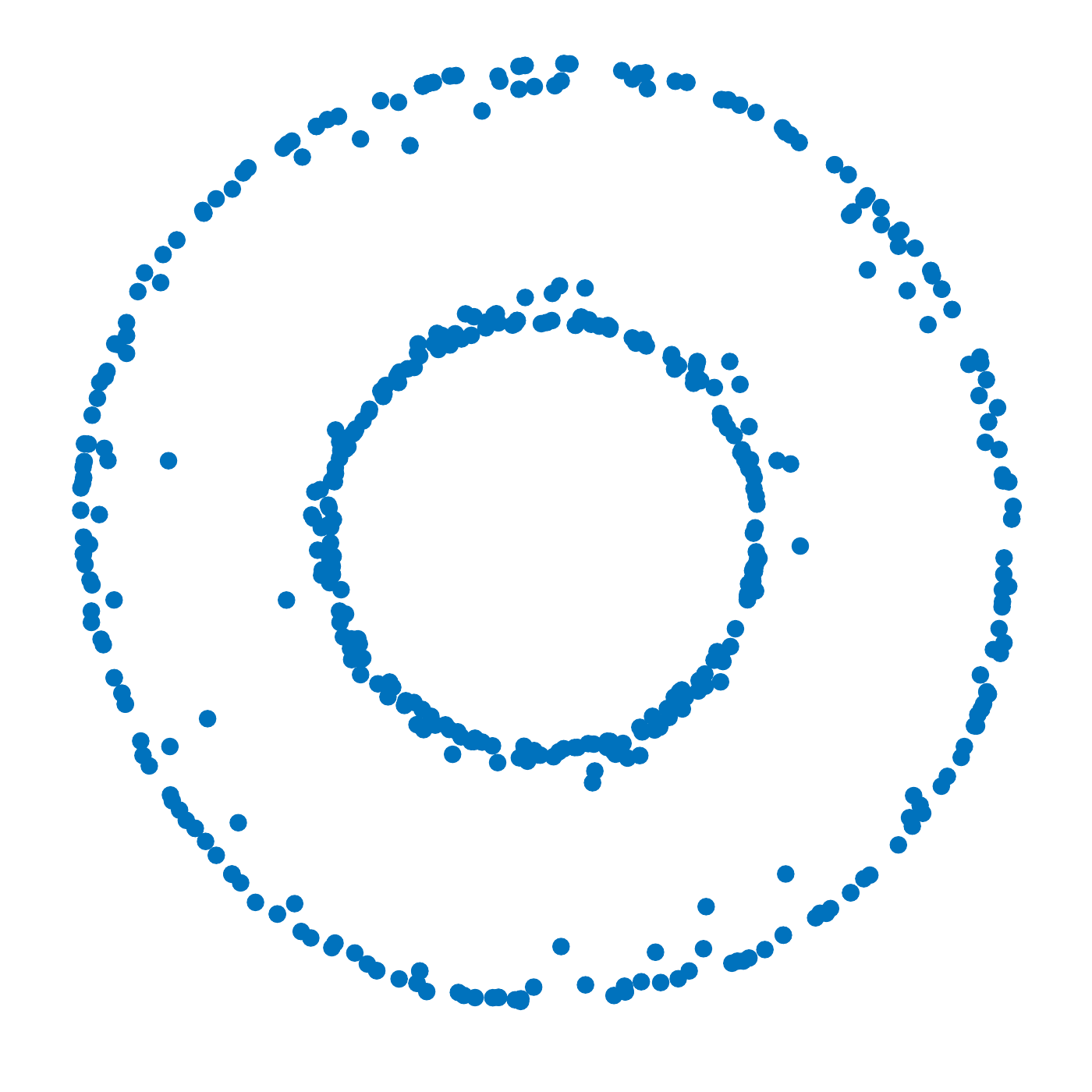}
        \end{minipage}
        \begin{minipage}[t]{0.08\textwidth}
            \centering
            \includegraphics[width=1\textwidth]{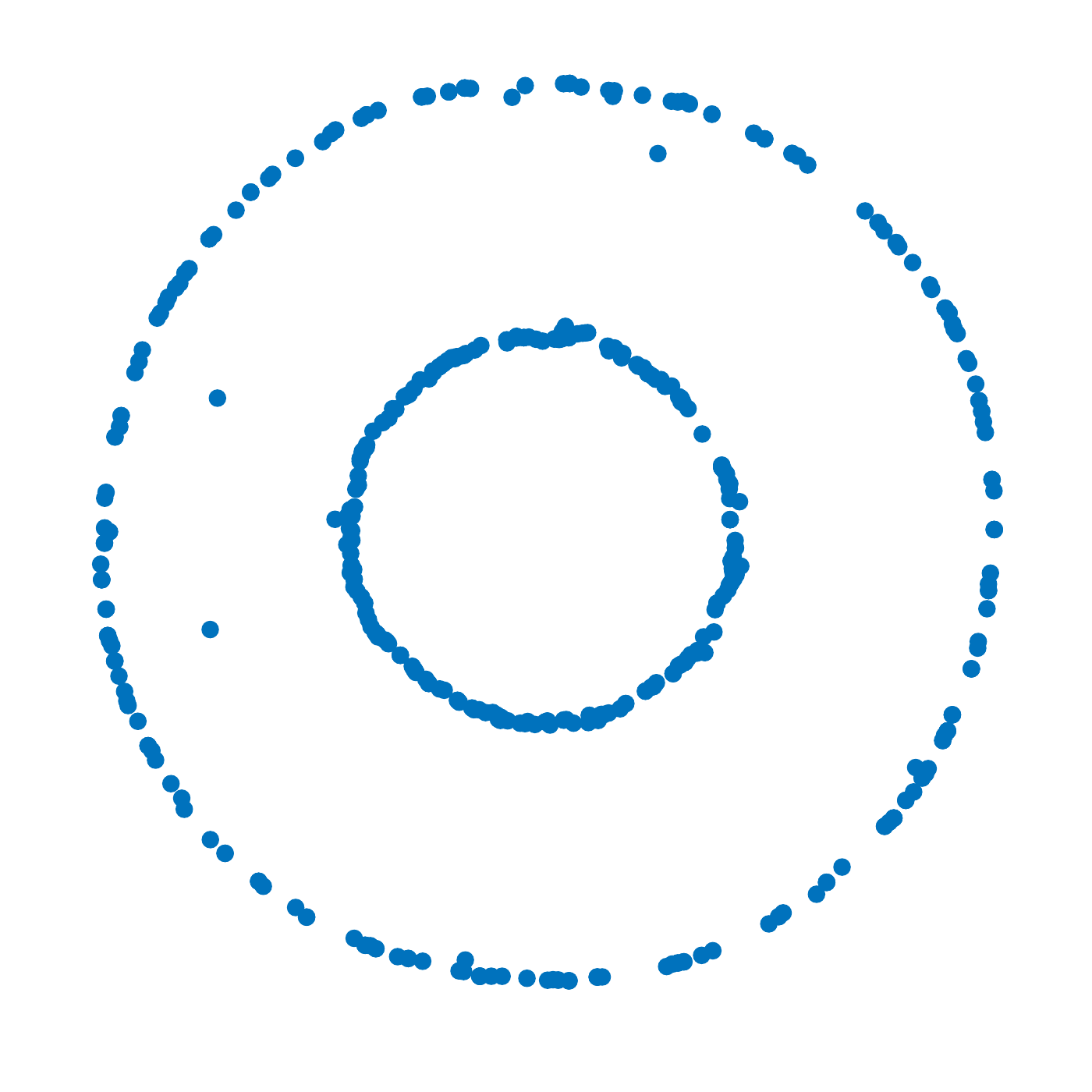}
        \end{minipage}
        \begin{minipage}[t]{0.08\textwidth}
            \centering
            \includegraphics[width=1\textwidth]{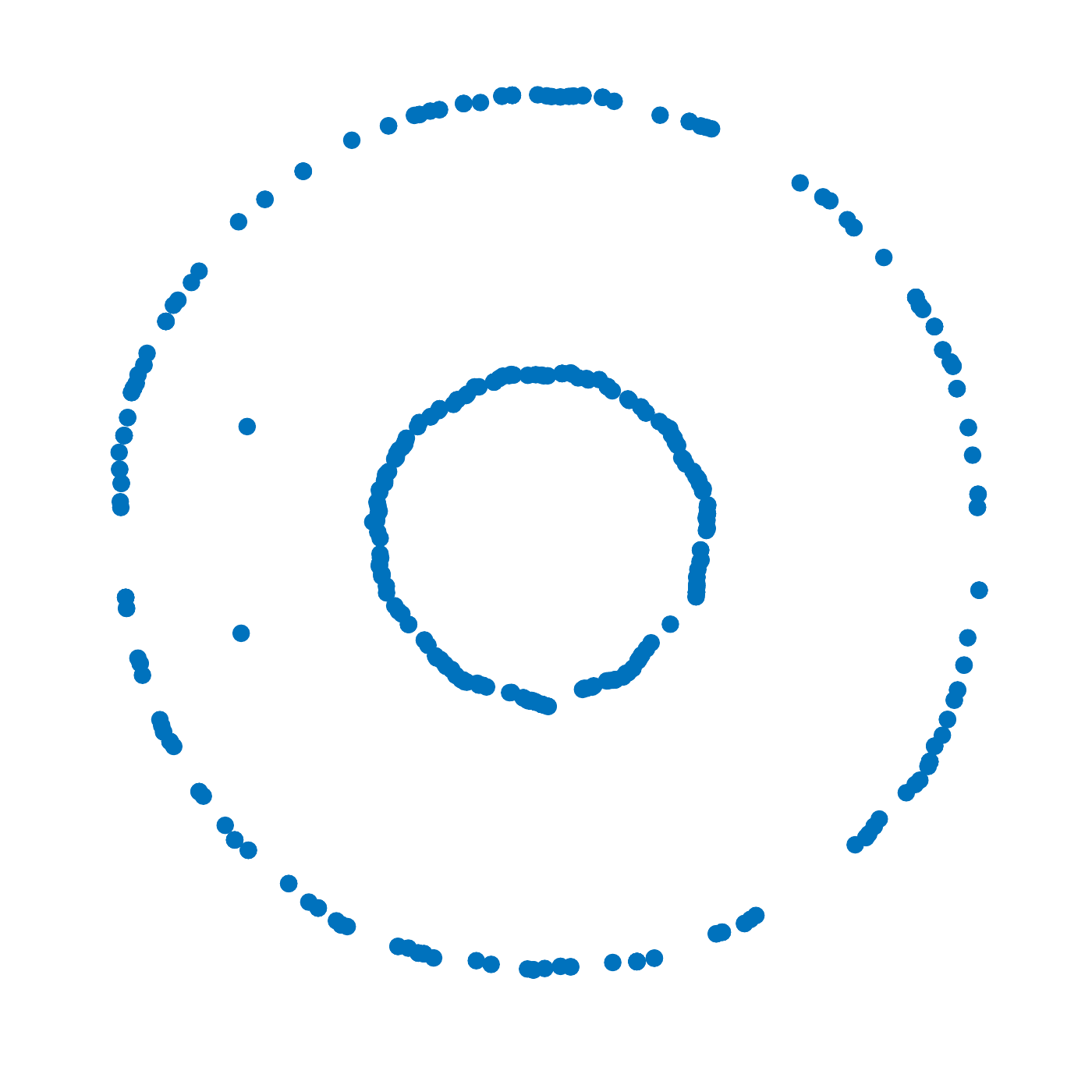}
        \end{minipage}
        \begin{minipage}[t]{0.08\textwidth}
            \centering
            \includegraphics[width=1\textwidth]{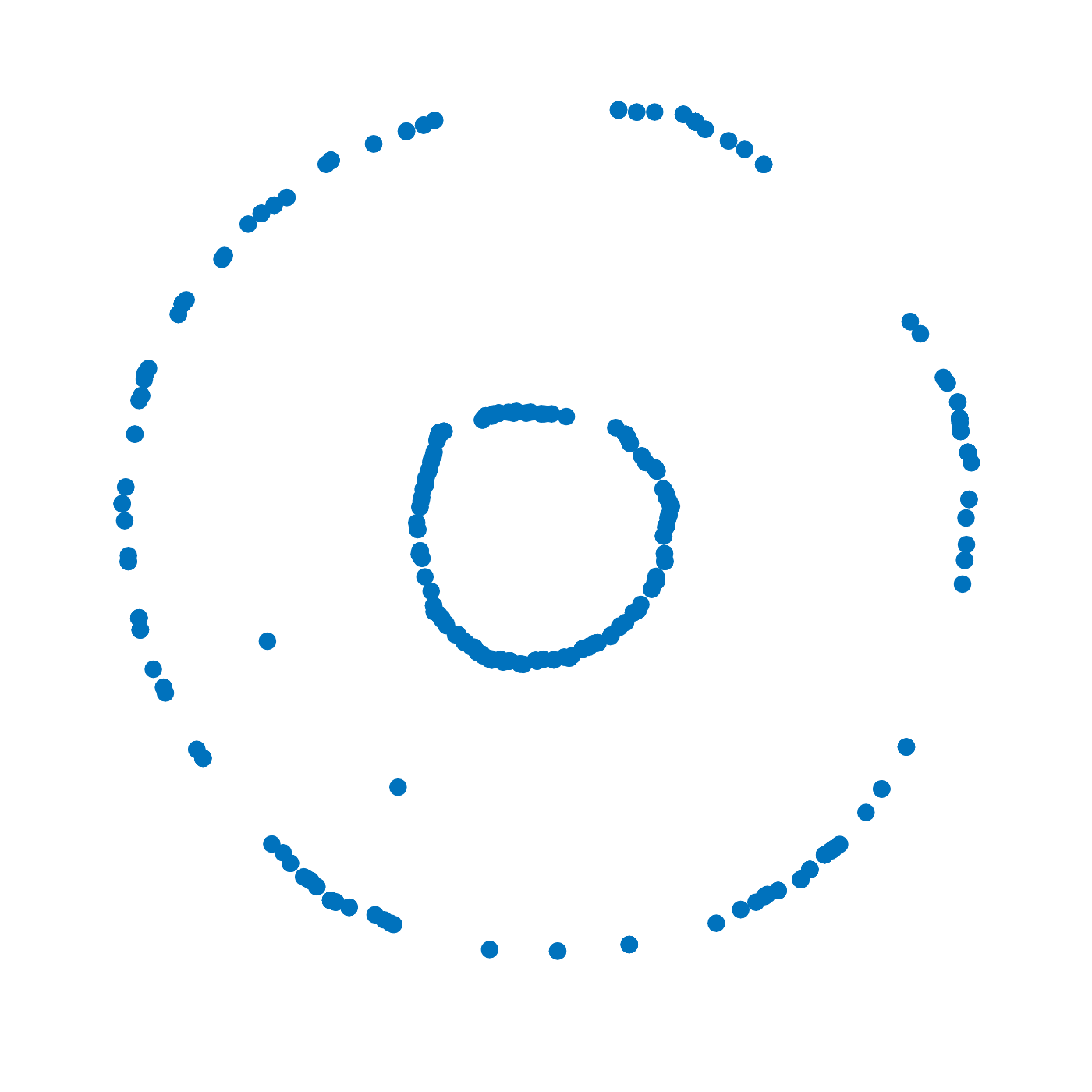}
        \end{minipage}
    }
    \caption{Denoising of concentric circles.}
    \label{fig:supp-concen}
\end{figure}

\subsection{Denoising of a noisy circle}
In this example, we analyze a noisy circle composed of 200 points uniformly spaced on the circle lying in the square $[-1, 1]^2$ together with 500 noisy points (following the uniform distribution). We compare the performance of MS, GT, WT2 and WT1 in the course of 4 iterations. Results are shown in Figure~\ref{fig:supp-S1}. We see that all methods clean the noisy points to some extent and WT2 has the best performance that it absorbs all points within the circle after the fourth iteration. GT with $\lambda=1$ has similar performance as MS and WT. After the fourth iteration, GT with $\lambda=10$ better absorbs noisy points within the circle than GT with $\lambda=1$.

\begin{figure}[htb]
    \centering
    \subfloat[MS]{
        \begin{minipage}[t]{0.08\textwidth}
            \centering
            $\tau=0$ \\ 
            \includegraphics[width=1\textwidth]{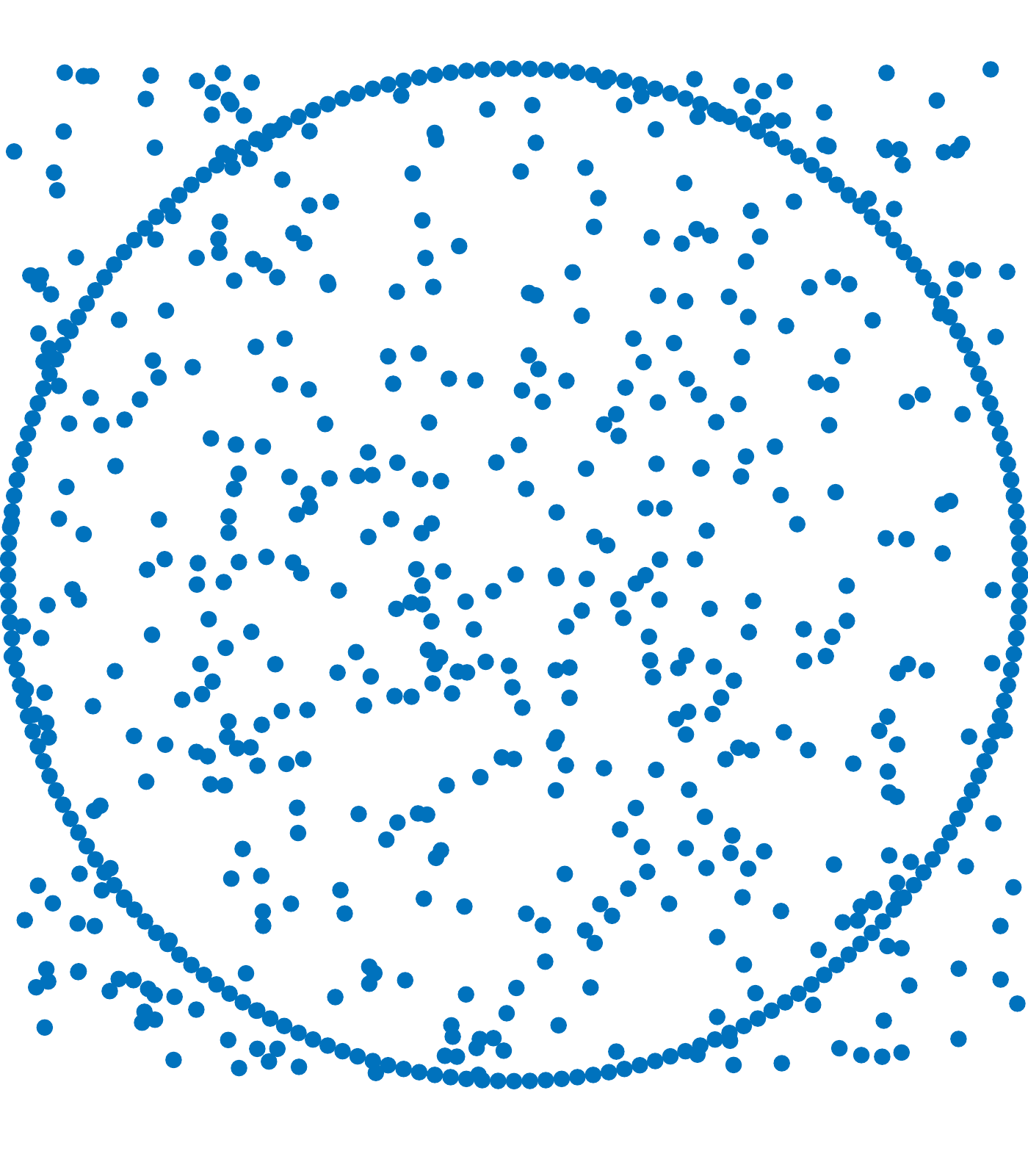}
        \end{minipage}
        \begin{minipage}[t]{0.08\textwidth}
            \centering
            $\tau=1$ \\ 
            \includegraphics[width=1\textwidth]{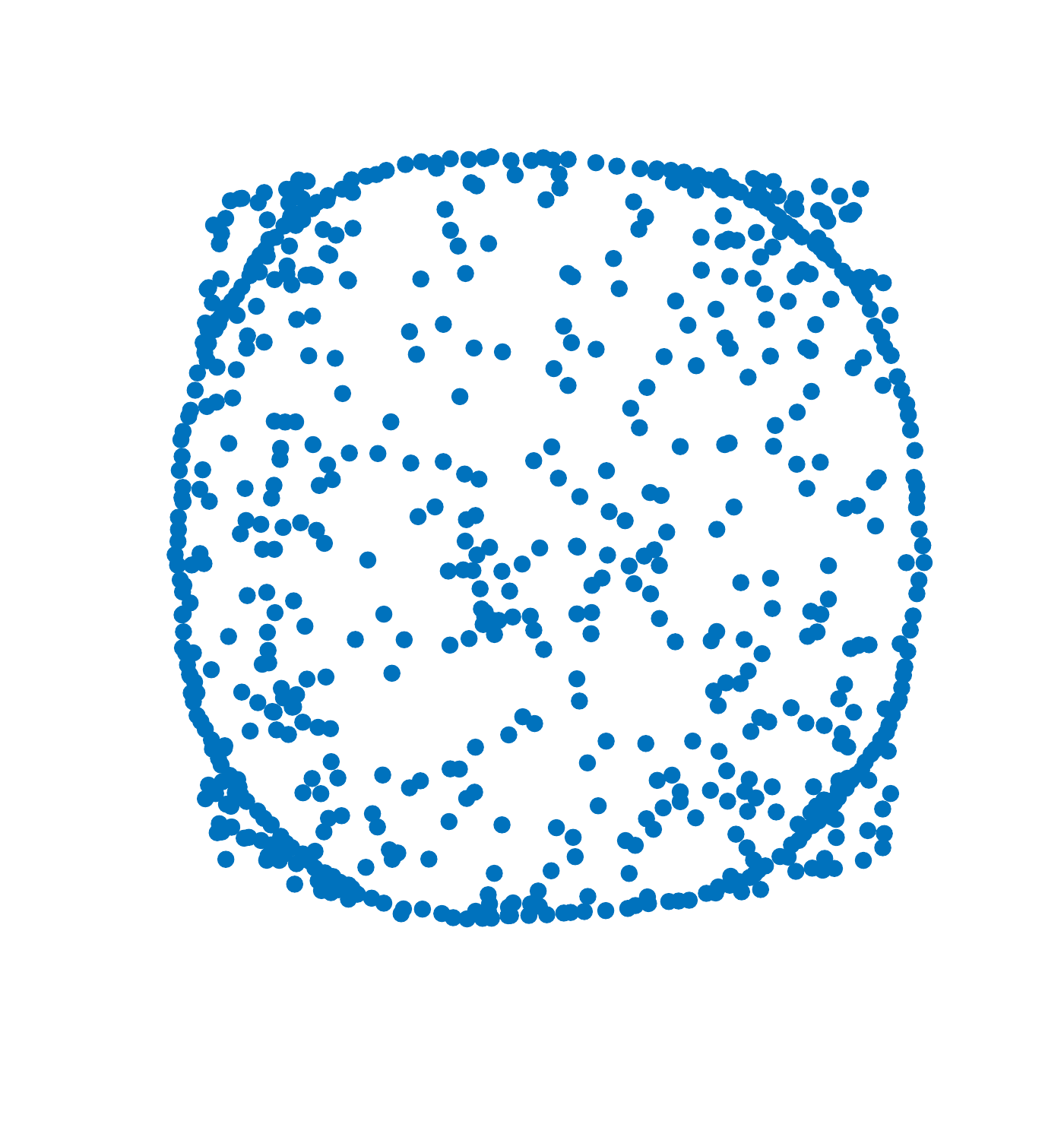}
        \end{minipage}
        \begin{minipage}[t]{0.08\textwidth}
            \centering
            $\tau=2$ \\ 
            \includegraphics[width=1\textwidth]{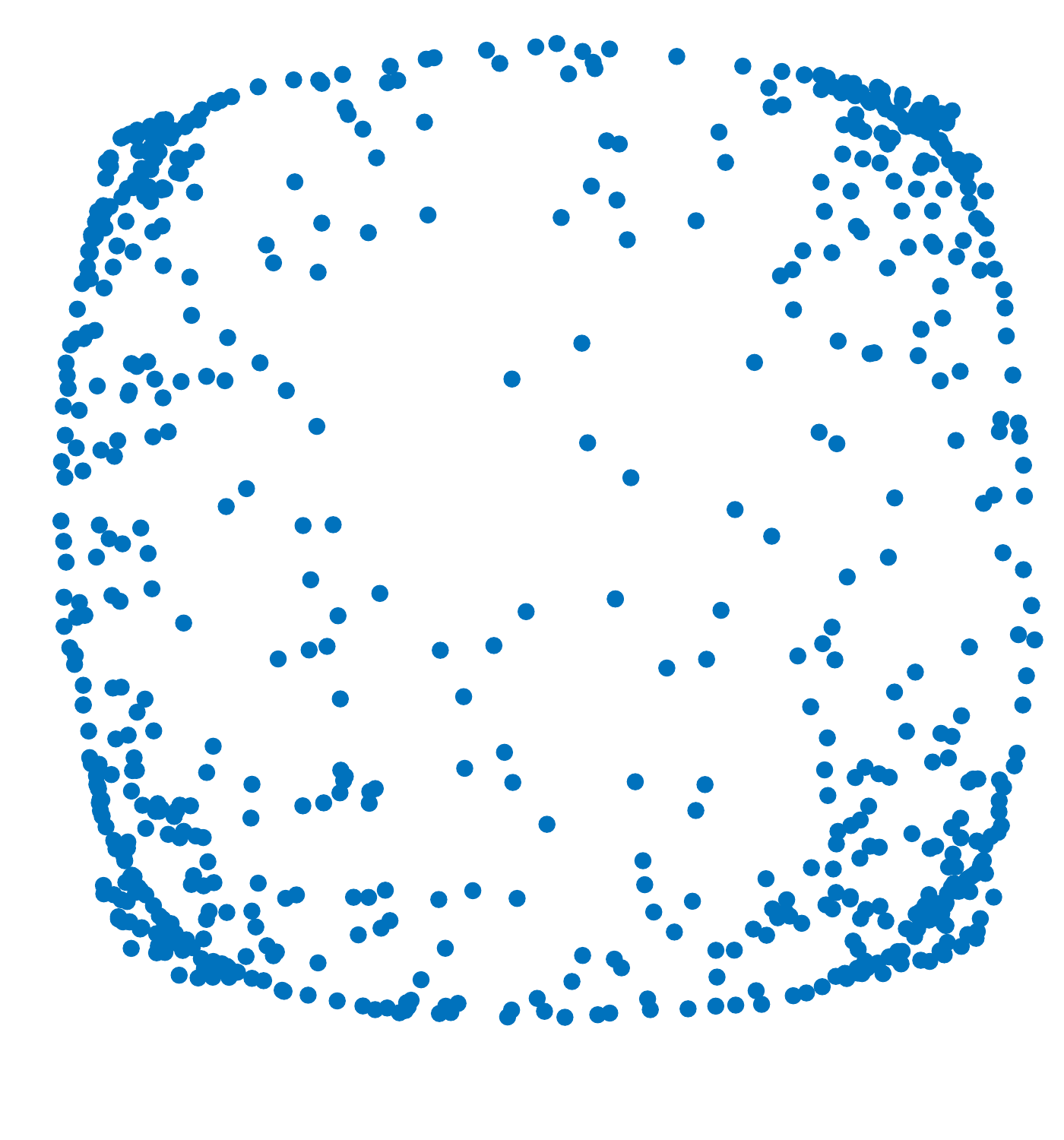}
        \end{minipage}
        \begin{minipage}[t]{0.08\textwidth}
            \centering
            $\tau=3$ \\ 
            \includegraphics[width=1\textwidth]{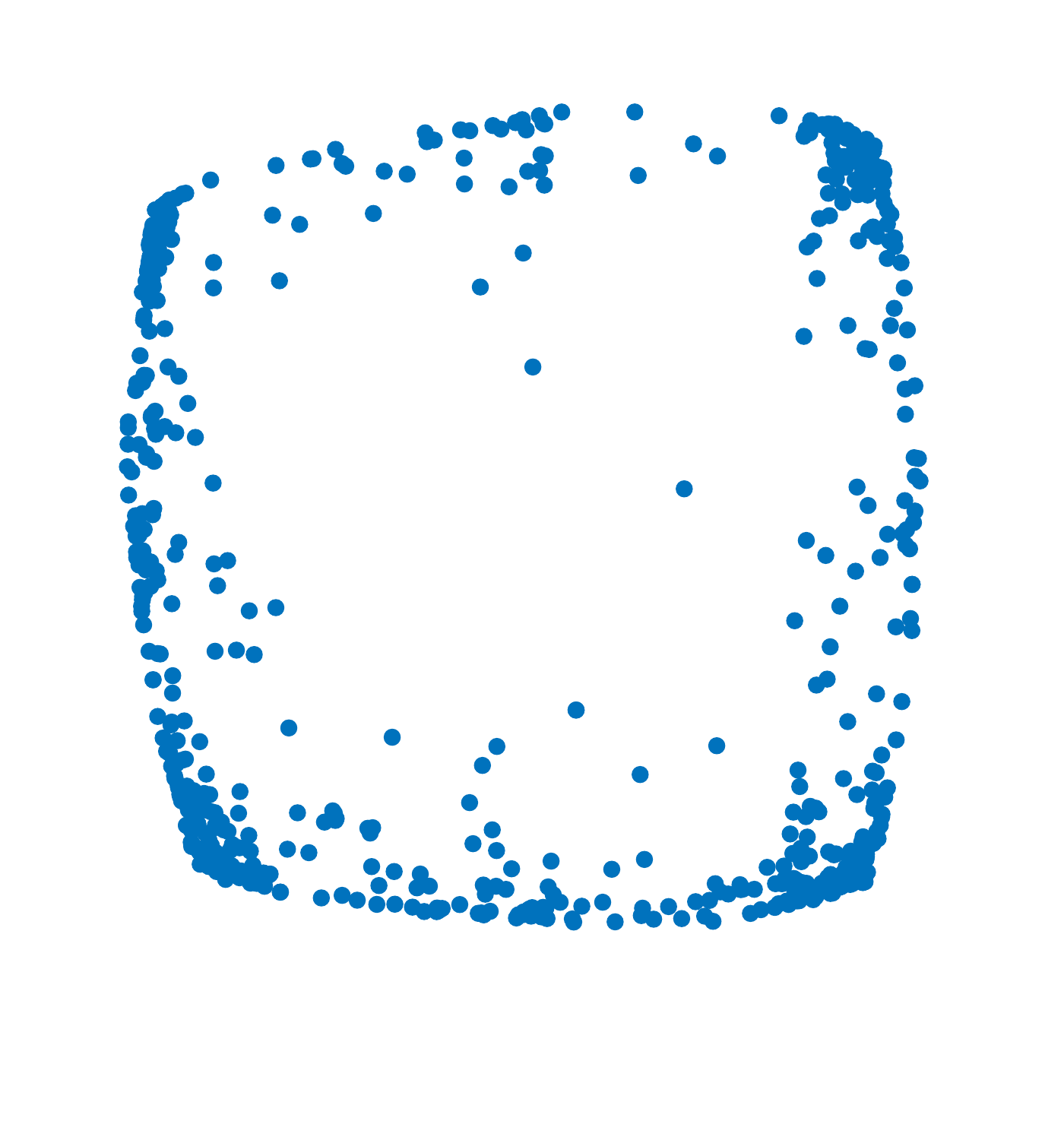}
        \end{minipage}
        \begin{minipage}[t]{0.08\textwidth}
            \centering
            $\tau=4$ \\ 
            \includegraphics[width=1\textwidth]{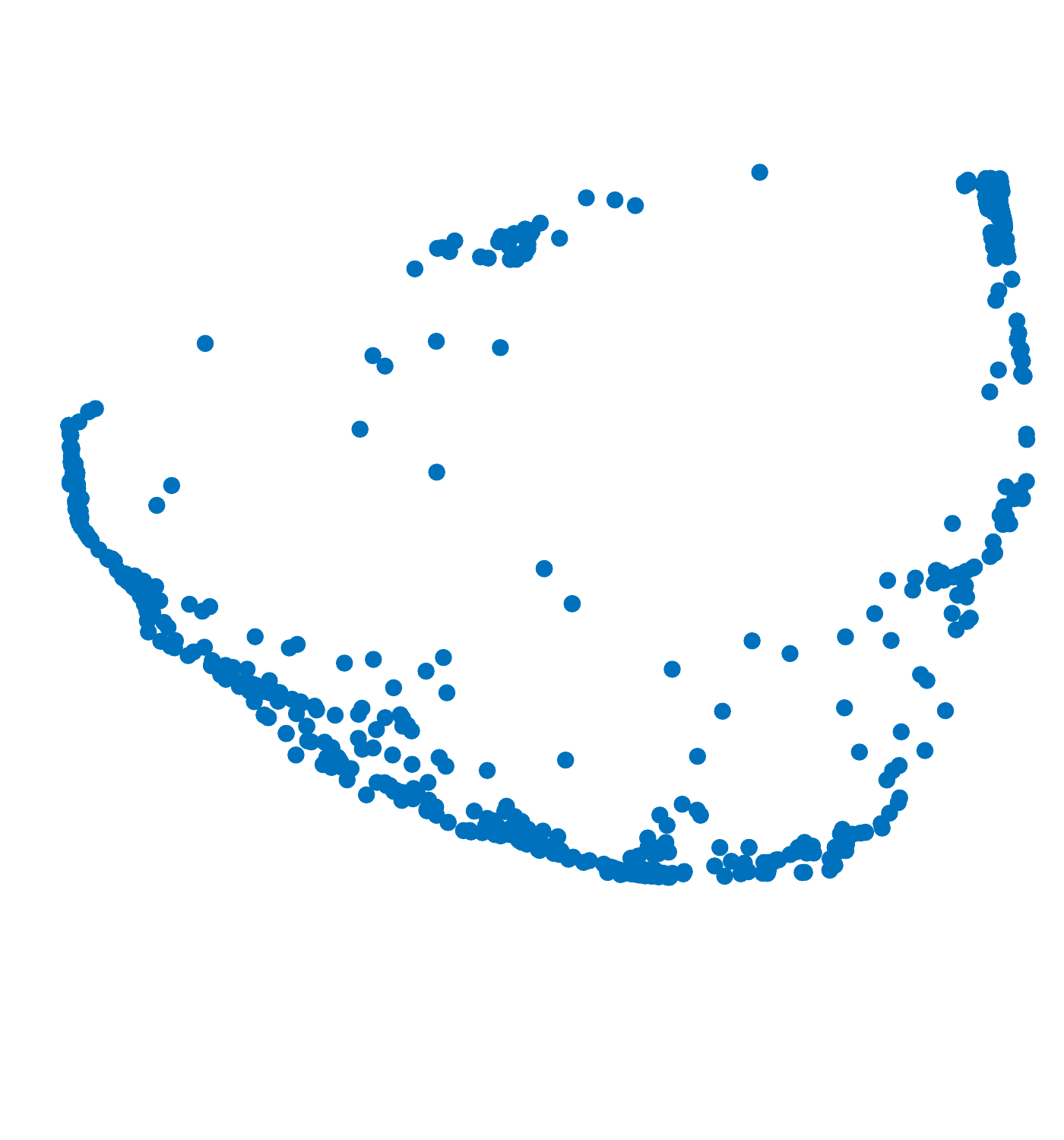}
        \end{minipage}
    }

    \subfloat[GT-$\lambda$-1]{
        \begin{minipage}[t]{0.08\textwidth}
            \centering
            \includegraphics[width=1\textwidth]{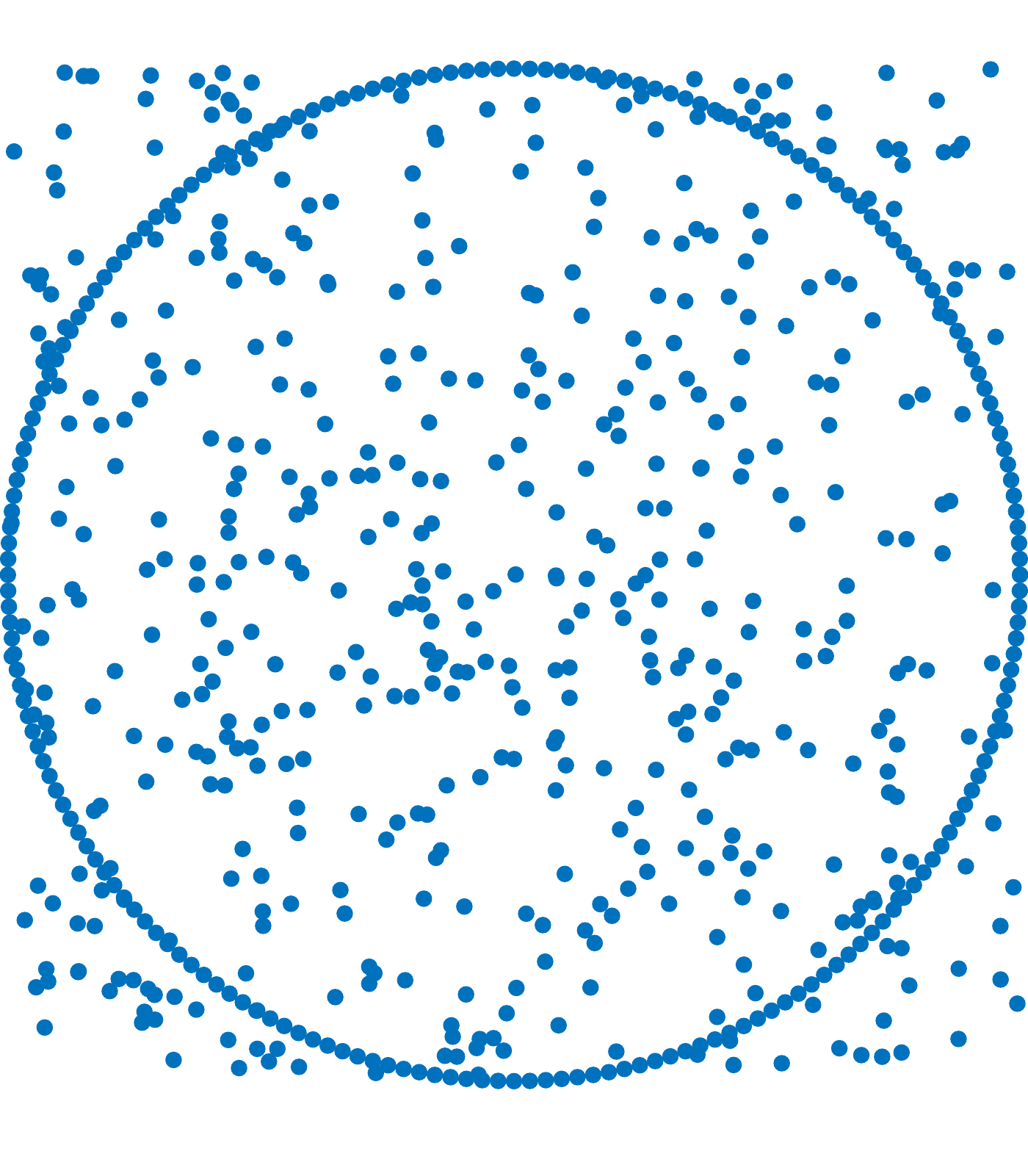}
        \end{minipage}
        \begin{minipage}[t]{0.08\textwidth}
            \centering
            \includegraphics[width=1\textwidth]{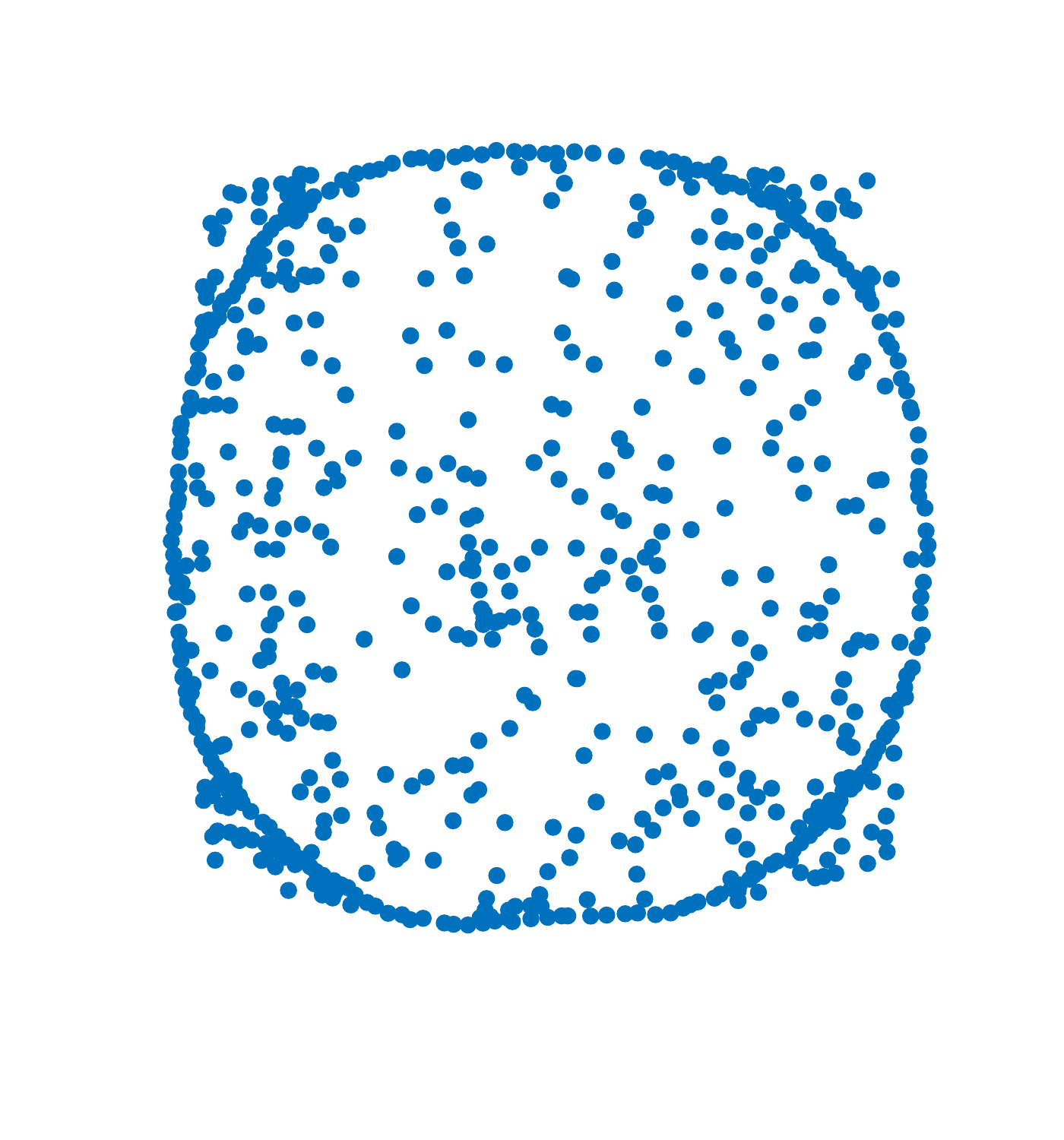}
        \end{minipage}
        \begin{minipage}[t]{0.08\textwidth}
            \centering
            \includegraphics[width=1\textwidth]{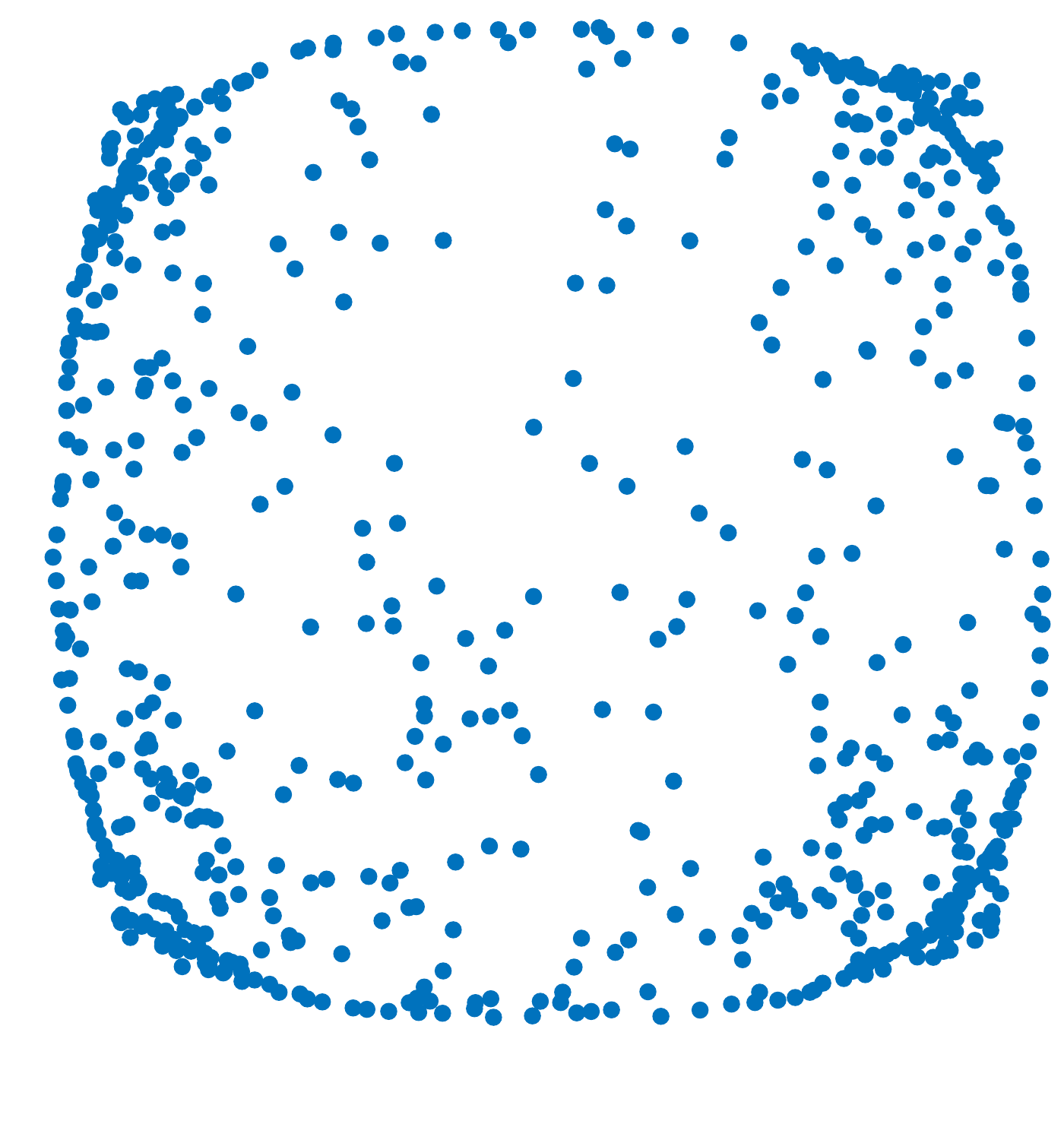}
        \end{minipage}
        \begin{minipage}[t]{0.08\textwidth}
            \centering
            \includegraphics[width=1\textwidth]{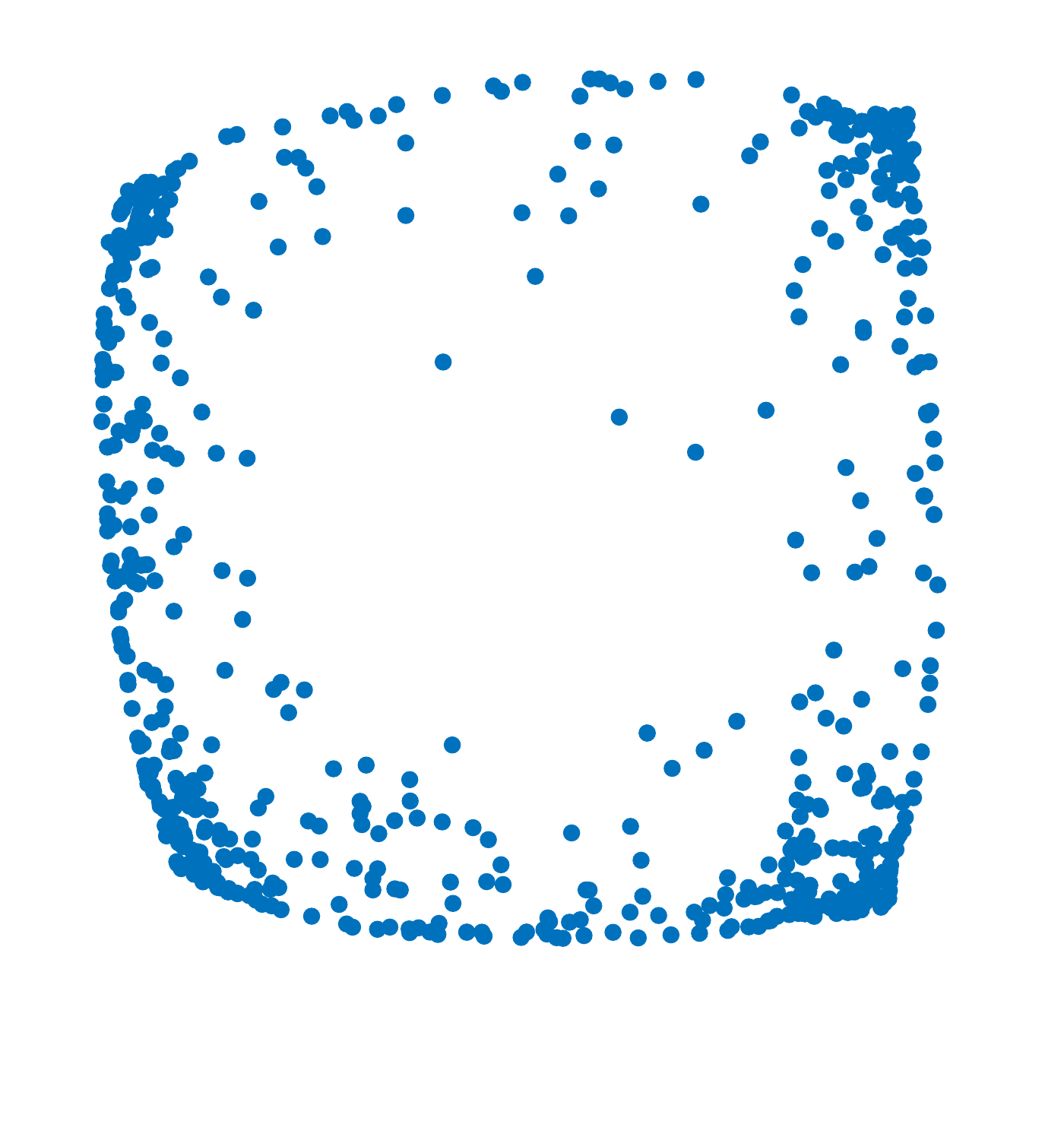}
        \end{minipage}
        \begin{minipage}[t]{0.08\textwidth}
            \centering
            \includegraphics[width=1\textwidth]{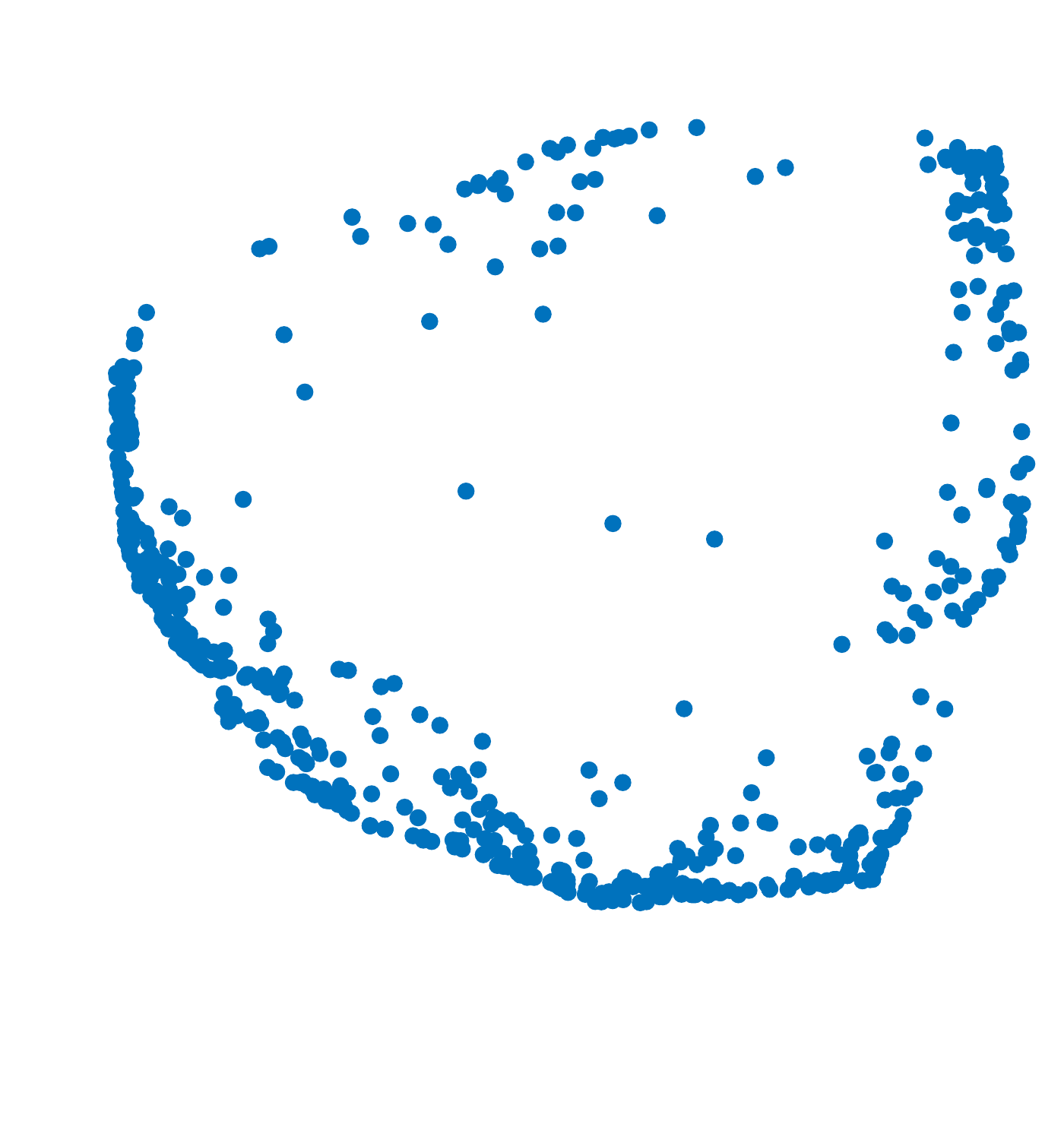}
        \end{minipage}
    }

    \subfloat[GT-$\lambda$-10]{
        \begin{minipage}[t]{0.08\textwidth}
            \centering
            \includegraphics[width=1\textwidth]{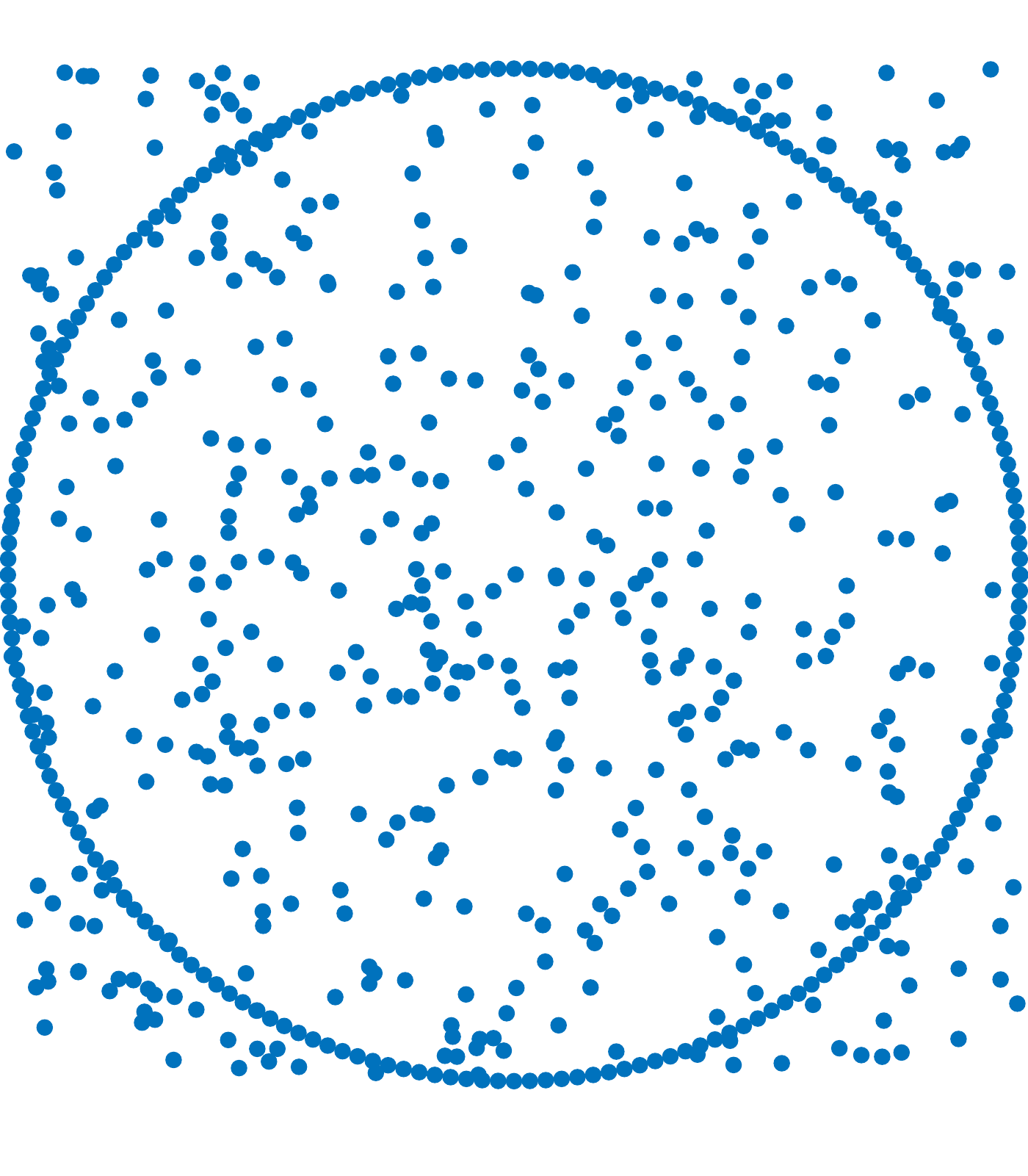}
        \end{minipage}
        \begin{minipage}[t]{0.08\textwidth}
            \centering
            \includegraphics[width=1\textwidth]{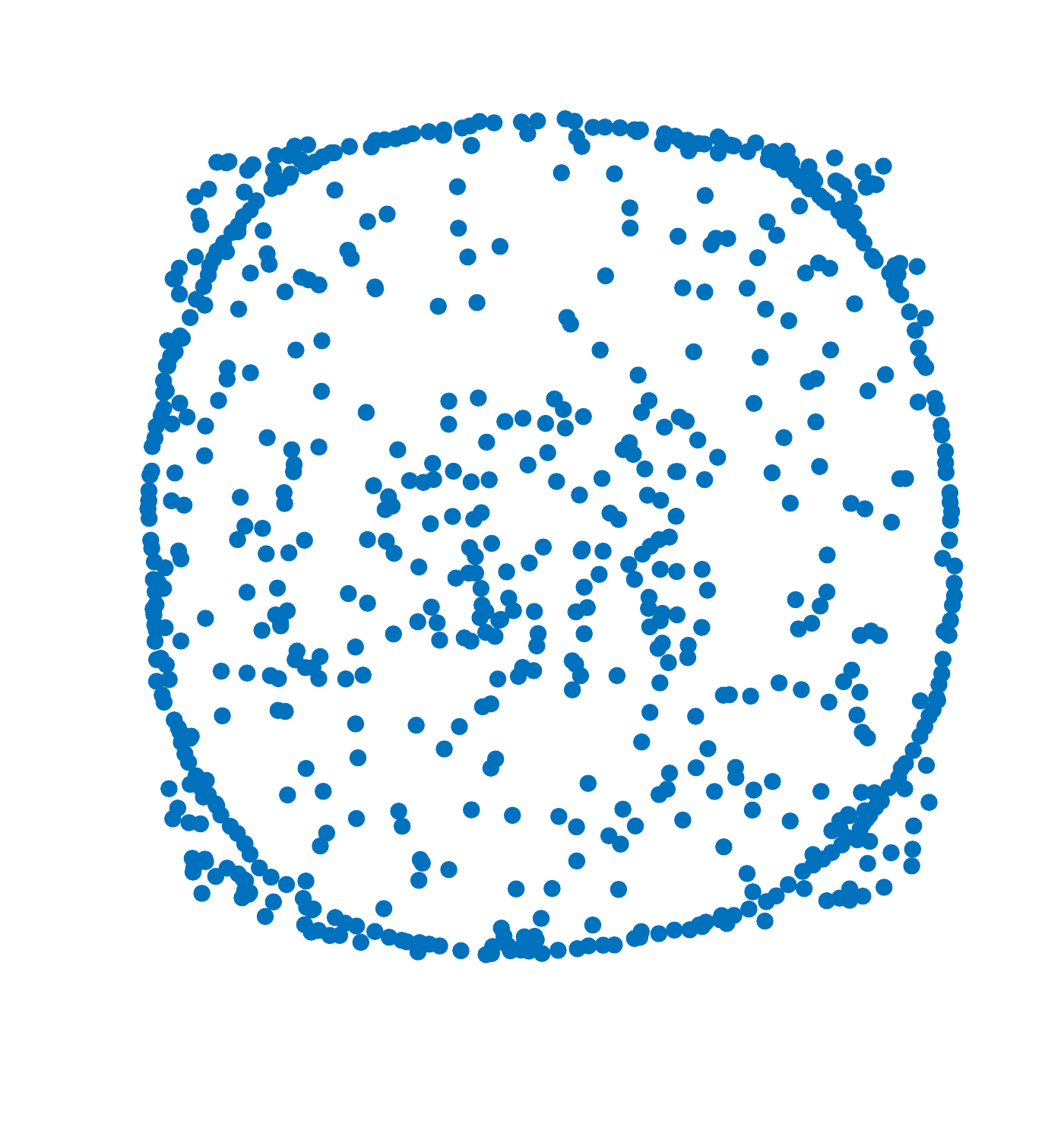}
        \end{minipage}
        \begin{minipage}[t]{0.08\textwidth}
            \centering
            \includegraphics[width=1\textwidth]{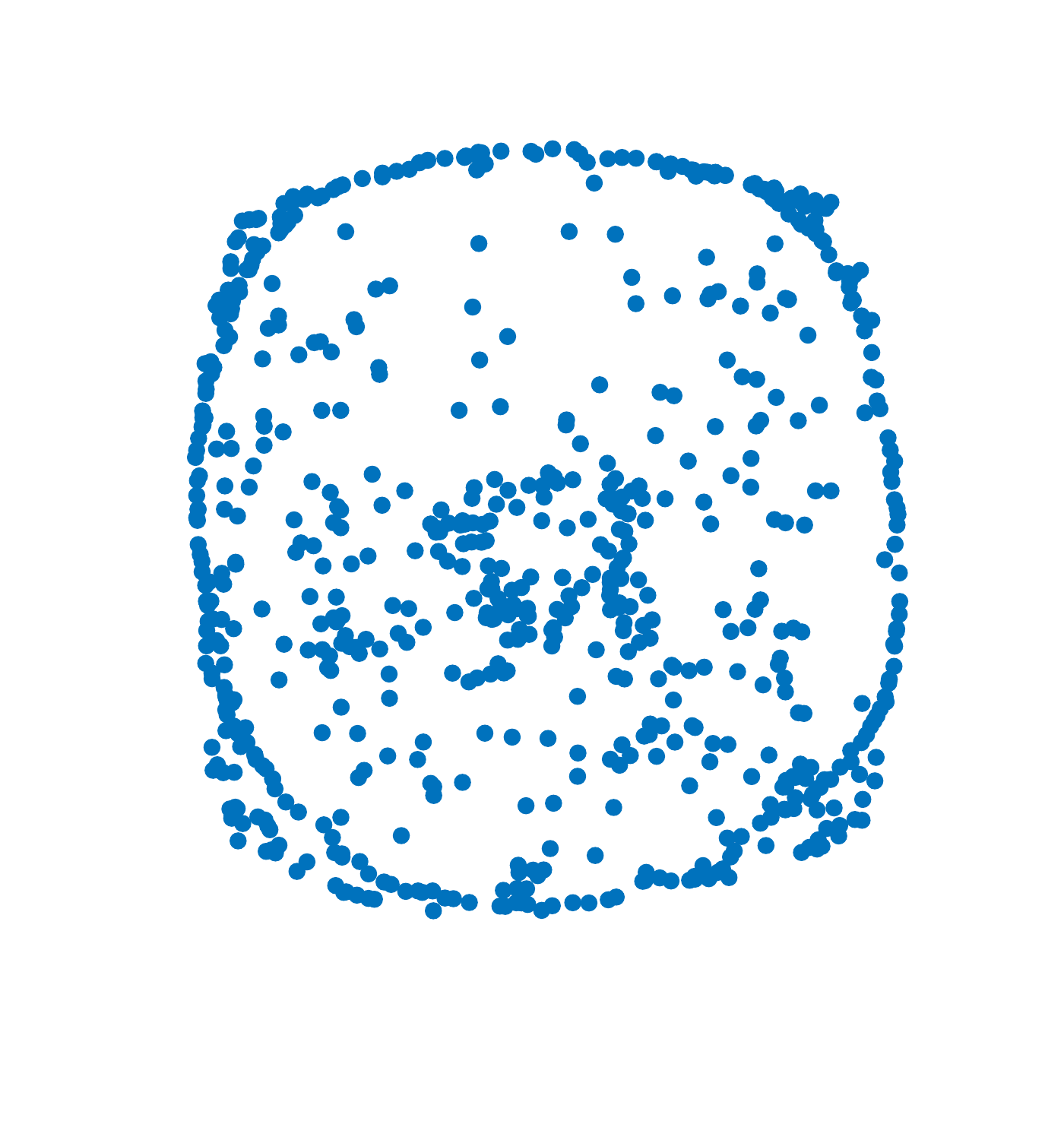}
        \end{minipage}
        \begin{minipage}[t]{0.08\textwidth}
            \centering
            \includegraphics[width=1\textwidth]{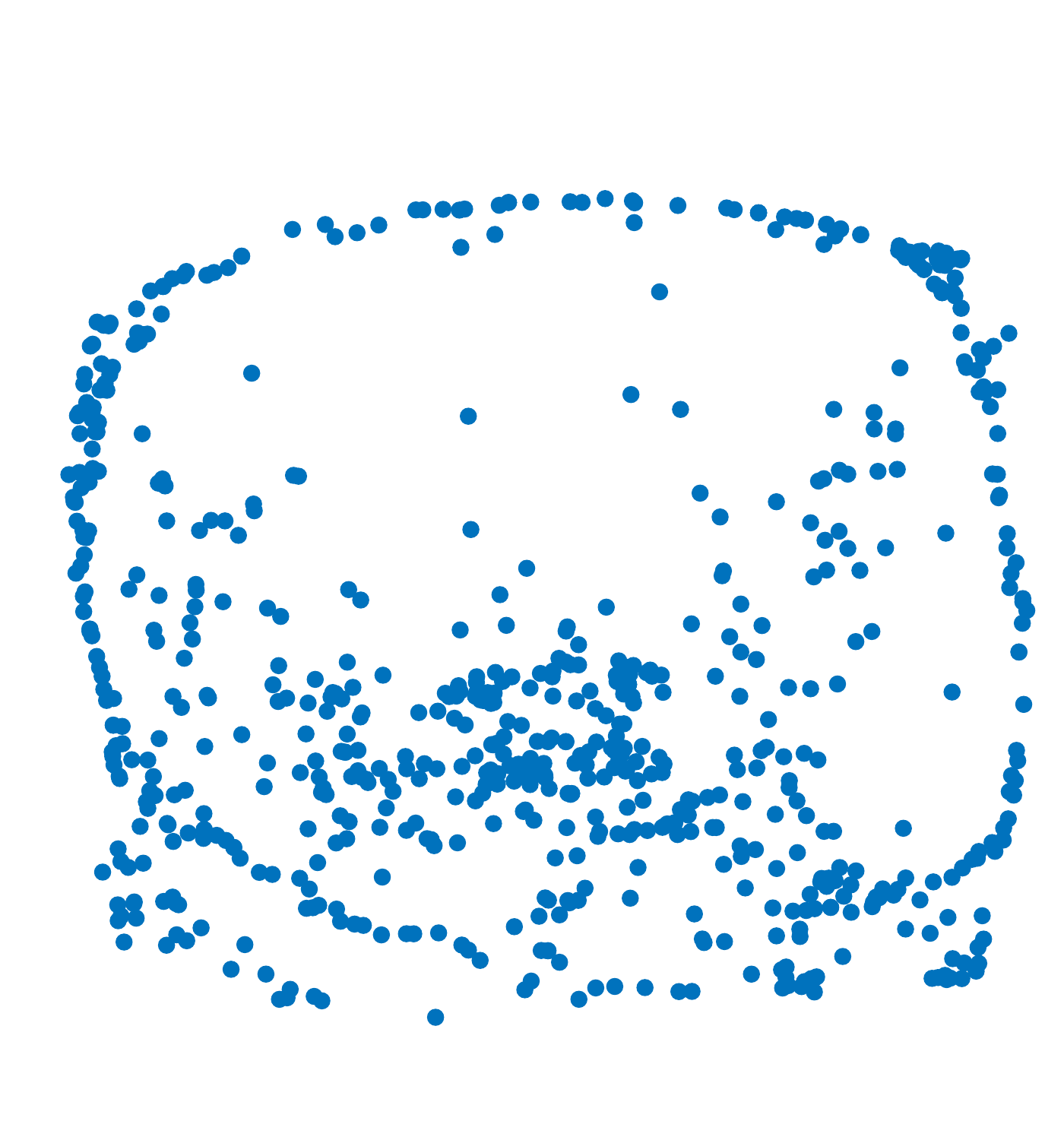}
        \end{minipage}
        \begin{minipage}[t]{0.08\textwidth}
            \centering
            \includegraphics[width=1\textwidth]{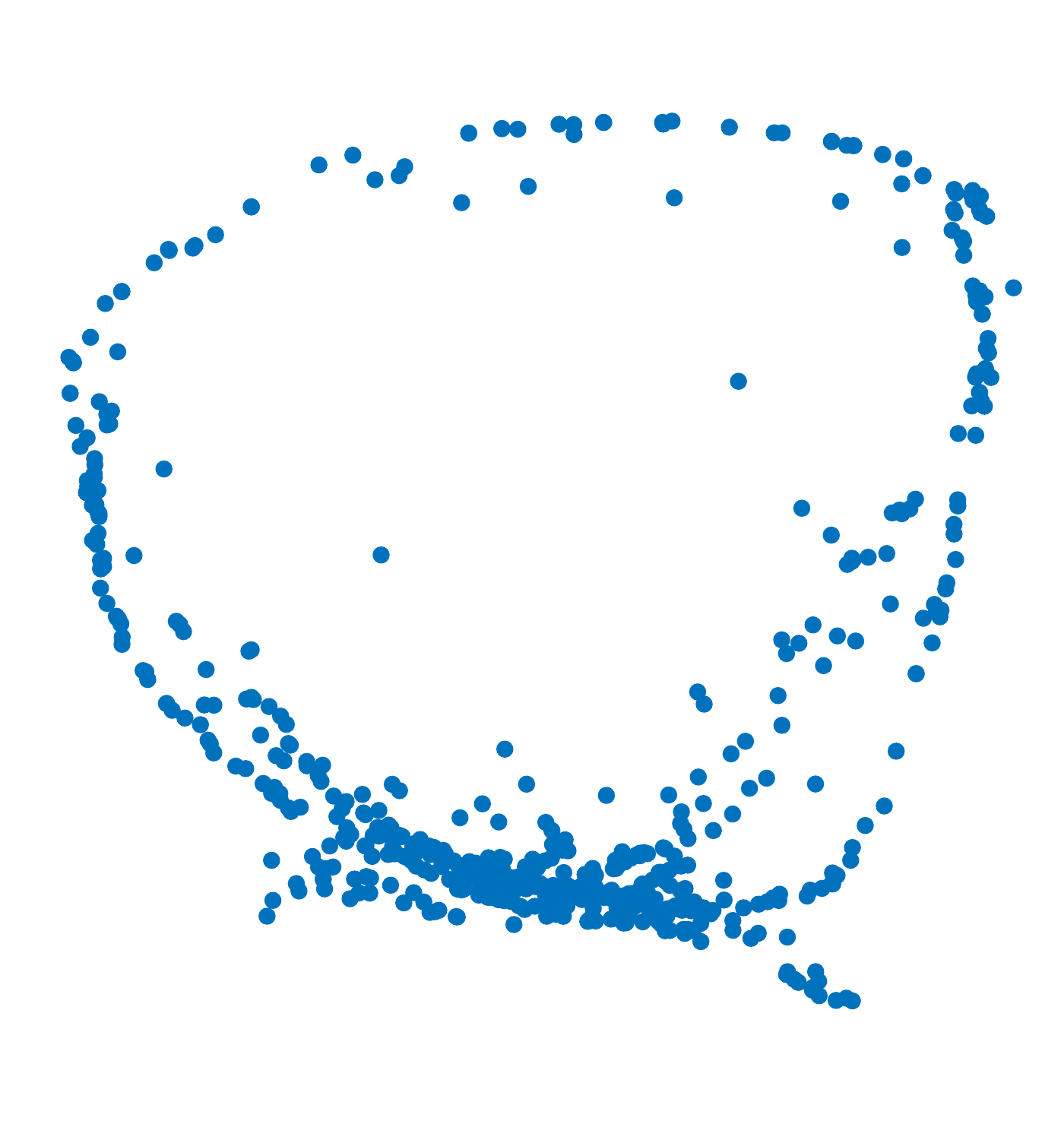}
        \end{minipage}
    }

    \subfloat[WT2]{
        \begin{minipage}[t]{0.08\textwidth}
            \centering
            \includegraphics[width=1\textwidth]{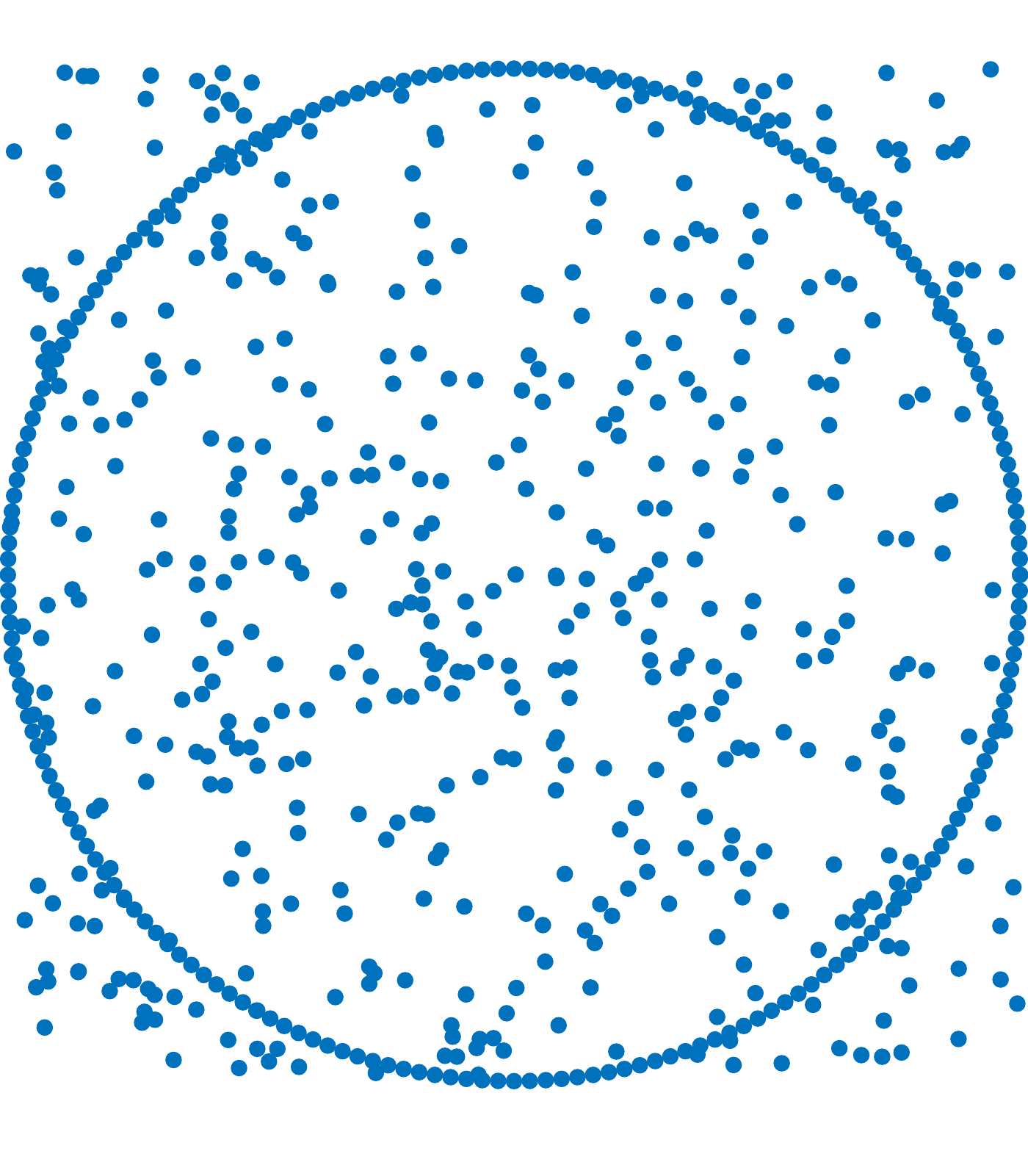}
        \end{minipage}
        \begin{minipage}[t]{0.08\textwidth}
            \centering
            \includegraphics[width=1\textwidth]{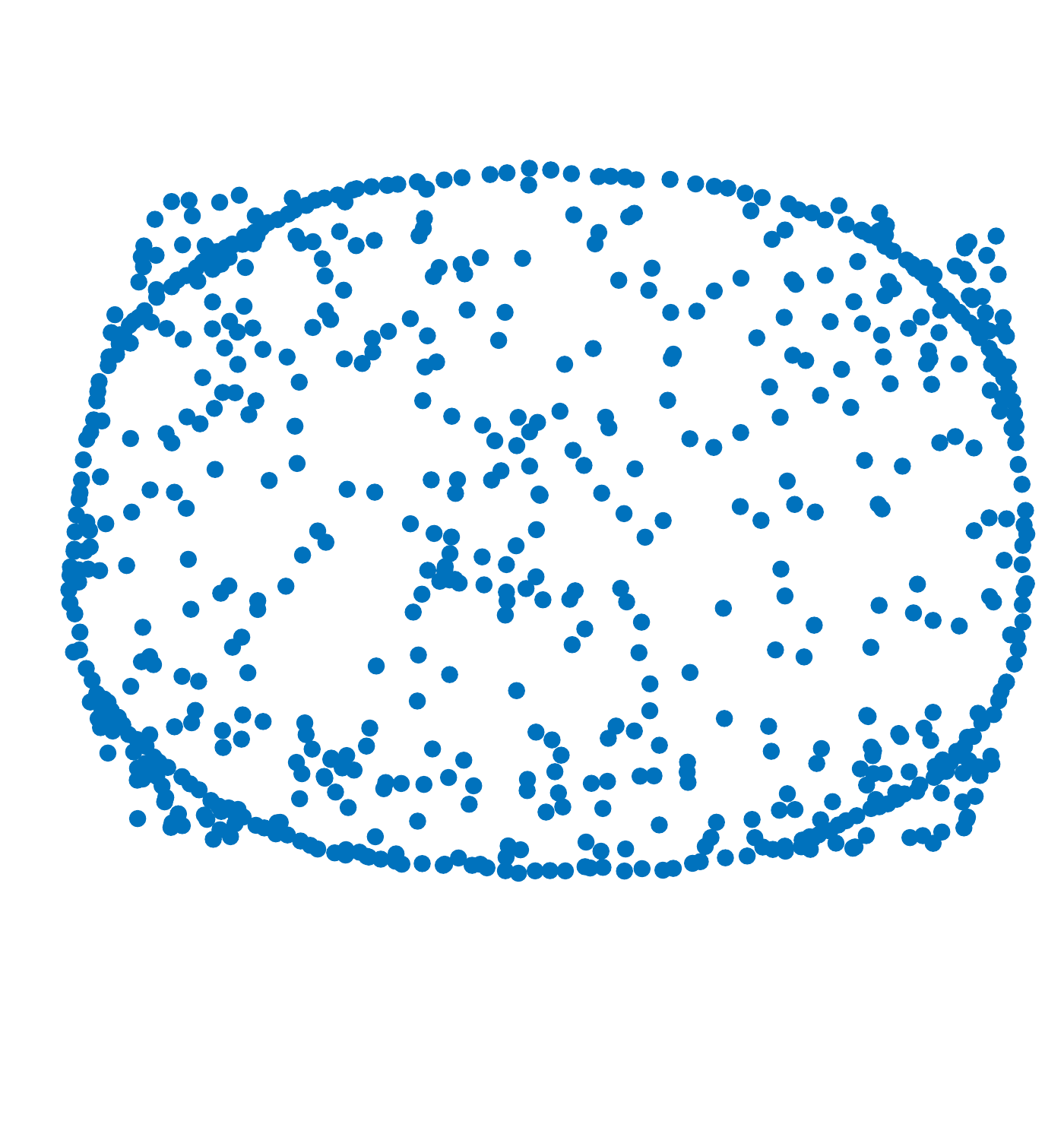}
        \end{minipage}
        \begin{minipage}[t]{0.08\textwidth}
            \centering
            \includegraphics[width=1\textwidth]{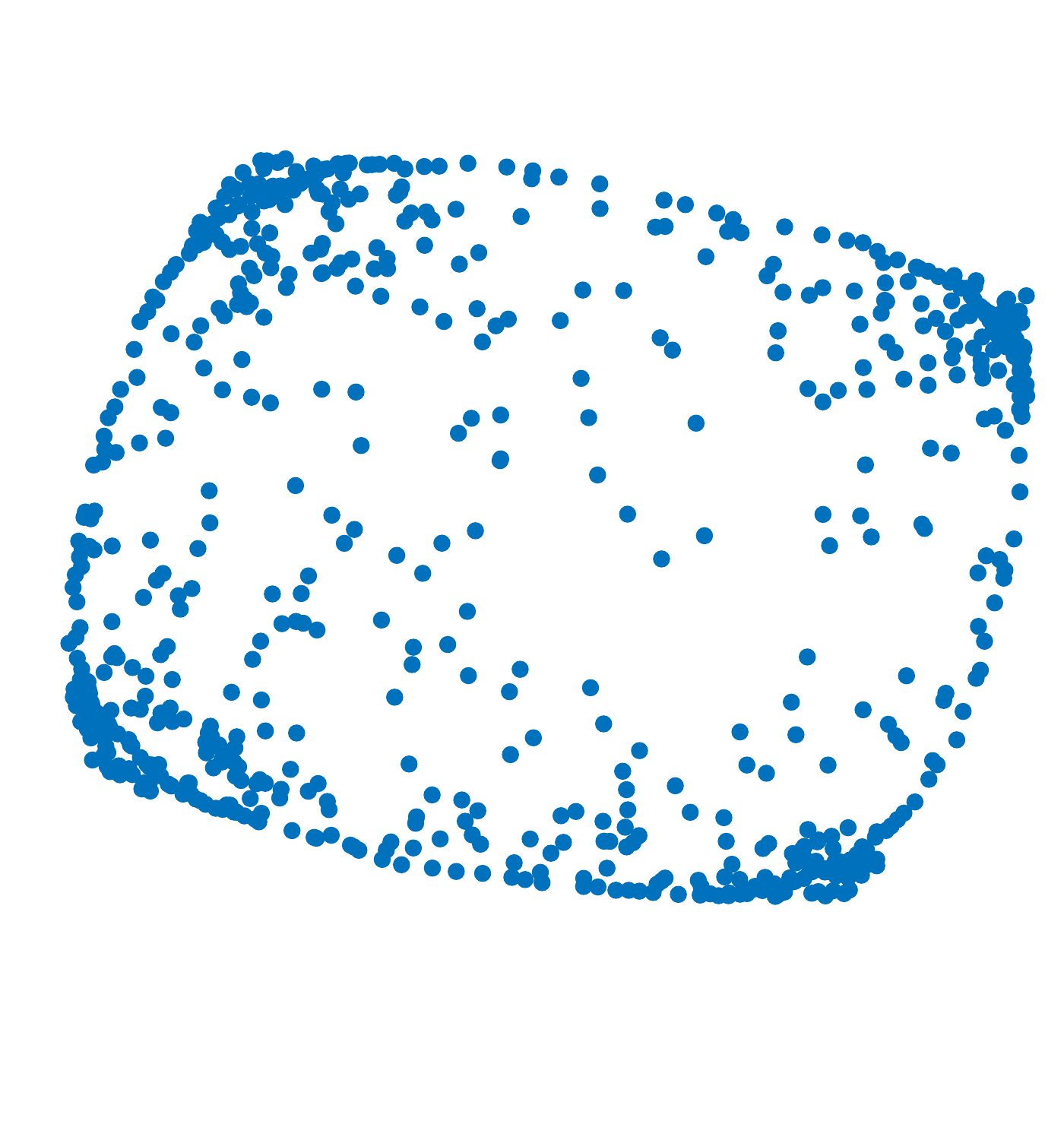}
        \end{minipage}
        \begin{minipage}[t]{0.08\textwidth}
            \centering
            \includegraphics[width=1\textwidth]{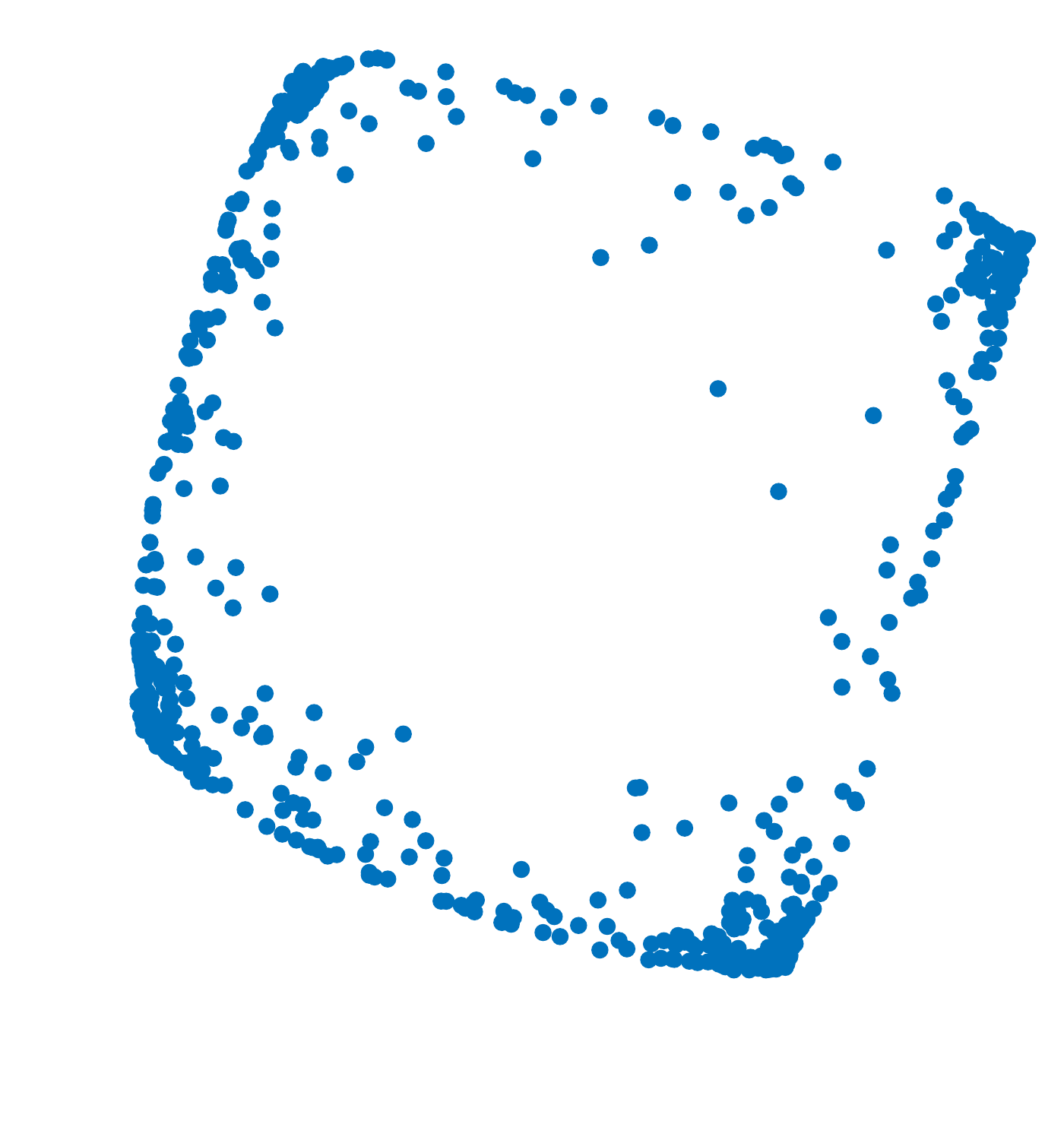}
        \end{minipage}
        \begin{minipage}[t]{0.08\textwidth}
            \centering
            \includegraphics[width=1\textwidth]{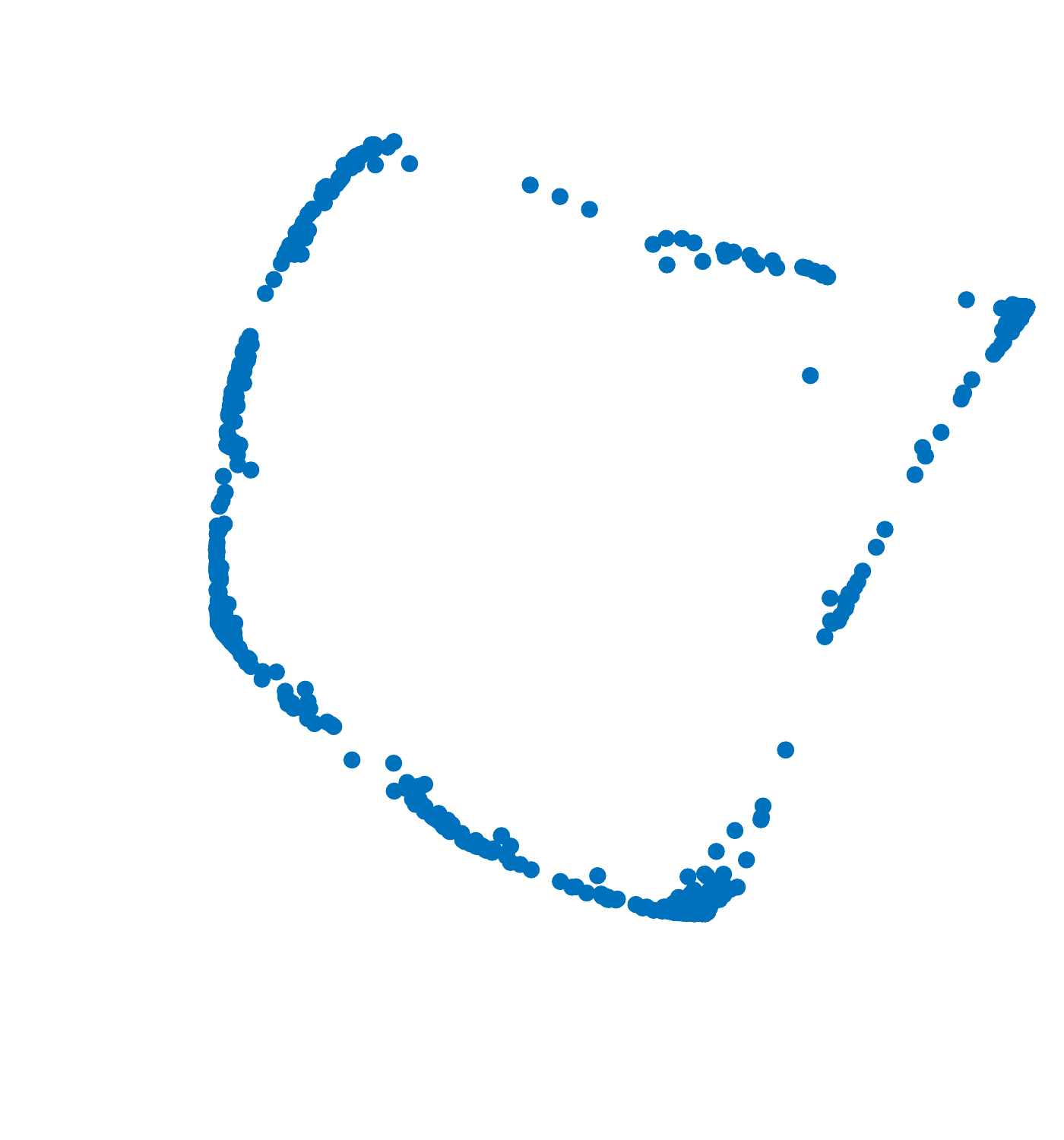}
        \end{minipage}
    }

    \subfloat[WT1]{
        \begin{minipage}[t]{0.08\textwidth}
            \centering
            \includegraphics[width=1\textwidth]{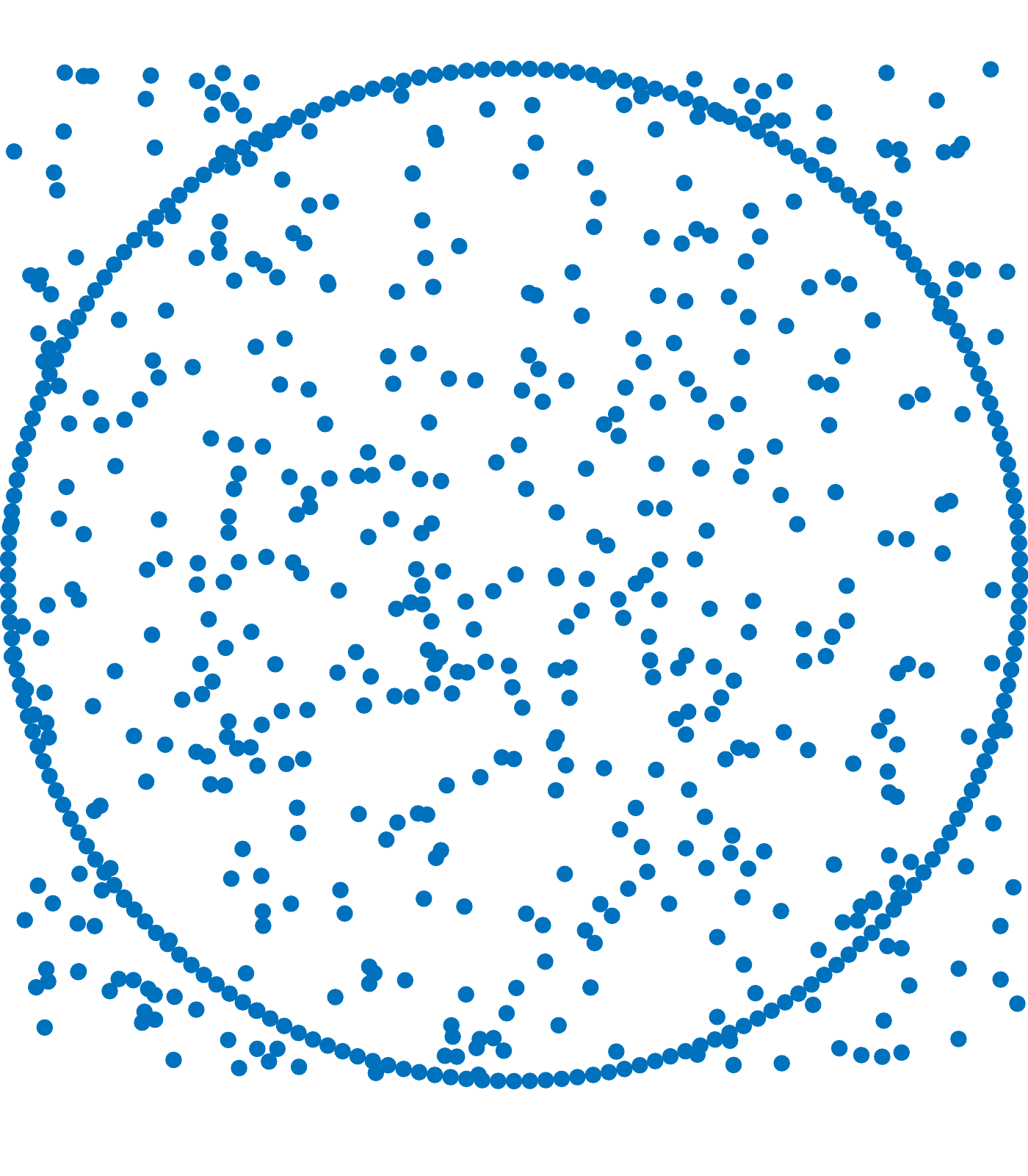}
        \end{minipage}
        \begin{minipage}[t]{0.08\textwidth}
            \centering
            \includegraphics[width=1\textwidth]{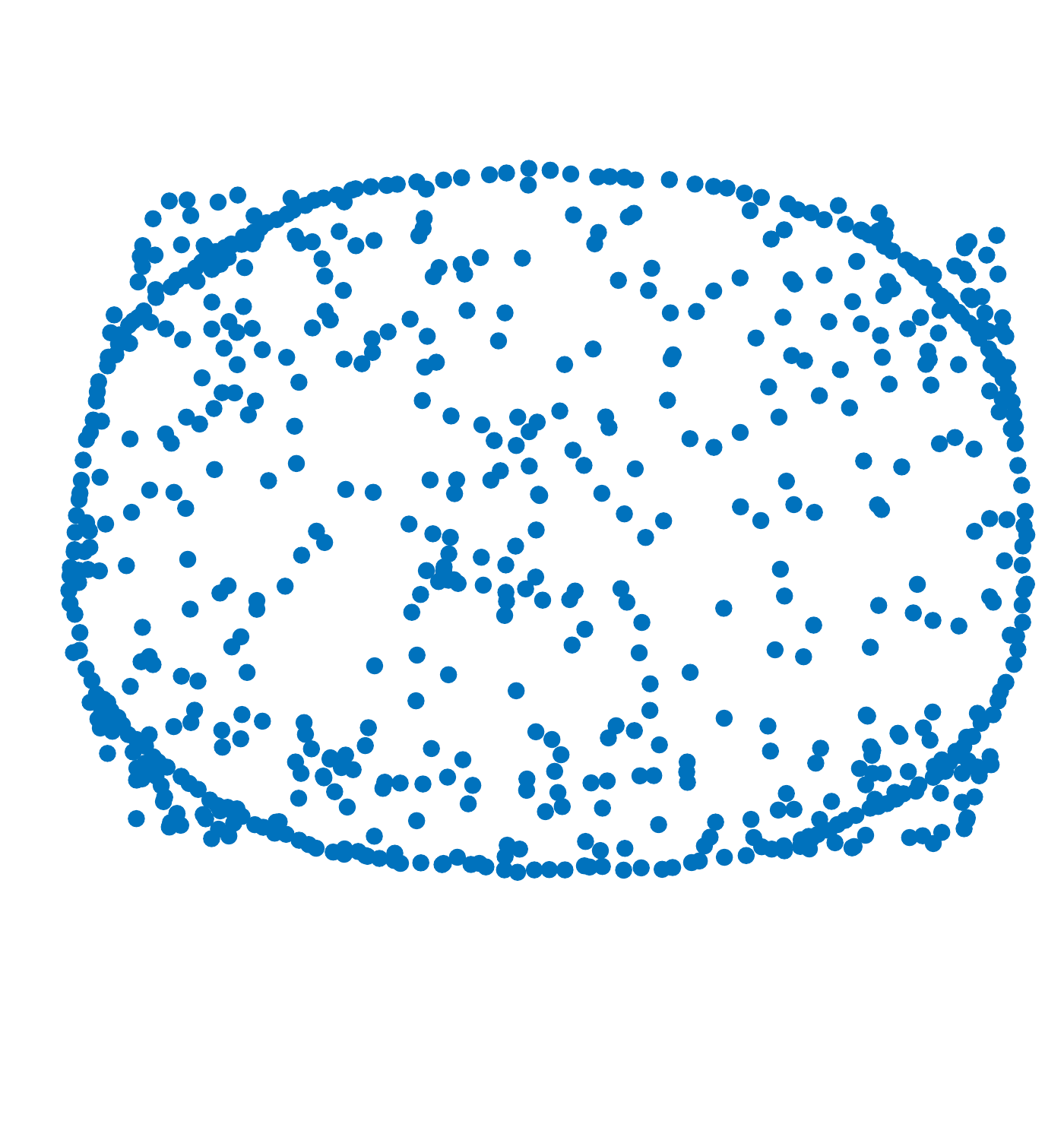}
        \end{minipage}
        \begin{minipage}[t]{0.08\textwidth}
            \centering
            \includegraphics[width=1\textwidth]{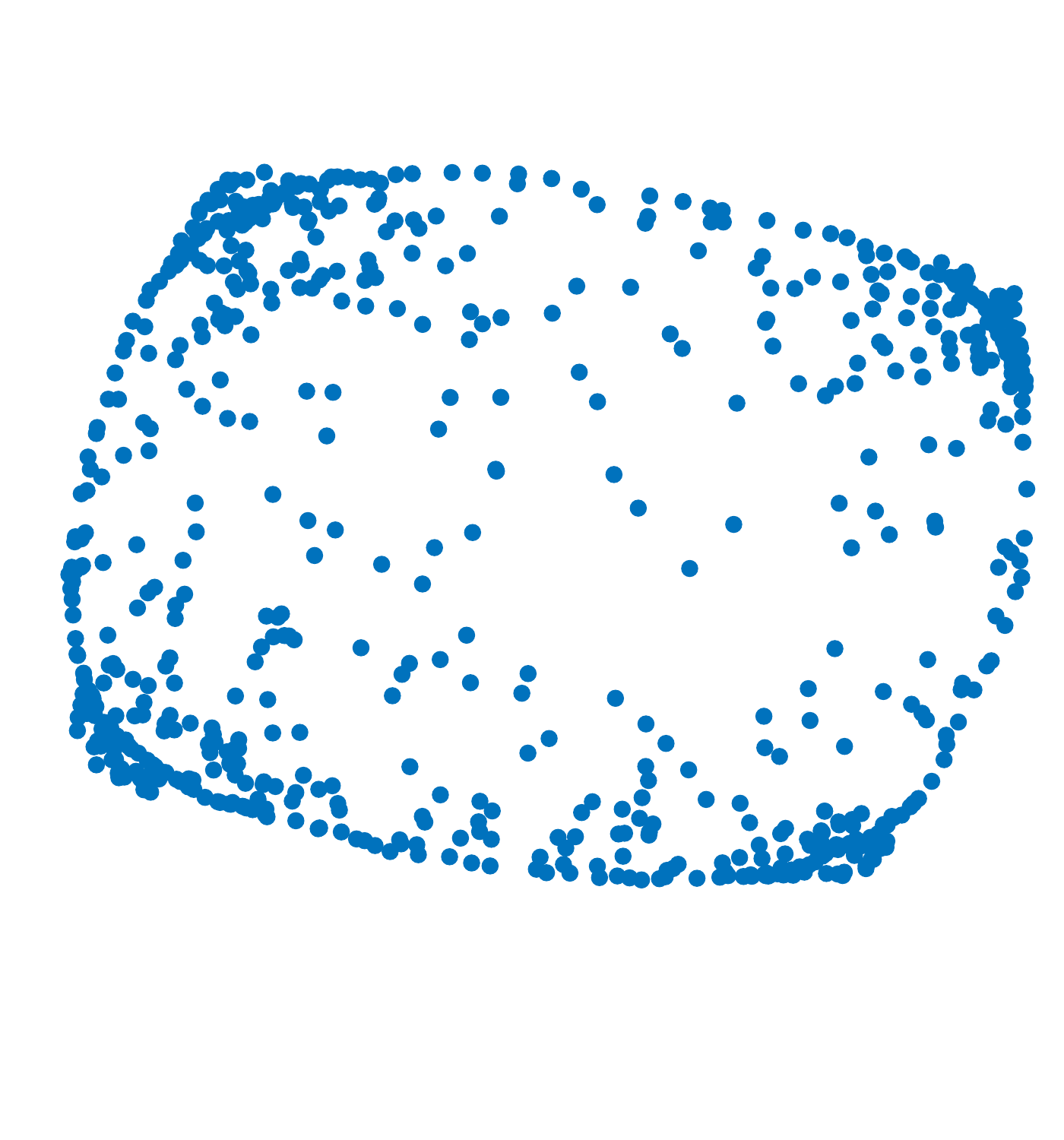}
        \end{minipage}
        \begin{minipage}[t]{0.08\textwidth}
            \centering
            \includegraphics[width=1\textwidth]{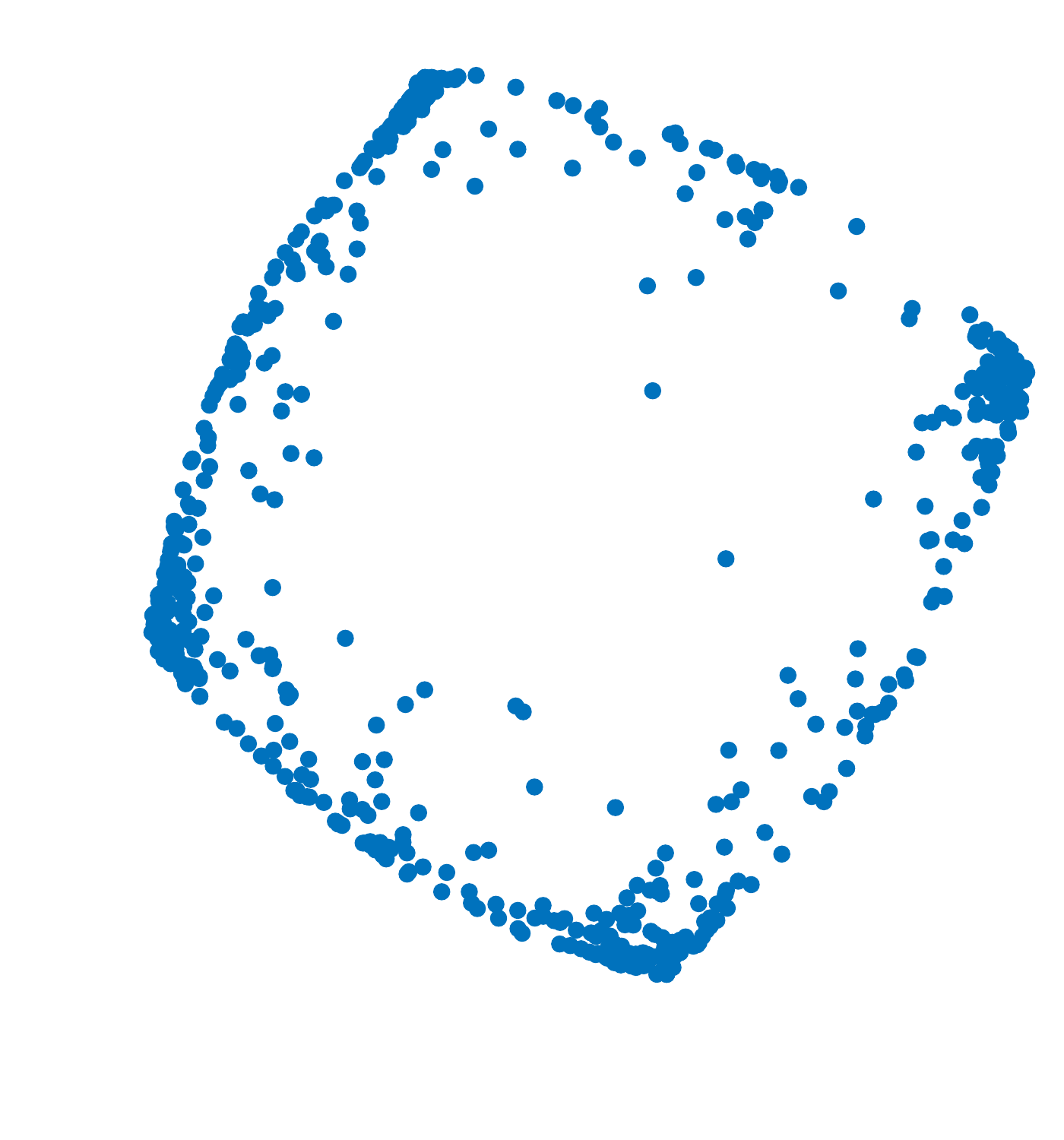}
        \end{minipage}
        \begin{minipage}[t]{0.08\textwidth}
            \centering
            \includegraphics[width=1\textwidth]{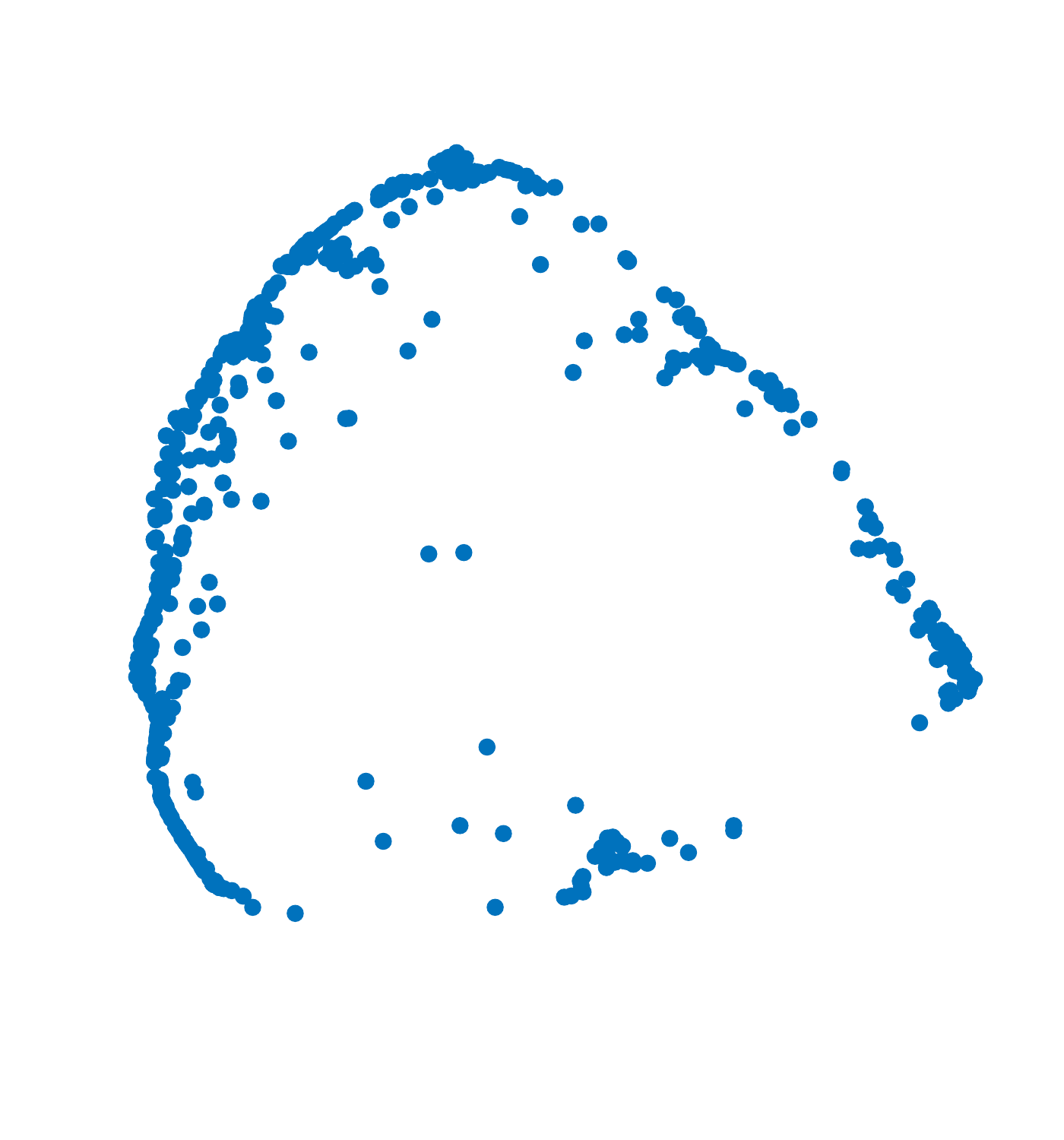}
        \end{minipage}
    }
    \caption{Denoising of a noisy circle. In each iteration, we use a fixed radius $\eps=0.7$. We apply GT with $\lambda=10$.}
    \label{fig:supp-S1}
\end{figure}

\subsection{Details about implementation of GT for image segmentation}\label{app:image-seg}

There are multiple variants of MS for image segmentation. We follow the implementation in \cite{demirovic2019implementation}, where in each iteration, the $(\eps_s,\eps_r)$-neighborhood of $T(x)=(T^s(x),T^r(x))$ is defined to be the set of all pixels $y=(y^s,y^r)$ such that $\| y^s - x^s \| \leq \eps_s$ and $\| y^r - T^r(x) \| \leq \eps_r$ (note that the $\eps_s$-spatial neighborhood of $T(x)$ is always the same as the one of $x$ through iterations because of the first inequality). We adapt our GT algorithm according to this modified version of MS and use a variant (cf. Equation~(\ref{eq:image-gt})) below of Equation~(\ref{eq:gtd}) to compute the GT distance such that when $\lambda=0$, our GT algorithm boils down to the MS based algorithm of \cite{demirovic2019implementation}.

We implement GT for image segmentation through the following  precise procedures:

\begin{enumerate}
    \item Initialization:
    \begin{enumerate}
        \item transfer pixels into 5-dimensional feature points $x_i = (x_i^s, x_i^r)$; 
        \item specify spatial and range bandwidth parameters $\eps_s$ and $\eps_r$. 
        \item compute the 2-dimensional covariance matrix $\Sigma(x_i^s)$ of the $(\eps_s,\eps_r)$-neighborhood of $x_i$ using only spatial features.
    \end{enumerate}
    \item Associate a cluster point $T(x_i) = (T^s(x_i),T^r(x_i))$ to every pixel $x_i$, and initialize it to be $(x_i^s,x_i^r)$. Repeat the following steps for each $i$ until $T(x_i)$ converges:
    \begin{enumerate}
        \item for each $x_j$ within the $(\eps_s,\eps_r)$-neighborhood of $T(x_i)$ (i.e., $\| x_j^s - x_i^s \| \leq \eps_s$ and $\| x_j^r - T^r(x_i) \| \leq \eps_r$), we compute via the following formula a variant of GT distance between the spatial features of $x_j$ and $T(x_i)$, denoted by $d^{\mathsmaller{(\eps,\lambda)}}_{\alpha,d_s} (x_j^s, T^s(x_i))$ where $d_s$ refers to the Euclidean distance on spatial features:
        \begin{equation}\label{eq:image-gt}
         \lc\norm{x_i^s-x_j^s}^2+\lambda\cdot\dcov^2\left(\Sigma(x_j^s),\Sigma(T^s(x_i))\right)\rc^\frac{1}{2}. 
        \end{equation}
        This is slightly different from Equation (\ref{eq:gtd}) that we use fixed $x_i^s$ for the Euclidean part and  $T^s(x_i)$ for the $\dcov$ part in the iteration to be comparable with MS, i.e., when $\lambda=0$, it reduces to the MS implementation.
        
        \item determine the $(\eps_s,\eps_r)$-GT-neighborhood of $T(x_i)$, which consists of all pixels $x_j$ satisfying $d^{\mathsmaller{(\eps,\lambda)}}_{\alpha,d_s} (x_j^s, T^s(x_i)) \leq \eps_s$ and $\| T^r(x_i) - x_j^r \| \leq \eps_r$; 
        \item update $T(x_i)$ with the mean of the neighborhood and compute the 2-dimensional covariance matrix $\Sigma(T^s(x_i))$ of the $(\eps_s,\eps_r)$-GT-neighborhood of $T(x_i)$ using spatial features.
    \end{enumerate}
    \item Identify clusters of convergence points $T(x_i)$: we construct a graph taking all convergence points as vertices. We connect $T(x_i)$ with $T(x_j)$ with an edge if and only if $\norm{T^s(x_i)-T^s(x_j)}\leq\eps_s$ and $\norm{T^r(x_i)-T^r(x_j)}\leq\eps_r$. Then, each connected component of the graph forms a cluster of the set of convergence points. Finally, we cluster the set of all pixels such that $x_i$ and $x_j$ belong to the same cluster if and only if $T(x_i)$ and $T(x_j)$ belong to the same cluster.
\end{enumerate}

We apply GT and MS to image segmentation task on cameraman images with different resolutions. The results are shown in Figure~(\ref{fig:supp-cameraman}). When the image is of high resolution (Figure (\ref{fig:supp-image1})), GT performs as well as MS. When the image is of low resolution (Figure (\ref{fig:supp-image2})), we see that GT generates a reasonably better segmentation than MS does. 

\begin{figure}[htb]
    \centering
    \subfloat[Test image1]{
        \label{fig:supp-image1}
        \includegraphics[width=0.15\textwidth]{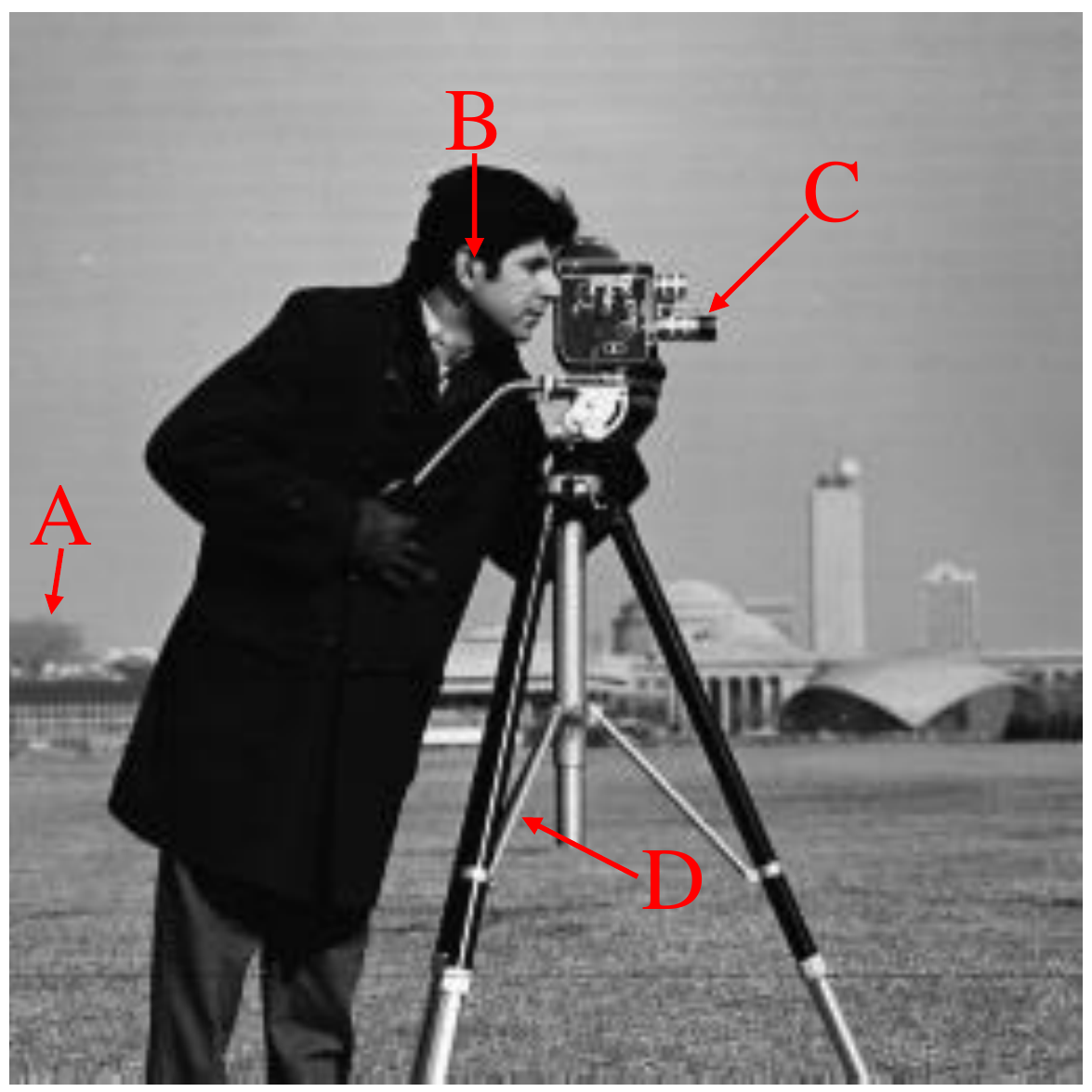}
	}
	\subfloat[MS]{
        \includegraphics[width=0.15\textwidth]{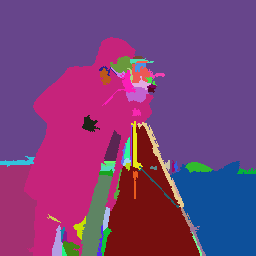}
	}
	\subfloat[GT]{
        \includegraphics[width=0.15\textwidth]{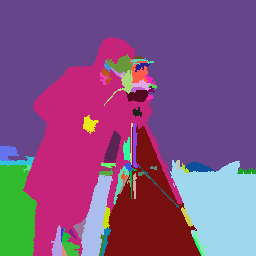}
	}
	\subfloat[Test image2]{
        \label{fig:supp-image2}
        \includegraphics[width=0.15\textwidth]{figures/compcamera/labelminicameraman.pdf}
	}
	\subfloat[MS]{
        \includegraphics[width=0.15\textwidth]{figures/cameraman/segmented_ms_minicameraman.png}
	}
	\subfloat[GT]{
        \includegraphics[width=0.15\textwidth]{figures/cameraman/segmented_cov_minicameraman_v2.png}
	}
    \caption{Image segmentation. 
    (a) Test image1 (cameraman $256 \times 256$ grayscale).  (b) MS segmentation with $\eps_s=8$ and $\eps_r=8$. 
    (c) GT segmentation with $\eps_s=8, \eps_r = 8, \lambda=4.8$. (d) Test image2 (cameraman $128\times 128$ grayscale).  (e) MS segmentation with $\eps_s=6$ and $\eps_r=6$. (f) GT segmentation with $\eps_s=6, \eps_r = 6, \lambda=5$.  Check differences between GT and MS results at the labeled areas in test images. } 
    \label{fig:supp-cameraman}
\end{figure}

\subsection{Details about word embeddings}\label{app:word}

In this section, we provide details about our implementation of GT for word embeddings. We do not compare our results with MS because we consider that MS is not applicable. In NLP, one compares words by comparing their contexts. We found that the Euclidean mean associated to context neighborhood of a word does not well represent the word itself. Indeed, we observe in practice that the Euclidean means of most neighborhoods selected from corpus contexts are concentrated around a point, and it also deteriorates the performance of the original word embedding. 
We also do not compare WT with GT in this experiment since WT is not commensurable with GT: in our following implementation, we modify our GT construction by replacing the original covariance matrix (cf. Equation~(\ref{eq:supp-cov})) with the covariation around each given word vector (cf. Equation~(\ref{eq:cov-nlp})).

\subsubsection{Open-source pre-trained word embeddings}

There are many open-source
embeddings \footnote{https://nlp.stanford.edu/projects/glove/} \footnote{https://gluon-nlp.mxnet.io/model\_zoo/bert}  which have been pre-trained on very large and rich corpora (such as wikipedia) and could potentially be directly applied to a given task. In this experiment, we use the GloVe embeddings pre-trained on Wikipedia2014 and Gigaword 5. 

\subsubsection{GT for word embeddings}
As mentioned in Section \ref{sec:word-emb} of the main text, we regard $c_\mathcal{C}(w)$ as the neighborhood of a given word $w$, where $c_\mathcal{C}(w)$ is the collection of all words in the corpus $\mathcal{C}$ that are found in the context of $w$ with a given window size $W$. To apply GT, we compute the covariance for each $c_\mathcal{C}(w)$ as follows according to the empirical covariance\footnote{Strictly speaking, $\Sigma_w$ is not the covariance matrix of $c_\mathcal{C}(w)$ but instead the covariation of points in $c_\mathcal{C}(w)$ around $w$.} in~\cite{Vilnis2014WordRV}:

\begin{equation}\label{eq:cov-nlp}
    \Sigma_w = \frac{1}{|c_\mathcal{C}(w)|} \sum_{i\in c_\mathcal{C}(w)} (c_\mathcal{C}(w)_{i} - w)(c_\mathcal{C}(w)_{i} - w)^\mathrm{T},
\end{equation}
where $c_\mathcal{C}(w)_i$ denotes the $i$th context word for $w$ in $c_\mathcal{C}(w)$. If there are no context words for $w$, we set $\Sigma_w = \textbf{0}$. 
Then the GT distance between any pair of words $w_1,w_2\in\mathcal{C}$ is computed as follows: 
\begin{equation}
    d^{\mathsmaller{(W,\lambda)}}_{\norm{\cdot}}(w_1,w_2) = \sqrt{\|w_1 - w_2\|^2 + \lambda \,  d^2_{cov}(\Sigma_{w_1}, \Sigma_{w_2})}.
\end{equation}

In practice, we only compute the GT distance between pairs occurring in the evaluation datasets mentioned in Section \ref{sec:eval} below. We use $-d^{\mathsmaller{(W,\lambda)}}_{\norm{\cdot}}(w_1,w_2)$ as the similarity between two words $w_1$ and $w_2$ (note the minus sign).

\subsubsection{Experiment details}\label{sec:eval}

We use the pre-computed GloVe embeddings $\Omega:\mathrm{Dict}\rightarrow\R^m$ and for each word $w\in \mathrm{Dict}$, we abuse notation and also use $w$ to represent the embedding $\Omega(w)$. We normalize the data set such that each word $w\in \mathrm{Dict}$ has magnitude $\norm{w}=1$. 
We choose corpus text8~\footnote{http://mattmahoney.net/dc/text8.zip} to be $\mathcal{C}$ to retrieve context words for a given word. We preprocess the corpus text8 in two steps: (1) we drop those rare words whose occurrence frequencies are  fewer than 5; (2) we drop frequent words with a probability following the strategy proposed in~\cite{mikolov2013distributed} that the more frequently that a word appears in the corpus, the higher probability that the word will be discarded. We then apply GT and train GloVe Embeddings (GloVe*text8) \cite{pennington2014glove} and Word2Vec (W2V*text8) \cite{mikolov2013distributed} on the preprocessed corpus text8.

We evaluate the embeddings on 13 different standard word similarity benchmarks: 
MC-30~\cite{miller1991contextual}, MEN-TR-3k~\cite{bruni2014multimodal}, MTurk-287~\cite{radinsky2011word}, 
MTurk-771~\cite{halawi2012large}, RG-65~\cite{rubenstein1965contextual}, RW-STANFORD~\cite{luong2013better}, SIMPLEX-999~\cite{hill-etal-2015-simlex}, SimVerb-3500~\cite{gerz2016simverb}, 
VERB-143~\cite{baker2014unsupervised}, WS-353~\cite{finkelstein2001placing}, WS-YP-130~\cite{yang2006verb}. 

In these benchmarks, similarity scores between certain pairs of words are provided. We refer to them as human similarity scores. Then, we calculate the Spearman rank correlation coefficient~\cite{spearman1961proof} between the human similarity scores and the similarity scores on the word pairs for all embeddings described above.

In table \ref{tab:word-compare}, for each evaluation dataset, we compare the Spearman rank correlation coefficients corresponding to GloVe+GT, GloVe, GloVe*text8 and W2V*text8. We observe the following: 
{GloVe+GT} outperforms GloVe in most of the evaluation datasets, and has comparable performance on the remaining datasets. Moreover, GloVe+GT outperforms models GloVe*text8 and W2V*text8 trained specifically on text8 in most evaluation datasets.

We also compare the similarity scores of GloVe+GT with the ones given by Elliptical Embeddings (Ell) \cite{muzellec2018generalizing} and Diagonal Gaussian Embeddings (W2G) \cite{Vilnis2014WordRV} trained on larger corpora ukWaC and WaCkypedia. Ell and W2G models require training high dimensional parameters and might not be suitable for small corpora such as text8. Note that, the performance of GloVe+GT based on a small corpus text8 is comparable with the performance of Ell and W2G trained on a much larger corpus.

Our experiments show the effectiveness of applying GT to improve the performance of pre-trained embeddings.

\begin{table*}[htb]
\caption{Spearman correlation for word similarity datasets.  Column ``{GloVe}" and column 
``{GloVe+GT}" are the same as the corresponding columns in Table 2 of the main paper. ``{GloVe*text8}" represents the embeddings trained on text8 using the GloVe model, ``{W2V*text8}" represents the embeddings trained on text8 using the Word2Vec model, ``{Ell}" represents the embeddings trained on ukWaC and WaCkypedia using the Elliptical Embeddings model (result is directly from ~\cite{muzellec2018generalizing}), ``{W2G}" represents the embeddings trained on ukWaC and WaCkypedia using the Diagonal Gaussian Embeddings model (result is directly from ~\cite{muzellec2018generalizing}).}
    \centering
    \begin{tabular}{| c| c| c| c| c|| c| c|}
    \hline
    \textbf{Dataset} & \textbf{GloVe} & \textbf{GloVe*text8} & \textbf{W2V*text8} & \textbf{GloVe+GT}  & \textbf{Ell} & \textbf{W2G}\\
    \hline
        MC-30 & 0.56  & 0.34 & 0.57 &  \textbf{0.67} & 0.65 & 0.59  \\ \hline 
        MEN-TR-3k & \textbf{0.65}  & 0.37 & 0.59 & \textbf{0.65}  & 0.65 & 0.65 \\ \hline 
        MTurk-287 & 0.61 & 0.49 & 0.61 &  \textbf{0.62} & 0.59 & 0.61  \\ \hline 
        MTurk-771 & 0.55 & 0.36 & 0.50 &  \textbf{0.56}& 0.56 & 0.57  \\ \hline 
         RG-65 & 0.60 & 0.33 &  0.56 &  \textbf{0.62}& 0.65 & 0.69  \\ \hline 
         RW-STANFORD & 0.34  & 0.20 & 0.25 &  \textbf{0.38} & 0.29 & 0.40 \\ \hline 
         SIMLEX-999 & 0.26 &  0.13 & 0.22 &  \textbf{0.27} & 0.24 & 0.25 \\ \hline 
        SimVerb-3500 & \textbf{0.15} & 0.07 & 0.08  & 0.14  & - & - \\ \hline
        VERB-143 & 0.25  & 0.28 & \textbf{0.32}  & 0.24  & - & - \\ \hline 
         WS-353-ALL & 0.49  & 0.43 & \textbf{0.62} & 0.51  & 0.66 & 0.53 \\ \hline 
        WS-353-REL & 0.46  & 0.41  & \textbf{0.59} & 0.47  & 0.71 & 0.61\\ \hline 
        WS-353-SIM & 0.57  & 0.51 & \textbf{0.66}  & 0.60  & 0.60 & 0.48 \\ \hline 
        WS-YP-130 & \textbf{0.37} & 0.19 &  0.23  & \textbf{0.37}  & 0.25 & 0.37 \\ \hline 
    \end{tabular}
        \label{tab:word-compare}
\end{table*}

\end{document}